\newcommand{\E}{\mathbb{E}} 
\renewcommand{\P}{\mathbb{P}} 
\newcommand{\R}{\mathbb{R}}
\newcommand{\N}{\mathbb{N}}
\newcommand{\ind}{\mathbbm{1}}
\renewcommand{\epsilon}{\varepsilon}
\DeclareMathOperator*{\argmax}{arg\,max}
\DeclareMathOperator*{\argmin}{arg\,min}
\DeclarePairedDelimiter{\ceil}{\lceil}{\rceil}
\DeclarePairedDelimiter{\floor}{\lfloor}{\rfloor}
\newtheorem{ass}{Assumption}
\newtheorem{clm}{Claim}
\newtheorem{cor}{Corollary}
\newtheorem{defn}{Definition}
\newtheorem{lem}{Lemma}
\newtheorem{obs}{Observation}
\newtheorem{exam}{Example}
\newtheorem{rem}{Remark}
\newtheorem{thm}{Theorem}
\renewcommand{\@algocf@capt@plain}{above}
\begin{document}

\title{Robust Multi-Agent Bandits Over Undirected Graphs}

\author{Daniel Vial}
\email{dvial@utexas.edu}
\affiliation{%
  \institution{University of Texas at Austin}
  \city{Austin}
  \state{Texas}
  \country{USA}
}%
\author{Sanjay Shakkottai}
\email{sanjay.shakkottai@utexas.edu}
\affiliation{%
  \institution{University of Texas at Austin}
  \city{Austin}
  \state{Texas}
  \country{USA}
}%
\author{R.\ Srikant}
\email{rsrikant@illinois.edu}
\affiliation{%
  \institution{University of Illinois Urbana-Champaign}
  \city{Urbana-Champaign}
  \state{Illinois}
  \country{USA}
}

\begin{abstract}
We consider a multi-agent multi-armed bandit setting in which $n$ honest agents collaborate over a network to minimize regret but $m$ malicious agents can disrupt learning arbitrarily. Assuming the network is the complete graph, existing algorithms incur $O( (m + K/n) \log (T) / \Delta )$ regret in this setting, where $K$ is the number of arms and $\Delta$ is the arm gap. For $m \ll K$, this improves over the single-agent baseline regret of $O(K\log(T)/\Delta)$. 

In this work, we show the situation is murkier beyond the case of a complete graph. In particular, we prove that if the state-of-the-art algorithm is used on the undirected line graph, honest agents can suffer (nearly) linear regret until time is doubly exponential in $K$ and $n$. In light of this negative result, we propose a new algorithm for which the $i$-th agent has regret $O( ( d_{\text{mal}}(i) + K/n) \log(T)/\Delta)$ on any connected and undirected graph, where $d_{\text{mal}}(i)$ is the number of $i$'s neighbors who are malicious. Thus, we generalize existing regret bounds beyond the complete graph (where $d_{\text{mal}}(i) = m$), and show the effect of malicious agents is entirely local (in the sense that only the $d_{\text{mal}}(i)$ malicious agents directly connected to $i$ affect its long-term regret).
\end{abstract}

\begin{CCSXML}
<ccs2012>
<concept>
<concept_id>10003752.10010070.10010071.10010079</concept_id>
<concept_desc>Theory of computation~Online learning theory</concept_desc>
<concept_significance>500</concept_significance>
</concept>
<concept>
<concept_id>10003752.10010070.10010071.10010261.10010272</concept_id>
<concept_desc>Theory of computation~Sequential decision making</concept_desc>
<concept_significance>300</concept_significance>
</concept>
<concept>
<concept_id>10003752.10010070.10010071.10011194</concept_id>
<concept_desc>Theory of computation~Regret bounds</concept_desc>
<concept_significance>300</concept_significance>
</concept>
<concept>
<concept_id>10003752.10010070.10010071.10010082</concept_id>
<concept_desc>Theory of computation~Multi-agent learning</concept_desc>
<concept_significance>500</concept_significance>
</concept>
<concept>
<concept_id>10003752.10010070.10010071.10010261.10010276</concept_id>
<concept_desc>Theory of computation~Adversarial learning</concept_desc>
<concept_significance>300</concept_significance>
</concept>
<concept>
<concept_id>10003752.10003809.10010172</concept_id>
<concept_desc>Theory of computation~Distributed algorithms</concept_desc>
<concept_significance>300</concept_significance>
</concept>
</ccs2012>
\end{CCSXML}
\ccsdesc[500]{Theory of computation~Online learning theory}
\ccsdesc[300]{Theory of computation~Sequential decision making}
\ccsdesc[300]{Theory of computation~Regret bounds}
\ccsdesc[500]{Theory of computation~Multi-agent learning}
\ccsdesc[300]{Theory of computation~Distributed algorithms}
\ccsdesc[300]{Theory of computation~Adversarial learning}

\keywords{multi-armed bandits; malicious agents}

\received{August 2022}
\received[revised]{October 2022}
\received[accepted]{November 2022}

\maketitle

\section{Introduction}

Motivated by applications including distributed computing, social recommendation systems, and federated learning, a number of recent papers have studied multi-agent variants of the classical multi-armed bandit problem. Typically, these variants involve a large number of agents playing a bandit while communicating over a network. The goal is to devise communication protocols that allow the agents to efficiently amalgamate information, thereby learning the bandit's parameters more quickly than they could by running single-agent algorithms in isolation.

Among the many multi-agent variants considered in the literature (see Section \ref{secRelated}), we focus on a particular setting studied in the recent line of work \cite{sankararaman2019social,chawla2020gossiping,newton2021asymptotic,vial2021robust}. In these papers, $n$ agents play separate instances of the same $K$-armed bandit and are restricted to $o(T)$ pairwise and bit-limited communications per $T$ arm pulls. We recount two motivating applications from this prior work.

\begin{exam} \label{examCommerce}
 For an e-commerce site (e.g., Amazon), the agents model $n$ servers choosing one of $K$ products to show visitors to the site. The product selection problem can be viewed as a bandit -- products are arms, while purchases yield rewards -- and communication among the agents/servers is restricted by bandwidth.
\end{exam}

\begin{exam} \label{examRecommend}
For a social recommendation site (e.g., Yelp), the agents represent $n$ users choosing among $K$ items, such as restaurants. This is analogously modeled as a bandit, and communication is limited because agents/users are exposed to a small portion of all reviews.
\end{exam}

To contextualize our contributions, we next discuss this line of work in more detail.

\subsection{Fully cooperative multi-agent bandits}

The goal of \cite{sankararaman2019social,chawla2020gossiping,newton2021asymptotic} is to devise fully cooperative algorithms for which the cumulative regret $R_T^{(i)}$ of each agent $i$ is small (see \eqref{eqDefnRegret} for the formal definition of regret). All of these papers follow a similar approach, which roughly proceeds as follows (see Section \ref{secAlg} for details):
\begin{itemize}
\item The arms are partitioned into $n$ subsets of size $O(K/n)$, and each agent is assigned a distinct subset called a \textit{sticky set}, which they are responsible for exploring.
\item Occasionally ($o(T)$ times per $T$ arm pulls), each agent $i$ asks a random neighbor $i'$ for an arm recommendation; $i'$ responds with the arm they believe is best, which $i$ begins playing.
\end{itemize}

For these algorithms, the regret analysis essentially contains two steps:
\begin{itemize}
\item First, the authors show that the agent (say, $i^\star$) with the true best arm in its sticky set eventually identifies it as such. Thereafter, a \textit{gossip} process unfolds. Namely, $i^\star$ recommends the best arm to its neighbors, who recommend it to their neighbors, etc., spreading the best arm to all agents. The spreading time (and thus the regret before this time) is shown to be polynomial in $K$, $n$, and $1/\Delta$, where $\Delta$ is the gap in mean reward between the two best arms.
\item Once the best arm spreads, agents play only it and their sticky sets, so long-term, they effectively face $O(K/n)$-armed bandits instead of the full $K$-armed bandit. By classical bandit results (see, e.g., \cite{auer2002finite}), this implies $O( (K/n) \log(T) / \Delta )$ regret over horizon $T$.
\end{itemize}
Hence, summing up the two terms, \cite{sankararaman2019social,chawla2020gossiping,newton2021asymptotic} provide regret bounds of the form\footnote{More precisely, \cite{chawla2020gossiping,newton2021asymptotic} prove \eqref{eqIntroHonestRegret}, while the $K/n$ term balloons to $(K/n)+\log n$ in \cite{sankararaman2019social}.}
\begin{equation} \label{eqIntroHonestRegret}
R_T^{(i)} = O \left( \frac{K}{n} \frac{ \log T }{ \Delta } + \text{poly} \left( K , n , \frac{1}{\Delta} \right) \right) ,
\end{equation}
as compared to $O(K \log(T) / \Delta )$ regret for running a single-agent algorithm in isolation.

\subsection{Robust multi-agent bandits on the complete graph}

Despite these improved bounds, \cite{sankararaman2019social,chawla2020gossiping,newton2021asymptotic} require all agents to execute the prescribed algorithm, and in particular, to recommend best arm estimates to their neighbors. As pointed out in \cite{vial2021robust}, this may be unrealistic: in Example \ref{examRecommend}, review spam can be modeled as bad arm recommendations, while in Example \ref{examCommerce}, servers may fail entirely. Hence, \cite{vial2021robust} considers a more realistic setting where $n$ \textit{honest agents} recommend best arm estimates but $m$ \textit{malicious agents} recommend arbitrarily. For this setting, the authors propose a robust version of the algorithm from \cite{chawla2020gossiping} where honest agents \textit{block} suspected malicious agents. More specifically, \cite{vial2021robust} considers the following blocking rule:
\begin{itemize}
\item If agent $i'$ recommends arm $k$ to honest agent $i$, but arm $k$ subsequently performs poorly for $i$ -- in the sense that the upper confidence bound (UCB) algorithm does not select it sufficiently often -- then $i$ temporarily suspends communication with $i'$. 
\end{itemize}

As shown in \cite{vial2021robust}, this blocking scheme prevents each malicious agent from recommending more than $O(1)$ bad arms long-term, which (effectively) results in an $O(m + K/n )$-armed bandit ($O(m)$ malicious recommendations, plus the $O(K/n)$-sized sticky set). Under the assumption that honest and malicious agents are connected by the \textit{complete graph}, this allows \cite{vial2021robust} to prove
\begin{equation} \label{eqIntroMaliciousRegret}
R_T^{(i)} = O \left( \left( \frac{K}{n} + m \right) \frac{ \log T }{ \Delta } + \text{poly} \left( K , n , m , \frac{1}{\Delta} \right) \right) .
\end{equation}
In \cite{vial2021robust}, it is also shown that blocking is necessary: for any $n \in \N$, if even $m=1$ malicious agent is present, the algorithm from \cite{chawla2020gossiping} (which does not use blocking) incurs $\Omega(K \log(T) / \Delta )$ regret. Thus, one malicious agent negates the improvement over the single-agent baseline.

\subsection{Objective and challenges} \label{secChallenges}

Our main goal is to generalize the results of \cite{vial2021robust} from the complete graph to the case where the honest agent subgraph is only connected and undirected. This is nontrivial because \cite{vial2021robust} relies \textit{heavily} on the complete graph assumption. In particular, the analysis in \cite{vial2021robust} requires that $i^\star$ (the agent with the best arm in its sticky set) \textit{itself} recommends the best arm to each of the other honest agents. In other words, each honest agent $i \neq i^\star$ relies on $i^\star$ to inform them of the best arm, which means $i^\star$ must be a neighbor of $i$. Thus, to extend \eqref{eqIntroMaliciousRegret} beyond complete graphs, we need to show a gossip process unfolds (like in the fully cooperative case): $i^\star$ recommends the best arm to its neighbors, who recommend it to their neighbors, etc., spreading it through the network.

The challenge is that, while blocking is necessary to prevent $\Omega(K \log(T) / \Delta )$ regret, it also causes honest agents to \textit{accidentally block each other}. Indeed, due to the aforementioned blocking rule and the noisy rewards, they will block each other until they collect enough samples to reliably identify good arms. From a network perspective, accidental blocking means that edges in the subgraph of honest agents temporarily fail. Consequently, it is not clear if the best arm spreads to all honest agents, or if (for example) this subgraph eventually becomes disconnected, preventing the spread and causing the agents who do not receive the best arm to suffer $\Theta(T)$ regret.

Analytically, accidental blocking means we must deal with a gossip process over a \textit{dynamic graph}. This process is extremely complicated, because the graph dynamics are driven by the bandit algorithms, which in turn affect the future evolution of the graph. Put differently, blocking causes the randomness of the communication protocol and that of the bandit algorithms to become \textit{interdependent}. We note this does not occur for the original non-blocking algorithm, where the two sources of randomness can be cleverly decoupled and separately analyzed -- see \cite[Proposition 4]{chawla2020gossiping}. Thus, in contrast to existing work, we need to analyze the interdependent processes directly.

\subsection{Our contributions} \label{secContributions}

{\bf Failure of the existing blocking rule:} In Section \ref{secExisting}, we show that the algorithm from \cite{vial2021robust} fails to achieve a regret bound of the form \eqref{eqIntroMaliciousRegret} for connected and undirected graphs in general. Toward this end, we define a natural ``bad instance'' in which $n=K$, the honest agent subgraph is an undirected line (thus connected), and all honest agents share a malicious neighbor. For this instance, we propose a malicious strategy that causes honest agents to repeatedly block one another, which results in the best arm spreading extremely slowly. More specifically, we show that if honest agents run the algorithm from \cite{vial2021robust}, then the best arm does not reach honest agent $n$ (the one at the end of the line) until time is \textit{doubly exponential} in $n=K$. Note \cite{vial2021robust} shows the best arm spreads polynomially fast for the complete graph, so we demonstrate a doubly exponential slowdown for complete versus line graphs. This is rather surprising, because for classical rumor processes that do not involve bandits or blocking (see, e.g., \cite{pittel1987spreading}), the slowdown is only exponential (i.e., $\Theta(\log n)$ rumor spreading time on the complete graph versus $\Theta(n)$ on the line graph). As a consequence of the slow spread, we show the algorithm from \cite{vial2021robust} suffers regret
\begin{equation} \label{eqIntroFailure}
R_T^{(n)} = \Omega \left( \min \left\{ \log(T) + \exp \left( \exp \left( \frac{n}{3} \right) \right) , \frac{T}{\log^7 T} \right\} \right) ,
\end{equation}
i.e., it incurs (nearly) linear regret until time $\Omega ( \exp ( \exp ( n / 3 ) ) )$ and thereafter incurs logarithmic regret but with a huge additive term (see Theorem \ref{thmExisting}).

\vspace{3pt} \noindent {\bf Refined blocking rule:} In light of this negative result, we propose a refined blocking rule in Section \ref{secProposed}. Roughly, our rule is as follows: agent $i$ blocks $i'$ for recommending arm $k$ if
\begin{itemize}
\item arm $k$ performs poorly, i.e., it is not chosen sufficiently often by UCB,
\item \textit{and} agent $i$ has not changed its own best arm estimate recently.
\end{itemize}
The second criterion is the key distinction from \cite{vial2021robust}. Intuitively, it says that agents should \textit{not} block for seemingly-poor recommendations until they become confident that their \textit{own} best arm estimates have settled on truly good arms. This idea is the main new algorithmic insight of the paper. It is directly motivated by the negative result of Section \ref{secExisting}; see Remark \ref{remMotivate}.

\vspace{3pt} \noindent {\bf Gossip despite blocking:} Analytically, our main contribution is to show that, with our refined blocking rule, the best arm quickly spreads to all honest agents. The proof is quite involved; we provide an outline in Section \ref{secAnalysis}. At a very high level, the idea is to show that honest agents using our blocking rule eventually stop blocking each other. Thereafter, we can couple the arm spreading process with a much more tractable noisy rumor process that involves neither bandits nor blocking (see Definition \ref{defnRumor}), and that is guaranteed to spread the best arm in polynomial time.

\vspace{3pt} \noindent {\bf Regret upper bound:} Combining our novel gossip analysis with some existing regret minimization techniques, we show in Section \ref{secProposed} that our refined algorithm enjoys the regret bound
\begin{equation} \label{eqIntroOurs}
R_T^{(i)} = O \left( \left( \frac{K}{n} + d_{\text{mal}}(i) \right) \frac{ \log T }{ \Delta } + \text{poly} \left( K , n , m , \frac{1}{\Delta} \right) \right) ,
\end{equation}
where $d_{\text{mal}}(i)$ is the number of malicious neighbors of $i$ (see Theorem \ref{thmProposed}). Thus, our result generalizes \eqref{eqIntroMaliciousRegret} from the complete graph (where $d_{\text{mal}}(i) = m$) to connected and undirected graphs. Moreover, note the leading $\log T$ term in \eqref{eqIntroOurs} is \textit{entirely local} -- only the $d_{\text{mal}}(i)$ malicious agents directly connected to $i$ affect its long-term regret. For example, in the sparse regime $d_{\text{mal}}(i) = O(1)$, our $\log T$ term matches the one in \eqref{eqIntroHonestRegret} up to constants, which (we recall) \cite{chawla2020gossiping,newton2021asymptotic} proved in the case where there are no malicious agents anywhere in the network. In fact, for honest agents $i$ with $d_{\text{mal}}(i) = 0$, we can prove that the $\log T$ term in our regret bound matches the corresponding term from \cite{chawla2020gossiping}, \textit{including} constants (see Corollary \ref{corNoMal}). In other words, we show that for large horizons $T$, the effects of malicious agents do not propagate beyond one-step neighbors. Furthermore, we note that the additive term in \eqref{eqIntroOurs} is polynomial in all parameters, whereas for the existing algorithm it can be doubly exponential in $K$ and $n$, as shown in \eqref{eqIntroFailure} and discussed above.

\vspace{3pt} \noindent {\bf Numerical results:} In Section \ref{secExp}, we replicate the experiments from \cite{vial2021robust} and extend them from the complete graph to $G(n+m,p)$ random graphs. Among other findings, we show that for $p=1/2$ and $p=1/4$, respectively, the algorithm from \cite{vial2021robust} can perform \textit{worse} than the non-blocking algorithm from \cite{chawla2020gossiping} and the single-agent baseline, respectively. In other words, the existing blocking rule becomes a \textit{liability} as $p$ decreases from the extreme case $p=1$ considered in \cite{vial2021robust}. In contrast, we show that our refined rule has lower regret than \cite{chawla2020gossiping} across the range of $p$ tested. Additionally, it outperforms \cite{vial2021robust} on average for all but the largest $p$ and has much lower variance for smaller $p$.

\vspace{3pt} \noindent {\bf Summary:} Ultimately, the high-level messages of this paper are twofold:
\begin{itemize}
\item In multi-agent bandits with malicious agents, we can devise algorithms that simultaneously (1) learn useful information and spread it through the network via gossip, and (2) learn who is malicious and block them to mitigate the harm they cause. Moreover, this harm is local in the sense that it only affects one-hop neighbors.

\item However, blocking must be done carefully; algorithms designed for the complete graph may spread information extremely slowly on general graphs. In particular, the slowdown can be \textit{doubly} exponential, much worse than the exponential slowdown of simple rumor processes.
\end{itemize}

\subsection{Other related work} \label{secRelated}

In addition to the paper \cite{vial2021robust} discussed above, several others have considered multi-agent bandits where some of the agents are uncooperative. In \cite{awerbuch2008competitive}, the honest agents face a non-stochastic (i.e., adversarial) bandit \cite{auer1995gambling} and communicate at every time step, in contrast to the stochastic bandit and limited communication of our work. The authors of \cite{mitra2021exploiting} consider the objective of best arm identification \cite{audibert2010best} instead of cumulative regret. Most of their paper involves a different communication model where the agents/clients collaborate via a central server; Section 6 studies a ``peer-to-peer'' model which is closer to ours but requires additional assumptions on the number of malicious neighbors. A different line of work considers the case where an adversary can corrupt the observed rewards (see, e.g., \cite{bogunovic2020corruption,bogunovic2021stochastic,garcelon2020adversarial,gupta2019better,jun2018adversarial,kapoor2019corruption,liu2019data,liu2021cooperative,lykouris2018stochastic}, and the references therein), which is distinct from the role that malicious agents play in our setting.

For the fully cooperative case, there are several papers with communication models that differ from the aforementioned \cite{chawla2020gossiping,newton2021asymptotic,sankararaman2019social}. For example, agents in \cite{buccapatnam2015information,chakraborty2017coordinated} broadcast information instead of exchanging pairwise arm recommendations, communication in \cite{kolla2018collaborative,lalitha2021bayesian,martinez2019decentralized} is more frequent, the number of transmissions in \cite{madhushani2021call} depends on $\Delta^{-1}$ so could be large, and agents in \cite{landgren2016distributed} exchange arm mean estimates instead of (bit-limited) arm indices.

More broadly, other papers have studied fully cooperative variants of different bandit problems. These include minimizing simple instead of cumulative regret (e.g., \cite{hillel2013distributed,szorenyi2013gossip}), minimizing the total regret across agents rather than ensuring all have low regret (e.g., \cite{dubey2020cooperative,wang2020optimal}), contextual instead of multi-armed bandits (e.g., \cite{chawla2020multi,dubey2020kernel,dubey2020differentially,korda2016distributed,tekin2015distributed}), adversarial rather than stochastic bandits (e.g., \cite{bar2019individual,cesa2016delay,kanade2012distributed}), and bandits that vary across agents (e.g., \cite{bistritz2018distributed,shahrampour2017multi,zhu2021federated}). Another long line of work features collision models where rewards are lower if multiple agents simultaneously pull the same arm (e.g., \cite{anandkumar2011distributed,avner2014concurrent,boursier2019sic,dakdouk2021collaborative,kalathil2014decentralized,liu2010distributed,liu2020competing,mansour2018competing,rosenski2016multi}), unlike our model. Along these lines, other reward structures have been studied, such as reward being a function of the agents' joint action (e.g., \cite{bargiacchi2018learning,bistritz2020cooperative,kao2021decentralized}).

\subsection{Organization}

The rest of the paper is structured as follows. We begin in Section \ref{secPrelim} with definitions. In Section \ref{secAlg}, we introduce the algorithm from \cite{vial2021robust}. Sections \ref{secExisting} and \ref{secProposed} discuss the existing and proposed blocking rules. Section \ref{secExp} contains experiments. We discuss our analysis in Section \ref{secAnalysis} and close in Section \ref{secConc}.

\section{Preliminaries} \label{secPrelim}

\noindent {\bf Communication network:} Let $G = ( [n+m] , E )$ be an undirected graph with vertices $[ n+m ] = \{ 1 , \ldots , n+m \}$. We call $[n]$ the \textit{honest agents} and assume they execute the forthcoming algorithm. The remaining agents are termed \textit{malicious}; their behavior will be specified shortly. For instance, honest and malicious agents represent functioning and failed servers in Example \ref{examCommerce}. The edge set $E$ encodes which agents are allowed to communicate, e.g., if $(i,i') \in E$, the $i$-th and $i'$-th servers can communicate in the forthcoming algorithm.

Denote by $E_{\text{hon}} = \{ (i,i') \in E : i, i' \in [n] \}$ the edges between honest agents and $G_{\text{hon}} = ( [n] , E_{\text{hon}} )$ the honest agent subgraph. For each $i \in [n]$, we let $N(i) = \{ i' \in [n+m] : (i,i') \in E \}$ denote its neighbors, $N_{\text{hon}}(i) = N(i) \cap [n]$ its honest neighbors, and $N_{\text{mal}}(i) = N(i) \setminus [n]$ its malicious neighbors. We write $d(i) = |N(i)|$, $d_{\text{hon}}(i) = |N_{\text{hon}}(i)|$, and $d_{\text{mal}}(i) = |N_{\text{mal}}(i)|$ for the associated degrees, and $\bar{d} = \max_{i \in [n]} d(i)$, $\bar{d}_{\text{hon}} = \max_{i \in [n]} d_{\text{hon}}(i)$, and $\bar{d}_{\text{mal}} = \max_{i \in [n]} d_{\text{mal}}(i)$ for the maximal degrees. We make the following assumption, which generalizes the complete graph case of \cite{vial2021robust}.
\begin{ass} \label{assGraph}
The honest agent subgraph $G_{\text{hon}}$ is connected, i.e., for any $i,i' \in [n]$, there exists $l \in \N$ and $i_0 , i_1 , \ldots , i_l \in [n]$ such that $i_0 = i$, $(i_{j-1},i_j) \in E_{\text{hon}}\ \forall\ j \in [l]$, and $i_l = i'$.
\end{ass}

\noindent {\bf Multi-armed bandit:}  We consider the standard stochastic multi-armed bandit \cite[Chapter 4]{lattimore2020bandit}. Denote by $K \in \N$ the number of arms and $[K]$ the set of arms. For each $k \in [K]$, we let $\nu_k$ be a probability distribution over $\R$ and $\{ X_{i,t}(k) \}_{i \in [n] , t \in \N}$ an i.i.d.\ sequence of \textit{rewards} sampled from $\nu_k$. The interpretation is that, if the $i$-th honest agent chooses the $k$-th arm at time $t$, it earns reward $X_{i,t}(k)$. The objective (to be formalized shortly) is reward maximization. In Example \ref{examRecommend}, for instance, $[K]$ represents the set of restaurants in a city, and the reward $X_{i,t}(k)$ quantifies how much person $i$ enjoys restaurant $k$ if they dine there on day $t$.

For each arm $k \in [K]$, we let $\mu_k = \E [ X_{i,t}(k) ]$ denote the corresponding expected reward. Without loss of generality, we assume the arms are labeled such that $\mu_1 \geq \cdots \geq \mu_K$. We additionally assume the following, which generalizes the $\nu_k = \text{Bernoulli}(\mu_k)$ and $\mu_1 > \mu_2$ setting of \cite{vial2021robust}. Notice that under this assumption, the \textit{arm gap} $\Delta_k \triangleq \mu_1 - \mu_k$ is strictly positive.
\begin{ass} \label{assReward}
Rewards are $[0,1]$-valued, i.e., for each $k \in [K]$, $\nu_k$ is a distribution over $[0,1]$. Furthermore, the best arm is unique, i.e., $\mu_1 > \mu_2$.
\end{ass}

\noindent {\bf Objective:} For each $i \in [n]$ and $t \in \N$, let $I_t^{(i)} \in [K]$ denote the arm chosen by honest agent $i$ at time $t$. Our goal is to minimize the \textit{regret} $R_T^{(i)}$, which is the expected additive loss in cumulative reward for agent $i$'s sequence of arm pulls $\{ I_t^{(i)} \}_{t=1}^T$ compared to the optimal policy that always chooses the best arm $1$. More precisely, we define regret as follows: 
\begin{equation} \label{eqDefnRegret}
R_T^{(i)} \triangleq \sum_{t=1}^T \E \left[ X_{i,t}(1) - X_{i,t}(I_t^{(i)}) \right] = \sum_{t=1}^T \E \left[ \mu_1 - \mu_{I_t^{(i)}} \right] = \sum_{t=1}^T \E \left[ \Delta_{I_t^{(i)}} \right] .
\end{equation}

\section{Algorithm} \label{secAlg}

We next discuss the algorithm from \cite{vial2021robust} (Algorithm \ref{algGeneral} below), which modifies the one from \cite{chawla2020gossiping} to include blocking. For ease of exposition, we begin by discussing the key algorithmic design principles from \cite{chawla2020gossiping} in Section \ref{secAlgExample}. We then define Algorithm \ref{algGeneral} formally in Section \ref{secAlgFormal}. Finally, we introduce and discuss one additional assumption in Section \ref{secAlgAssumption}.

\subsection{Key ideas of the non-blocking algorithm} \label{secAlgExample}

\begin{figure}[t]
\includegraphics[width=\textwidth]{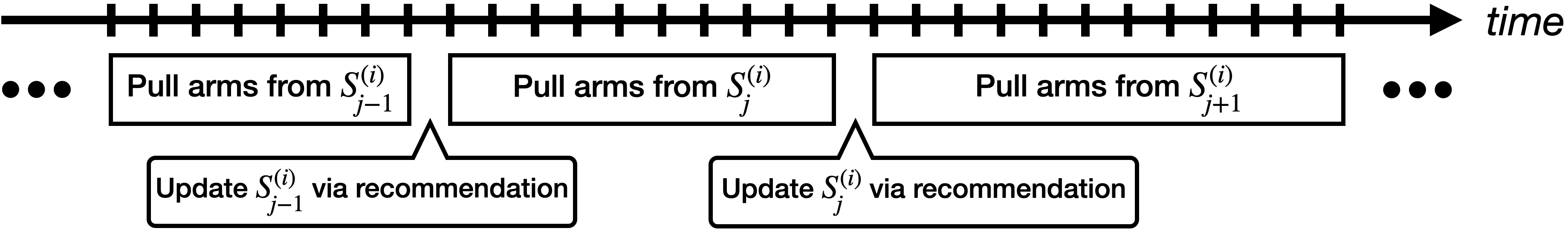}
\caption{Illustration of the phases in Algorithm \ref{algGeneral}; see beginning of Section \ref{secAlgExample} for details.} \label{figAlgorithm}
\Description{Illustration of the phases in Algorithm \ref{algGeneral}; see beginning of Section \ref{secAlgExample} for details.}
\end{figure}

\begin{figure}[t]
     \centering 
     \begin{subfigure}[b]{0.3\textwidth}
         \centering
         \includegraphics[width=0.9\textwidth]{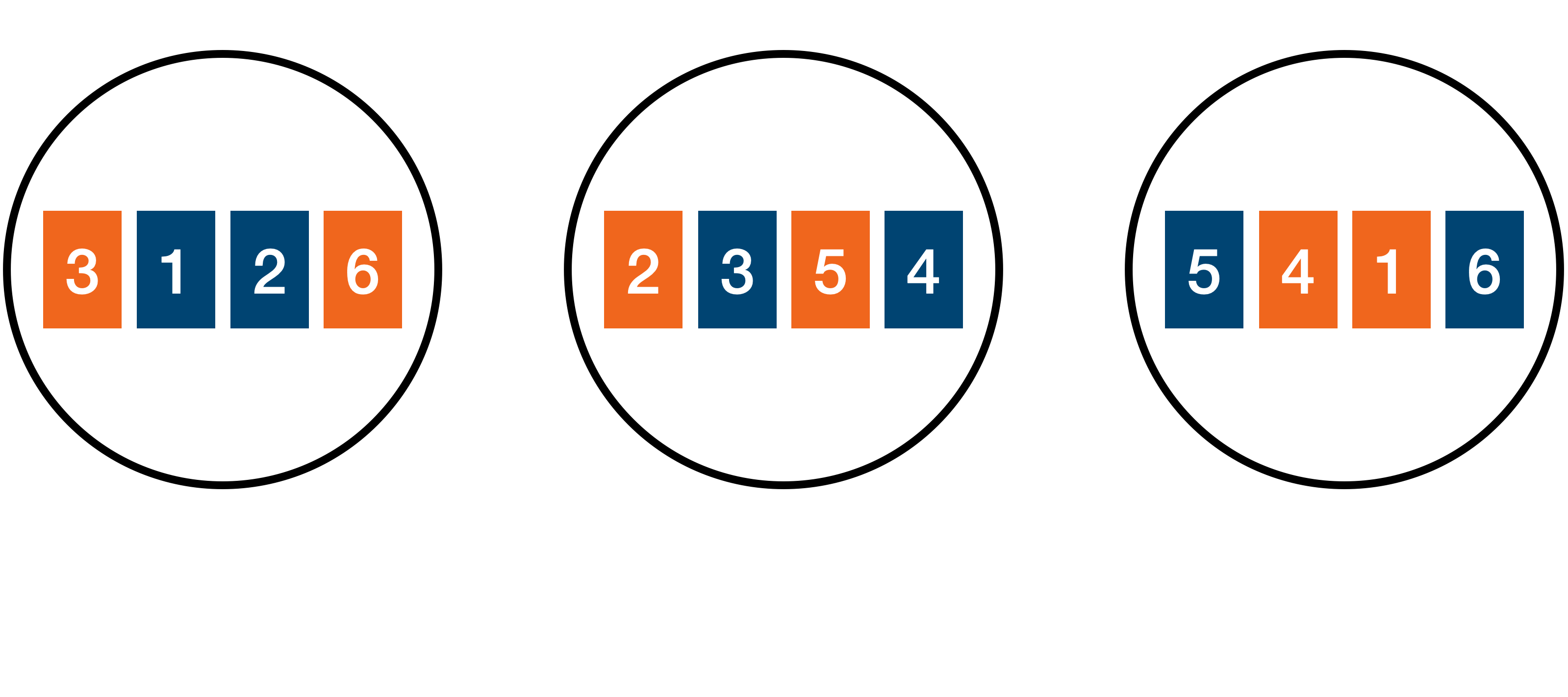}
         \caption{Initial active sets}
         \label{figExample1}
     \end{subfigure}%
	\hspace{0.04\textwidth}%
     \begin{subfigure}[b]{0.3\textwidth}
         \centering
         \includegraphics[width=0.9\textwidth]{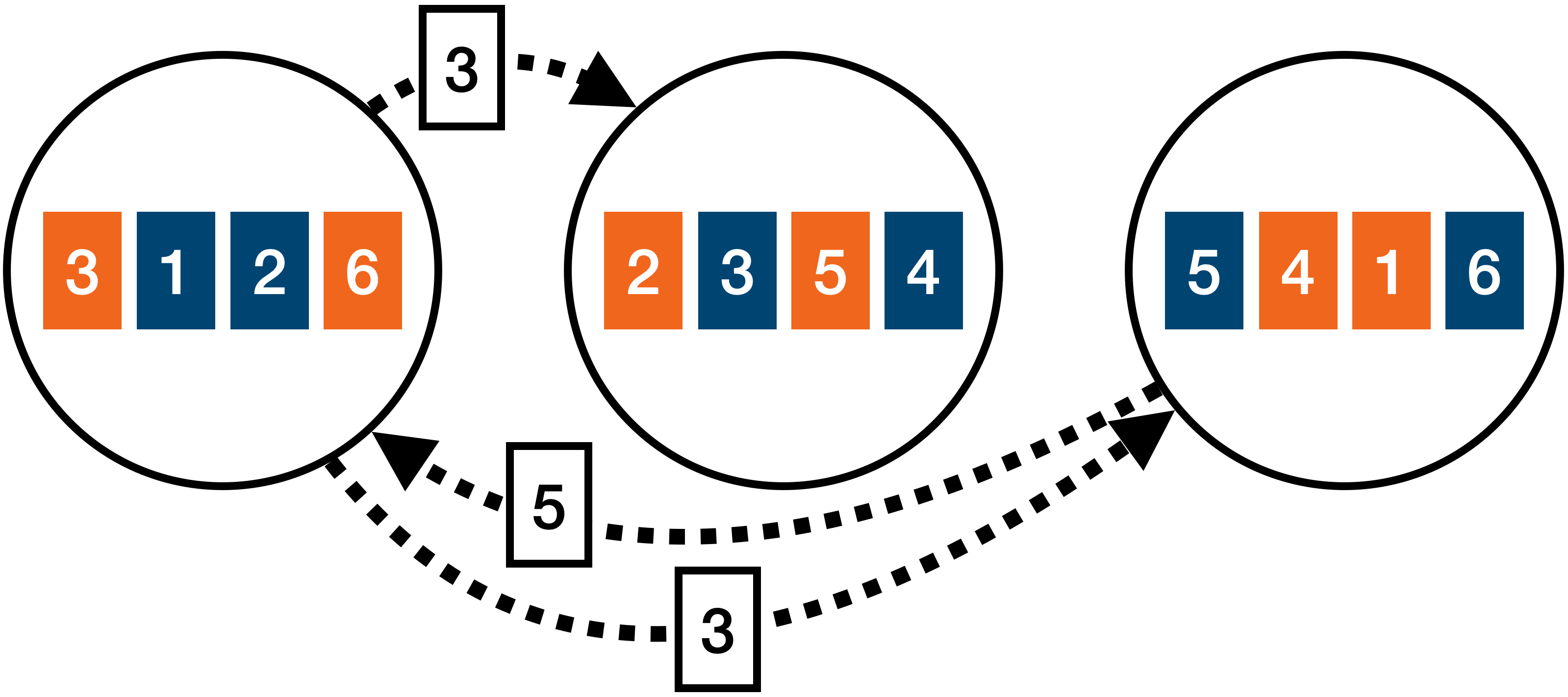}
         \caption{Recommendations}
         \label{figExample2}
     \end{subfigure}%
	\hspace{0.04\textwidth}%
     \begin{subfigure}[b]{0.3\textwidth}
         \centering
         \includegraphics[width=\textwidth]{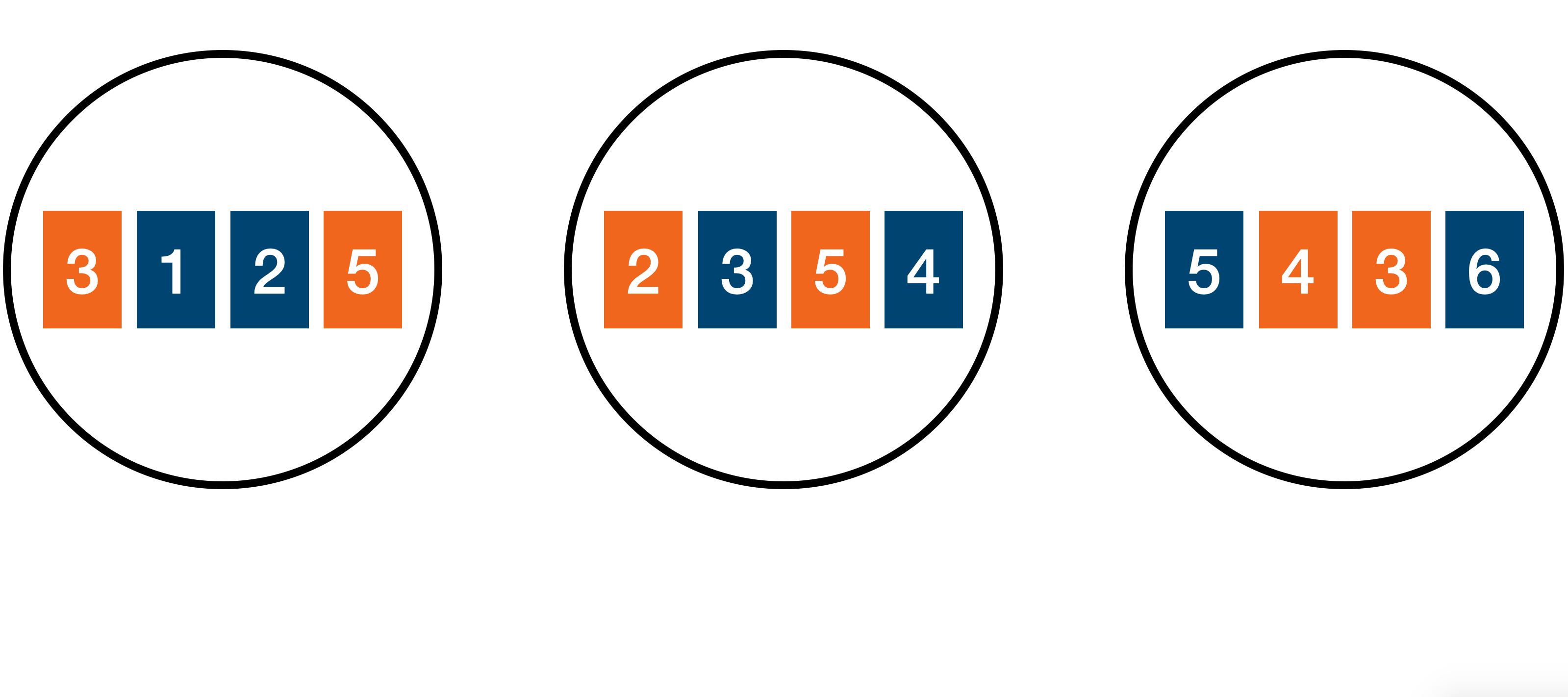}
         \caption{Updated active sets}
         \label{figExample3}
     \end{subfigure}
     \begin{subfigure}[b]{0.3\textwidth}
         \centering
         \includegraphics[width=\textwidth]{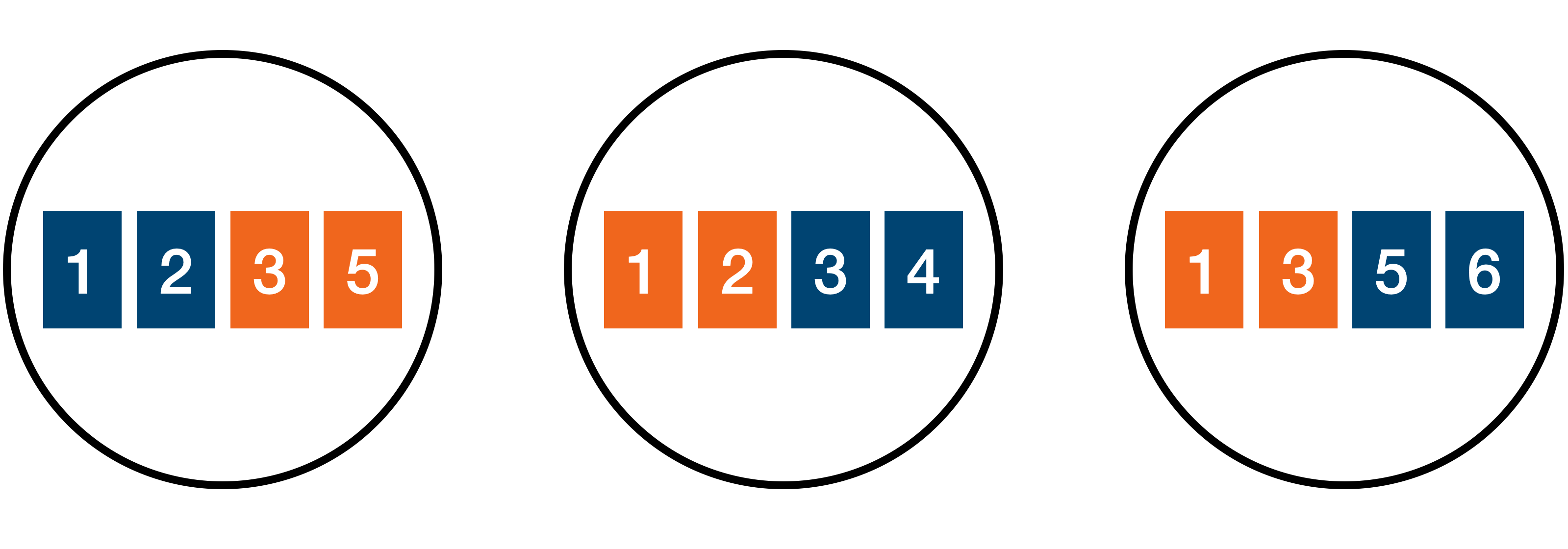}
         \caption{Later active sets}
         \label{figExample4}
     \end{subfigure}%
	\hspace{0.04\textwidth}%
     \begin{subfigure}[b]{0.3\textwidth}
         \centering
         \includegraphics[width=\textwidth]{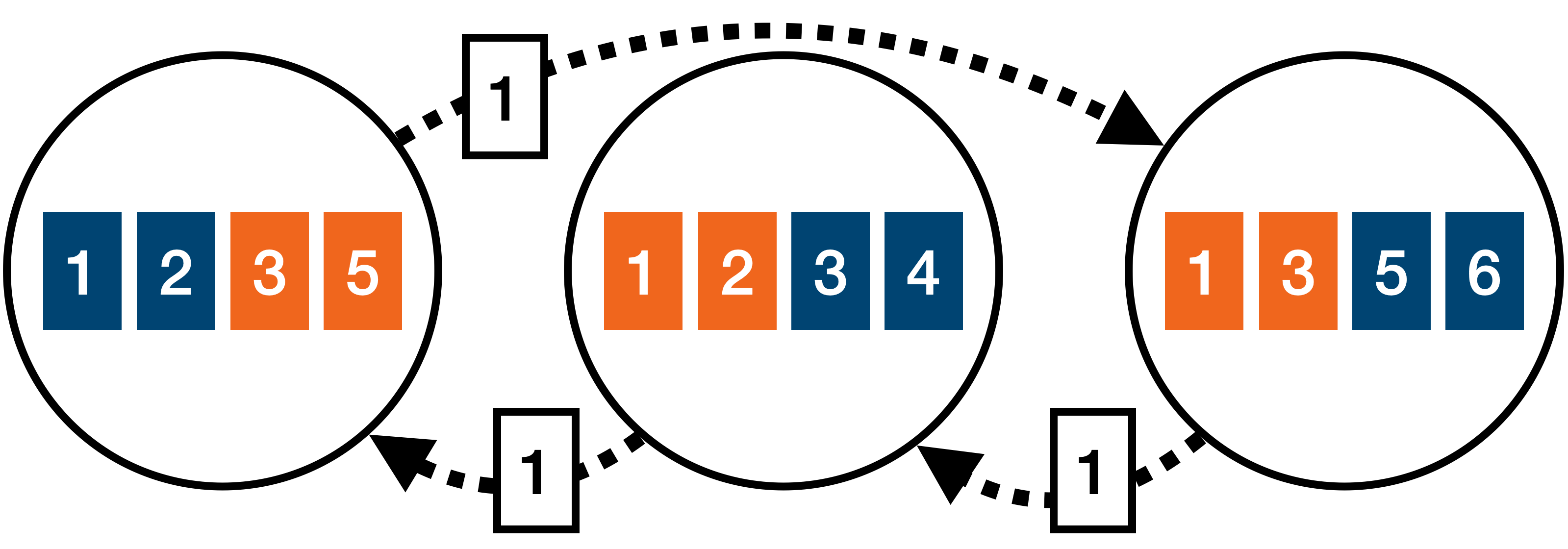}
         \caption{Recommendations}
         \label{figExample5}
     \end{subfigure}%
	\hspace{0.04\textwidth}%
     \begin{subfigure}[b]{0.3\textwidth}
         \centering
         \includegraphics[width=\textwidth]{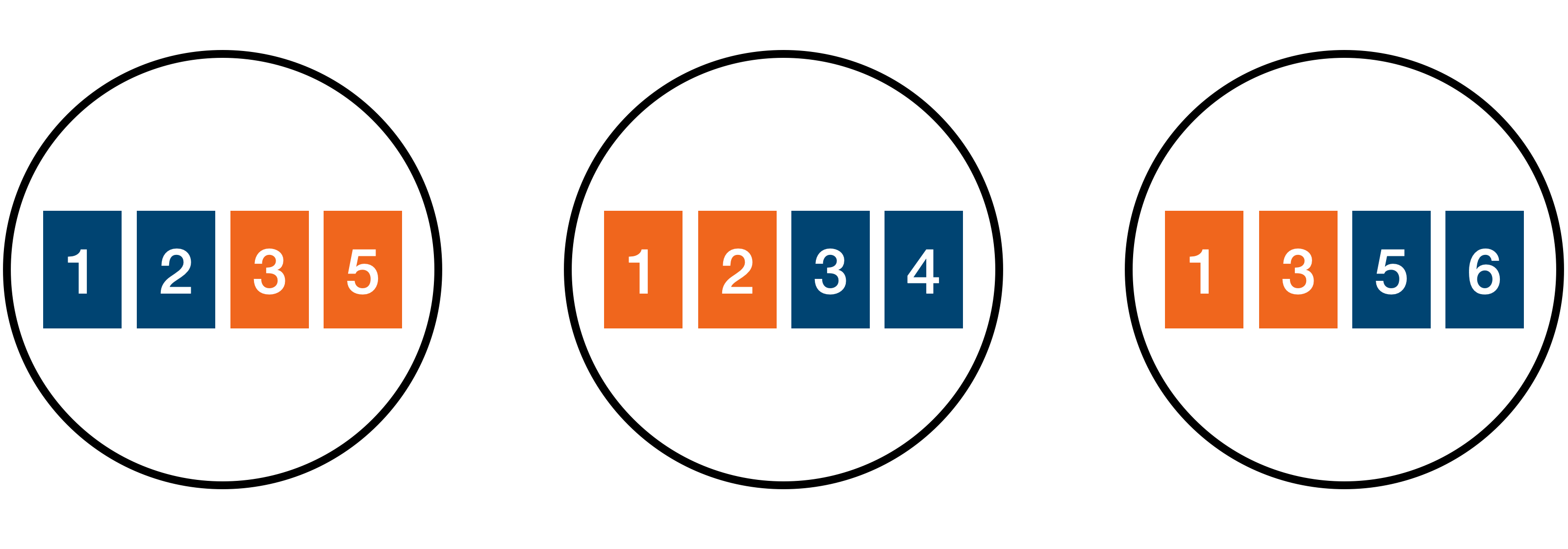}
         \caption{Updated active sets}
         \label{figExample6}
     \end{subfigure}
     \caption{Illustration of the active sets in Algorithm \ref{algGeneral}; see Example \ref{examAlg} for details.} \label{figExample}
	\Description{Illustration of the active sets in Algorithm \ref{algGeneral}; see Example \ref{examAlg} for details.}
\end{figure}

We assume $m=0$ this subsection and describe the non-blocking algorithm from \cite{chawla2020gossiping}.
\begin{itemize}
\item {\bf Phases:} In \cite{chawla2020gossiping}, the time steps $1,\ldots,T$ are grouped into \textit{phases}, whose role is twofold. First, within the $j$-th phase, the $i$-th honest agent only pulls arms belonging to a particular subset $S_j^{(i)} \subset [K]$. We call these \textit{active sets} and detail their construction next. Second, at the end of the $j$-th phase, the agents construct new active sets by exchanging \textit{arm recommendations} with neighbors, in a manner to be described shortly. See Figure \ref{figAlgorithm} for a pictorial description. Notice that the phase durations are increasing, which will be discussed in Section \ref{secAlgFormal}.

\item {\bf Active sets:} The active set $S_j^{(i)}$ will always contain a subset of arms $\hat{S}^{(i)} \subset [K]$ that does not vary with the phase $j$. Following \cite{chawla2020gossiping,vial2021robust}, we call $\hat{S}^{(i)}$ the \textit{sticky set} and its elements \textit{sticky arms}. The sticky sets ensure that each arm is explored by some agent, as will be seen in the forthcoming example. In addition, $S_j^{(i)}$ will contain two non-sticky arms that are dynamically updated across phases $j$ based on arm recommendations from neighbors.

\item {\bf Arm recommendations:} After the $j$-th phase, each agent $i$ contacts a random neighbor, who responds with whichever of their active arms performed ``best'' in the current phase. Upon receiving this recommendation, $i$ adds it to its active set and discards whichever currently-active non-sticky arm (i.e., whichever element of $S_j^{(i)} \setminus \hat{S}^{(i)}$) performed ``worse''. (We quantify ``best'' and ``worse'' in the formal discussion of Section \ref{secAlgFormal}.)
\end{itemize}

\begin{exam} \label{examAlg}
Each subfigure of Figure \ref{figExample} depicts $n=3$ honest agents as circles and their active sets as rectangles. The blue rectangles are sticky sets, the orange rectangles are non-sticky arms, and the arms are sorted by performance. For example, the left agent in Figure \ref{figExample1} has sticky set $\{1,2\}$ and active set $\{1,2,3,6\}$ and believes arm $3$ to be the best of these. Note the blue sticky sets partition $[K] = [6]$, so at each phase, each arm is active for some agent. This ensures the best arm is never permanently discarded during the arm recommendations discussed above. Figure \ref{figExample2} shows agents recommending the active arms they believe are best, and Figure \ref{figExample3} depicts the updated active sets. For instance, the left agent replaces its worse non-sticky arm $6$ with the recommendation $5$. Figure \ref{figExample4} shows a later phase where the best arm $1$ has spread to all agents, who have all identified it as such. Thereafter, all agents recommend $1$, so the active set remains fixed (Figures \ref{figExample5} and \ref{figExample6}). Hence, all agents eventually exploit the best arm while only exploring a subset of the suboptimal arms (three instead of five here).
\end{exam}

\subsection{Formal definition of the blocking algorithm} \label{secAlgFormal}

The algorithm in \cite{vial2021robust} supplements the one from Section \ref{secAlgExample} with a blocking procedure. Specifically, honest agents run the algorithm from \cite{chawla2020gossiping} while maintaining \textit{blocklists} of neighbors they are unwilling to communicate with. This approach is defined in Algorithm \ref{algGeneral} and detailed next.

\begin{algorithm} \caption{Multi-agent bandits with blocking (executed by $i \in [n]$)} \label{algGeneral}

{\bf Input:} UCB parameter $\alpha > 0$, phase parameter $\beta > 1$, sticky arms $\hat{S}^{(i)}$ with $|\hat{S}^{(i)}|=S \leq K-2$\label{algInput}

Initialize $A_j= \ceil{ j^{\beta} }, P_j^{(i)} = \emptyset\ \forall\ j \in \N$ (communication times and blocklists) \label{algInitCommGossip}

Set $j=1$ (current phase), let $\{ U_j^{(i)}, L_j^{(i)} \} \subset [K] \setminus \hat{S}^{(i)}$ be two distinct non-sticky arms and $S_j^{(i)} = \hat{S}^{(i)} \cup \{ U_j^{(i)}, L_j^{(i)} \}$ the initial active set \label{algInitActiveSet}

\For{$t \in \N$}{

Pull $I_t^{(i)} = \argmax_{k \in S_j^{(i)} }   \left( \hat{\mu}_k^{(i)}(t-1) + \sqrt{  \alpha \log(t) / T_k^{(i)}(t-1) } \right)$ (UCB over active set) \label{algPullArm}

\If{$t = A_j$ (if communication occurs at this time)}{

$B_j^{(i)} = \argmax_{k \in S_j^{(i)} } \left( T_k^{(i)}(A_j) - T_k^{(i)}(A_{j-1}) \right)$ (most played active arm in this phase) \label{algMostPlayed}

$\{ P_{j'}^{(i)} \}_{j'=j}^{\infty} \leftarrow \texttt{Update-Blocklist} ( \{ P_{j'}^{(i)} \}_{j'=j}^{\infty}  )$ (\cite{vial2021robust} uses Alg.\ \ref{algUpdateBlockOld}; we propose Alg.\ \ref{algUpdateBlockNew}) \label{algUpdateGossipDist}

$(H_j^{(i)} , R_j^{(i)}) = \texttt{Get-Recommendation}(i,j,P_j^{(i)})$ (see Alg.\ \ref{algGetArm}) \label{algGetRec}

\eIf{$R_j^{(i)} \notin S_j^{(i)}$ (if recommendation not already active)}{ \label{algUpdateActiveStart}

$U_{j+1}^{(i)} = \argmax_{k \in \{ U_j^{(i)} , L_j^{(i)} \} } \left( T_k^{(i)}(A_j) - T_k^{(i)}(A_{j-1})\right) $ (best non-sticky active arm) \label{algUjPlus1}

$L_{j+1}^{(i)} = R_j^{(i)}$ (replace worst non-sticky active arm with recommendation) \label{algLjPlus1}

$S_{j+1}^{(i)} = \hat{S}^{(i)} \cup \{ U_{j+1}^{(i)}, L_{j+1}^{(i)} \}$ (new active set is sticky set and two non-sticky arms) \label{algNewActive}

}{

$S_{j+1}^{(i)} = S_j^{(i)}$ (keep the same active set, since recommendation is already active) \label{algSameActive}

} \label{algUpdateActiveEnd}

$j \leftarrow j+1$ (increment phase) \label{algIncrPhase}

}

}

\end{algorithm}

\begin{algorithm} \caption{$(H_j^{(i)} , R_j^{(i)}) = \texttt{Get-Recommendation}(i,j,P_j^{(i)})$ (black box to $i \in [n]$)} \label{algGetArm}

{\bf Input:} Agent $i \in \{1,\ldots,n\}$, phase $j \in \N$, blocklist $P_j^{(i)}$

Sample $H_j^{(i)}$ from $N(i) \setminus P_j^{(i)}$ (non-blocked neighbors) uniformly at random \label{algSampleRecommender}

\eIf{$H_j^{(i)} \leq n$ (if the sampled agent is honest)}{

Set $R_j^{(i)} = B_j^{ ( H_j^{(i)} ) }$ (honest agents recommend most played arm from this phase) \label{algRecMostPlayed}}{

Choose $R_j^{(i)} \in [K]$ arbitrarily (malicious agents recommend arbitrary arms)

}

{\bf Output:} $(H_j^{(i)}, R_j^{(i)})$

\end{algorithm}

\vspace{3pt} \noindent {\bf Inputs (Line \ref{algInput}):} The first input is a standard UCB exploration parameter $\alpha > 0$, which will be discussed shortly. The input $\beta > 1$ controls the lengths of the phases; namely, the $j$-th phase encompasses times $1+A_{j-1} , \ldots , A_j$, where $A_j \triangleq  \ceil{j^\beta}$. Note the phase duration $A_j - A_{j-1} = O(j^{\beta-1})$ grows with $j$, as shown in Figure \ref{figAlgorithm}. The final input is an $S$-sized sticky set $\hat{S}^{(i)}$ ($S=2$ in Example \ref{examAlg}), which, as in \cite{vial2021robust}, we assume are provided to the agents (see Section \ref{secAlgAssumption} for more details).

\vspace{3pt} \noindent {\bf Initialization (Lines \ref{algInitCommGossip}-\ref{algInitActiveSet}):} To start, $i$ initializes the times $A_j$ at which the $j$-th phase ends, along with the blocklist $P_j^{(i)}$. Additionally, $i$ chooses two distinct (but otherwise arbitrary) non-sticky arms $U_1^{(i)}$ and $L_1^{(i)}$ and constructs the active set $S_1^{(i)} = \hat{S}^{(i)} \cup \{U_1^{(i)} , L_1^{(i)} \}$. Notice that the active set contains the sticky set and two arms that depend on the phase, as described in Section \ref{secAlgExample}.

\vspace{3pt} \noindent  {\bf UCB over the active set (Line \ref{algPullArm}):} As was also mentioned in Section \ref{secAlgExample}, $i$ only pulls arms from its current active set $S_j^{(i)}$. More specifically, at each time $t$ during phase $j$, $i$ chooses the active arm $I_t^{(i)} \in S_j^{(i)}$ that maximizes the UCB in Line \ref{algPullArm} (see \cite[Chapters 7-10]{lattimore2020bandit} for background). Here $T_k^{(i)}(t-1)$ and $\hat{\mu}_k^{(i)}(t-1)$ are the number of pulls of $k$ and the empirical mean of those pulls, i.e.,
\begin{equation*}
T_k^{(i)}(t-1) = \sum_{s \in [t-1]} \ind ( I_s^{(i)} = k ) , \quad \hat{\mu}_k^{(i)}(t-1) = \frac{1}{T_k^{(i)}(t-1)} \sum_{s \in [t-1] : I_s^{(i)} = k} X_{i,s}(k) ,
\end{equation*}
where $X_{i,s}(k) \sim \nu_k$ as in Section \ref{secPrelim} and $\ind$ is the indicator function.

\vspace{3pt} \noindent {\bf Best arm estimate (Line \ref{algMostPlayed}):} At the end of phase $j$ (i.e., when $t = A_j$), $i$ defines its best arm estimate $B_j^{(i)}$ as the active arm it played the most in phase $j$. The intuition is that, for large horizons, the arm chosen most by UCB is a good estimate of the true best arm \cite{bubeck2011pure}. Thus, because phase lengths are increasing (see Figure \ref{figAlgorithm}), $B_j^{(i)}$ will be a good estimate of the best active arm for large $j$.

\vspace{3pt} \noindent {\bf Blocklist update (Line \ref{algUpdateGossipDist}):} Next, $i$ calls the \texttt{Update-Blocklist} subroutine to update its blocklist $P_j^{(i)}$. The implementation of this subroutine is the key distinction between \cite{vial2021robust} and our work. We discuss the respective implementations in Sections \ref{secExisting} and \ref{secProposed}, respectively.

\vspace{3pt} \noindent {\bf Arm recommendations (Line \ref{algGetRec}):} Having updated $P_j^{(i)}$, $i$ requests an arm recommendation $R_j^{(i)}$ via Algorithm \ref{algGetArm}. Algorithm \ref{algGetArm} is a black box (i.e., $i$ provides the input and observes the output), which samples a random non-blocked neighbor $H_j^{(i)} \in N(i) \setminus P_j^{(i)}$. If $H_j^{(i)}$ is honest, it recommends its best arm estimate, while if malicious, it recommends an arbitrary arm.\footnote{Technically, malicious recommendations need to be measurable; see \cite[Section 3]{vial2021robust} for details.}

\vspace{3pt} \noindent {\bf Updating the active set (Lines \ref{algUpdateActiveStart}-\ref{algUpdateActiveEnd}):} Finally, $i$ updates its active set as in Section \ref{secAlgExample}. In particular, if the recommendation $R_j^{(i)}$ is not currently active\footnote{If the recommendation is currently active, the active set remains unchanged (see Line \ref{algSameActive} of Algorithm \ref{algGeneral}).}, $i$ defines $U_{j+1}^{(i)}$ to be the non-sticky arm that performed better in phase $j$, in the sense that UCB chose it more often (following the above intuition from \cite{bubeck2011pure}). The other non-sticky arm $L_{j+1}^{(i)}$ becomes the recommendation $R_j^{(i)}$, and the new active set becomes $S_{j+1}^{(i)} = \hat{S}^{(i)} \cup \{ U_{j+1}^{(i)} , L_{j+1}^{(i)} \}$ (the sticky set and two other arms, as above).

\subsection{Additional assumption} \label{secAlgAssumption}

Observe that Algorithm \ref{algGeneral} does not preclude the case where the best arm is not in any honest agent's sticky set, i.e., $1 \notin \cup_{i=1}^n \hat{S}^{(i)}$. In this case, the best arm may be permanently discarded, which causes linear regret even in the absence of malicious agents. For example, this would occur if $1$ was not a sticky arm for the left agent in Figure \ref{figExample} (since the right agent discards $1$ in Figure \ref{figExample3}). To prevent this situation, we will follow \cite{sankararaman2019social,chawla2020gossiping,newton2021asymptotic,vial2021robust} in assuming the following.

\begin{ass} \label{assSticky}
There exists $i^\star \in [n]$ with the best arm in its sticky set, i.e., $1 \in \hat{S}^{(i^\star)}$.
\end{ass}

\begin{rem} \label{remSticky}
As discussed in \cite[Appendix N]{chawla2020gossiping}, Assumption \ref{assSticky} holds with high probability if $S$ (the size of the sticky set input to Algorithm \ref{algGeneral}) is set to $\tilde{\Theta}(K/n)$ and each sticky set $\hat{S}^{(i)}$ is sampled uniformly at random from the $S$-sized subsets of $[K]$.
\end{rem}

\begin{rem} \label{remKnowledgeOfN}
The choice $S = \tilde{\Theta}(K/n)$ from Remark \ref{remSticky} requires the honest agents to know an order-accurate estimate of $n$, i.e., they need to know some $n' = \tilde{\Theta}(n)$ in order to set $S = \tilde{\Theta}(K/n')$ and ensure that $S = \tilde{\Theta}(K/n)$. As discussed in \cite[Remark 7]{vial2021robust}, this amounts to knowing order-accurate estimates of $n+m$ and $n/(n+m)$. The former quantity is the total number of agents, knowledge of which is rather benign and is also assumed in the fully-cooperative setting \cite{sankararaman2019social,chawla2020gossiping,newton2021asymptotic}. The latter requires the agents to know that, e.g., half of the others are honest, which is similar in spirit to the assumptions in related problems regarding social learning in the presence of adversarial agents (e.g., \cite{leblanc2013resilient}).
\end{rem}

\begin{rem} \label{remAltModel}
Alternatively, we can avoid Assumption \ref{assSticky} entirely by defining the set of the arms to be those initially known by the honest agents (i.e., their sticky sets), rather than sampling the sticky sets from a larger ``base set" as in Remark \ref{remSticky}. In this alternative model, the honest agents aim to identify and spread through the network whichever of the initially-known arms is best, similar to what happens on platforms like Yelp (see Example \ref{examRecommend}). In contrast, the Section \ref{secPrelim} model allows for the pathological case where the base set contains a better arm than any initially known to honest agents (e.g., where no honest Yelp user has ever dined at the best restaurant). Coping with these pathological cases either requires Assumption \ref{assSticky}, or another mode of exploration (i.e., exploration of base arms) that obfuscates the key point of our work (collaborative bandit exploration amidst adversaries). For these reasons, we prefer the alternative model, but to enable a cleaner comparison with prior work \cite{vial2021robust}, we restrict attention to the Section \ref{secPrelim} model (which generalizes that of \cite{vial2021robust}).
\end{rem}

\section{Existing blocking rule} \label{secExisting}

We can now define the blocking approach from \cite{vial2021robust}, which is provided in Algorithm \ref{algUpdateBlockOld}. In words, the rule is as follows: if the recommendation $R_{j-1}^{(i)}$ from phase $j-1$ is not $i$'s most played arm in the subsequent phase $j$, then the agent $H_{j-1}^{(i)}$ who recommended it is added to the blocklists $P_j^{(i)} , \ldots , P_{j^\eta}^{(i)}$, where $\eta > 1$ is a tuning parameter. By Algorithm \ref{algGetArm}, this means $i$ blocks (i.e., does not communicate with) $H_{j-1}^{(i)}$ until phase $j^\eta+1$ (at the earliest). Thus, agents block others whose recommendations perform poorly -- in the sense that UCB does not play them often -- and the blocking becomes more severe as the phase counter $j$ grows. See \cite[Remark 4]{vial2021robust} for further intuition.

\begin{algorithm} \caption{$\{ P_{j'}^{(i)} \}_{j'=j}^{\infty}= \texttt{Update-Blocklist}$ (executed by $i \in [n]$, existing rule from \cite{vial2021robust})} \label{algUpdateBlockOld}

\If{$j > 1$ and $B_j^{(i)} \neq R_{j-1}^{(i)}$ (if previous recommendation not most played)}{ 

$P_{j'}^{(i)} \leftarrow P_{j'}^{(i)} \cup \{ H_{j-1}^{(i)} \}\ \forall\ j' \in \{j,\ldots,\ceil{j^\eta}\}$ (block the recommender until phase $j^\eta$)

}
\end{algorithm}

In the remainder of this section, we define a bad instance (Section \ref{secBadInstance}) on which this blocking rule provably fails (Section \ref{secNegResult}). Our goal here is to demonstrate a single such instance in order to show this blocking rule must be refined. Therefore, we have opted for a concrete example, which includes some numerical constants (e.g., $13/15$ in \eqref{eqBadInstanceMu}, the $7$ in the $\log^7 T$ term in Theorem \ref{thmExisting}, etc.) that have no particular meaning. Nevertheless, the instance can be generalized; see Remark \ref{remOneBad}.

\subsection{Bad instance} \label{secBadInstance}

The network and bandit for the bad instance are as follows:
\begin{itemize}
\item There are an even number of honest agents (at least four) arranged in a line, increasing in index from left to right, and there is a malicious agent connected to each of the honest ones. Mathematically, we have $n \in \{4,6,8,\ldots\}$, $m=1$, and $E = \{ (i,i+1) \}_{i=1}^{n-1} \cup \{ (i,n+1) \}_{i=1}^n$.
\item There are $K=n$ arms that generate deterministic rewards (i.e., $\nu_k = \delta_{\mu_k}$) with
\begin{equation} \label{eqBadInstanceMu}
\mu_1 = 1 , \quad \mu_k= \frac{13}{15} + \sum_{h=1}^{(n/2)-k} 2^{-2^{h+1}}\ \forall\ k \in \{ 2 , \ldots , n/2 \} , \quad \mu_k = 0\ \forall\ k > n/2 .
\end{equation}
Intuitively, there are three sets of arms: the best arm, $(n/2)-1$ mediocre arms, and $n/2$ bad arms. We provide further intuition in the forthcoming proof sketch. For now, we highlight three key properties. First, the gap from mediocre to bad arms is constant, i.e., $\mu_{k_1} - \mu_{k_2} \geq 13/15$ when $k_1 \leq n/2 < k_2$. Second, the gaps between mediocre arms are doubly exponentially small, i.e., $\mu_k - \mu_{k+1} = 2^{ - 2^{ (n/2) - k + 1 } }$ for $k \in \{2,\ldots,(n/2)-1\}$. Third, the gap $\Delta_2$ from the best to the mediocre arms is at least $1/15$, as shown in Appendix \ref{appProofExisting}.
\end{itemize}

\begin{obs} \label{obsAvoidBlock}
Since rewards are deterministic, the most played arm $B_{j+1}^{(i)}$ in phase $j+1$ is a deterministic function of the number of plays of the active arms at the beginning of the phase, i.e., of the set $\{ T_k^{(i)}(A_j) \}_{k \in S_{j+1}^{(i)} }$. Hence, when the $j$-th recommendation is already active (i.e., when $R_j^{(i)} \in S_j^{(i)}$, which implies $S_{j+1}^{(i)} = S_j^{(i)}$ in Algorithm \ref{algGeneral}), $B_{j+1}^{(i)}$ is a function of $\{ T_k^{(i)}(A_j) \}_{k \in S_j^{(i)} }$, which is information available to the malicious agent at the $j$-th communication time $A_j$. Consequently, the malicious agent can always recommend some $R_j^{(i)} \in S_j^{(i)}$ such that $B_{j+1}^{(i)} = R_j^{(i)}$ to avoid being blocked by $i$.
\end{obs}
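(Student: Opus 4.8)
The plan is to treat this as a determinism bookkeeping argument: first make precise that an entire phase of agent $i$'s run is a deterministic function of a small amount of state, and then read off the malicious agent's strategy from that. As a preliminary step I would fix, once and for all, a deterministic tie-breaking rule for every $\argmax$ appearing in Algorithms \ref{algGeneral} and \ref{algGetArm} (in particular Lines \ref{algPullArm} and \ref{algMostPlayed}). Then I would prove, by induction on $t$ over a single phase, the claim: for any $j$, the arm $I_t^{(i)}$ pulled by $i$ at each time $t \in \{A_j+1,\ldots,A_{j+1}\}$ — and hence the end-of-phase counts $T_k^{(i)}(A_{j+1})$ — is a deterministic function of the pair $(S_{j+1}^{(i)}, \{T_k^{(i)}(A_j)\}_{k\in S_{j+1}^{(i)}})$. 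The key point enabling this is that, since $\nu_k = \delta_{\mu_k}$, we have $\hat\mu_k^{(i)}(t-1) = \mu_k$ whenever $T_k^{(i)}(t-1)\ge 1$, so the UCB index in Line \ref{algPullArm} is a fixed function of $t$ (known), $\mu_k$ (a global constant), and $T_k^{(i)}(t-1)$ (with the convention that a zero-count arm has index $+\infty$); the induction then propagates via $T_k^{(i)}(t) = T_k^{(i)}(t-1) + \ind(I_t^{(i)} = k)$, and arms outside $S_{j+1}^{(i)}$ are irrelevant since UCB only plays active arms. In particular $B_{j+1}^{(i)} = \argmax_{k\in S_{j+1}^{(i)}}(T_k^{(i)}(A_{j+1}) - T_k^{(i)}(A_j))$ is a deterministic function of the same pair, which is exactly the first sentence of the observation.

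Next I would specialize to the case $R_j^{(i)}\in S_j^{(i)}$. Here Line \ref{algSameActive} of Algorithm \ref{algGeneral} forces $S_{j+1}^{(i)} = S_j^{(i)}$, so the pair above is pinned down by $\{T_k^{(i)}(A_j)\}_{k\in S_j^{(i)}}$ alone; call the resulting map $f$, so that $B_{j+1}^{(i)} = f(\{T_k^{(i)}(A_j)\}_{k\in S_j^{(i)}})$, and note $f$ takes values in $S_j^{(i)}$ since it is an $\argmax$ over $S_j^{(i)}$. Crucially, $f$ is already well-defined from $S_{j+1}^{(i)} = S_j^{(i)}$ and does not reference $R_j^{(i)}$, so there is no circularity in what follows. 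By the malicious-agent model of \cite{vial2021robust} (recommendations are arbitrary measurable functions of the history; see the footnote after Line \ref{algGetRec}), the malicious agent knows $S_j^{(i)}$ and $\{T_k^{(i)}(A_j)\}_{k\in S_j^{(i)}}$ at communication time $A_j$, and can therefore evaluate $f$. So whenever $i$ samples the malicious agent at time $A_j$, I would have it recommend $R_j^{(i)} := f(\{T_k^{(i)}(A_j)\}_{k\in S_j^{(i)}})$. Since this value lies in $S_j^{(i)}$, Algorithm \ref{algGeneral} takes the Line \ref{algSameActive} branch, so indeed $S_{j+1}^{(i)} = S_j^{(i)}$ and hence $B_{j+1}^{(i)} = f(\{T_k^{(i)}(A_j)\}_{k\in S_j^{(i)}}) = R_j^{(i)}$; the blocking condition $B_{j+1}^{(i)}\ne R_j^{(i)}$ of Algorithm \ref{algUpdateBlockOld} therefore fails at phase $j+1$, so $i$ does not add the malicious agent to its blocklist.

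The hard part here is essentially nil — there is no substantive obstacle, only two points to state carefully. One is fixing the tie-break so that ``deterministic function'' is literally true (otherwise $I_t^{(i)}$, and thus $B_{j+1}^{(i)}$, would be a set-valued map). The other is the adversary's information: rather than re-deriving the formal model I would simply cite \cite{vial2021robust}, observing in addition that $\{T_k^{(i)}(A_j)\}_{k\in S_j^{(i)}}$ is itself determined by the history of $i$'s past communications (again because rewards are deterministic constants), so even a weaker adversary that observes only the history of its own interactions with $i$ would suffice.
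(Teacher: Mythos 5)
Your proof is correct and is essentially the paper's own argument made rigorous: the paper treats the observation as self-justifying, and you supply exactly the missing bookkeeping (a fixed tie-breaking rule, the within-phase induction showing $I_t^{(i)}$ and hence $B_{j+1}^{(i)}$ depend only on $S_{j+1}^{(i)}$ and $\{T_k^{(i)}(A_j)\}_{k\in S_{j+1}^{(i)}}$, and the non-circularity of defining $f$ before choosing $R_j^{(i)}$). The only quibble is your closing aside: an adversary observing only its \emph{own} interactions with $i$ could not in general reconstruct $\{T_k^{(i)}(A_j)\}$, since those counts also depend on recommendations $i$ received from honest neighbors; but this is a side remark and the main argument correctly relies on the full-history measurable adversary of \cite{vial2021robust}.
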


We make the following assumptions on Algorithms \ref{algGeneral} and \ref{algGetArm}:
\begin{itemize}
\item The parameters in Algorithm \ref{algGeneral} are $\alpha = 4$ and $\beta = 2$, while $\eta = 2$ in Algorithm \ref{algUpdateBlockOld}.
\item Sticky sets have size $S=1$ and for any $i \in \{1+n/2,\ldots,n\}$, $i$'s initial active set satisfies $\min S_1^{(i)} > n/2$. Thus, active sets contain three arms, and the right half of the honest agents are initially only aware of the bad arms, i.e., of those that provide no reward.
\end{itemize}

\begin{rem} \label{remOneBad}
Note that Assumptions \ref{assGraph}-\ref{assSticky} all hold for this instance, and the choices $\alpha = 4$ and $\beta = \eta = 2$ are used for the complete graph experiments in \cite{vial2021robust}. Additionally, the instance can be significantly generalized -- the key properties are that $K$ and $n$ have the same scaling, the gaps from mediocre arms to others are constant, the gaps among mediocre arms are doubly exponentially small, and a constant fraction of agents on the right initially only have bad arms active. 
\end{rem}

Finally, we define a particular malicious agent strategy. Let $J_1 = 2^8$ and inductively define $J_{l+1} = ( J_l + 2 )^2$ for each $l \in \N$. Then the malicious recommendations are as follows:
\begin{itemize}
\item If $j = J_l$ and $i \in \{ l + 1 + n/2  , l+ 2 + n/2 \}$ for some $l \in [(n/2)-1]$, set $R_j^{(i)} = 1 - l + n/2$.
\item Otherwise, let $R_j^{(i)} \in S_j^{(i)}$ be such that $B_{j+1}^{(i)} = R_j^{(i)}$ (see Observation \ref{obsAvoidBlock}).
\end{itemize}
Similar to the arm means, we will wait for the proof sketch to explain this strategy in more detail. For now, we only mention that the phases $J_l$ grow doubly exponentially, i.e.,
\begin{equation}\label{eqJlDoubExp}
J_{l+1} = ( J_l + 2 )^2 > J_l^2 > \cdots > J_1^{2^l}\ \forall\ l \in \N .
\end{equation}

\subsection{Negative result} \label{secNegResult}
 
We can now state the main result of this section. It shows that if the existing blocking rule from \cite{vial2021robust} is used on the above instance, then the honest agent $n$ at the end of the line suffers nearly linear regret $\tilde{\Omega}(T)$ until time $T$ exceeds a doubly exponential function of $n=K$.
\begin{thm} \label{thmExisting}
If we run Algorithm \ref{algGeneral} and use Algorithm \ref{algUpdateBlockOld} as the \texttt{Update-Blocklist} subroutine with the parameters and problem instance described in Section \ref{secExisting}, then
\begin{equation*}
R_T^{(n)} = \Omega \left( \min \left\{ \log(T) + \exp \left( \exp \left( n / 3 \right) \right) , T / \log^7 T \right\} \right) .
\end{equation*}
\end{thm}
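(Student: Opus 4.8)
The plan is to track the propagation of the best arm $1$ down the line $1,2,\ldots,n$ and show that, under the existing blocking rule (Algorithm~\ref{algUpdateBlockOld}) and the malicious strategy of Section~\ref{secBadInstance}, this propagation is artificially gated so that arm $1$ cannot become active for honest agent $n$ before phase $J_{(n/2)-1}$, which by \eqref{eqJlDoubExp} is doubly exponential in $n$. Concretely, I would define, for each $l\in\{0,1,\ldots,(n/2)-1\}$, the event that agent $l+1+n/2$ has arm $1$ active (equivalently, has identified a ``good'' arm) by phase $J_l$, and argue inductively that this cannot happen any earlier. The key structural point is that the right half of the agents start only aware of bad arms (reward $0$), so the only way agent $i$ on the right half ever sees a good (mediocre or best) arm is via a recommendation from its left neighbor $i-1$; and the malicious agent, by Observation~\ref{obsAvoidBlock}, keeps itself unblocked at all other times by recommending the current most-played active arm, so it never helps and never gets blocked. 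Thus the good arm must crawl rightward one agent at a time.

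The heart of the argument is why each hop takes doubly exponentially longer than the last. I would show: once agent $i-1$ has a mediocre arm $k$ active alongside two bad arms, UCB (with $\alpha=4$, $\beta=2$, deterministic rewards) plays $k$ overwhelmingly, so $B_j^{(i-1)}=k$ and $i-1$ recommends $k$ to $i$ when contacted. When $i$ receives $k$, it replaces a bad arm, so $i$ now has $k$ and one bad arm (and its sticky bad arm); $i$ then starts favoring $k$. The malicious agent's scheduled interventions at phases $j=J_l$ — recommending arm $1-l+n/2$, i.e., progressively \emph{better} mediocre arms — are what force the issue: after agent $i$ adopts mediocre arm $k$, the malicious agent at the next scheduled phase hands agent $i$ a \emph{slightly better} mediocre arm $k-1$, whose gap to $k$ is $2^{-2^{(n/2)-k+1}}$ by \eqref{eqBadInstanceMu}. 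Because that gap is doubly exponentially small, UCB needs on the order of $\log(A_j)/(\text{gap})^2$, i.e., doubly exponentially many, pulls to distinguish $k-1$ from $k$ and make $k-1$ the most-played arm $B^{(i)}$; until then, the previous recommendation is \emph{not} most-played, so by Algorithm~\ref{algUpdateBlockOld} agent $i$ blocks whoever recommended it — and here the malicious strategy is rigged (with the $\{l+1+n/2, l+2+n/2\}$ index pairing) so that the victim of this accidental block is the honest left neighbor $i-1$, severing the line for a doubly exponential stretch of phases. Iterating, the $l$-th hop cannot complete before phase $\approx J_l$, and $J_{l+1}=(J_l+2)^2$ gives $J_{(n/2)-1} \ge J_1^{2^{(n/2)-2}} = \exp(\Theta(2^{n/2}))$, which exceeds $\exp(\exp(n/3))$ for the relevant range of $n$.

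From ``arm $1$ is not active for agent $n$ until phase $J_{(n/2)-1}$,'' the regret bound follows by the standard reduction: as long as arm $1\notin S_j^{(n)}$, every pull of agent $n$ is of an arm with gap $\Delta_{I_t^{(n)}}\ge \Delta_2 \ge 1/15$ (using the third highlighted property of \eqref{eqBadInstanceMu}), so $R_T^{(n)} \ge \tfrac{1}{15}\cdot(\text{time elapsed before arm }1\text{ becomes active})$. Converting the phase index $J_{(n/2)-1}$ to a time index via $A_j=\ceil{j^\beta}=\ceil{j^2}$ shows agent $n$ incurs $\Omega(t)$ regret for all $t$ up to roughly $\exp(\exp(n/3))$; after that, one invokes the (already available) logarithmic-regret machinery to get the $\log(T)$ term, and the $\min\{\cdot,\cdot\}$ with $T/\log^7 T$ simply caps the bound in the regime where $T$ is not yet that large — this is where the otherwise-meaningless constant $7$ enters, absorbing polylogarithmic slack from the UCB distinguishing-time estimates.

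The main obstacle I anticipate is making the inductive step fully rigorous: I need to show not just that UCB \emph{eventually} distinguishes two doubly-exponentially-close mediocre arms, but that it does \emph{not} do so too early — i.e., a matching \emph{lower} bound on the distinguishing time — and that during this entire window the ``most-played arm $\ne$ last recommendation'' condition stays triggered so the block on the left neighbor persists across all the intervening phases (whose durations are themselves growing). This requires careful bookkeeping of the play counts $T_k^{(i)}(A_j)$ across phases with deterministic rewards, tracking how the UCB indices of the two close arms interleave, and checking that the malicious agent's ``stay unblocked'' recommendations (Observation~\ref{obsAvoidBlock}) never inadvertently perturb these counts in the wrong direction. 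Handling the boundary phases — right when a new mediocre arm is injected and the play counts are reset-like — and confirming the index arithmetic $1-l+n/2$ lands on genuinely mediocre (not bad, not already-seen) arms for every $l\in[(n/2)-1]$ is the fiddly part that the concrete constants in \eqref{eqBadInstanceMu} are chosen to make work.
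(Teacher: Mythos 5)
Your high-level skeleton (induction along the line, phases $J_l$ growing as $J_{l+1}=(J_l+2)^2$, regret $\geq \Delta_2/15$ per step while arm $1$ is inactive for agent $n$) matches the paper, but the core mechanism you give for \emph{why} an honest agent blocks its honest left neighbor is inverted, and as stated it would not produce the required block. You claim that after the malicious agent hands agent $i$ a slightly better mediocre arm $k-1$, UCB cannot distinguish $k-1$ from $k$ within the phase, so ``the previous recommendation is not most-played'' and agent $i$ blocks its left neighbor. But under Algorithm \ref{algUpdateBlockOld} the agent blocks the \emph{recommender} of the arm that failed to be most played --- which in your story is the malicious agent, not the honest neighbor. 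Blocking the malicious agent does not sever the line. The paper's actual mechanism is the opposite: at phase $J_{l+1}$ the malicious agent injects arm $-l+n/2$ into agent $l+2+n/2$; at phase $J_{l+1}+1$ the honest left neighbor $l+1+n/2$ (which only has arms $\geq 1-l+n/2$ active) recommends a strictly \emph{worse} arm $k'$; during phase $J_{l+1}+2$ the injected arm \emph{is} identified as most played (Lemma \ref{lemBestArmId}, a noiseless best-arm-identification \emph{upper} bound on the distinguishing time, which is exactly why the gaps are chosen so that $(\mu_{-l+n/2}-\mu_{1-l+n/2})^{-2}=2^{2^{l+2}}$ is dominated by the phase length at scale $J_{l+1}$); hence $k'$ is not most played and the honest neighbor is blocked until $(J_{l+1}+2)^2=J_{l+2}$. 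The doubly exponential growth of $J_l$ thus comes from iterating the blocking duration $j\mapsto j^{\eta}=j^2$, not from the UCB distinguishing time, and the ``main obstacle'' you anticipate (a matching \emph{lower} bound on the distinguishing time, and the trigger condition persisting phase after phase) is not what the proof requires --- a single triggered block persists for quadratically many phases automatically.

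A second genuine gap is the missing probabilistic bookkeeping. The gossip contacts needed at each stage (e.g., $l+1+n/2$ and $l+2+n/2$ both contacting $n+1$ at $J_l$, then $l+2+n/2$ contacting $l+1+n/2$ at $J_l+1$) each occur only with probability $1/3$, so the paper must propagate $\P(\mathcal{E}_{l+1})\geq 3^{-4}\P(\mathcal{E}_l)$ and the final bound carries a factor $\exp(-\Omega(n))$, which must be checked against $2^{2^{(n/2)+3}}$ to still yield $\exp(\exp(n/3))$. Your write-up treats the gating as essentially deterministic and never multiplies by this probability. Relatedly, your explanation of the $T/\log^7 T$ term is off: the exponent $7$ does not absorb ``polylogarithmic slack from UCB''; it arises because in the intermediate regime one takes $l\approx\log_2\log_2 T$ and the event probability $81^{-l}$ becomes $\Theta(\log^{-7}T)$ since $81<2^7$.
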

\begin{proof}[Proof sketch]
We provide a complete proof in Appendix \ref{appProofExisting} but discuss the intuition here.
\begin{itemize}
\item First, suppose honest agent $1+n/2$ contacts the malicious agent $n+1$ at all phases $j \in [J_1-1]$ (this occurs with constant probability since $J_1$ is constant). Then the right half of honest agents (i.e., agents $1+n/2,\ldots,n$) only have bad arms (i.e., arms $1+n/2,\ldots,n$) in their active sets at phase $J_1$. This is because their initial active sets only contain such arms (by assumption), $n+1$ only recommends currently-active arms before $J_1$, and no arm recommendations flow from the left half of the graph to the right half (they need to first be sent from $n/2$ to $1+n/2$, but we are assuming the latter only contacts $n+1$ before $J_1$).

\item Now consider phase $J_1$. With constant probability, $1+n/2$ and $2+n/2$ both contact $n+1$, who recommends a currently active (thus bad) arm and the mediocre arm $n/2$, respectively. Then, again with constant probability, $2+n/2$ contacts $1+n/2$ at the next phase $J_1+1$; $1+n/2$ only has bad arms active and thus recommends a bad arm. Therefore, during phase $J_1+2$, agent $2+n/2$ has the mediocre arm $n/2$ and some bad recommendation from $1+n/2$ in its active set. The inverse gap squared between these arms is constant, thus less than the length of phase $J_1+2$ (for appropriate $J_1$), so by standard bandit arguments (basically, noiseless versions of best arm identification results from \cite{bubeck2011pure}), $n/2$ will be most played. Consequently, by the blocking rule in Algorithm \ref{algUpdateBlockOld}, $2+n/2$ blocks $1+n/2$ until phase $(J_1+2)^2 = J_2$.

\item We then use induction. For each $l \in [(n/2)-1]$ ($l=1$ in the previous bullet), suppose $l+1+n/2$ blocks $l+n/2$ between phases $J_l+2$ and $J_{l+1}$. Then during these phases, no arm recommendations flow past $l+n/2$, so agents $\geq l+1+n/2$ only play arms $\geq 1-l+n/2$. At phase $J_{l+1}$, the malicious agent recommends $k \geq 1-l+n/2$ and $-l+n/2$ to agents $l+1+n/2$ and $l+2+n/2$, respectively, and at the subsequent phase $J_{l+1}+1$, $l+1+n/2$ recommends $k' \geq l+1+n/2$ to $l+2+n/2$. Similar to the previous bullet, we then show $l+2+n/2$ plays arm $-l+n/2$ more than $k'$ during phase $J_{l+1}+2$ and thus blocks $l+1+n/2$ until $(J_{l+1}+2)^2 = J_{l+2}$, completing the inductive step. The proof that $-l+n/2$ is played more than $k'$ during phase $J_{l+1}+2$ again follows from noiseless best arm identification, although unlike the previous bullet, the relevant arm gap is no longer constant (both could be mediocre arms). However, we chose the mediocre arm means such that their inverse gap squared is at most doubly exponential in $l$, so by \eqref{eqJlDoubExp}, the length of phase $J_{l+1}$ dominates it.
\end{itemize}

In summary, we show that due to blocking amongst honest agents, $l+1+n/2$ does not receive arm $1-l+n/2$ until phase $J_l$, given that some constant probability events occur at each of the times $J_1 , \ldots , J_l$. This allows us to show that, with probability at least $\exp ( - \Omega(n) )$, agent $n$ does not receive the best arm until phase $J_{n/2} = \exp ( \exp ( \Omega(n) ) )$, and thus does not play the best arm until time $\exp ( \exp ( \Omega(n) ) )$ in expectation. Since $\Delta_2$ is constant, we can lower bound regret similarly.
%
\end{proof}

\section{Proposed blocking rule} \label{secProposed}

To summarize the previous section, we showed that the existing blocking rule (Algorithm \ref{algUpdateBlockOld}) may result in honest agents blocking too aggressively, which causes the best arm to spread very slowly. In light of this, we propose a relaxed blocking criteria (see Algorithm \ref{algUpdateBlockNew}): at phase $j$, agent $i$ will block the agent $H_{j-1}^{(i)}$ who recommended arm $R_{j-1}^{(i)}$ at the previous phase $j-1$ if 
\begin{equation} \label{eqNewBlock}
T_{ R_{j-1}^{(i)}}^{(i)}(A_j) \leq \kappa_j  \quad \text{and} \quad B_j^{(i)} = B_{j-1}^{(i)} = \cdots = B_{ \floor{\theta_j} }^{(i)}  ,
\end{equation}
where $\kappa_j \leq A_j$ and $\theta_j \leq j$ are tuning parameters. Thus, $i$ blocks if \textit{both} of the following occur:
\begin{itemize}
\item The recommended arm $R_{j-1}^{(i)}$ performs poorly, in the sense that UCB has not chosen it sufficiently often (i.e., at least $\kappa_j$ times) by the end of phase $j$.
\item Agent $i$ has not changed its own best arm estimate since phase $\theta_j$. Intuitively, this can be viewed as a confidence criterion: if instead $i$ has recently changed its estimate, then $i$ is currently unsure which arm is best, so should not block for recommendations that appear suboptimal at first glance (i.e., those for which the first criterion in \eqref{eqNewBlock} may hold).
\end{itemize}

\begin{algorithm} \caption{$\{ P_{j'}^{(i)} \}_{j'=j}^{\infty}= \texttt{Update-Blocklist}$ (executed by $i \in [n]$, proposed rule)} \label{algUpdateBlockNew}

\lIf{$j > 1$ and \eqref{eqNewBlock} holds}{$P_{j'}^{(i)} \leftarrow P_{j'}^{(i)} \cup \{ H_{j-1}^{(i)} \}\ \forall\ j \in \{j,\ldots,\ceil{j^\eta}\}$ (block recommender)}

\end{algorithm}

\begin{rem} \label{remMotivate}
The first criterion in \eqref{eqNewBlock} is a natural relaxation of demanding the recommended arm be most played. The second is directly motivated by the negative result from Section \ref{secExisting}. In particular, recall from the Theorem \ref{thmExisting} proof sketch that $l+1+n/2$ blocked $l+n/2$ shortly after receiving a new mediocre arm from the malicious agent. Thus, blocking amongst honest agents was always precipitated by the blocking agent changing its best arm estimate. The second criterion in \eqref{eqNewBlock} aims to avoid this.
\end{rem}

\begin{rem} \label{remParameters}
Our proposed rule has two additional parameters compared to the existing one: $\kappa_j$ and $\theta_j$. For our theoretical results, these will be specified in Theorem \ref{thmProposed}; for experiments, they are discussed in Section \ref{secExp}. For now, we only mention that they should satisfy two properties. First, $\kappa_j$ should be $o(A_j)$, so that the first criterion in \eqref{eqNewBlock} dictates a sublinear number of plays. Second, $j - \theta_j$ should grow with $j$, since (as discussed above) the second criterion represents the confidence in the best arm estimate, which grows as the number of reward observations increases.
\end{rem}

In the remainder of this section, we introduce a further definition (Section \ref{secNoisyRumor}), provide a general regret bound under our blocking rule (Section \ref{secPosResult}), and discuss some special cases (Section \ref{secPosCor}).

\subsection{Noisy rumor process} \label{secNoisyRumor}

As discussed in Section \ref{secContributions}, we will show that under our proposed rule (1) honest agents eventually stop blocking each other, and (2) honest agents with the best arm active will eventually recommend it to others. Thereafter, we essentially reduce the arm spreading process to a much simpler rumor process in which each honest agent $i$ contacts a uniformly random neighbor $i'$ and, if $i'$ is an honest agent who knows the rumor (i.e., if the best arm is active for $i'$), then $i'$ informs $i$ of the rumor (i.e., $i'$ recommends the best arm to $i$). The only caveat is that we make \textit{no assumption} on the malicious agent arm recommendations, so we have \textit{no control} over whether or not they are blocked. In other words, the rumor process unfolds over a dynamic graph, where edges between honest and malicious agents may or may not be present, and we have no control over these dynamics. 

In light of this, we take a worst-case view and lower bound the arm spreading process with a noisy rumor process that unfolds on the (static) honest agent subgraph. More specifically, we consider the process $\{ \bar{\mathcal{I}}_j \}_{j=0}^\infty$ that tracks the honest agents informed of the rumor. Initially, only $i^\star$ (the agent from Assumption \ref{assSticky}) is informed (i.e., $\bar{\mathcal{I}}_0 = \{ i^\star \}$). Then at each phase $j \in \N$, each honest agent $i$ contacts a random honest neighbor $i'$. If $i'$ is informed (i.e., if $i' \in \bar{\mathcal{I}}_{j-1}$), then $i$ becomes informed as well (i.e., $i \in \bar{\mathcal{I}}_j$), subject to some $\text{Bernoulli}(\Upsilon)$ noise, where $\Upsilon \leq d_{\text{hon}}(i) / d(i)$. Hence, $i$ becomes informed with probability $| \bar{\mathcal{I}}_{j-1} \cap N_{\text{hon}}(i) | \Upsilon / d_{\text{hon}}(i) \leq  | \bar{\mathcal{I}}_{j-1} \cap N_{\text{hon}}(i) | / d(i)$. Note the right side of this inequality is in turn upper bounded by the probability with which they receive the best arm in the process of the previous paragraph.

More formally, we define the noisy rumor process as follows. The key quantity in Definition \ref{defnRumor} is $\bar{\tau}_{\text{spr}}$, the first phase all are informed. Analogous to \cite{chawla2020gossiping}, our most general result will be in terms of the expected time that this phase occurs, i.e., $\E [A_{ \bar{\tau}_{\text{spr}} }]$. Under Assumption \ref{assGraph}, the latter quantity is $\tilde{O} ( ( n \bar{d}_{\text{hon}} / \Upsilon )^\beta )$, which cannot be improved in general (see Appendix \ref{appCoarseBound}). However, Section \ref{secPosCor} provides sharper bounds for $\E [A_{ \bar{\tau}_{\text{spr}} }]$ in some special cases.

\begin{defn} \label{defnRumor}
Let $\Upsilon = \min_{i \in [n]} d_{\text{hon}}(i) / d(i)$. For each honest agent $i \in [n]$, let $\{ \bar{Y}_j^{(i)} \}_{j=1}^\infty$ be i.i.d.\ $\text{Bernoulli}(\Upsilon)$ random variables and $\{ \bar{H}_j^{(i)} \}_{j=1}^\infty$ i.i.d.\ random variables chosen uniformly at random from $N_{\text{hon}}(i)$. Inductively define $\{ \bar{\mathcal{I}}_j \}_{j=0}^\infty$ as follows: $\bar{\mathcal{I}}_0 = \{ i^\star \}$ (the agent from Assumption \ref{assSticky}) and
\begin{equation} \label{eqPullsSetsNoisy}
\bar{\mathcal{I}}_j  = \bar{\mathcal{I}}_{j -1} \cup \{ i \in [n] \setminus \bar{\mathcal{I}}_{j -1} : \bar{Y}_j^{(i)} = 1 , \bar{H}_j^{(i)} \in \bar{\mathcal{I}}_{j -1}  \}\ \forall\ j \in \N .
\end{equation}
Finally, let $\bar{\tau}_{\text{spr}} = \inf \{ j \in \N : \bar{\mathcal{I}}_j = [n] \}$.
\end{defn}

\subsection{Positive result} \label{secPosResult}

We can now present the main result of this section: a regret upper bound for the proposed blocking rule. We state it first and then unpack the statement in some ensuing remarks. The proof of this result (and all others in this section) is deferred to Appendix \ref{appProofProposed}.
\begin{thm} \label{thmProposed}
Let Assumptions \ref{assGraph}-\ref{assSticky} hold. Suppose we run Algorithm \ref{algGeneral} and use Algorithm \ref{algUpdateBlockNew} as the \texttt{Update-Blocklist} subroutine with $\theta_j = (j/3)^{\rho_1}$ and $\kappa_j = j^{\rho_2} / ( K^2 S )$ in \eqref{eqNewBlock}. Also assume
\begin{equation} \label{eqParamRequire}
\beta > 1 , \quad \eta > 1 , \quad 0 < \rho_1 \leq \frac{1}{\eta}  , \quad \alpha > \frac{3}{2} + \frac{1}{2\beta} + \frac{1}{2 \rho_1^2} , \quad  \frac{1}{2\alpha-3} < \rho_2 < \rho_1(\beta-1) .
\end{equation}
Then for any honest agent $i \in [n]$ and horizon $T \in \N$, we have
\begin{equation} \label{eqRegLogT}
R_T^{(i)} \leq 4 \alpha \log(T) \min \left\{ \frac{2 \eta-1}{\eta-1} \sum_{k=2}^{d_{\text{mal}}(i)+3} \frac{1}{\Delta_k} + \sum_{k=d_{\text{mal}}(i)+4}^{S+d_{\text{mal}}(i)+4} \frac{1}{\Delta_k} , \sum_{k=2}^K \frac{1}{\Delta_k} \right\} +  2 \E [ A_{ 2 \bar{\tau}_{\text{spr}}} ] + C_\star ,
\end{equation}
where $\Delta_k = 1$ by convention if $k > K$. Here $C_\star$ is a term independent of $T$ satisfying
\begin{equation}\label{eqRegAdditive}
C_\star = \tilde{O} \left( \max \left\{  d_{\text{mal}}(i) / \Delta_2  , ( K / \Delta_2 )^2 , S^{\beta/( \rho_1^2(\beta-1)) }  , ( S / \Delta_2^2 )^{\beta/(\beta-1)} , \bar{d}^{\beta/\rho_1} , n K^2 S \right\}  \right) ,
\end{equation}
where $\tilde{O}(\cdot)$ hides dependencies on $\alpha$, $\beta$, $\eta$, $\rho_1$, and $\rho_2$ and log dependencies on $K$, $n$, $m$, and $\Delta_2^{-1}$.
\end{thm}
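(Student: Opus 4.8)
The plan is to decompose $R_T^{(i)}$ into a ``pre-spreading'' cost and a ``post-spreading'' cost, and control each separately. Define a (random) phase $\hat\tau$ after which three good events simultaneously hold: (a) every honest agent has the best arm $1$ active and has settled its best-arm estimate on a truly good arm (so the second criterion in \eqref{eqNewBlock} is eventually satisfiable), (b) no honest agent ever blocks another honest agent again, and (c) UCB concentration holds, so that on every active set the number of pulls of any suboptimal arm $k$ up to time $t$ is $O(\alpha \log(t)/\Delta_k^2)$ in the usual sense. The regret incurred during phases $\le \hat\tau$ is at most $A_{\hat\tau}$ (every step costs at most $1$ by Assumption~\ref{assReward}), so the first job is to show $\E[A_{\hat\tau}] \le 2\,\E[A_{2\bar\tau_{\mathrm{spr}}}] + C_\star$ with $C_\star$ as in \eqref{eqRegAdditive}. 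The regret incurred during phases $> \hat\tau$ is then a ``clean'' multi-agent UCB bound: each honest agent $i$ is effectively running UCB over a set of size $S + d_{\mathrm{mal}}(i) + O(1)$ — its sticky set, the best arm, plus the $O(1)$ bad arms each malicious neighbor can still push through before being (permanently) blocked — which by classical arguments gives the $4\alpha\log T \sum 1/\Delta_k$ term, with the $(2\eta-1)/(\eta-1)$ factor on the first few gaps accounting for the geometric series of re-block/unblock cycles against a persistent malicious neighbor (this is exactly the accounting in \cite{vial2021robust}, so I would cite/adapt it).

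The core of the argument — and the main obstacle — is establishing (a) and (b), i.e.\ the ``gossip despite blocking'' claim, since the graph dynamics and the bandit dynamics are interdependent. I would proceed in the following order. First, a concentration step: choose constants so that with probability $1 - \tilde O(T^{-c})$ (for a decaying-enough $c$, contributing only to $C_\star$ after summing $\sum_T T^{-c}$), for all large-enough phases $j$ and all honest $i$, (i) the empirically most-played active arm $B_j^{(i)}$ equals the best \emph{active} arm, and (ii) whenever a genuinely good arm (gap $O(\Delta_2)$ from best on that active set) has been active for agent $i$ since some phase $\ge \theta_j$, UCB has pulled it at least $\kappa_j$ times by $A_j$. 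Requirement (ii) is where the parameter constraints in \eqref{eqParamRequire} are forced: $\kappa_j = j^{\rho_2}/(K^2 S)$ must be $o(A_j \wedge \text{phase length})$ and must be dominated by the number of pulls a good arm accrues over $\ge (1 - \theta_j/j)$-fraction of recent phases, which (after a UCB-concentration computation with exploration rate $\alpha$) needs $\rho_2 > 1/(2\alpha-3)$ and $\rho_2 < \rho_1(\beta-1)$ and the stated lower bound on $\alpha$ in terms of $\rho_1,\beta$. Second, a ``no accidental blocking'' step: show that once (i)–(ii) hold and an honest agent $i'$ has arm $1$ active with a settled estimate, any honest recommender $i''$ it contacts recommends $B^{(i'')} = 1$ or another good arm, which by (ii) is pulled $> \kappa_j$ times, so the first criterion in \eqref{eqNewBlock} fails and $i$ never blocks $i''$ — hence honest–honest edges become permanently live.

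Third, the spreading step: once honest–honest edges are permanently live (from some phase $\tau_0$ with $\E[A_{\tau_0}] = \tilde O(\text{poly})$ absorbed into $C_\star$), the set of honest agents with arm $1$ active \emph{stochastically dominates} the noisy-rumor process of Definition~\ref{defnRumor} started at that phase — this is precisely the coupling sketched in Section~\ref{secNoisyRumor}: the probability $i$ receives arm $1$ at phase $j$ is at least $|\bar{\mathcal I}_{j-1}\cap N_{\mathrm{hon}}(i)|\,\Upsilon/d_{\mathrm{hon}}(i)$ since contacting \emph{any} honest neighbor is at worst diluted by $d_{\mathrm{hon}}(i)/d(i)\ge\Upsilon$. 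Using the strong Markov property at $\tau_0$ and the fact that $\bar\tau_{\mathrm{spr}}$ is a stopping time with finite, well-understood expectation (Appendix~\ref{appCoarseBound} gives $\E[A_{\bar\tau_{\mathrm{spr}}}] = \tilde O((n\bar d_{\mathrm{hon}}/\Upsilon)^\beta)$), I get $\E[A_{\hat\tau}] \le 2\,\E[A_{2\bar\tau_{\mathrm{spr}}}] + C_\star$; the factor $2$ and the $2\bar\tau_{\mathrm{spr}}$ are slack to absorb the restart offset $\tau_0$ and the phase-length convexity $A_{j_1+j_2}\le 2^{\beta}(A_{j_1}+A_{j_2})$-type inequality. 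Finally I combine: $R_T^{(i)} \le \E[A_{\hat\tau}] + (\text{post-}\hat\tau\ \text{UCB regret})$, then simplify the post-$\hat\tau$ term to the stated $\min\{\cdot,\cdot\}$ (the second branch $\sum_{k=2}^K 1/\Delta_k$ being the trivial bound when $d_{\mathrm{mal}}(i)+S$ is comparable to $K$), and collect all $T$-independent contributions — the concentration tail sum, $\tau_0$, the re-blocking geometric overhead, and the per-agent start-up costs — into $C_\star$, verifying each is dominated by one of the terms in the $\max$ of \eqref{eqRegAdditive} (e.g.\ $S^{\beta/(\rho_1^2(\beta-1))}$ comes from how long until $\theta_j$ is large enough that a newly-arrived good arm has been active ``since phase $\theta_j$'', and $nK^2S$ from the union bound over agents and active-set configurations times the $\kappa_j$ normalization). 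The delicate part throughout is keeping the interdependence honest: I must verify that conditioning on the concentration event does not disturb the rumor-process coupling, which works because that event is defined purely in terms of reward randomness and UCB outputs, while the coupling lower-bounds recommendation \emph{reach} using only the contact randomness $H_j^{(i)}$ — the two randomness sources enter the bound monotonically and in the right direction.
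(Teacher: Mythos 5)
Your overall architecture matches the paper's: the same pre/post-spreading decomposition, the same three-ingredient gossip argument (arm-statistics concentration, cessation of honest--honest blocking, coupling with the noisy rumor process of Definition \ref{defnRumor}), and the same appeal to the geometric re-blocking accounting of \cite{vial2021robust} for the $(2\eta-1)/(\eta-1)$ factor. There are, however, two concrete gaps.

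First, your ``no accidental blocking'' step omits the ingredient that makes it work: a communication-frequency lemma guaranteeing that, for all large $j$, each honest agent $i'$ contacts every honest neighbor $i$ at least once between phases $\theta_j$ and $j$ (the paper's $\tau_{\text{com}}$, Step 2 of its outline). Without it there is no reason the arm that $i'$ recommends to $i$ at phase $j-1$ is within $\delta_{j,2}$ of the best arm in $i$'s \emph{own} active set, which is what forces $i$ to play it more than $\kappa_j$ times and defeat the first criterion in \eqref{eqNewBlock}. The actual chain is: $i$'s estimate $k$ has been fixed since $\theta_j$ (second criterion), hence by the communication lemma $i'$ received $k$ at some $j' \in \{\theta_j,\dots,j-2\}$, hence $\mu_{\min S_{j'+1}^{(i')}} \geq \mu_k$, and by a separate ``best active arm decays by at most $K \sup \delta_{j'',1}$'' argument (Claims \ref{clmMuDecayOne}--\ref{clmMuDecayMulti}) $i'$'s most-played arm at $j-1$ is still nearly as good as $\mu_k$. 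Your version argues only about recommenders contacted by an agent that already holds arm $1$; that is insufficient, because the rumor coupling needs \emph{all} honest--honest edges permanently live, including between agents that both hold only mediocre arms --- precisely the failure mode exploited in Theorem \ref{thmExisting}.

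Second, you dispatch the post-$\hat\tau$ regret by citing the accounting of \cite{vial2021robust}, but that accounting yields $O(1)$ bad recommendations per malicious neighbor only when reliable blocking begins at a phase scaling like $T^{1/(\beta K)}$; if it begins at a $T$-independent phase the block/unblock geometric series yields $\Theta(\log\log T)$ bad recommendations, which breaks \eqref{eqRegLogT}. Under the relaxed rule there is a genuinely new window --- phases after the best arm is most played everywhere but before agent $i$ blocks every suboptimal recommendation (the paper's $\tau_{\text{blk}}^{(i)}$, handled as $R_{T,3}^{(i)}$ via Lemma \ref{lemIntNonSticky}) --- that does not arise for the aggressive rule and cannot be inherited from prior work. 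You need an explicit tail bound showing that the event $\{\tau_{\text{blk}}^{(i)} > T^{\gamma/\beta},\, A_{\tau_{\text{spr}}} = O(\log T)\}$ is rare enough to cancel the $\log T$ multiplier on the regret accrued in this window, or else this window contributes a $T$-dependent term not accounted for in \eqref{eqRegLogT} or \eqref{eqRegAdditive}.
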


\begin{rem}
The theorem shows that our algorithm's regret scales as $( d_{\text{mal}}(i) + S ) \log(T) / \Delta$, plus an additive term $2 \E [ A_{ 2 \bar{\tau}_{\text{spr}}} ] + C_\star$ that is independent of $T$ and polynomial in all other parameters. When $S = O(K/n)$ (see Remark \ref{remSticky}), the first term is $O ( ( d_{\text{mal}}(i) + K/n ) \log(T) / \Delta )$, as stated in Section \ref{secContributions}. Also, when $d_{\text{mal}}(i)$ is large, we recover the $O (K \log(T) / \Delta )$ single-agent bound (including the constant $4\alpha$), i.e., if there are many malicious agents, honest ones fare no worse than the single-agent case.
\end{rem}

\begin{rem} \label{remThmParam}
In addition to Assumptions \ref{assGraph}-\ref{assSticky}, the theorem requires the algorithmic parameters to satisfy \eqref{eqParamRequire}. For example, we can choose $\beta = \eta = 2$, $\rho_1 = 1/2$, $\alpha = 4$, and $\rho_2 = 1/3$. More generally, we view these five parameters as small numerical constants and hide them in the $\tilde{O}(\cdot)$ notation.
\end{rem}

\begin{rem} \label{remThmSimpler}
The bound in Theorem \ref{thmProposed} can be simplified under additional assumptions. For instance, in Example \ref{examRecommend}, it is reasonable to assume $K = \Theta(n)$ (i.e., the number of restaurants is proportional to the population) and $\bar{d} = O(1)$ (i.e., the degrees are constant, as in sparse social networks). Under these assumptions, the choice $S = O(K/n) = O(1)$ from Remark \ref{remSticky}, and the parameters from Remark \ref{remParameters}, the theorem's regret bound can be further upper bounded by
\begin{equation*}
R_T^{(i)} \leq \sum_{k=2}^{O(1)} \frac{48 \log T}{ \Delta_k }  + 2 \E [ A_{2 \bar{\tau}_{\text{spr}}} ] + \tilde{O} ( \max \{ (K / \Delta_2)^2 ,  \Delta_2^{-4} , n K^2 \} ) .
\end{equation*}
\end{rem}

\begin{rem} \label{remBadInstanceOurs}
Note the parameters from Remark \ref{remThmParam} were also used for the bad instance of Section \ref{secExisting}. There, we had $\Delta_k > 1/15$, $S =d_{\text{mal}}(i) = 1$, and $\E [ A_{ 2 \bar{\tau}_{\text{spr}}} ] = \tilde{O}( n^\beta )$, so our regret is $O(\log T )$ plus a polynomial additive term that is much smaller than the doubly exponential term in Section \ref{secExisting}.
\end{rem}

\begin{proof}[Proof sketch]
Let $\tau_{\text{spr}} = \inf \{ j \in \N : B_{j'}^{(i')} = 1\ \forall\ i' \in [n] , j' \geq j \}$ denote the first phase where the best arm is most played for all honest agents at all phases thereafter. Before this phase (i.e., before time $A_{\tau_{\text{spr}}}$) we simply upper bound regret by $\E [ A_{\tau_{\text{spr}}} ]$. The main novelty of our analysis is bounding $\E [ A_{\tau_{\text{spr}}} ]$ in terms of $C_\star$ and $\E [ A_{ \bar{\tau}_{\text{spr}}} ]$. We devote Section \ref{secAnalysis} to discussing this proof.

After phase $\tau_{\text{spr}}$, the best arm is active by definition, so $i$ incurs logarithmic in $T$ regret. We let
\begin{equation}
\tau_{\text{blk}}^{(i)} = \inf \{ j \in \N : H_{j'-1}^{(i)} \in P_{j'}^{(i)} \setminus P_{j'-1}^{(i)}\ \forall\  j' \geq j\ s.t.\ R_{j'-1}^{(i)} \neq 1 \}  
\end{equation}
be the earliest phase such that $i$ blocks for all suboptimal recommendations thereafter. We then split the phases after $\tau_{\text{spr}}$ into two groups: those before $\tilde{\tau}^{(i)} \triangleq \tau_{\text{spr}} \vee \tau_{\text{blk}}^{(i)} \vee T^{1/(\beta K)}$ and those after.

For the phases after $\tau_{\text{spr}}$ but before $\tilde{\tau}^{(i)}$, we consider three cases:
\begin{itemize}
\item $\tilde{\tau}^{(i)} = \tau_{\text{spr}}$: In this case, there are no such phases, so there is nothing to prove.

\item $\tilde{\tau}^{(i)} = T^{1/(\beta K)}$: Here we have an effective horizon $A_{\tilde{\tau}^{(i)}} = ( \tilde{\tau}^{(i)} )^\beta = T^{1/K}$, so similar to \cite{vial2021robust}, we exploit the fact that the best arm is active and modify existing UCB analysis to bound regret by $O( K \log ( T^{1/K} ) / \Delta ) = O (\log(T) / \Delta )$, which is dominated by \eqref{eqRegLogT} (in an order sense).

\item $\tilde{\tau}^{(i)} = \tau_{\text{blk}}^{(i)}$: Here we are considering phases $j$ where the best arm is most played by $i$ (since $j \geq \tau_{\text{spr}}$) but $i$ does not block suboptimal recommendations (since $j \leq \tau_{\text{blk}}^{(i)}$). Note that no such phases arise for the existing blocking rule, so here the proof diverges from \cite{vial2021robust}, and most of Appendix \ref{appProofThmProposed} is dedicated to this case. Roughly speaking, the analogous argument of the previous case yields the regret bound $O( (K/\Delta) \E [\log \tilde{\tau}^{(i)} ]$, and we prove this term is also $O( \log(T) / \Delta )$ by deriving a tail bound for $\tilde{\tau}^{(i)}$. The tail amounts to showing that, once the best arm is active, $i$ can identify suboptimal arms as such, within the phase. This in turn follows from best arm identification results and the growing phase lengths.
\end{itemize}

After phase $\tilde{\tau}^{(i)}$, the best arm is most played for all honest agents (since $\tilde{\tau}^{(i)} \geq \tau_{\text{spr}}$), so they only recommend this arm. Thus, $i$ only plays the best arm, its $S$ sticky arms, and any malicious recommendations. Consequently, to bound regret by $O( ( S + d_{\text{mal}}(i) ) \log(T) / \Delta )$ as in \eqref{eqRegLogT}, we need to show each malicious neighbor $i'$ only recommends $O(1)$ suboptimal arms. It is easy to see that $i'$ can only recommend $O(\log K)$ such arms: if $i'$ recommends a bad arm at phase $\tilde{\tau}^{(i)}$, they will be blocked until phase $T^{ \eta / ( \beta K ) }$ (since $\tilde{\tau}^{(i)} \geq \tau_{\text{blk}}^{(i)} \vee T^{1/(\beta K)}$), then until phase $( T^{ \eta / ( \beta K ) } )^\eta = T^{\eta^2 / (\beta K)}$, etc. Thus, the $(\log_\eta K)$-th bad recommendation occurs at phase $T^{ \eta^{\log_\eta K} / ( \beta K ) } = T^{1/\beta}$, which is time $T$ by definition $A_j = j^\beta$. Finally, an argument from \cite{vial2021robust} sharpens this $O(\log K)$ term to $O(1)$.
\end{proof}

\subsection{Special cases} \label{secPosCor}

We next discuss some special cases of our regret bound. First, as in \cite{chawla2020gossiping}, we can prove an explicit bound assuming the honest agent subgraph $G_{\text{hon}}$ is $d$-regular, i.e., $d_{\text{hon}}(i) = d\ \forall\ i \in [n]$.

\begin{cor} \label{corRegular}
Let the assumptions of Theorem \ref{thmProposed} hold and further assume $G_{\text{hon}}$ is $d$-regular with $d \geq 2$. Let $\phi$ denote the conductance of $G_{\text{hon}}$. Then for any honest agent $i \in [n]$ and horizon $T \in \N$,
\begin{align}
R_T^{(i)} & \leq 4 \alpha \log(T) \min \left\{ \frac{2 \eta-1}{\eta-1} \sum_{k=2}^{d_{\text{mal}}(i)+3} \frac{1}{\Delta_k} + \sum_{k=d_{\text{mal}}(i)+4}^{S+d_{\text{mal}}(i)+4} \frac{1}{\Delta_k} , \sum_{k=2}^K \frac{1}{\Delta_k} \right\}  \label{eqRegLogTcor}  \\
& \quad + \tilde{O} \left(  \max \left\{  d_{\text{mal}}(i) / \Delta_2  , ( K / \Delta_2 )^2 , S^{\beta/( \rho_1^2(\beta-1)) }  , ( S / \Delta_2^2 )^{\beta/(\beta-1)} , \bar{d}^{\beta/\rho_1} , n K^2 S , ( \phi \Upsilon )^{-\beta} \right\} \right)  .
\end{align}
\end{cor}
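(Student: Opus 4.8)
The plan is to derive Corollary~\ref{corRegular} from Theorem~\ref{thmProposed} by specializing the generic additive term $2\E[A_{2\bar\tau_{\text{spr}}}] + C_\star$ to the $d$-regular case. The key observation is that the leading $\log T$ term in \eqref{eqRegLogTcor} is identical to the one in \eqref{eqRegLogT}, so nothing new is needed there; the entire task is to show that $\E[A_{2\bar\tau_{\text{spr}}}] = \tilde O((\phi\Upsilon)^{-\beta})$ (up to polynomial-in-$n,d$ factors that are already subsumed by the other terms in $C_\star$), and then absorb this into the $\max\{\cdots\}$. Since $A_j = \lceil j^\beta\rceil$, we have $A_{2\bar\tau_{\text{spr}}} \le (2\bar\tau_{\text{spr}})^\beta + 1$, so by Jensen (or rather by a moment bound, since $x\mapsto x^\beta$ is convex and we need an upper bound on $\E[\bar\tau_{\text{spr}}^\beta]$, not of $\E[\bar\tau_{\text{spr}}]^\beta$) it suffices to control $\E[\bar\tau_{\text{spr}}^\beta]$, and for that it suffices to establish an exponential tail bound of the form $\P(\bar\tau_{\text{spr}} > c\, t_0 \log n + s) \le e^{-\Omega(s)}$ for an appropriate $t_0$ depending on $\phi$ and $\Upsilon$.

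The core of the argument is a conductance-based bound on the rumor-spreading time of the noisy process $\{\bar{\mathcal I}_j\}$ from Definition~\ref{defnRumor}. This is a standard push-pull / rumor-spreading analysis (cf.\ \cite{chawla2020gossiping}), adapted to carry the $\text{Bernoulli}(\Upsilon)$ thinning. First I would show that whenever $\bar{\mathcal I}_{j-1}$ is a proper nonempty subset $\bar{\mathcal I}_{j-1}\ne[n]$, the expected number of newly informed agents in one step is at least a constant times $\Upsilon\cdot(\text{edge boundary of }\bar{\mathcal I}_{j-1})/d$. Using $d$-regularity, the edge boundary of any set $\mathcal S$ is at least $d\,\phi\,\min\{|\mathcal S|,|[n]\setminus\mathcal S|\}$ by the definition of conductance, so the expected growth when $|\bar{\mathcal I}_{j-1}| \le n/2$ is at least $\approx \phi\Upsilon|\bar{\mathcal I}_{j-1}|$ (geometric growth), and symmetrically the number of uninformed agents shrinks geometrically once $|\bar{\mathcal I}_{j-1}| > n/2$. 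Then a Chernoff/Azuma argument on the process (the increments are sums of independent indicators conditioned on the current set, so one gets concentration around these expectations) yields that with high probability the informed set at least, say, doubles every $O(1/(\phi\Upsilon))$ phases while it is small, and the uninformed set halves at the same rate once it is large. Summing the $O(\log n)$ doubling/halving epochs gives $\bar\tau_{\text{spr}} = O(\phi^{-1}\Upsilon^{-1}\log n)$ with probability $1-n^{-\Omega(1)}$, and pushing the failure probability down with a standard restart/union-bound argument (or a direct tail computation) gives the exponential tail needed to bound $\E[\bar\tau_{\text{spr}}^\beta]$ by $\tilde O((\phi\Upsilon)^{-\beta})$.

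With that estimate in hand, the rest is bookkeeping: substitute $\E[A_{2\bar\tau_{\text{spr}}}] = \tilde O((\phi\Upsilon)^{-\beta})$ into \eqref{eqRegLogT}, note that the polylogarithmic factors ($\log n$ from the spreading time, raised to the $\beta$ power) are hidden by $\tilde O(\cdot)$, and merge the new term $(\phi\Upsilon)^{-\beta}$ into the existing $\max\{\cdots\}$ in $C_\star$ to obtain \eqref{eqRegLogTcor}. One subtlety worth flagging: $\Upsilon = \min_i d_{\text{hon}}(i)/d(i)$ need not simplify under $d$-regularity of $G_{\text{hon}}$ alone, because malicious neighbors inflate $d(i)$; so $\Upsilon$ must be kept as is in the statement rather than replaced by something like $d/(d+\bar d_{\text{mal}})$, and indeed the corollary does leave $\Upsilon$ in the bound.

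The main obstacle is the concentration step in the conductance argument: the increments of $|\bar{\mathcal I}_j|$ are not independent across phases (the set at phase $j$ depends on all earlier coin flips), so one cannot apply a plain Chernoff bound to $\bar\tau_{\text{spr}}$ directly. The clean way around this is to work epoch-by-epoch, conditioning on the current informed set at the start of each epoch and applying concentration to that epoch's fresh randomness only, then chaining the $O(\log n)$ epochs via a union bound — this is the technically delicate part and is exactly where the analysis of \cite{chawla2020gossiping} for the noiseless process must be re-run with the extra $\Upsilon$ factor threaded through every inequality. A secondary, more mechanical nuisance is verifying that converting the high-probability bound on $\bar\tau_{\text{spr}}$ into a bound on $\E[A_{2\bar\tau_{\text{spr}}}]$ does not lose more than polylog factors, which requires the tail to decay fast enough (faster than any polynomial) so that the $\beta$-th moment integral converges with the right leading constant; a geometric-tail restart argument handles this cleanly.
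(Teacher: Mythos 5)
Your overall strategy matches the paper's: the $\log T$ term carries over verbatim from Theorem \ref{thmProposed}, and the whole content of the corollary is showing $\E[A_{2\bar{\tau}_{\text{spr}}}] = \tilde{O}((\phi\Upsilon)^{-\beta})$ via a tail bound on $\bar{\tau}_{\text{spr}}$ strong enough that the $\beta$-th moment sum converges. Where you diverge is in how that tail bound is obtained. You propose to re-run the entire conductance/rumor-spreading analysis directly on the noisy process, threading the $\text{Bernoulli}(\Upsilon)$ thinning through every step (boundary-edge lower bound, geometric growth of the informed set, epoch-by-epoch concentration, chaining $O(\log n)$ epochs). The paper instead never touches the conductance machinery: it builds an explicit coupling between the noisy process and its noiseless counterpart, using stopping times $\sigma_l$ that wait until every agent has registered at least one Bernoulli success, shows $\underline{\mathcal{I}}_j \subset \bar{\mathcal{I}}_{\sigma_j}$ (Claim \ref{clmTwoRumorMono}), bounds $\sigma_j$ by $\iota j \log(j)/\Upsilon$ with high probability (Claim \ref{clmTwoRumorProb}), and then invokes the noiseless tail bound of \cite{chawla2020gossiping} (itself built on \cite{chierichetti2010almost}) as a black box. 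The coupling buys a much shorter proof — the only new probabilistic work is the elementary geometric tail on the $\sigma_l$ increments — at the cost of an extra $\log j$ factor in the time change, which is why the paper's final bound carries $((\log n)^2\log(\log(n)/\phi))^\beta$ rather than $(\log n)^\beta$ polylogs. Your route avoids that extra log but requires reproving the hardest lemma in the chain; as you yourself note, the small-set regime (where the expected per-phase growth $\phi\Upsilon|\mathcal{S}|$ is far below $1$ and plain Chernoff concentration is weak) is exactly the delicate part of \cite{chierichetti2010almost}, so your sketch is leaning on an argument you would have to reconstruct in full rather than cite. Both routes are sound; the paper's is the lower-effort one given that the noiseless result already exists. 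Your flags about Jensen going the wrong way and about $\Upsilon$ not simplifying under $d$-regularity of $G_{\text{hon}}$ alone are both correct and consistent with the paper.
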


\begin{rem}
This corollary includes the complete graph case studied in \cite{vial2021robust}, where $d_{\text{mal}}(i) = m$, $\phi = \Theta(1)$, and $\Upsilon = \Theta( n/(n+m) )$. In this case, the term \eqref{eqRegLogTcor} matches the corresponding term from \cite{vial2021robust} exactly, i.e., for large $T$, Corollary \ref{corRegular} is a strict generalization. Our additive term scales as
\begin{equation*}
\max \left\{ ( m / \Delta_2 ) , ( K / \Delta_2 )^2 ,S^{\beta/( \rho_1^2(\beta-1)) }  , ( S / \Delta_2^2 )^{\beta/(\beta-1)} , (n+m)^{\beta/\rho_1} , n K^2 S  , ( (n+m) / n )^{\beta} \right\}
\end{equation*}
whereas the additive term from \cite{vial2021robust} scales as $\max \{ ( m / \Delta_2 ) , ( K / \Delta_2 ) , ( S / \Delta_2^2 )^{2 \beta \eta / (\beta-1)} , (n+m)^\beta , n K^2 S \}$. Notice our dependence on the arm gap is $\Delta_2^{-2\beta/(\beta-1)}$, which matches the fully cooperative case \cite{chawla2020gossiping}, whereas the dependence is $\Delta_2^{-2 \beta \eta / (\beta-1) }$ in \cite{vial2021robust}, which is potentially much larger.
\end{rem}

\begin{rem} \label{remCorSimpler}
In the setting of Remark \ref{remThmSimpler}, the corollary's regret bound becomes
\begin{equation*}
R_T^{(i)} \leq \sum_{k=2}^{O(1)} \frac{48 \log T}{ \Delta_k }  + \tilde{O} ( \max \{ (K / \Delta_2)^2 ,  \Delta_2^{-4} , n K^2 , ( \phi \Upsilon )^{-\beta} \} ) .
\end{equation*}
The key difference is the dependence on conductance $\tilde{O} ( \phi^{-\beta} )$, which matches the result from \cite{chawla2020multi}.
\end{rem}

\begin{proof}[Proof sketch]
In light of Theorem \ref{thmProposed}, we only need to show $\E [A_{ \bar{\tau}_{\text{spr}}} ] = \tilde{O} ( ( \phi \Upsilon )^{-\beta} )$. To do so, we let $\underline{\mathcal{I}}_j$ denote the noiseless version of $\bar{\mathcal{I}}_j$ (defined in the same way but with $\Upsilon = 1$) and $\underline{\tau}_{\text{spr}} = \inf \{ j : \underline{\mathcal{I}}_j = [n] \}$. We then construct a coupling between $\bar{\mathcal{I}}_j$ and $\underline{\mathcal{I}}_j$, which ensures that with high probability, $\bar{\tau}_{\text{spr}} \leq j \log (j) / \Upsilon$ whenever $\underline{\tau}_{\text{spr}} \leq j$. Finally, using this coupling and a tail bound for $\underline{\tau}_{\text{spr}}$ from \cite{chawla2020gossiping} (which draws upon the analysis of \cite{chierichetti2010almost}), we derive a tail bound for $\bar{\tau}_{\text{spr}}$. This allows us to show $\E [A_{  \bar{\tau}_{\text{spr}} }] = O ( ( ( \log n )^2 \log( \log(n) / \phi ) / ( \phi \Upsilon ) )^\beta ) = \tilde{O}( ( \phi \Upsilon )^{-\beta} )$, as desired.\footnote{When $\Upsilon=1$, \cite{chawla2020gossiping} shows $\E [A_{\underline{\tau}_{\text{spr}}}] = \E[ A_{\bar{\tau}_{\text{spr}}}] = O ( (\log(n) / \phi)^\beta )$, so our bound generalizes theirs up to $\log$ terms.}
\end{proof}

Finally, we can sharpen the above results for honest agents without malicious neighbors.
\begin{cor} \label{corNoMal}
For $i \in [n]$ with $d_{\text{mal}}(i) = 0$, the terms \eqref{eqRegLogT} and \eqref{eqRegLogTcor} from Theorem \ref{thmProposed} and Corollary \ref{corRegular}, respectively, can (under their respective assumptions) be improved to $4 \alpha \log(T) \sum_{k=2}^{S+2} \Delta_k^{-1}$.
\end{cor}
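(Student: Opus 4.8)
The plan is to revisit the proof of Theorem~\ref{thmProposed} (and, for the $d$-regular case, of Corollary~\ref{corRegular}) and to sharpen only the part that bounds the regret of $i$ \emph{after} the best arm has spread. Recall from that proof the phase $\tau_{\text{spr}} = \inf\{ j \in \N : B_{j'}^{(i')} = 1\ \forall\ i' \in [n],\ j' \geq j \}$ after which every honest agent permanently ranks the true best arm first, and recall that the regret incurred before time $A_{\tau_{\text{spr}}}$ is at most $\E[A_{\tau_{\text{spr}}}]$, which the proof of Theorem~\ref{thmProposed} subsumes into the additive term $2\E[A_{2\bar\tau_{\text{spr}}}] + C_\star$ of \eqref{eqRegLogT}. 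For a general honest agent, the leading $\log T$ term has the shape $\frac{2\eta-1}{\eta-1}\sum_{k=2}^{d_{\text{mal}}(i)+3}\Delta_k^{-1} + \sum_{k=d_{\text{mal}}(i)+4}^{S+d_{\text{mal}}(i)+4}\Delta_k^{-1}$ precisely because, even after $\tau_{\text{spr}}$, each of the $d_{\text{mal}}(i)$ malicious neighbors can keep injecting $O(1)$ suboptimal arms into $i$'s active set. When $d_{\text{mal}}(i) = 0$ there are no such injections, and this term collapses.

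The first step is to show that, when $d_{\text{mal}}(i) = 0$, agent $i$'s active set is \emph{frozen} from phase $\tau_{\text{spr}}$ onward. Indeed, $d_{\text{mal}}(i) = 0$ forces $N(i) \subseteq [n]$, so the recommender $H_j^{(i)}$ drawn in Algorithm~\ref{algGetArm} is always honest and returns $B_j^{(H_j^{(i)})}$, which equals $1$ for every $j \geq \tau_{\text{spr}}$ by definition of $\tau_{\text{spr}}$. Since also $B_j^{(i)} = 1 \in S_j^{(i)}$ for $j \geq \tau_{\text{spr}}$, the recommendation $R_j^{(i)} = 1$ is already active, so Line~\ref{algSameActive} of Algorithm~\ref{algGeneral} leaves the active set unchanged. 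By induction, $S_j^{(i)} = S_{\tau_{\text{spr}}}^{(i)} =: \mathcal{A}$ for all $j \geq \tau_{\text{spr}}$, where $1 \in \mathcal{A}$ and $|\mathcal{A}| \leq S + 2$. (If $N(i) \setminus P_j^{(i)}$ is ever empty, $i$ receives no recommendation and, as in \cite{vial2021robust}, its active set is again unchanged; this cannot occur for large $j$ anyway, since once arm $1$ is played a constant fraction of each phase, the criterion \eqref{eqNewBlock} ceases to hold and $i$'s existing blocks expire.)

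The second step bounds the regret of $i$ over times $A_{\tau_{\text{spr}}}+1,\ldots,T$. By Line~\ref{algPullArm}, for $t > A_{\tau_{\text{spr}}}$ agent $i$ plays $\argmax_{k \in \mathcal{A}}(\hat\mu_k^{(i)}(t-1) + \sqrt{\alpha\log(t)/T_k^{(i)}(t-1)})$, i.e.\ it runs UCB with constant $\alpha$ over the \emph{fixed} set $\mathcal{A} \ni 1$, the pull counts and empirical means from earlier times merely acting as a (harmless) warm start. Applying the standard finite-time UCB regret bound (e.g.\ \cite{auer2002finite}; \cite[Chapter~8]{lattimore2020bandit}) conditionally on the history up to $A_{\tau_{\text{spr}}}$, the regret over these times is at most $\sum_{k \in \mathcal{A}\setminus\{1\}}\big(4\alpha\log(T)/\Delta_k + O(1)\big)$; since $|\mathcal{A}\setminus\{1\}| \leq S+1$ and $\Delta_2 \leq \Delta_3 \leq \cdots$, this is at most $4\alpha\log(T)\sum_{k=2}^{S+2}\Delta_k^{-1} + O(S)$. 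Combining with the pre-$\tau_{\text{spr}}$ bound $\E[A_{\tau_{\text{spr}}}]$ and noting that it and the $O(S)$ term are dominated by the additive term $2\E[A_{2\bar\tau_{\text{spr}}}] + C_\star$ already present in \eqref{eqRegLogT} (resp.\ the additive term of Corollary~\ref{corRegular}), we obtain the improved leading term $4\alpha\log(T)\sum_{k=2}^{S+2}\Delta_k^{-1}$. The argument for Corollary~\ref{corRegular} is identical.

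The main obstacle I anticipate is making the conditioning in the second step fully rigorous: $\tau_{\text{spr}}$ is not a stopping time (its definition looks into the future), so one cannot naively apply the UCB analysis ``restarted'' at $A_{\tau_{\text{spr}}}$. This is exactly the technical wrinkle already faced in the proof of Theorem~\ref{thmProposed}, and it would be handled the same way — e.g.\ by decomposing over the value of $\tau_{\text{spr}}$, or by working with the $\mathcal{F}_{A_j}$-measurable event that forces $i$'s active set to equal a fixed $\mathcal{A} \ni 1$ from phase $j$ on, so that the warm-started UCB bound can be invoked with fresh post-$A_{\tau_{\text{spr}}}$ randomness. Apart from this bookkeeping, the freezing claim and the vanilla UCB estimate are routine, which is why the $d_{\text{mal}}(i)=0$ case admits the cleaner bound.
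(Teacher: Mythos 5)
Your proposal is correct and follows essentially the same route as the paper: the key observation in both is that when $d_{\text{mal}}(i)=0$, every post-$\tau_{\text{spr}}$ recommendation to $i$ is arm $1$ (already active), so no new suboptimal arm ever enters $i$'s active set and at most $S+1$ suboptimal arms are played after $A_{\tau_{\text{spr}}}$, after which the standard per-arm UCB bound gives $4\alpha\log(T)\sum_{k=2}^{S+2}\Delta_k^{-1}$. The paper formalizes this with indicators $Y_k=\ind(\cup_{j\geq\tau_{\text{spr}}}\{k\in S_j^{(i)}\})$ multiplying the per-arm bound and a counting argument showing $\sum_{k\in\underline{S}^{(i)}}Y_k\leq 1+\ind(1\in\hat S^{(i)})$, which also sidesteps the non-stopping-time issue you correctly flag (via the unconditional tail bound of Corollary~\ref{corBanditTailDelta}), but the substance matches your argument.
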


\begin{rem}
The improved term in Corollary \ref{corNoMal} matches the $\log T$ term from \cite{chawla2020gossiping}, including constants. Thus, the corollary shows that for large $T$, agents who are not directly connected to malicious agents are unaffected by their presence elsewhere in the graph.
\end{rem}

\begin{proof}[Proof sketch]
Recall from the Theorem \ref{thmProposed} proof sketch that  the $\log T$ term arises from regret after phase $\tau_{\text{spr}}$. At any such phase, the best arm is most played for all honest agents (by definition), so when $d_{\text{mal}}(i) = 0$, $i$'s neighbors only recommend this arm. Therefore, $i$'s active sets after $\tau_{\text{spr}}$ are \textit{fixed}; they contain the best arm and $S+1$ suboptimal ones. Thus, $i$ only plays $S+1$ suboptimal arms long-term, so in the worst case incurs the standard UCB regret $4 \alpha \log(T) \sum_{k=2}^{S+2} \Delta_k^{-1}$.
\end{proof}

\section{Numerical results} \label{secExp}

Thus far, we have shown the proposed blocking rule adapts to general graphs more gracefully than the existing one, at least in theory. We now illustrate this finding empirically.

\vspace{3pt} \noindent {\bf Experimental setup:} We follow \cite[Section 6]{vial2021robust} except we extend those experiments to $G(n+m,p)$ graphs, i.e., each edge is present with probability $p$. For each $p \in \{1,1/2,1/4\}$ and each of two malicious strategies (to be defined shortly), we conduct $100$ trials of the following:
\begin{itemize}
\item Set $n = 25$ and $m=10$ and generate $G$ as a $G(n+m,p)$ random graph, resampling if necessary until the honest agent subgraph $G_{\text{hon}}$ is connected (see Assumption \ref{assGraph}).
\item Set $K=100$, $\mu_1 = 0.95$, and $\mu_2 = 0.85$, then sample the remaining arm means $\{ \mu_k \}_{k=3}^K$ uniformly from $[0,0.85]$ (so $\Delta_2 = 0.1$). For each $k \in [K]$, set $\nu_k = \text{Bernoulli}(\mu_k)$.
\item Set $S = K/n$ and sample the sticky sets $\{ \hat{S}^{(i)} \}_{i=1}^n$ uniformly from the $S$-sized subsets of $[K]$, resampling if necessary until $1 \in \cup_{i=1}^n \hat{S}^{(i)}$ (see Assumption \ref{assSticky}).
\item Run Algorithm \ref{algGeneral} with the existing (Algorithm \ref{algUpdateBlockOld}) and proposed (Algorithm \ref{algUpdateBlockNew}) blocking rules, along with two baselines: a no communication scheme, where agents ignore the network and run UCB in isolation, and the algorithm from \cite{chawla2020gossiping}, where they do not block. 
\end{itemize}

\vspace{3pt} \noindent {\bf Algorithmic parameters:} We set $\alpha = 4$ and $\beta = \eta = 2$ as in Remarks \ref{remOneBad} and \ref{remThmParam}. For the parameters in the proposed blocking rule, we choose $\kappa_j = j^{1.5}$, and $\theta_j = j - \log j$. While these are different from the parameters specified in our theoretical results (which we found are too conservative in practice), they do satisfy the key properties discussed in Remark \ref{remParameters}.

\vspace{3pt} \noindent {\bf Malicious strategies:} Like \cite{vial2021robust}, we use strategies we call the \textit{naive} and \textit{smart} strategies (they are called \textit{uniform} and \textit{omniscient} in \cite{vial2021robust}). The naive strategy simply recommends a uniformly random suboptimal arm. The smart strategy recommends $R_j^{(i)} = \argmin_{k \in \{2,\ldots,K\} \setminus S_j^{(i)}} T_k^{(i)}(A_j)$, i.e., the least played, inactive, suboptimal arm. Intuitively, this is a more devious strategy which forces $i$ to play $R_j^{(i)}$ often in the next phase (to drive down its upper confidence bound). Consequently, $i$ may play it most and discard a better arm in favor of it (see Lines \ref{algUjPlus1}-\ref{algNewActive} of Algorithm \ref{algGeneral}).

\vspace{3pt} \noindent {\bf Results:} In Figure \ref{figExpSynthetic}, we plot the average and standard deviation (across trials) of the per-agent regret $\sum_{i=1}^n R_T^{(i)} / n$. For the naive strategy, the existing blocking rule eventually becomes worse than the no blocking baseline as $p$ decreases. More strikingly, it even becomes worse than the no communication baseline for the smart strategy. In other words, honest agents would have been better off \textit{ignoring the network} and simply running UCB on their own. As in Section \ref{secExisting}, this is because accidental blocking causes the best arm to spread very slowly. Additionally, the standard deviation becomes much higher than all other algorithms, suggesting that regret is significantly worse in some trials. In contrast, the proposed blocking rule \textit{improves} as $p$ decreases, because it is mild enough to spread the best arm at all values of $p$, and for smaller $p$, honest agents have fewer malicious neighbors (on average). We also observe that the proposed rule outperforms both baselines uniformly across $p$. Additionally, it improves over the existing rule more dramatically for the smart strategy, i.e., when the honest agents face a more sophisticated adversary. Finally, it is worth acknowledging the existing rule is better when $p=1$ -- although not in a statistically significant sense for the smart strategy -- because it does spread the best arm quickly on the complete graph (as shown in \cite{vial2021robust}), and thereafter more aggressively blocks malicious agents.

\begin{figure}[t]
\centering
\includegraphics[width=\textwidth]{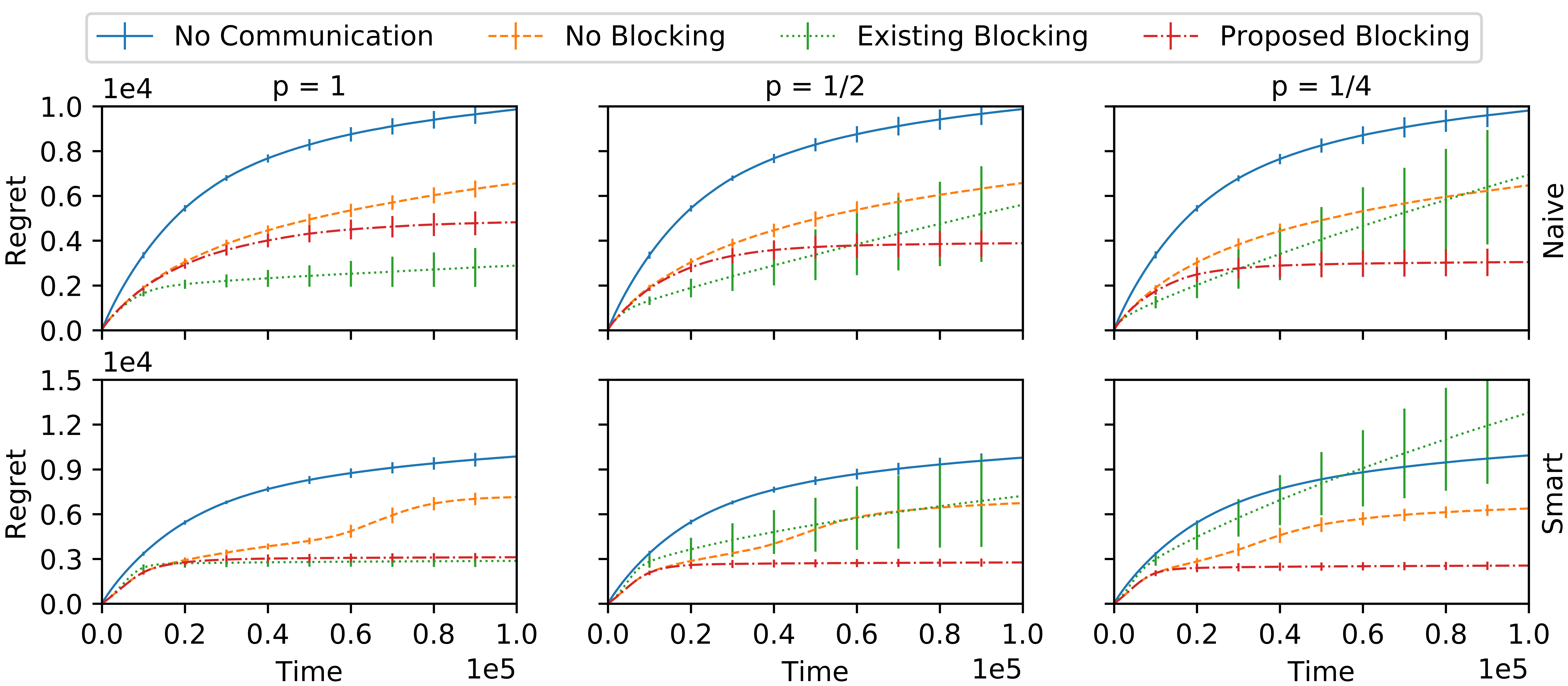}
\caption{Empirical results for synthetic data. Rows of subfigures correspond to the malicious strategy, while columns correspond to the edge probability $p$ for the $G(n+m,p)$ random graph.} \label{figExpSynthetic}
\Description{Empirical results for synthetic data. Rows of subfigures correspond to the malicious strategy, while columns correspond to the edge probability $p$ for the $G(n+m,p)$ random graph.} 
\end{figure}

\vspace{3pt} \noindent {\bf Other results:} As in \cite{vial2021robust}, we reran the simulations using arm means derived from the MovieLens dataset \cite{harper2015movielens}. We also experimented with new variants of the smart and naive strategies, where the malicious agents follow these strategies if the best arm is active (in hopes of forcing honest agents to discard it) and recommend the \textit{second} best arm otherwise. Intuitively, these variants differ in that malicious agents recommend good arms (i.e., the second best) more frequently, while still never revealing the best arm (the only one that leads to logarithmic regret). For all experiments, the key message -- that the proposed blocking rule adapts to varying graph structures more gracefully than the existing one -- is consistent. See Appendix \ref{appExp} for details.

\section{Gossip despite blocking} \label{secAnalysis}

As discussed above, the main analytical contribution of this work is proving that the best arm spreads in a gossip fashion, despite accidental blocking. In this (technical) section, we provide a detailed sketch of this proof. We begin with a high-level outline. The key is to show that honest agents eventually stop blocking each other. This argument (roughly) proceeds as follows:
\begin{itemize}
\item {\bf Step 1:} First, we show that honest agents learn the arm statistics in a certain sense. More specifically, we provide a tail bound for a random phase $\tau_{\text{arm}}$ such that for all phases $j \geq \tau_{\text{arm}}$ (1) each honest agent's most played arm in phase $j$ is close to its true best active arm and (2) any active arm close to the true best one is played at least $\kappa_j$ times by the end of phase $j$.
\item {\bf Step 2:} Next, we show that honest agents communicate with their neighbors frequently. In particular, we establish a tail bound for another random phase $\tau_{\text{com}}$ such that for any $j \geq \tau_{\text{com}}$, each honest agent contacts all of its honest neighbors at least once between $\theta_j$ and $j$.
\item {\bf Step 3:} Finally, we use the above to show that eventually, no blocking occurs amongst honest agents. The basic idea is as follows. Consider a phase $j$, an honest agent $i$, and a neighbor $i'$ of $i$. Then if $i$ has had the same best arm estimate $k$ since phase $\theta_j$ -- i.e., if the second blocking criterion in \eqref{eqNewBlock} holds -- $i'$ would have contacted $i$ at some phase $j' \in \{\theta_j,\ldots,j\}$ (by step 2) and received arm $k$. Between phases $j'$ and $j$, the most played arm for $i'$ cannot get significantly worse (by step 1). Thus, if $i$ asks $i'$ for a recommendation at $j$, $i'$ will respond with an arm whose mean is close to $\mu_k$, which $i$ will play at least $\kappa_j$ times (by step 1). Hence, the first criterion in \eqref{eqNewBlock} fails, i.e., the two cannot simultaneously hold.
\end{itemize}
In the next three sub-sections, we discuss these three steps. Then in Section \ref{secCoupling}, we describe how, once accidental blocking stops, the arm spreading process can be coupled to the noisy rumor process from Definition \ref{defnRumor}. Finally, in Section \ref{secSpread}, we discuss how to combine all of these steps to bound the term $\E [A_{\tau_{\text{spr}}}]$ from the Theorem \ref{thmProposed} proof sketch.

\subsection{Learning the arm statistics} \label{secLearnArm}

Recall we assume $\mu_1 \geq \cdots \geq \mu_K$, so for any $W \subset [K]$, $\min W$ is the best arm in $W$, i.e., $\mu_{\min W} = \max_{w \in W} \mu_w$. Therefore, for any $\delta \in (0,1)$, $G_\delta(W) \triangleq \{ w \in W : \mu_w \geq \mu_{ \min W } - \delta \}$ is the subset of arms at least $\delta$-close to the best one. For each honest agent $i \in [n]$ and phase $j \in \N$, define
\begin{gather}
\Xi_{j,1}^{(i)}   = \left\{ B_j^{(i)} \notin G_{\delta_{j,1}}( S_j^{(i)}) \right\} , \quad \Xi_{j,2}^{(i)} = \left\{ \min_{ w \in G_{\delta_{j,2}}( S_j^{(i)}) } T_w^{(i)} (A_j) \leq \kappa_j \right\}  , \quad \Xi_j^{(i)} = \Xi_{j,1}^{(i)} \cup \Xi_{j,2}^{(i)} .
\end{gather} 
where $\delta_{j,1} , \delta_{j,2} \in (0,1)$ will be chosen shortly. Finally, define the random phase
\begin{equation}\label{eqTauArmDefn}
\tau_{\text{arm}} = \inf \{ j \in \N : \ind ( \Xi_{j'}^{(i)} ) = 0\ \forall\ i \in [n] , j' \in \{j,j+1,\ldots\} \} .
\end{equation}
In words, $\tau_{\text{arm}}$ is the earliest phase such that, at all phases $j$ thereafter, (1) the most played arms are $\delta_{j,1}$-close to best active arms and (2) all arms $\delta_{j,2}$-close to the best are played at least $\kappa_j$ times.

As discussed above, Step 1 involves a tail bound for $\tau_{\text{arm}}$. The analysis is based on \cite[Theorem 2]{bubeck2011pure}, which includes a tail bound showing that the most played arm is $\delta$-close to the best, provided that $1 / \delta^2$ samples have been collected from each of the $\delta$-far arms. In our case, phase $j$ lasts $A_j - A_{j-1} = \Theta ( j^{\beta-1} )$ time steps, so each of $S+2$ active arms is played $\Theta ( j^{\beta-1} / S)$ times on average. Hence, we can show the most played arm within the phase is $\delta_{j,1}$-close to the best if we choose $\delta_{j,1} = \Theta( \sqrt{ S / j^{\beta-1}} )$, which allows us to bound $\P(\Xi_{j,1}^{(i)})$. Analogously, we choose $\delta_{j,2} = \Theta ( 1 / \sqrt{ \kappa_j } )$ and show that $\delta_{j,2}$-close arms must be played $1 / \delta_{j,2}^2  = \tilde{\Theta}(\kappa_j)$ times before they are distinguished as such, which allows us to bound $\P(\Xi_{j,2}^{(i)})$. Taken together, we can prove a tail bound for $\tau_{\text{arm}}$ (Lemma \ref{lemLearnArm} in Appendix \ref{appLearnArm}) with these choices $\delta_{j,1} = \Theta( \sqrt{ S / j^{\beta-1}} )$ and $\delta_{j,2}  = \tilde{\Theta}(1 / \sqrt{\kappa_j})$.

\subsection{Communicating frequently} \label{secCommFreq}

Next, for any $i,i' \in [n]$ such that $(i,i') \in E_{\text{hon}}$, let $\Xi_j^{(i \rightarrow i')} = \cap_{j'=\floor{\theta_j} }^{j-2} \{ H_{j'}^{(i')} \neq i \}$ denote the event that $i$ did not send a recommendation to $i'$ between phases $\floor{\theta_j}$ and $j-2$. Also define
\begin{equation}\label{eqTauComDefn}
\tau_{\text{com}} = \inf \{ j \in \N : \ind ( \cup_{ i \rightarrow i' \in E_{\text{hon}} } \Xi_{j'}^{(i \rightarrow i')}  ) = 0\ \forall\ j' \in \{ j , j+1 , \ldots \} \} .
\end{equation}
Here we abuse notation slightly; the union is over all (undirected) edges in $E_{\text{hon}}$ but viewed as pairs of directed edges. Hence, at all phases $j \geq \tau_{\text{com}}$, each honest agent $i'$ receives a recommendation from each of its honest neighbors $i$ at some phase $j'$ between $\theta_j$ and $j-2$.

Step 2 involves the tail bound for $\tau_{\text{com}}$ that was mentioned above (see Lemma \ref{lemTauCom} in Appendix \ref{appCommFreq}). The proof amounts to bounding the probability of $\Xi_j^{(i \rightarrow i')}$. Recall this event says $i'$ did not contact $i$ for a recommendation at any phase $j' \in \{ {\theta_j} , \ldots , j-2 \}$. Clearly, this means $i'$ did not \textit{block} $i$ at any such phase. Hence, in the worst case, $i'$ blocked $i$ just before $\theta_j$, in which case $i$ was un-blocked at $\theta_j^\eta = (j/3)^{\rho_1 \eta} \leq j/3$, where the inequality holds by assumption in Theorem \ref{thmProposed}. Hence, $\Xi_j^{(i \rightarrow i')}$ implies $i'$ was not blocking $i$ between phases $j/3$ and $j-2$, so each of the $\Theta(j)$ neighbors that $i'$ contacted in these phases was sampled uniformly from a set containing $i$, yet $i$ was never sampled. The probability of this decays exponentially in $j$, which yields an exponential tail for $\tau_{\text{com}}$.

\subsection{Avoiding accidental blocking} \label{secNoBlock}

Next, we show honest agents eventually stop blocking each other. Toward this end, we first note
\begin{equation} \label{eqMuDecayOneMain}
\forall\ i \in [n], \quad \forall\ j \geq \tau_{\text{arm}} , \quad \mu_{\min S_j^{(i)}}  \leq \mu_{ B_j^{(i)} } + \delta_{j,1} \leq \mu_{\min S_{j+1}^{(i)}} + \delta_{j,1}
\end{equation}
where the first inequality uses the definition of $\tau_{\text{arm}}$, and the second holds because $\min S_{j+1}^{(i)} \in \argmax_{k \in S_{j+1}^{(i)}} \mu_k$ and $B_j^{(i)} \in S_{j+1}^{(i)}$ in Algorithm \ref{algGeneral} (see Claim \ref{clmMuDecayOne} in Appendix \ref{appNoBlock} for details). In words, \eqref{eqMuDecayOneMain} says the best active arm can decay by at most $\delta_{j,1}$ at phase $j$. Applying iteratively and since there are $K$ arms total, we then show (see Claim \ref{clmMuDecayMulti} in Appendix \ref{appNoBlock})
\begin{equation}
\forall\ i \in [n], \quad \forall\ j' \geq j \geq \tau_{\text{arm}} , \quad \mu_{\min S_{j'}^{(i)}} \geq \mu_{ \min S_{j}^{(i)} } - (K-1) \sup_{j'' \in \{ j, \ldots, j'\} } \delta_{j'',1} .
\end{equation}
Combining the previous two inequalities, we conclude (see Corollary \ref{corMuDecay} in Appendix \ref{appNoBlock})
\begin{equation} \label{eqCorMuDecay}
\forall\ i \in [n], \quad \forall\ j' \geq j \geq \tau_{\text{arm}} , \quad \mu_{B_{j'}^{(i)}} \geq \mu_{ \min S_{j}^{(i)} } - K \sup_{j'' \in \{ j, \ldots, j'\} } \delta_{j'',1} .
\end{equation}

Now the key part of Step 3 is to use \eqref{eqCorMuDecay} to show (see Claim \ref{clmNoEnterIfComm} in Appendix \ref{appNoBlock})
\begin{equation} \label{eqNoNewBlocking}
\forall\ j \in \N\ s.t.\ \theta_j \geq \tau_{\text{arm}} , \quad \forall\ i,i' \in [n]\ s.t.\ i \in \{ H_{j'}^{(i')} \}_{j' = \theta_j}^{j-2} , \quad i' \notin P_j^{(i)} \setminus P_{j-1}^{(i)} .
\end{equation}
In words, this result says that if $j$ is sufficiently large, and if $i$ has sent a recommendation to $i'$ since phase $\theta_j$, then $i$ will not block $i'$ at phase $j$. The proof is by contraction: if instead $i$ blocks $i'$ at $j$, then by Algorithm \ref{algUpdateBlockNew}, $i$ has not changed its best arm estimate $k$ since phase $\theta_j$, so it would have recommended $k$ to $i'$ at some phase $j' \geq \theta_j$. Therefore, $\mu_{ \min S_{j'}^{(i')} } \geq \mu_k$. Additionally, since $j' \geq \theta_j = \Omega ( j^{\rho_1} )$, we know that for any $j'' \geq j'$, the choice of $\delta_{j'',1}$ in Section \ref{secLearnArm} guarantees that
\begin{equation*}
K \delta_{j'',1} \leq O ( K \delta_{j',1}  ) = \tilde{O} \Big( \sqrt{K^2 S / (j')^{\beta-1} }\Big) \leq \tilde{O} \Big( \sqrt{K^2 S / j^{\rho_1(\beta-1)} } \Big) .
\end{equation*}
Combining these observations and using \eqref{eqCorMuDecay} (with $j'$ and $j$ replaced by $j-1$ and $j'$), we then show
\begin{equation} \label{eqNewAccBlock}
\mu_{B_{j-1}^{(i')}} \geq \mu_{ \min S_{j'}^{(i')} } - K \sup_{j'' \geq j'} \delta_{j'',1} \geq \mu_k - \tilde{O} \Big( \sqrt{K^2 S / j^{\rho_1(\beta-1)} } \Big) .
\end{equation}
On the other hand, $i$ blocking $i'$ at phase $j$ means $i$ plays the recommended arm $B_{j-1}^{(i')}$ fewer than $\kappa_j$ times by the end of phase $j$. Since $j \geq \theta_j \geq \tau_{\text{arm}}$, this implies (by definition of $\tau_{\text{arm}}$) that $\mu_k > \mu_{B_{j-1}^{(i')}} + \delta_{j,2}$, where $\delta_{j,2} = \tilde{\Theta}( 1 / \sqrt{\kappa_j}  )$ as in Section \ref{secLearnArm}. Combined with \eqref{eqNewAccBlock} and the choice $\kappa_j = j^{\rho_2} / ( K^2 S )$ from Theorem \ref{thmProposed}, we conclude $j^{\rho_1(\beta-1)} \leq \tilde{O}(j^{\rho_2})$. This contradicts the assumption $\rho_2 < \rho_1(\beta-1)$ in Theorem \ref{thmProposed}, which completes the proof of \eqref{eqNoNewBlocking}.

Finally, we use \eqref{eqNoNewBlocking} to show honest agents eventually stop blocking each other entirely, i.e.,
\begin{equation} \label{eqNoBlocking}
\forall\ j \in \N\ s.t.\ \theta_j \geq \tau_{\text{com}} , \theta_{\theta_j} \geq \tau_{\text{arm}}, \quad P_j^{(i)} \cap [n] = \emptyset\ \forall\ i \in [n] 
\end{equation}
(see Lemma \ref{lemNoBlock} in Appendix \ref{appNoBlock}). Intuitively, \eqref{eqNoBlocking} holds because after new blocking stops \eqref{eqNoNewBlocking}, old blocking will eventually ``wear off''. The proof is again by contradiction: if $i$ is blocking some honest $i'$ at phase $j$, the blocking must have started at some $j' \geq j^{1/\eta}$ (else, it ends by $(j')^\eta < j$). Thus, by assumption $j^{1/\eta} \geq j^{\rho_1} \geq { \theta_j } \geq \tau_{\text{com}}$, $i$ blocked $i'$ at phase $j' \geq \tau_{\text{com}}$. But by definition of $\tau_{\text{com}}$, $i'$ would have contacted $i$ at some phase $j'' \in \{ \theta_{ j'} , \ldots , j' \}$. Applying \eqref{eqNoNewBlocking} (at phase $j'$; note that by the above inequalities, $\theta_{j'} \geq \theta_{j^{1/\eta}} \geq \theta_{\theta_j} \geq \tau_{\text{arm}}$, as required by \eqref{eqNoNewBlocking}), we obtain a contradiction.

\subsection{Coupling with noisy rumor process} \label{secCoupling}

To begin, we define an equivalent way to sample $H_j^{(i)}$ in Algorithm \ref{algGetArm}.\footnote{Claim \ref{clmCoupling} in Appendix \ref{appCoupling} verifies this equivalency (the proof is a straightforward application of the law of total probability).} This equivalent method will allow us to couple the arm spreading and noisy rumor processes through a set of primitive random variables. In particular, for each honest agent $i \in [n]$, let $\{ \upsilon_j^{(i)} \}_{j=1}^\infty$ and $\{ \bar{H}_j^{(i)} \}_{j=1}^\infty$ be i.i.d.\ sequences drawn uniformly from $[0,1]$ and $N_{\text{hon}}(i)$. Then choose $H_j^{(i)}$ according to two cases:
\begin{itemize}
\item If $P_j^{(i)} \cap [n] = \emptyset$, let $Y_j^{(i)} = \ind ( \upsilon_j^{(i)} \leq d_{\text{hon}}(i) / | N(i) \setminus P_j^{(i)} | )$ and consider two sub-cases. First, if $Y_j^{(i)} = 1$, set $H_j^{(i)} = \bar{H}_j^{(i)}$. Second, if $Y_j^{(i)} = 0$, sample $H_j^{(i)}$ from $N_{\text{mal}}(i) \setminus P_j^{(i)}$ uniformly.
\item If $P_j^{(i)} \cap [n] \neq \emptyset$, sample $H_j^{(i)}$ from $N(i) \setminus P_j^{(i)}$ uniformly.
\end{itemize}

Next, we observe that since $\delta_{j,1} \rightarrow 0$ as $j \rightarrow \infty$ by the choice of $\delta_{j,1}$ in Section \ref{secLearnArm} and $\Delta_2 > 0$ by Assumption \ref{assReward}, we have $\delta_{j,1} < \Delta_2$ for large enough $j$. Paired with the definition of $\tau_{\text{arm}}$, this allows us to show that for all large $j$ and $i \in [n]$ with $1 \in S_j^{(i)}$ (i.e., with the best arm active), $B_j^{(i)} = 1$ (i.e., the best arm is played most). See Claim \ref{clmInformedRecBest} in Appendix \ref{appCoupling} for the formal statement.

Finally, we observe that by \eqref{eqNoBlocking}, only the first case of the above sampling strategy occurs for large $j$. Moreover, in this case, $Y_j^{(i)}$ is Bernoulli with parameter
\begin{equation*}
d_{\text{hon}}(i) / | N(i) \setminus P_j^{(i)} | \geq d_{\text{hon}}(i) / | N(i) | \triangleq d_{\text{hon}}(i) / d(i) \geq \Upsilon ,
\end{equation*}
where the second inequality holds by Definition \ref{defnRumor}. Hence, the probability that $Y_j^{(i)} = 1$, and thus the probability that $i$ contacts the random honest neighbor $\bar{H}_j^{(i)}$ in the above sampling strategy, dominates the probability that $i$ contacts $\bar{H}_j^{(i)}$ in the noisy rumor process of Definition \ref{defnRumor}. Additionally, by the previous paragraph, agents with the best arm active will recommend it (for large enough $j$). Taken together, we can show that the probability of receiving the best arm in the arm spreading process dominates the probability of being informed of the rumor in the noisy rumor process. This allows us to prove a tail bound for $\tau_{\text{spr}}$ in terms of a tail bound for the random phase $\bar{\tau}_{\text{spr}}$ from Definition \ref{defnRumor}, on the event that the tails of $\tau_{\text{arm}}$ and $\tau_{\text{com}}$ are sufficiently small (in the sense of \eqref{eqNoBlocking}; see Lemma \ref{lemTXfail} in Appendix \ref{appCoupling} for details).

\subsection{Spreading the best arm} \label{secSpread}

In summary, we prove tail bounds for $\tau_{\text{arm}}$ and $\tau_{\text{com}}$ (Sections \ref{secLearnArm} and \ref{secCommFreq}) and show the tails of $\tau_{\text{spr}}$ are controlled by those of  $\bar{\tau}_{\text{spr}}$, provided the tails of $\tau_{\text{arm}}$ and $\tau_{\text{com}}$ are not too heavy (Sections \ref{secNoBlock} and \ref{secCoupling}). Combining and summing tails allows us to bound $\E [ A_{ {\tau}_{\text{spr}}} ]$ in terms of $C_\star$ (which accounts for the tails of $\tau_{\text{arm}}$ and $\tau_{\text{com}}$) and $\E [ A_{ \bar{\tau}_{\text{spr}}} ]$ (which accounts for the tail of $\bar{\tau}_{\text{spr}}$), as mentioned in the Theorem \ref{thmProposed} proof sketch. See Theorem \ref{thmTail} and Corollary \ref{corEarly} in Appendix \ref{appSpread} for details.

\section{Conclusion} \label{secConc}

In this work, we showed that existing algorithms for multi-agent bandits with malicious agents fail to generalize beyond the complete graph. In light of this, we proposed a new blocking algorithm and showed it has low regret on any connected and undirected graph. This regret bound relied on the analysis of a novel process involving gossip and blocking. Our work leaves open several questions, such as whether our insights can be applied to multi-agent reinforcement learning.

\begin{acks}
This work was supported by NSF Grants CCF 22-07547, CCF 19-34986, CNS 21-06801, 2019844, 2112471, and 2107037; ONR Grant N00014-19-1-2566; the Machine Learning Lab (MLL) at UT Austin; and the Wireless Networking and Communications Group (WNCG) Industrial Affiliates Program.
\end{acks}

\bibliographystyle{ACM-Reference-Format}
\bibliography{references}


\begin{thebibliography}{58}


\ifx \showCODEN    \undefined \def \showCODEN     #1{\unskip}     \fi
\ifx \showDOI      \undefined \def \showDOI       #1{#1}\fi
\ifx \showISBNx    \undefined \def \showISBNx     #1{\unskip}     \fi
\ifx \showISBNxiii \undefined \def \showISBNxiii  #1{\unskip}     \fi
\ifx \showISSN     \undefined \def \showISSN      #1{\unskip}     \fi
\ifx \showLCCN     \undefined \def \showLCCN      #1{\unskip}     \fi
\ifx \shownote     \undefined \def \shownote      #1{#1}          \fi
\ifx \showarticletitle \undefined \def \showarticletitle #1{#1}   \fi
\ifx \showURL      \undefined \def \showURL       {\relax}        \fi
\providecommand\bibfield[2]{#2}
\providecommand\bibinfo[2]{#2}
\providecommand\natexlab[1]{#1}
\providecommand\showeprint[2][]{arXiv:#2}

\bibitem[Anandkumar et~al\mbox{.}(2011)]%
        {anandkumar2011distributed}
\bibfield{author}{\bibinfo{person}{Animashree Anandkumar},
  \bibinfo{person}{Nithin Michael}, \bibinfo{person}{Ao~Kevin Tang}, {and}
  \bibinfo{person}{Ananthram Swami}.} \bibinfo{year}{2011}\natexlab{}.
\newblock \showarticletitle{Distributed algorithms for learning and cognitive
  medium access with logarithmic regret}.
\newblock \bibinfo{journal}{\emph{IEEE Journal on Selected Areas in
  Communications}} \bibinfo{volume}{29}, \bibinfo{number}{4}
  (\bibinfo{year}{2011}), \bibinfo{pages}{731--745}.
\newblock


\bibitem[Audibert and Bubeck(2010)]%
        {audibert2010best}
\bibfield{author}{\bibinfo{person}{Jean-Yves Audibert} {and}
  \bibinfo{person}{S{\'e}bastien Bubeck}.} \bibinfo{year}{2010}\natexlab{}.
\newblock \showarticletitle{Best Arm Identification in Multi-Armed Bandits}. In
  \bibinfo{booktitle}{\emph{COLT-23th Conference on Learning Theory-2010}}.
  \bibinfo{pages}{13--p}.
\newblock


\bibitem[Auer et~al\mbox{.}(2002)]%
        {auer2002finite}
\bibfield{author}{\bibinfo{person}{Peter Auer}, \bibinfo{person}{Nicolo
  Cesa-Bianchi}, {and} \bibinfo{person}{Paul Fischer}.}
  \bibinfo{year}{2002}\natexlab{}.
\newblock \showarticletitle{Finite-time analysis of the multiarmed bandit
  problem}.
\newblock \bibinfo{journal}{\emph{Machine learning}} \bibinfo{volume}{47},
  \bibinfo{number}{2-3} (\bibinfo{year}{2002}), \bibinfo{pages}{235--256}.
\newblock


\bibitem[Auer et~al\mbox{.}(1995)]%
        {auer1995gambling}
\bibfield{author}{\bibinfo{person}{Peter Auer}, \bibinfo{person}{Nicolo
  Cesa-Bianchi}, \bibinfo{person}{Yoav Freund}, {and} \bibinfo{person}{Robert~E
  Schapire}.} \bibinfo{year}{1995}\natexlab{}.
\newblock \showarticletitle{Gambling in a rigged casino: The adversarial
  multi-armed bandit problem}. In \bibinfo{booktitle}{\emph{Proceedings of IEEE
  36th Annual Foundations of Computer Science}}. IEEE,
  \bibinfo{pages}{322--331}.
\newblock


\bibitem[Avner and Mannor(2014)]%
        {avner2014concurrent}
\bibfield{author}{\bibinfo{person}{Orly Avner} {and} \bibinfo{person}{Shie
  Mannor}.} \bibinfo{year}{2014}\natexlab{}.
\newblock \showarticletitle{Concurrent bandits and cognitive radio networks}.
  In \bibinfo{booktitle}{\emph{Joint European Conference on Machine Learning
  and Knowledge Discovery in Databases}}. Springer, \bibinfo{pages}{66--81}.
\newblock


\bibitem[Awerbuch and Kleinberg(2008)]%
        {awerbuch2008competitive}
\bibfield{author}{\bibinfo{person}{Baruch Awerbuch} {and}
  \bibinfo{person}{Robert Kleinberg}.} \bibinfo{year}{2008}\natexlab{}.
\newblock \showarticletitle{Competitive collaborative learning}.
\newblock \bibinfo{journal}{\emph{J. Comput. System Sci.}}
  \bibinfo{volume}{74}, \bibinfo{number}{8} (\bibinfo{year}{2008}),
  \bibinfo{pages}{1271--1288}.
\newblock


\bibitem[Bar-On and Mansour(2019)]%
        {bar2019individual}
\bibfield{author}{\bibinfo{person}{Yogev Bar-On} {and} \bibinfo{person}{Yishay
  Mansour}.} \bibinfo{year}{2019}\natexlab{}.
\newblock \showarticletitle{Individual regret in cooperative nonstochastic
  multi-armed bandits}.
\newblock \bibinfo{journal}{\emph{Advances in Neural Information Processing
  Systems}}  \bibinfo{volume}{32} (\bibinfo{year}{2019}),
  \bibinfo{pages}{3116--3126}.
\newblock


\bibitem[Bargiacchi et~al\mbox{.}(2018)]%
        {bargiacchi2018learning}
\bibfield{author}{\bibinfo{person}{Eugenio Bargiacchi},
  \bibinfo{person}{Timothy Verstraeten}, \bibinfo{person}{Diederik Roijers},
  \bibinfo{person}{Ann Now{\'e}}, {and} \bibinfo{person}{Hado Hasselt}.}
  \bibinfo{year}{2018}\natexlab{}.
\newblock \showarticletitle{Learning to coordinate with coordination graphs in
  repeated single-stage multi-agent decision problems}. In
  \bibinfo{booktitle}{\emph{International conference on machine learning}}.
  PMLR, \bibinfo{pages}{482--490}.
\newblock


\bibitem[Bistritz and Bambos(2020)]%
        {bistritz2020cooperative}
\bibfield{author}{\bibinfo{person}{Ilai Bistritz} {and}
  \bibinfo{person}{Nicholas Bambos}.} \bibinfo{year}{2020}\natexlab{}.
\newblock \showarticletitle{Cooperative multi-player bandit optimization}.
\newblock \bibinfo{journal}{\emph{Advances in Neural Information Processing
  Systems}}  \bibinfo{volume}{33} (\bibinfo{year}{2020}).
\newblock


\bibitem[Bistritz and Leshem(2018)]%
        {bistritz2018distributed}
\bibfield{author}{\bibinfo{person}{Ilai Bistritz} {and} \bibinfo{person}{Amir
  Leshem}.} \bibinfo{year}{2018}\natexlab{}.
\newblock \showarticletitle{Distributed multi-player bandits-a game of thrones
  approach}. In \bibinfo{booktitle}{\emph{Advances in Neural Information
  Processing Systems}}. \bibinfo{pages}{7222--7232}.
\newblock


\bibitem[Bogunovic et~al\mbox{.}(2020)]%
        {bogunovic2020corruption}
\bibfield{author}{\bibinfo{person}{Ilija Bogunovic}, \bibinfo{person}{Andreas
  Krause}, {and} \bibinfo{person}{Jonathan Scarlett}.}
  \bibinfo{year}{2020}\natexlab{}.
\newblock \showarticletitle{Corruption-tolerant Gaussian process bandit
  optimization}. In \bibinfo{booktitle}{\emph{International Conference on
  Artificial Intelligence and Statistics}}. PMLR, \bibinfo{pages}{1071--1081}.
\newblock


\bibitem[Bogunovic et~al\mbox{.}(2021)]%
        {bogunovic2021stochastic}
\bibfield{author}{\bibinfo{person}{Ilija Bogunovic}, \bibinfo{person}{Arpan
  Losalka}, \bibinfo{person}{Andreas Krause}, {and} \bibinfo{person}{Jonathan
  Scarlett}.} \bibinfo{year}{2021}\natexlab{}.
\newblock \showarticletitle{Stochastic linear bandits robust to adversarial
  attacks}. In \bibinfo{booktitle}{\emph{International Conference on Artificial
  Intelligence and Statistics}}. PMLR, \bibinfo{pages}{991--999}.
\newblock


\bibitem[Boursier and Perchet(2019)]%
        {boursier2019sic}
\bibfield{author}{\bibinfo{person}{Etienne Boursier} {and}
  \bibinfo{person}{Vianney Perchet}.} \bibinfo{year}{2019}\natexlab{}.
\newblock \showarticletitle{SIC-MMAB: Synchronisation Involves Communication in
  Multiplayer Multi-Armed Bandits}.
\newblock \bibinfo{journal}{\emph{Advances in Neural Information Processing
  Systems}}  \bibinfo{volume}{32} (\bibinfo{year}{2019}),
  \bibinfo{pages}{12071--12080}.
\newblock


\bibitem[Bubeck et~al\mbox{.}(2011)]%
        {bubeck2011pure}
\bibfield{author}{\bibinfo{person}{S{\'e}bastien Bubeck},
  \bibinfo{person}{R{\'e}mi Munos}, {and} \bibinfo{person}{Gilles Stoltz}.}
  \bibinfo{year}{2011}\natexlab{}.
\newblock \showarticletitle{Pure exploration in finitely-armed and
  continuous-armed bandits}.
\newblock \bibinfo{journal}{\emph{Theoretical Computer Science}}
  \bibinfo{volume}{412}, \bibinfo{number}{19} (\bibinfo{year}{2011}),
  \bibinfo{pages}{1832--1852}.
\newblock


\bibitem[Buccapatnam et~al\mbox{.}(2015)]%
        {buccapatnam2015information}
\bibfield{author}{\bibinfo{person}{Swapna Buccapatnam}, \bibinfo{person}{Jian
  Tan}, {and} \bibinfo{person}{Li Zhang}.} \bibinfo{year}{2015}\natexlab{}.
\newblock \showarticletitle{Information sharing in distributed stochastic
  bandits}. In \bibinfo{booktitle}{\emph{2015 IEEE Conference on Computer
  Communications (INFOCOM)}}. IEEE, \bibinfo{pages}{2605--2613}.
\newblock


\bibitem[Cesa-Bianchi et~al\mbox{.}(2016)]%
        {cesa2016delay}
\bibfield{author}{\bibinfo{person}{Nicolo Cesa-Bianchi},
  \bibinfo{person}{Claudio Gentile}, \bibinfo{person}{Yishay Mansour}, {and}
  \bibinfo{person}{Alberto Minora}.} \bibinfo{year}{2016}\natexlab{}.
\newblock \showarticletitle{Delay and cooperation in nonstochastic bandits}. In
  \bibinfo{booktitle}{\emph{Conference on Learning Theory}},
  Vol.~\bibinfo{volume}{49}. \bibinfo{pages}{605--622}.
\newblock


\bibitem[Chakraborty et~al\mbox{.}(2017)]%
        {chakraborty2017coordinated}
\bibfield{author}{\bibinfo{person}{Mithun Chakraborty}, \bibinfo{person}{Kai
  Yee~Phoebe Chua}, \bibinfo{person}{Sanmay Das}, {and}
  \bibinfo{person}{Brendan Juba}.} \bibinfo{year}{2017}\natexlab{}.
\newblock \showarticletitle{Coordinated Versus Decentralized Exploration In
  Multi-Agent Multi-Armed Bandits.}. In \bibinfo{booktitle}{\emph{IJCAI}}.
  \bibinfo{pages}{164--170}.
\newblock


\bibitem[Chawla et~al\mbox{.}(2020)]%
        {chawla2020gossiping}
\bibfield{author}{\bibinfo{person}{Ronshee Chawla}, \bibinfo{person}{Abishek
  Sankararaman}, \bibinfo{person}{Ayalvadi Ganesh}, {and}
  \bibinfo{person}{Sanjay Shakkottai}.} \bibinfo{year}{2020}\natexlab{}.
\newblock \showarticletitle{The Gossiping Insert-Eliminate Algorithm for
  Multi-Agent Bandits}. In \bibinfo{booktitle}{\emph{Proceedings of the Twenty
  Third International Conference on Artificial Intelligence and Statistics}}.
  \bibinfo{pages}{3471--3481}.
\newblock


\bibitem[Chawla et~al\mbox{.}(2022)]%
        {chawla2020multi}
\bibfield{author}{\bibinfo{person}{Ronshee Chawla}, \bibinfo{person}{Abishek
  Sankararaman}, {and} \bibinfo{person}{Sanjay Shakkottai}.}
  \bibinfo{year}{2022}\natexlab{}.
\newblock \showarticletitle{Multi-agent low-dimensional linear bandits}.
\newblock \bibinfo{journal}{\emph{IEEE Trans. Automat. Control}}
  (\bibinfo{year}{2022}).
\newblock


\bibitem[Chierichetti et~al\mbox{.}(2010)]%
        {chierichetti2010almost}
\bibfield{author}{\bibinfo{person}{Flavio Chierichetti},
  \bibinfo{person}{Silvio Lattanzi}, {and} \bibinfo{person}{Alessandro
  Panconesi}.} \bibinfo{year}{2010}\natexlab{}.
\newblock \showarticletitle{Almost tight bounds for rumour spreading with
  conductance}. In \bibinfo{booktitle}{\emph{Proceedings of the forty-second
  ACM symposium on Theory of computing}}. \bibinfo{pages}{399--408}.
\newblock


\bibitem[Dakdouk et~al\mbox{.}(2021)]%
        {dakdouk2021collaborative}
\bibfield{author}{\bibinfo{person}{Hiba Dakdouk}, \bibinfo{person}{Rapha{\"e}l
  F{\'e}raud}, \bibinfo{person}{Romain Laroche}, \bibinfo{person}{Nad{\`e}ge
  Varsier}, {and} \bibinfo{person}{Patrick Maill{\'e}}.}
  \bibinfo{year}{2021}\natexlab{}.
\newblock \showarticletitle{Collaborative Exploration and Exploitation in
  massively Multi-Player Bandits}.
\newblock  (\bibinfo{year}{2021}).
\newblock


\bibitem[Dubey et~al\mbox{.}(2020a)]%
        {dubey2020cooperative}
\bibfield{author}{\bibinfo{person}{Abhimanyu Dubey} {et~al\mbox{.}}}
  \bibinfo{year}{2020}\natexlab{a}.
\newblock \showarticletitle{Cooperative multi-agent bandits with heavy tails}.
  In \bibinfo{booktitle}{\emph{International Conference on Machine Learning}}.
  PMLR, \bibinfo{pages}{2730--2739}.
\newblock


\bibitem[Dubey et~al\mbox{.}(2020b)]%
        {dubey2020kernel}
\bibfield{author}{\bibinfo{person}{Abhimanyu Dubey} {et~al\mbox{.}}}
  \bibinfo{year}{2020}\natexlab{b}.
\newblock \showarticletitle{Kernel methods for cooperative multi-agent
  contextual bandits}. In \bibinfo{booktitle}{\emph{International Conference on
  Machine Learning}}. PMLR, \bibinfo{pages}{2740--2750}.
\newblock


\bibitem[Dubey and Pentland(2020)]%
        {dubey2020differentially}
\bibfield{author}{\bibinfo{person}{Abhimanyu Dubey} {and}
  \bibinfo{person}{AlexSandy' Pentland}.} \bibinfo{year}{2020}\natexlab{}.
\newblock \showarticletitle{Differentially-Private Federated Linear Bandits}.
\newblock \bibinfo{journal}{\emph{Advances in Neural Information Processing
  Systems}}  \bibinfo{volume}{33} (\bibinfo{year}{2020}).
\newblock


\bibitem[Garcelon et~al\mbox{.}(2020)]%
        {garcelon2020adversarial}
\bibfield{author}{\bibinfo{person}{Evrard Garcelon}, \bibinfo{person}{Baptiste
  Roziere}, \bibinfo{person}{Laurent Meunier}, \bibinfo{person}{Jean
  Tarbouriech}, \bibinfo{person}{Olivier Teytaud}, \bibinfo{person}{Alessandro
  Lazaric}, {and} \bibinfo{person}{Matteo Pirotta}.}
  \bibinfo{year}{2020}\natexlab{}.
\newblock \showarticletitle{Adversarial Attacks on Linear Contextual Bandits}.
\newblock \bibinfo{journal}{\emph{Advances in Neural Information Processing
  Systems}}  \bibinfo{volume}{33} (\bibinfo{year}{2020}).
\newblock


\bibitem[Gupta et~al\mbox{.}(2019)]%
        {gupta2019better}
\bibfield{author}{\bibinfo{person}{Anupam Gupta}, \bibinfo{person}{Tomer
  Koren}, {and} \bibinfo{person}{Kunal Talwar}.}
  \bibinfo{year}{2019}\natexlab{}.
\newblock \showarticletitle{Better Algorithms for Stochastic Bandits with
  Adversarial Corruptions}. In \bibinfo{booktitle}{\emph{Conference on Learning
  Theory}}. \bibinfo{pages}{1562--1578}.
\newblock


\bibitem[Harper and Konstan(2015)]%
        {harper2015movielens}
\bibfield{author}{\bibinfo{person}{F~Maxwell Harper} {and}
  \bibinfo{person}{Joseph~A Konstan}.} \bibinfo{year}{2015}\natexlab{}.
\newblock \showarticletitle{The movielens datasets: History and context}.
\newblock \bibinfo{journal}{\emph{Acm transactions on interactive intelligent
  systems (tiis)}} \bibinfo{volume}{5}, \bibinfo{number}{4}
  (\bibinfo{year}{2015}), \bibinfo{pages}{1--19}.
\newblock


\bibitem[Hillel et~al\mbox{.}(2013)]%
        {hillel2013distributed}
\bibfield{author}{\bibinfo{person}{Eshcar Hillel}, \bibinfo{person}{Zohar~S
  Karnin}, \bibinfo{person}{Tomer Koren}, \bibinfo{person}{Ronny Lempel}, {and}
  \bibinfo{person}{Oren Somekh}.} \bibinfo{year}{2013}\natexlab{}.
\newblock \showarticletitle{Distributed exploration in multi-armed bandits}. In
  \bibinfo{booktitle}{\emph{Advances in Neural Information Processing
  Systems}}. \bibinfo{pages}{854--862}.
\newblock


\bibitem[Jun et~al\mbox{.}(2018)]%
        {jun2018adversarial}
\bibfield{author}{\bibinfo{person}{Kwang-Sung Jun}, \bibinfo{person}{Lihong
  Li}, \bibinfo{person}{Yuzhe Ma}, {and} \bibinfo{person}{Jerry Zhu}.}
  \bibinfo{year}{2018}\natexlab{}.
\newblock \showarticletitle{Adversarial attacks on stochastic bandits}.
\newblock \bibinfo{journal}{\emph{Advances in Neural Information Processing
  Systems}}  \bibinfo{volume}{31} (\bibinfo{year}{2018}),
  \bibinfo{pages}{3640--3649}.
\newblock


\bibitem[Kalathil et~al\mbox{.}(2014)]%
        {kalathil2014decentralized}
\bibfield{author}{\bibinfo{person}{Dileep Kalathil}, \bibinfo{person}{Naumaan
  Nayyar}, {and} \bibinfo{person}{Rahul Jain}.}
  \bibinfo{year}{2014}\natexlab{}.
\newblock \showarticletitle{Decentralized learning for multiplayer multiarmed
  bandits}.
\newblock \bibinfo{journal}{\emph{IEEE Transactions on Information Theory}}
  \bibinfo{volume}{60}, \bibinfo{number}{4} (\bibinfo{year}{2014}),
  \bibinfo{pages}{2331--2345}.
\newblock


\bibitem[Kanade et~al\mbox{.}(2012)]%
        {kanade2012distributed}
\bibfield{author}{\bibinfo{person}{Varun Kanade}, \bibinfo{person}{Zhenming
  Liu}, {and} \bibinfo{person}{Bozidar Radunovic}.}
  \bibinfo{year}{2012}\natexlab{}.
\newblock \showarticletitle{Distributed non-stochastic experts}. In
  \bibinfo{booktitle}{\emph{Advances in Neural Information Processing
  Systems}}. \bibinfo{pages}{260--268}.
\newblock


\bibitem[Kao et~al\mbox{.}(2022)]%
        {kao2021decentralized}
\bibfield{author}{\bibinfo{person}{Hsu Kao}, \bibinfo{person}{Chen-Yu Wei},
  {and} \bibinfo{person}{Vijay Subramanian}.} \bibinfo{year}{2022}\natexlab{}.
\newblock \showarticletitle{Decentralized cooperative reinforcement learning
  with hierarchical information structure}. In
  \bibinfo{booktitle}{\emph{International Conference on Algorithmic Learning
  Theory}}. PMLR, \bibinfo{pages}{573--605}.
\newblock


\bibitem[Kapoor et~al\mbox{.}(2019)]%
        {kapoor2019corruption}
\bibfield{author}{\bibinfo{person}{Sayash Kapoor},
  \bibinfo{person}{Kumar~Kshitij Patel}, {and} \bibinfo{person}{Purushottam
  Kar}.} \bibinfo{year}{2019}\natexlab{}.
\newblock \showarticletitle{Corruption-tolerant bandit learning}.
\newblock \bibinfo{journal}{\emph{Machine Learning}} \bibinfo{volume}{108},
  \bibinfo{number}{4} (\bibinfo{year}{2019}), \bibinfo{pages}{687--715}.
\newblock


\bibitem[Kolla et~al\mbox{.}(2018)]%
        {kolla2018collaborative}
\bibfield{author}{\bibinfo{person}{Ravi~Kumar Kolla}, \bibinfo{person}{Krishna
  Jagannathan}, {and} \bibinfo{person}{Aditya Gopalan}.}
  \bibinfo{year}{2018}\natexlab{}.
\newblock \showarticletitle{Collaborative learning of stochastic bandits over a
  social network}.
\newblock \bibinfo{journal}{\emph{IEEE/ACM Transactions on Networking}}
  \bibinfo{volume}{26}, \bibinfo{number}{4} (\bibinfo{year}{2018}),
  \bibinfo{pages}{1782--1795}.
\newblock


\bibitem[Korda et~al\mbox{.}(2016)]%
        {korda2016distributed}
\bibfield{author}{\bibinfo{person}{Nathan Korda}, \bibinfo{person}{Bal{\'a}zs
  Sz{\"o}r{\'e}nyi}, {and} \bibinfo{person}{Li Shuai}.}
  \bibinfo{year}{2016}\natexlab{}.
\newblock \showarticletitle{Distributed clustering of linear bandits in peer to
  peer networks}. In \bibinfo{booktitle}{\emph{Journal of machine learning
  research workshop and conference proceedings}}, Vol.~\bibinfo{volume}{48}.
  International Machine Learning Societ, \bibinfo{pages}{1301--1309}.
\newblock


\bibitem[Lalitha and Goldsmith(2021)]%
        {lalitha2021bayesian}
\bibfield{author}{\bibinfo{person}{Anusha Lalitha} {and}
  \bibinfo{person}{Andrea Goldsmith}.} \bibinfo{year}{2021}\natexlab{}.
\newblock \showarticletitle{Bayesian Algorithms for Decentralized Stochastic
  Bandits}.
\newblock \bibinfo{journal}{\emph{IEEE Journal on Selected Areas in Information
  Theory}} \bibinfo{volume}{2}, \bibinfo{number}{2} (\bibinfo{year}{2021}),
  \bibinfo{pages}{564--583}.
\newblock


\bibitem[Landgren et~al\mbox{.}(2016)]%
        {landgren2016distributed}
\bibfield{author}{\bibinfo{person}{Peter Landgren}, \bibinfo{person}{Vaibhav
  Srivastava}, {and} \bibinfo{person}{Naomi~Ehrich Leonard}.}
  \bibinfo{year}{2016}\natexlab{}.
\newblock \showarticletitle{Distributed cooperative decision-making in
  multiarmed bandits: Frequentist and Bayesian algorithms}. In
  \bibinfo{booktitle}{\emph{2016 IEEE 55th Conference on Decision and Control
  (CDC)}}. IEEE, \bibinfo{pages}{167--172}.
\newblock


\bibitem[Lattimore and Szepesv{\'a}ri(2020)]%
        {lattimore2020bandit}
\bibfield{author}{\bibinfo{person}{Tor Lattimore} {and} \bibinfo{person}{Csaba
  Szepesv{\'a}ri}.} \bibinfo{year}{2020}\natexlab{}.
\newblock \bibinfo{booktitle}{\emph{Bandit algorithms}}.
\newblock \bibinfo{publisher}{Cambridge University Press}.
\newblock


\bibitem[LeBlanc et~al\mbox{.}(2013)]%
        {leblanc2013resilient}
\bibfield{author}{\bibinfo{person}{Heath~J LeBlanc}, \bibinfo{person}{Haotian
  Zhang}, \bibinfo{person}{Xenofon Koutsoukos}, {and} \bibinfo{person}{Shreyas
  Sundaram}.} \bibinfo{year}{2013}\natexlab{}.
\newblock \showarticletitle{Resilient asymptotic consensus in robust networks}.
\newblock \bibinfo{journal}{\emph{IEEE Journal on Selected Areas in
  Communications}} \bibinfo{volume}{31}, \bibinfo{number}{4}
  (\bibinfo{year}{2013}), \bibinfo{pages}{766--781}.
\newblock


\bibitem[Liu and Shroff(2019)]%
        {liu2019data}
\bibfield{author}{\bibinfo{person}{Fang Liu} {and} \bibinfo{person}{Ness
  Shroff}.} \bibinfo{year}{2019}\natexlab{}.
\newblock \showarticletitle{Data Poisoning Attacks on Stochastic Bandits}. In
  \bibinfo{booktitle}{\emph{International Conference on Machine Learning}}.
  \bibinfo{pages}{4042--4050}.
\newblock


\bibitem[Liu et~al\mbox{.}(2021)]%
        {liu2021cooperative}
\bibfield{author}{\bibinfo{person}{Junyan Liu}, \bibinfo{person}{Shuai Li},
  {and} \bibinfo{person}{Dapeng Li}.} \bibinfo{year}{2021}\natexlab{}.
\newblock \showarticletitle{Cooperative Stochastic Multi-agent Multi-armed
  Bandits Robust to Adversarial Corruptions}.
\newblock \bibinfo{journal}{\emph{arXiv preprint arXiv:2106.04207}}
  (\bibinfo{year}{2021}).
\newblock


\bibitem[Liu and Zhao(2010)]%
        {liu2010distributed}
\bibfield{author}{\bibinfo{person}{Keqin Liu} {and} \bibinfo{person}{Qing
  Zhao}.} \bibinfo{year}{2010}\natexlab{}.
\newblock \showarticletitle{Distributed learning in multi-armed bandit with
  multiple players}.
\newblock \bibinfo{journal}{\emph{IEEE Transactions on Signal Processing}}
  \bibinfo{volume}{58}, \bibinfo{number}{11} (\bibinfo{year}{2010}),
  \bibinfo{pages}{5667--5681}.
\newblock


\bibitem[Liu et~al\mbox{.}(2020)]%
        {liu2020competing}
\bibfield{author}{\bibinfo{person}{Lydia~T Liu}, \bibinfo{person}{Horia Mania},
  {and} \bibinfo{person}{Michael Jordan}.} \bibinfo{year}{2020}\natexlab{}.
\newblock \showarticletitle{Competing bandits in matching markets}. In
  \bibinfo{booktitle}{\emph{International Conference on Artificial Intelligence
  and Statistics}}. PMLR, \bibinfo{pages}{1618--1628}.
\newblock


\bibitem[Lykouris et~al\mbox{.}(2018)]%
        {lykouris2018stochastic}
\bibfield{author}{\bibinfo{person}{Thodoris Lykouris}, \bibinfo{person}{Vahab
  Mirrokni}, {and} \bibinfo{person}{Renato Paes~Leme}.}
  \bibinfo{year}{2018}\natexlab{}.
\newblock \showarticletitle{Stochastic bandits robust to adversarial
  corruptions}. In \bibinfo{booktitle}{\emph{Proceedings of the 50th Annual ACM
  SIGACT Symposium on Theory of Computing}}. \bibinfo{pages}{114--122}.
\newblock


\bibitem[Madhushani and Leonard(2021)]%
        {madhushani2021call}
\bibfield{author}{\bibinfo{person}{Udari Madhushani} {and}
  \bibinfo{person}{Naomi Leonard}.} \bibinfo{year}{2021}\natexlab{}.
\newblock \showarticletitle{When to call your neighbor? strategic communication
  in cooperative stochastic bandits}.
\newblock \bibinfo{journal}{\emph{arXiv preprint arXiv:2110.04396}}
  (\bibinfo{year}{2021}).
\newblock


\bibitem[Mansour et~al\mbox{.}(2018)]%
        {mansour2018competing}
\bibfield{author}{\bibinfo{person}{Yishay Mansour}, \bibinfo{person}{Aleksandrs
  Slivkins}, {and} \bibinfo{person}{Steven Wu}.}
  \bibinfo{year}{2018}\natexlab{}.
\newblock \showarticletitle{Competing bandits: Learning under competition}. In
  \bibinfo{booktitle}{\emph{9th Innovations in Theoretical Computer Science,
  ITCS 2018}}. Schloss Dagstuhl-Leibniz-Zentrum fur Informatik GmbH, Dagstuhl
  Publishing, \bibinfo{pages}{48}.
\newblock


\bibitem[Mart{\'\i}nez-Rubio et~al\mbox{.}(2019)]%
        {martinez2019decentralized}
\bibfield{author}{\bibinfo{person}{David Mart{\'\i}nez-Rubio},
  \bibinfo{person}{Varun Kanade}, {and} \bibinfo{person}{Patrick Rebeschini}.}
  \bibinfo{year}{2019}\natexlab{}.
\newblock \showarticletitle{Decentralized cooperative stochastic multi-armed
  bandits}.
\newblock \bibinfo{journal}{\emph{Advances in Neural Information Processing
  Systems}} (\bibinfo{year}{2019}).
\newblock


\bibitem[Mitra et~al\mbox{.}(2021)]%
        {mitra2021exploiting}
\bibfield{author}{\bibinfo{person}{Aritra Mitra}, \bibinfo{person}{Hamed
  Hassani}, {and} \bibinfo{person}{George Pappas}.}
  \bibinfo{year}{2021}\natexlab{}.
\newblock \showarticletitle{Exploiting Heterogeneity in Robust Federated
  Best-Arm Identification}.
\newblock \bibinfo{journal}{\emph{arXiv preprint arXiv:2109.05700}}
  (\bibinfo{year}{2021}).
\newblock


\bibitem[Newton et~al\mbox{.}(2021)]%
        {newton2021asymptotic}
\bibfield{author}{\bibinfo{person}{Conor Newton}, \bibinfo{person}{AJ Ganesh},
  {and} \bibinfo{person}{Henry Reeve}.} \bibinfo{year}{2021}\natexlab{}.
\newblock \showarticletitle{Asymptotic Optimality for Decentralised Bandits}.
  In \bibinfo{booktitle}{\emph{Reinforcement Learning in Networks and Queues,
  Sigmetrics 2021}}.
\newblock


\bibitem[Pittel(1987)]%
        {pittel1987spreading}
\bibfield{author}{\bibinfo{person}{Boris Pittel}.}
  \bibinfo{year}{1987}\natexlab{}.
\newblock \showarticletitle{On spreading a rumor}.
\newblock \bibinfo{journal}{\emph{SIAM J. Appl. Math.}} \bibinfo{volume}{47},
  \bibinfo{number}{1} (\bibinfo{year}{1987}), \bibinfo{pages}{213--223}.
\newblock


\bibitem[Rosenski et~al\mbox{.}(2016)]%
        {rosenski2016multi}
\bibfield{author}{\bibinfo{person}{Jonathan Rosenski}, \bibinfo{person}{Ohad
  Shamir}, {and} \bibinfo{person}{Liran Szlak}.}
  \bibinfo{year}{2016}\natexlab{}.
\newblock \showarticletitle{Multi-player bandits--a musical chairs approach}.
  In \bibinfo{booktitle}{\emph{International Conference on Machine Learning}}.
  \bibinfo{pages}{155--163}.
\newblock


\bibitem[Sankararaman et~al\mbox{.}(2019)]%
        {sankararaman2019social}
\bibfield{author}{\bibinfo{person}{Abishek Sankararaman},
  \bibinfo{person}{Ayalvadi Ganesh}, {and} \bibinfo{person}{Sanjay
  Shakkottai}.} \bibinfo{year}{2019}\natexlab{}.
\newblock \showarticletitle{Social learning in multi agent multi armed
  bandits}.
\newblock \bibinfo{journal}{\emph{Proceedings of the ACM on Measurement and
  Analysis of Computing Systems}} \bibinfo{volume}{3}, \bibinfo{number}{3}
  (\bibinfo{year}{2019}), \bibinfo{pages}{1--35}.
\newblock


\bibitem[Shahrampour et~al\mbox{.}(2017)]%
        {shahrampour2017multi}
\bibfield{author}{\bibinfo{person}{Shahin Shahrampour},
  \bibinfo{person}{Alexander Rakhlin}, {and} \bibinfo{person}{Ali Jadbabaie}.}
  \bibinfo{year}{2017}\natexlab{}.
\newblock \showarticletitle{Multi-armed bandits in multi-agent networks}. In
  \bibinfo{booktitle}{\emph{2017 IEEE International Conference on Acoustics,
  Speech and Signal Processing (ICASSP)}}. IEEE, \bibinfo{pages}{2786--2790}.
\newblock


\bibitem[Sz{\"o}r{\'e}nyi et~al\mbox{.}(2013)]%
        {szorenyi2013gossip}
\bibfield{author}{\bibinfo{person}{Bal{\'a}zs Sz{\"o}r{\'e}nyi},
  \bibinfo{person}{R{\'o}bert Busa-Fekete}, \bibinfo{person}{Istv{\'a}n
  Heged{\H{u}}s}, \bibinfo{person}{R{\'o}bert Orm{\'a}ndi},
  \bibinfo{person}{M{\'a}rk Jelasity}, {and} \bibinfo{person}{Bal{\'a}zs
  K{\'e}gl}.} \bibinfo{year}{2013}\natexlab{}.
\newblock \showarticletitle{Gossip-based distributed stochastic bandit
  algorithms}. In \bibinfo{booktitle}{\emph{Journal of Machine Learning
  Research Workshop and Conference Proceedings}}, Vol.~\bibinfo{volume}{2}.
  International Machine Learning Societ, \bibinfo{pages}{1056--1064}.
\newblock


\bibitem[Tekin and Van Der~Schaar(2015)]%
        {tekin2015distributed}
\bibfield{author}{\bibinfo{person}{Cem Tekin} {and} \bibinfo{person}{Mihaela
  Van Der~Schaar}.} \bibinfo{year}{2015}\natexlab{}.
\newblock \showarticletitle{Distributed online learning via cooperative
  contextual bandits}.
\newblock \bibinfo{journal}{\emph{IEEE transactions on signal processing}}
  \bibinfo{volume}{63}, \bibinfo{number}{14} (\bibinfo{year}{2015}),
  \bibinfo{pages}{3700--3714}.
\newblock


\bibitem[Vial et~al\mbox{.}(2021)]%
        {vial2021robust}
\bibfield{author}{\bibinfo{person}{Daniel Vial}, \bibinfo{person}{Sanjay
  Shakkottai}, {and} \bibinfo{person}{R Srikant}.}
  \bibinfo{year}{2021}\natexlab{}.
\newblock \showarticletitle{Robust multi-agent multi-armed bandits}. In
  \bibinfo{booktitle}{\emph{Proceedings of the Twenty-second International
  Symposium on Theory, Algorithmic Foundations, and Protocol Design for Mobile
  Networks and Mobile Computing}}. \bibinfo{pages}{161--170}.
\newblock


\bibitem[Wang et~al\mbox{.}(2020)]%
        {wang2020optimal}
\bibfield{author}{\bibinfo{person}{Po-An Wang}, \bibinfo{person}{Alexandre
  Proutiere}, \bibinfo{person}{Kaito Ariu}, \bibinfo{person}{Yassir Jedra},
  {and} \bibinfo{person}{Alessio Russo}.} \bibinfo{year}{2020}\natexlab{}.
\newblock \showarticletitle{Optimal algorithms for multiplayer multi-armed
  bandits}. In \bibinfo{booktitle}{\emph{International Conference on Artificial
  Intelligence and Statistics}}. PMLR, \bibinfo{pages}{4120--4129}.
\newblock


\bibitem[Zhu et~al\mbox{.}(2021)]%
        {zhu2021federated}
\bibfield{author}{\bibinfo{person}{Zhaowei Zhu}, \bibinfo{person}{Jingxuan
  Zhu}, \bibinfo{person}{Ji Liu}, {and} \bibinfo{person}{Yang Liu}.}
  \bibinfo{year}{2021}\natexlab{}.
\newblock \showarticletitle{Federated bandit: A gossiping approach}. In
  \bibinfo{booktitle}{\emph{Abstract Proceedings of the 2021 ACM
  SIGMETRICS/International Conference on Measurement and Modeling of Computer
  Systems}}. \bibinfo{pages}{3--4}.
\newblock


\end{thebibliography}

\appendix

\section{Notes on appendices}

The appendices are organized as follows. First, Appendix \ref{appExp} contains the additional numerical results that were mentioned in Section \ref{secExp}. Next, we prove Theorem \ref{thmExisting} in Appendix \ref{appProofExisting} and all results from Section \ref{secProposed} in Appendix \ref{appProofProposed}. We then provide a rigorous version of the proof sketch from Section \ref{secAnalysis} in Appendix \ref{appProofSpread}. Finally, Appendix \ref{appProofOther} contains some auxiliary results -- namely, Appendix \ref{appBasicRes} records some simple inequalities, Appendix \ref{appBanditRes} provides some bandit results that are essentially known but stated in forms convenient to us, and Appendices \ref{appEarlyCalc}-\ref{appIntCalc} contain some tedious calculations. 

For the analysis, we use $C_i$, $C_i'$, etc.\ to denote positive constants depending only on the algorithmic parameters $\alpha$, $\beta$, $\eta$, $\rho_1$, and $\rho_2$. Each is associated with a corresponding claim, e.g., $C_{\ref{clmLogRegNaive}}$ with Claim \ref{clmLogRegNaive}. Within the proofs, we use $C$, $C'$, etc.\ to denote constants whose values may change across proofs. Finally, $\ind$ denotes the indicator function, $\E_j$ and $\P_j$ are expectation and probability conditioned on all randomness before the $j$-th communication period, and $A^{-1}(t) = \min \{ j \in \N : t \leq A_j \}$ denotes the current phase at time $t \in \N$.

\section{Additional experiments} \label{appExp}

\begin{figure}
\centering
\includegraphics[width=\textwidth]{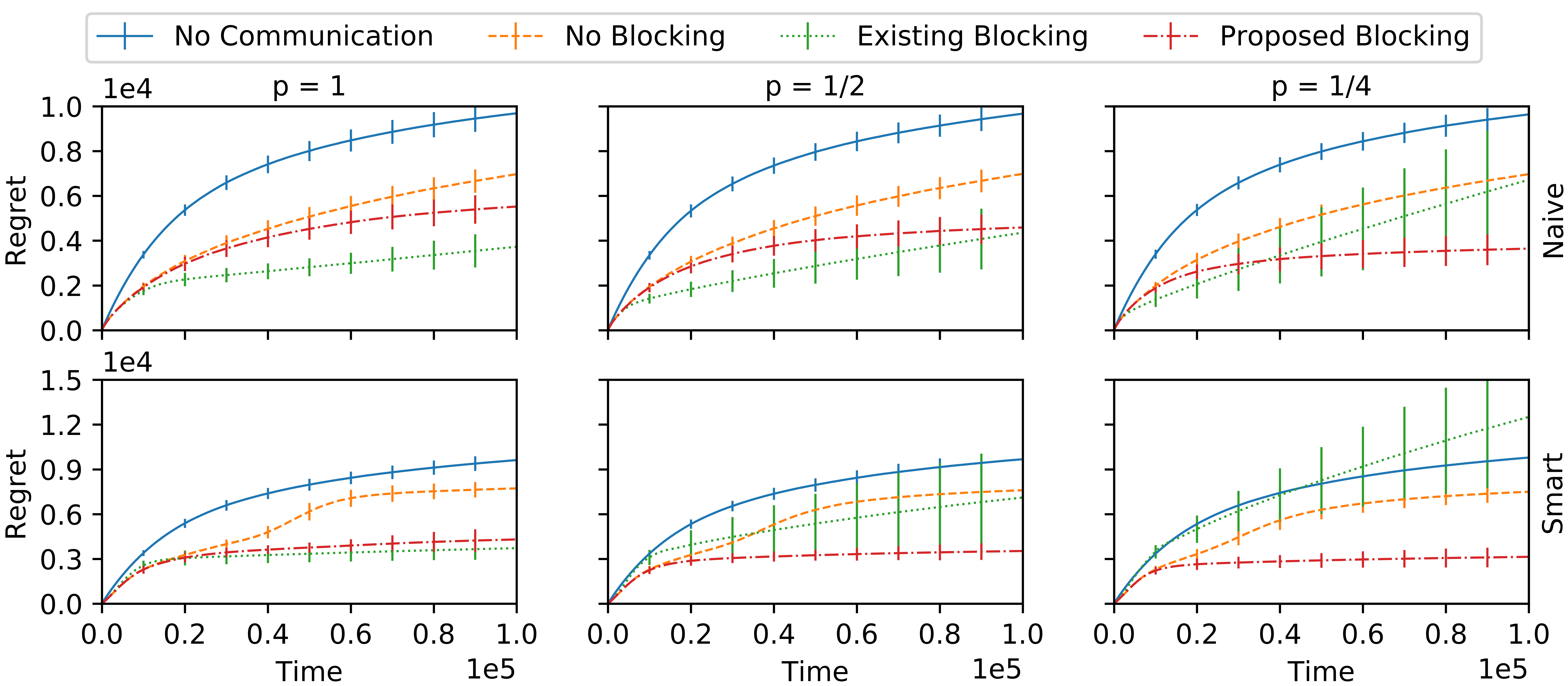}
\caption{Empirical results for real data. Rows of subfigures correspond to the malicious strategy, while columns correspond to the edge probability $p$ for the $G(n+m,p)$ random graph.} \label{figExpReal}
\Description{Empirical results for real data. Rows of subfigures correspond to the malicious strategy, while columns correspond to the edge probability $p$ for the $G(n+m,p)$ random graph.} 
\end{figure}

\begin{figure}
\centering
\includegraphics[width=0.86\textwidth]{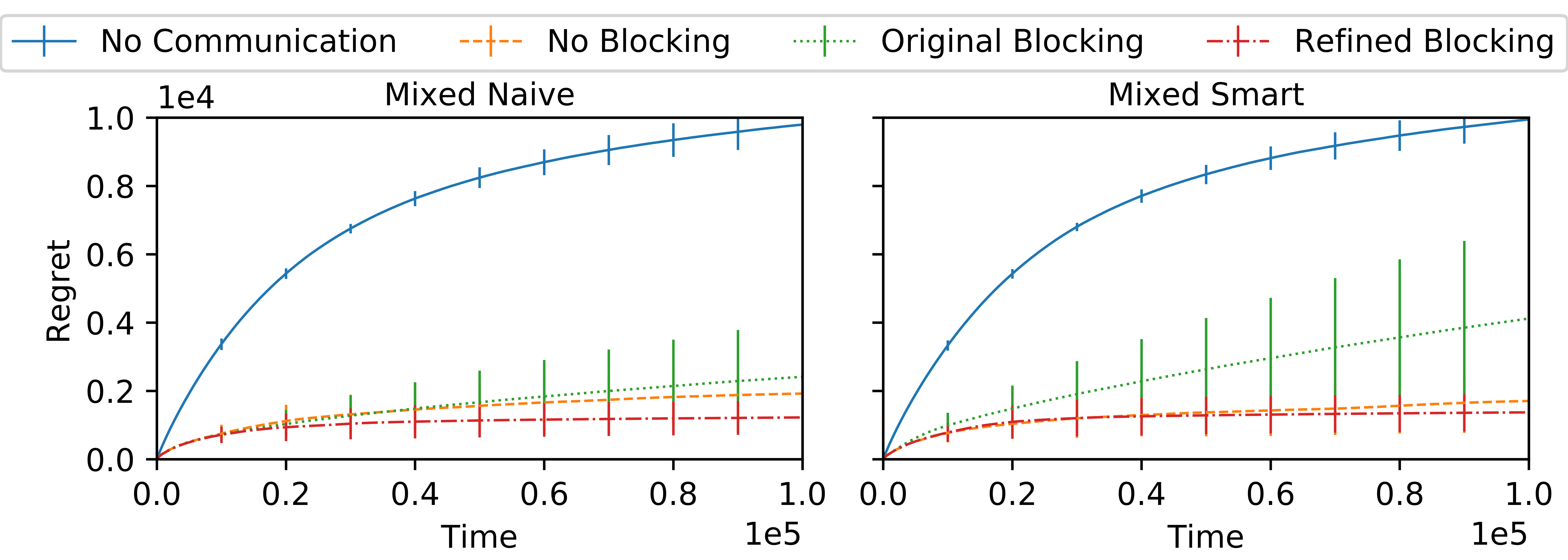}
\caption{Empirical results for synthetic data with $p=1/2$ and mixed malicious strategies.} \label{figExpMixed}
\Description{Empirical results for synthetic data with $p=1/2$ and mixed malicious strategies.}
\end{figure}

As mentioned in Section \ref{secExp}, we also considered arm means derived from real data. The setup was the same as for the synthetic means, except for two changes (as in \cite{vial2021robust}): we choose $m=15$ instead of $m=10$, and we sample $\{ \mu_k \}_{k=1}^K$ uniformly from a set of arm means derived from MovieLens \cite{harper2015movielens} user film ratings via matrix completion; see \cite[Section 6.2]{vial2021robust} for details. The results (Figure \ref{figExpReal}) are qualitatively similar to the synthetic case. 

Finally, we repeated the synthetic data experiments from Section \ref{secExp} with the intermediate $G(n+m,p)$ graph parameter $p=1/2$ and two new malicious strategies called \textit{mixed naive} and \textit{mixed smart}. As discussed in Section \ref{secExp}, these approaches use a ``mixed report'' where the malicious agents more frequently recommend good arms -- namely, the second best when the best is inactive and the naive or smart recommendation otherwise. Results are shown in Figure \ref{figExpMixed}. They again reinforce the key message that the proposed rule adapts more gracefully to networks beyond the complete graph -- in this case, our blocking rule has less than half of the regret of the existing one at the horizon $T$. Additionally, we observe that the no blocking algorithm from \cite{chawla2020gossiping} has much lower regret in Figure \ref{figExpMixed} than Figure \ref{figExpSynthetic}, though still higher than our proposed blocking algorithm. This suggests that our algorithm remains superior even for ``nicer'' malicious strategies under which blocking is less necessary (in the sense that \cite{chawla2020gossiping} has lower regret in Figure \ref{figExpMixed} than Figure \ref{figExpSynthetic}).

\section{Proof of Theorem \ref{thmExisting}} \label{appProofExisting}

We first observe that since $h \leq 2^{h-1}\ \forall\ h \in \N$, we can lower bound the arm gap as follows:
\begin{equation*}
\Delta_2 = \mu_1 - \mu_2 = 1 - \frac{13}{15} - \sum_{h=1}^{(n/2)-2} \left( \frac{1}{16} \right)^{2^{h-1}} > 1 -  \frac{13}{15} - \sum_{h=1}^\infty \left( \frac{1}{16} \right)^h = \frac{1}{15} .
\end{equation*}

Next, we show that for phases $j \geq J_l$, agents aware of arms at least as good as $1-l+n/2$ will (1) play such arms most often in phase $j$ and (2) have such arms active thereafter. The proof is basically a noiseless version of a known bandit argument, specialized to the setting of Section \ref{secExisting}.
\begin{lem} \label{lemBestArmId}
Under the assumptions of Theorem \ref{thmExisting}, for any $l \in [n/2]$, $j \geq J_l$, and $i \in [n]$ such that $\min S_j^{(i)} \leq 1 - l + n/2$, we have $B_j^{(i)} \leq 1 - l + n/2$ and $\min S_{j'}^{(i)} \leq 1 - l + n/2\ \forall\ j' \geq j$.
\end{lem}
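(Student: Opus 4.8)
\emph{Proof strategy.} I would prove the lemma in two stages: a within-phase best-arm-identification bound, and a persistence argument. For the first stage, the key fact is a deterministic analogue of standard UCB analysis (a noiseless specialization of \cite[Theorem~2]{bubeck2011pure}): during any phase $j$, any active arm $k \in S_j^{(i)}$ is pulled at most $\ceil{\alpha \log(A_j) / (\mu_{\min S_j^{(i)}} - \mu_k)^2} + 1$ times. This holds because rewards are deterministic, so $\hat{\mu}_k^{(i)} \equiv \mu_k$ after the first pull of $k$; hence once $T_k^{(i)}$ exceeds $\alpha \log(A_j)/(\mu_{\min S_j^{(i)}} - \mu_k)^2$, the UCB index of $k$ drops below $\mu_{\min S_j^{(i)}}$, and therefore below that of the best active arm $\min S_j^{(i)}$ (whose index is always at least its own mean), which precludes any further pull of $k$ before time $A_j$.

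Next I would feed in the structure of the arm means \eqref{eqBadInstanceMu}. Write $b = \min S_j^{(i)}$, so the hypothesis is $b \le 1-l+n/2$; in particular $b \le n/2$, so $b$ is the best arm or a mediocre arm. For any active arm $k$ with index strictly larger than $1-l+n/2$ we have $k \ge 2-l+n/2$, hence $\mu_b - \mu_k \ge \mu_{1-l+n/2} - \mu_{2-l+n/2}$. By direct computation from \eqref{eqBadInstanceMu}, this last quantity equals $13/15$ when $l=1$, equals $2^{-2^l}$ when $2 \le l \le (n/2)-1$, and equals $\Delta_2 > 1/15$ when $l = n/2$; in every case it is at least $2^{-2^l}$. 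Combined with the first stage, such an arm $k$ is pulled at most $\alpha \log(A_j)\, 2^{2^{l+1}} + O(1)$ times during phase $j$. The crux is to verify that this is strictly less than $\tfrac13 (A_j - A_{j-1}) = \tfrac13(2j-1)$ for every $j \ge J_l$: this follows from $A_j = j^2$ together with the doubly-exponential bound $J_l \ge J_1^{2^{l-1}} = 2^{2^{l+2}} \ge (2^{2^{l+1}})^2$ implied by \eqref{eqJlDoubExp}, which forces $2^{2^{l+1}} \le \sqrt{j}$ and hence $\alpha\log(A_j)\,2^{2^{l+1}} = o(j)$. Since $|S_j^{(i)}| = 3$, the most-played arm $B_j^{(i)}$ is pulled at least $\tfrac13(A_j - A_{j-1})$ times, so $B_j^{(i)}$ cannot be one of the arms ruled out above; that is, $B_j^{(i)} \le 1-l+n/2$.

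Finally, I would obtain the persistence statement $\min S_{j'}^{(i)} \le 1-l+n/2$ for all $j' \ge j$ by induction on $j'$, at each step invoking the bound on $B_{j'}^{(i)}$ just proved (legitimate since $j' \ge j \ge J_l$ and, by the inductive hypothesis, $\min S_{j'}^{(i)} \le 1-l+n/2$). If the recommendation $R_{j'}^{(i)}$ is already active, then $S_{j'+1}^{(i)} = S_{j'}^{(i)}$ and there is nothing to show; otherwise $S_{j'+1}^{(i)} \supseteq \hat{S}^{(i)} \cup \{ U_{j'+1}^{(i)} \}$, and I would argue that one of these arms has index at most $1-l+n/2$. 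If the sticky arm does, we are done; if not, then $\min S_{j'}^{(i)}$ is the better of the two non-sticky arms, call it $g$, and the pull bound from the previous paragraph shows that every active arm of index $> 1-l+n/2$ (including the sticky arm and possibly the other non-sticky arm) is pulled fewer than $\tfrac13(A_{j'} - A_{j'-1})$ times, so $g$ is pulled more than $\tfrac13(A_{j'} - A_{j'-1})$ times, hence more than the other non-sticky arm, whence $U_{j'+1}^{(i)} = g \le 1-l+n/2$. In both cases $\min S_{j'+1}^{(i)} \le 1-l+n/2$, which closes the induction.

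I expect the main obstacle to be the quantitative comparison in the middle stage --- showing that, uniformly over $l \in [n/2]$ and $j \ge J_l$, the phase length $2j-1$ dominates a fixed constant times $\alpha\log(A_j)\, 2^{2^{l+1}}$. This is exactly what pins down the doubly-exponential thresholds $J_l$, and it is where the concrete constants ($J_1 = 2^8$, $\alpha = 4$, $\beta = 2$, and the $13/15$ and $1/15$ in \eqref{eqBadInstanceMu}) are consumed. In particular, for the base case $l=1$ one needs the true mediocre-to-bad gap $13/15$ (the generic bound $2^{-2^l}$ is already too weak at $j = J_1 = 2^8$), and for $l = n/2$ one uses $\Delta_2 > 1/15$; for the intermediate $l$ the doubly-exponential growth of $J_l$ relative to $2^{-2^l}$ provides increasing slack.
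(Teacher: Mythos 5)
Your proposal is correct and follows essentially the same route as the paper's proof: the same noiseless UCB pull-count computation (the paper phrases it as a contradiction from the most-played arm having $\geq \tfrac13(A_j-A_{j-1})$ pulls, which is the contrapositive of your per-arm upper bound), the same three-way case split on $l$ with the $13/15$ gap rescuing the base case $l=1$ and \eqref{eqJlDoubExp} supplying the slack for $l\geq 2$, and an equivalent persistence argument (your forward induction versus the paper's minimal-counterexample argument, both resting on $B_{j'}^{(i)}\in S_{j'+1}^{(i)}$). Your diagnosis of where the concrete constants are consumed matches the paper's calculation exactly.
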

\begin{proof}
First, we prove by contradiction that $B_j^{(i)} \leq 1-l+n/2$: suppose instead that $B_j^{(i)} \geq 2 - l + n/2$. Let $k_1 = \min S_j^{(i)}$ and $k_2 = B_j^{(i)}$. Then $T_{k_2}^{(i)}(A_j) - T_{k_2}^{(i)}(A_{j-1}) \geq (A_j-A_{j-1})/3$; otherwise, since $|S_j^{(i)}|=S+2=3$ by assumption and $k_2$ is the most played arm in phase $j$, we obtain 
\begin{equation*}
\sum_{k \in S_j^{(i)}} ( T_k^{(i)}(A_j) - T_k^{(i)}(A_{j-1}) ) \leq 3 ( T_{k_2}^{(i)}(A_j) - T_{k_2}^{(i)}(A_{j-1}) ) < A_j - A_{j-1} ,
\end{equation*}
which is a contradiction. Furthermore, there clearly exists $t \in \{1+A_{j-1},\ldots,A_j\}$ such that
\begin{equation*}
T_{k_2}^{(i)}(t-1) - T_{k_2}^{(i)}(A_{j-1}) = T_{k_2}^{(i)}(A_j) - T_{k_2}^{(i)}(A_{j-1}) - 1 , \quad I_t^{(i)} = k_2 .
\end{equation*}
Combining these observations, and since $T_{k_2}^{(i)}(A_{j-1})\geq0$, we obtain that
\begin{equation*}
T_{k_2}^{(i)}(t-1) \geq T_{k_2}^{(i)}(A_j) - T_{k_2}^{(i)}(A_{j-1}) - 1 \geq \frac{A_j-A_{j-1}}{3} - 1 , \quad I_t^{(i)} = k_2 .
\end{equation*}
By the UCB policy and since $\alpha = 4$ by assumption, the previous expression implies
\begin{equation} \label{eqBestArmIdUcb}
\mu_{k_1} < \mu_{k_1} + \sqrt{ \frac{4  \log t}{ T_{k_1}^{(i)}(t-1) } } \leq \mu_{k_2} + \sqrt{ \frac{4  \log t}{ T_{k_2}^{(i)}(t-1) } } \leq \mu_{k_2} + \sqrt{ \frac{4  \log t}{ \frac{A_j-A_{j-1}}{3} - 1 } } .
\end{equation}
Since $A_j = j^2$ by the assumption $\beta = 2$ and $t \leq A_j$, we also know
\begin{equation} \label{eqBestArmIdH}
\frac{4  \log t}{ \frac{A_j-A_{j-1}}{3} - 1 } \leq \frac{ 8 \log j }{ \frac{2j-1}{3} - 1 } = \frac{12 \log j}{ j-2 } = h(j) ,
\end{equation}
where we define $h(j') = 12 \log(j') / (j'-2)\ \forall\ j' > 2$. Note this function decreases on $[3,\infty)$, since
\begin{equation*}
h'(j') = \frac{ 12 ( j' - 2 - j' \log j' ) }{ j' (j'-2)^2 } \leq \frac{ 12 ( j' - 2 - j' \log 3) }{ j' (j'-2)^2 } < \frac{-24}{j'(j'-2)^2} < 0\ \forall\ j' \geq 3, 
\end{equation*}
where the second inequality is $e < 3$. Thus, since $j \geq J_l \geq J_1 \geq 3$, we know $h(j) \leq h(J_l)$. Combined with \eqref{eqBestArmIdUcb} and \eqref{eqBestArmIdH}, we obtain $\mu_{k_1} < \mu_{k_2} + \sqrt{ h(J_l) }$. Finally, recall $k_2 \geq 2 - l + n/2$ and $k_1 \leq 1 - l + n/2$, so $\mu_{k_1} - \mu_{k_2} \geq \mu_{1-l+n/2} - \mu_{2-l+n/2} > 0$. Combined with $\mu_{k_1} < \mu_{k_2} + \sqrt{ h(J_l) }$, we conclude
\begin{equation} \label{eqBestArmIdCont}
( \mu_{1-l+n/2} - \mu_{2-l+n/2} )^2 < h(J_l) = 12 \log(J_l) / ( J_l - 2 ) .
\end{equation}
We now show that in each of three cases, \eqref{eqBestArmIdCont} yields a contradiction.
\begin{itemize}
\item $l=1$: By definition, the left side of \eqref{eqBestArmIdCont} is $( \mu_{n/2} - \mu_{1+n/2} )^2 = (13/15)^2$ and the right side is $12 \log (2^8) / ( 2^8 - 2 )$, and one can verify that $12 \log (2^8) / ( 2^8 - 2 ) \leq (13/15)^2$.

\item $1 < l < n/2$: Here $1-l+n/2 > 1$ and $2-l+n/2 < 1+n/2$, i.e., both arms are mediocre. By definition, we thus have $( \mu_{1-l+n/2} - \mu_{2-l+n/2} )^2 = 2^{ - 2^{l+1} }$, so to obtain a contradiction to \eqref{eqBestArmIdCont}, it suffices to show $h(J_l) \leq 2^{-2^{l+1}}$. We show by induction that this holds for all $l \in \{2,3,\ldots\}$.

For $l=2$, note $J_2 = ( J_1 + 2 )^2 > J_1^2 + 2 = 2^{16} + 2$ (so $J_2-2>2^{16}$) and $J_2 = J_1^2 + 4 J_1 + 4 = 2^{16} + 2^{10} + 2^2 < 2^{17}$ (so $\log J_2 < 17 \log 2 < 17$). Thus, $h(J_2) < 12 \cdot 17 / 2^{16} = 204 / 2^{16} < 2^{-8}$. 

Now assume $h(J_l) < 2^{-2^{l+1}}$ for some $l \geq 2$; we aim to show $h(J_{l+1}) < 2^{-2^{l+2}}$. Since $J_l \geq 2$, we have $J_{l+1} \leq ( 2 J_l )^2 \leq J_l^4$; we also know $J_{l+1} - 2 = J_l^2 + 4 J_l + 2 >  J_l ( J_l-2 )$. Thus, we obtain
\begin{equation}
h(J_{l+1}) < \frac{ 12 \log J_l^4 }{ J_l(J_l-2)} = \frac{4}{J_l} \cdot h(J_l) < \frac{4}{J_l} \cdot 2^{-2^{l+1}} < \frac{4}{ J_1^{2^{l-1}} } \cdot 2^{-2^{l+1}} = 2^{ 2 - 2^{l+2} - 2^{l+1} } ,
\end{equation}
where the inequalities follow from the previous paragraph, the inductive hypothesis, and \eqref{eqJlDoubExp} from Section \ref{secBadInstance}, respectively. Since $2 < 2^{l+1}$, this completes the proof.

\item $l=n/2$: Recall that in the previous case, we showed $h(J_l) < 2^{-2^{l+1}}$ for any $l \in \{2,3,\ldots\}$. Therefore, $h(J_{n/2}) \leq 2^{-2^{(n/2)+1}} \leq 2^{-8}$ by assumption $n \in \{4,6,\ldots\}$. Since $(\mu_{1-l+n/2} - \mu_{2-l+n/2} )^2 = \Delta_2^2 = (1/15)^2 > (1/16)^2 = 2^{-8}$ in this case, we obtain a contradiction to \eqref{eqBestArmIdCont}.
\end{itemize}

Thus, we have established the first part of the lemma ($B_j^{(i)} \leq 1-l+n/2$). To show $\min S_{j'}^{(i)} \leq 1-l+n/2$, we suppose instead that $\min S_{j'}^{(i)} > 1-l+n/2$ for some $j' \geq j$. Then $j^\dagger = \min \{ j' \geq j : \min S_{j'}^{(i)} > 1-l+n/2 \}$ is well-defined. If $j^\dagger = j$, then $\min S_j^{(i)} > 1-l+n/2$, which violates the assumption of the lemma, so we assume $j^\dagger > j$. In this case, we know $\min S_{j^\dagger-1}^{(i)} \leq 1-l+n/2$ (since $j^\dagger$ is minimal) and $B_{j^\dagger-1}^{(i)} > 1-l+n/2$ (else, because $B_{j^\dagger-1}^{(i)} \in S_{j^\dagger}^{(i)}$, we would have $\min S_{j^\dagger}^{(i)} \leq B_{j^\dagger-1}^{(i)} \leq 1-l+n/2$). But since $j^\dagger - 1 \geq J_l$ (by assumption $j^\dagger > j$ and $j \geq J_l$), this contradicts the first part of the lemma.
\end{proof}

Next, for each $l \in [n/2]$, we define the event
\begin{equation*}
\mathcal{E}_l = \{ l + n/2 \notin P_{J_l}^{(l+1+n/2)} \} \cap \cap_{j=1}^{J_l}  \cap_{i = l+n/2}^n \{  \min S_j^{(i)} > 1 - l + n/2 , n+1 \notin P_j^{(i)} \} ,
\end{equation*}
where $\{ n \notin P_{J_{n/2}}^{(n+1)} \} = \Omega$ by convention (so $\mathcal{E}_{n/2}$ is well-defined). Thus, in words $\mathcal{E}_l$ says (1) $l+1+n/2$ is not blocking $l+n/2$ at phase $J_l$, (2) no honest agent $i \geq l+n/2$ has ever been aware of arms as good as $1-l+n/2$ up to phase $J_l$, and (3) no such $i$ has ever blocked the malicious agent $n+1$. Point (2) will allow us to show that agent $n$ does not become aware of the best arm until phase $J_{n/2}$ (when $\mathcal{E}_{n/2}$ holds). The other events in the definition of $\{ \mathcal{E}_l \}_{l=1}^{n/2}$ will allow us to inductively lower bound their probabilities. The next lemma establishes the base of this inductive argument. 
\begin{lem} \label{lemBadBaseL}
Under the assumptions of Theorem \ref{thmExisting}, $\P(\mathcal{E}_1) \geq 3^{-2(J_1-1)}$.
\end{lem}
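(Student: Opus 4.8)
The plan is to exhibit an event $\mathcal{A}\subseteq\mathcal{E}_1$ with $\P(\mathcal{A})\ge 3^{-2(J_1-1)}$. Take $\mathcal{A}$ to be the event that \emph{both} agents $1+n/2$ and $2+n/2$ contact the malicious agent $n+1$ at every phase $j\in\{1,\dots,J_1-1\}$, i.e.\ $H_j^{(1+n/2)}=H_j^{(2+n/2)}=n+1$ for all such $j$. First I would record a structural fact that holds \emph{unconditionally} under the prescribed malicious strategy: for every phase $j\le J_1$ and every $i\in\{1+n/2,\dots,n\}$ we have $n+1\notin P_j^{(i)}$ (the third clause of $\mathcal{E}_1$). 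Indeed, inserting $n+1$ into $P_j^{(i)}$ at some phase $j'\le J_1$ would require $H_{j'-1}^{(i)}=n+1$ and $B_{j'}^{(i)}\ne R_{j'-1}^{(i)}$; but $j'-1<J_1=\min_l J_l$, so at phase $j'-1$ the malicious strategy is in its ``otherwise'' branch, which by Observation~\ref{obsAvoidBlock} picks $R_{j'-1}^{(i)}\in S_{j'-1}^{(i)}$ with $B_{j'}^{(i)}=R_{j'-1}^{(i)}$, a contradiction.

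Next I would verify the other two clauses of $\mathcal{E}_1$ on $\mathcal{A}$. For the clause $\min S_j^{(i)}>n/2$ for all $i\in\{1+n/2,\dots,n\}$ and $j\le J_1$, I would induct on $j$: the base case $j=1$ is the standing assumption on initial active sets; for the step with $j<J_1$, observe that the only honest edge leaving the ``right half'' $\{1+n/2,\dots,n\}$ is $(n/2,1+n/2)$, so the only way an arm of index $\le n/2$ could enter a right-half active set at phase $j$ is via $1+n/2$ receiving $B_j^{(n/2)}$ from $n/2$ --- which cannot happen on $\mathcal{A}$ since $1+n/2$ contacts $n+1$. Every other recommendation a right-half agent receives at phase $j$ is a bad arm: one from another right-half agent $i'$ equals $B_j^{(i')}\in S_j^{(i')}$, bad by the inductive hypothesis, and one from $n+1$ lies in the recipient's current (bad) active set by the ``otherwise'' branch. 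Since active-set updates only swap in received recommendations (plus the fixed, bad sticky arm), all right-half active sets stay bad at phase $j+1$. The clause $1+n/2\notin P_{J_1}^{(2+n/2)}$ is then immediate: a block reaching phase $J_1$ must be triggered by a query $H_{j-1}^{(2+n/2)}=1+n/2$ at some $j\in\{2,\dots,J_1\}$, hence at a phase $j-1\in\{1,\dots,J_1-1\}$, but on $\mathcal{A}$ that query equals $n+1$. This gives $\mathcal{A}\subseteq\mathcal{E}_1$.

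Finally I would bound $\P(\mathcal{A})$ by conditioning phase by phase. On the event that $\mathcal{A}$ holds through phase $j-1$, the argument of the first paragraph shows $1+n/2$ and $2+n/2$ have blocked no one by phase $j$ (each query they issued went to $n+1$, whose recommendation was already active and most played), so $P_j^{(1+n/2)}=P_j^{(2+n/2)}=\emptyset$; hence $H_j^{(1+n/2)}$ and $H_j^{(2+n/2)}$ are independent uniform draws from $N(1+n/2)$ (size $3$) and $N(2+n/2)$ (size $2$ when $n=4$ and $3$ otherwise). Thus the conditional probability that both pick $n+1$ is at least $\tfrac{1}{3}\cdot\tfrac{1}{3}=\tfrac{1}{9}$, and multiplying over $j=1,\dots,J_1-1$ yields $\P(\mathcal{A})\ge 9^{-(J_1-1)}=3^{-2(J_1-1)}$, whence $\P(\mathcal{E}_1)\ge\P(\mathcal{A})\ge 3^{-2(J_1-1)}$.

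The main obstacle is the induction in the second paragraph: one must enumerate every channel by which a good arm could reach the right half --- which is exactly where the line structure matters, since it has the single cut edge $(n/2,1+n/2)$ --- and confirm the malicious agent's ``safe'' recommendations never introduce a new arm. Care is also needed with the within-phase ordering (best-arm estimate, then blocklist update, then recommendation draw) and with the boundary case $n=4$, where $2+n/2=n$ has only two neighbors, which still leaves the conditional probability at least $\tfrac{1}{9}$.
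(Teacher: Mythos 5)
Your proposal is correct and follows essentially the same route as the paper: the paper also reduces to the event $\mathcal{F}=\cap_{j=1}^{J_1-1}\cap_{i=1+n/2}^{2+n/2}\{H_j^{(i)}=n+1\}$, notes that the malicious ``otherwise'' branch makes the third clause hold unconditionally and keeps $P_j^{(i)}=\emptyset$ so each phase contributes a factor $3^{-2}$, and shows the active sets stay bad because the cut edge $(n/2,1+n/2)$ is never used. The only cosmetic differences are that you run a forward induction where the paper argues by a minimal counterexample, and that you explicitly treat the $n=4$ case where agent $2+n/2=n$ has only two neighbors (which only helps the bound).
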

\begin{proof}
We first observe that at all phases $j \in [J_1-1]$, only the second case of the malicious strategy -- where the malicious agent recommends to avoid blocking -- arises, which implies $n+1 \notin P_j^{(i)}\ \forall\ j \in [J_1] , i \in [n]$. Therefore, it suffices to show $\P(\mathcal{E}_1') \geq 3^{-2(J_1-1)}$, where we define
\begin{equation*}
\mathcal{E}_1' = \{ 1 + n/2 \notin P_{J_1}^{(2+n/2)} \} \cap \cap_{j=1}^{J_1}  \cap_{i = 1+n/2}^n \{  \min S_j^{(i)} >  n/2 \} .
\end{equation*}
To do so, we will show $\mathcal{E}_1' \supset \mathcal{F} \triangleq \cap_{j=1}^{J_1-1} \cap_{i=1+n/2}^{2+n/2} \{ H_j^{(i)} = n+1 \}$ and $\P(\mathcal{F}) \geq 3^{-2(J_1-1)}$.

To show $\P(\mathcal{F}) \geq 3^{-2(J_1-1)}$, first note that by the law of total expectation, we have
\begin{equation*}
\P(\mathcal{F}) = \E [ \ind ( \cap_{j=1}^{J_1-2} \cap_{i=1+n/2}^{2+n/2} \{ H_j^{(i)} = n+1 \} ) \P_{J_1-1} ( \cap_{i=1+n/2}^{2+n/2} \{ H_{J_1-1}^{(i)} = n+1 \} ) ] .
\end{equation*}
Now when $\cap_{j=1}^{J_1-2} \cap_{i=1+n/2}^{2+n/2} \{ H_j^{(i)} = n+1 \}$ occurs, the malicious strategy implies $P_{J_1-1}^{(i)} = \emptyset$, so $H_{J_1-1}^{(i)}$ is sampled from a set of three agents which includes $n+1$, for each $i \in \{1+n/2,2+n/2\}$. Since this sampling is independent, we conclude that when $\cap_{j=1}^{J_1-2} \cap_{i=1+n/2}^{2+n/2} \{ H_j^{(i)} = n+1 \}$ occurs,
\begin{align*}
\P_{J_1-1} ( \cap_{i=1+n/2}^{2+n/2} \{ H_{J_1-1}^{(i)} = n+1 \} )  = 3^{-2} .
\end{align*}
Thus, combining the previous two expressions with the definition of $\mathcal{F}$ and iterating, we obtain
\begin{equation*}
\P ( \cap_{j=1}^{J_1-1} \cap_{i=1+n/2}^{2+n/2} \{ H_j^{(i)} = n+1 \} ) = \P ( \mathcal{F} ) = \P ( \cap_{j=1}^{J_1-2} \cap_{i=1+n/2}^{2+n/2} \{ H_j^{(i)} = n+1 \} ) / 3^2 = 3^{-2(J_1-1)} . 
\end{equation*}

To show $\mathcal{E}_1' \supset \mathcal{F}$, first observe that when $\mathcal{F}$ occurs, $2+n/2$ does not contact $1+n/2$ at any phase $j \in [J_1-1]$, so $1+n/2 \notin P_{J_1}^{(2+n/2)}$. Thus, it only remains to show that $\mathcal{F}$ implies $\min S_j^{(i)} > n/2$ for all $j \leq J_1$ and $i > n/2$. Suppose instead that $\mathcal{F}$ holds and $\min S_j^{(i)} \leq n/2$ for some such $j$ and $i$. Let $j^\dagger = \min \{ j \leq J_1 : \min S_j^{(i)} \leq n/2 \text{ for some } i > n/2 \}$ be the earliest $j$ it occurs; note $j^\dagger > 1$ by assumption that $\min S_1^{(i)} > n/2$ for $i > n/2$. Let $i^\dagger$ be some agent it occurs for, i.e., $i^\dagger > n/2$ is such that $k^\dagger \in S_{j^\dagger}^{(i^\dagger)}$ for some $k^\dagger \leq n/2$. Since $j^\dagger$ is the earliest such phase and $j^\dagger > 1$, we know $k^\dagger$ was not active for $i^\dagger$ at the previous phase $j^\dagger-1$, so it was recommended to $i^\dagger$ at this phase. By the malicious strategy and $j^\dagger-1 \leq J_1-1$, this implies that the agent $i^\ddagger$ who recommended $k^\dagger$ to $i^\dagger$ is honest, so $k^\dagger$ was active for $i^\ddagger$ at the previous phase, which implies $i^\ddagger \leq n/2$ (else, we contradict the minimality of $j^\dagger$). From the assumed line graph structure, we must have $i^\dagger = 1 + n/2$ and $i^\ddagger = n/2$, i.e., $1+n/2$ contacted $n/2$ at phase $j^\dagger - 1 \leq J_1-1$. But this contradicts the definition of $\mathcal{F}$, which stipulates that $1+n/2$ only contacts $n+1$ before $J_1$.
\end{proof}
 
To continue the inductive argument, we lower bound $\P(\mathcal{E}_{l+1})$ in terms of $\P(\mathcal{E}_l)$.
\begin{lem} \label{lemBadGenL}
Under the assumptions of Theorem \ref{thmExisting}, for any $l \in [(n/2-1]$, we have $\P(\mathcal{E}_{l+1}) \geq 3^{-4} \P(\mathcal{E}_l)$.
\end{lem}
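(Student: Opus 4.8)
The plan is to bound $\P(\mathcal{E}_{l+1})$ below by $\P(\mathcal{E}_l \cap \mathcal{G})$, where $\mathcal{G}$ is the intersection of four prescribed communications --- two at phase $J_l$ and two at phase $J_l+1$ --- chosen so that $\mathcal{E}_l \cap \mathcal{G} \subseteq \mathcal{E}_{l+1}$, and then to pass to the claimed bound by conditioning. Since $\mathcal{E}_l$ is determined by the randomness before the $J_l$-th communication (it only references $P_{J_l}^{(i)}$ and the active sets through phase $J_l$), and since --- on $\mathcal{E}_l$ --- each prescribed communication samples uniformly from a non-blocked neighbor set of size at most three (the graph is the line plus a universal malicious hub, so the relevant agents have degree at most three, and the ``$n+1$ not blocked'' and ``$l+1+n/2$ not blocking $l+n/2$'' clauses of $\mathcal{E}_l$ guarantee the pertinent neighbors are available at phase $J_l$, hence also at phase $J_l+1$), we will obtain
\begin{equation*}
\P(\mathcal{E}_{l+1}) \ge \P(\mathcal{E}_l \cap \mathcal{G}) = \E\!\left[ \ind(\mathcal{E}_l)\, \P_{J_l}(\mathcal{G}) \right] \ge 3^{-4}\, \P(\mathcal{E}_l),
\end{equation*}
where the last step uses $\P_{J_l}(\mathcal{G}) \ge 3^{-4}$ on $\mathcal{E}_l$: the four prescribed communications occur at two distinct phases, so their conditional probabilities multiply.

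The choice of $\mathcal{G}$ replays, shifted by one agent and to the window $[J_l, J_{l+1}]$, the $l=1$ construction from the proof of Lemma \ref{lemBadBaseL}, now using $\mathcal{E}_l$ in place of the ``right half sees only bad arms at phase $J_1$'' conclusion used there. On $\mathcal{E}_l$, Lemma \ref{lemBestArmId} (with parameter $l-1$) shows that at phase $J_l$ every honest $i \ge l+n/2$ has $B_{J_l}^{(i)} = \min S_{J_l}^{(i)} \ge 2-l+n/2$, so nobody on the right half recommends anything better than arm $2-l+n/2$ at phase $J_l$, and the only channel through which a strictly better arm enters is the special recommendation: by the malicious strategy, an agent in $\{l+1+n/2, l+2+n/2\}$ contacting $n+1$ at phase $J_l$ receives arm $1-l+n/2$. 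Two of the events in $\mathcal{G}$ are such contacts; the other two prescribe a further contact at phase $J_l$ and one at phase $J_l+1$ chosen so that (a) before the block below takes effect, the agent in question acquires nothing better than arm $1-l+n/2$, and (b) it is forced, by Lemma \ref{lemBestArmId} and the fact that arm $1-l+n/2$ is then its unique best active arm over the next one or two phases, to most-play $1-l+n/2$, so that an inferior recommendation it received triggers Algorithm \ref{algUpdateBlockOld} at phase $J_l+2$. Since $\eta = 2$ and $J_{l+1} = (J_l+2)^2$, the resulting block is in force over all of $[J_l+2, J_{l+1}]$, and it severs one edge of the line, so no recommendation crosses that edge during the window.

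Given this, the three clauses of $\mathcal{E}_{l+1}$ follow. For the ``arm'' clause, the severed edge, together with the fact that $n+1$ issues the strictly better arm $-l+n/2$ only at phase $J_{l+1}$ (whose effect on active sets is felt only from phase $J_{l+1}+1$) and another application of Lemma \ref{lemBestArmId} (showing the best active arm of the protected agents never improves past $1-l+n/2$ before $J_{l+1}$), gives $\min S_j^{(i)} > -l+n/2$ for all $i \ge l+1+n/2$ and $j \le J_{l+1}$. The ``$n+1$ not blocked'' clause holds because, away from the single special phase $J_l$, $n+1$ always recommends a currently-active arm chosen (Observation \ref{obsAvoidBlock}) so as to be most-played, while at phase $J_l$ the special arm $1-l+n/2$, being its recipient's unique best active arm, is itself most-played in phase $J_l+1$. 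The ``$l+2+n/2$ not blocking $l+1+n/2$ at $J_{l+1}$'' clause is secured by arranging the inferior-recommendation event so that the only block that could be placed on the edge $(l+1+n/2,\, l+2+n/2)$ within the window expires before $J_{l+1}$ --- again using $\eta = 2$ and $J_{l+1} = (J_l+2)^2$. Throughout, the doubly exponential growth \eqref{eqJlDoubExp} guarantees that the phase lengths dominate the inverse squared arm gaps in play (constant between mediocre and bad arms, and $2^{-2^{(\cdot)}}$ among mediocre arms), exactly as in the proof of Lemma \ref{lemBestArmId}.

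The main obstacle is the interdependence between communication randomness and bandit dynamics that blocking creates: on $\mathcal{E}_l \cap \mathcal{G}$ one must track precisely which arms are active for which agents at which phases, and in particular rule out the possibility that the strictly better arm $-l+n/2$ leaks from the part of the line that $\mathcal{E}_l$ does not control (agents with index below $l+1+n/2$) into the protected part before the severing block is in effect. Calibrating the block so that it starts early enough to block this leak over the whole window yet expires by $J_{l+1}$, so that the first clause of $\mathcal{E}_{l+1}$ survives, is the delicate point, and is precisely where the choice $\eta = 2$ together with $J_{l+1} = (J_l+2)^2$ is exploited.
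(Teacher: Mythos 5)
Your proposal is correct and follows essentially the same route as the paper: condition on four prescribed contacts (three with $n+1$ at phase $J_l$ by agents $l+n/2, l+1+n/2, l+2+n/2$, and one contact $l+1+n/2 \to l+n/2$ at phase $J_l+1$, each available with conditional probability at least $1/3$ on $\mathcal{E}_l$), then show $\mathcal{E}_l \cap \mathcal{G} \subseteq \mathcal{E}_{l+1}$ via the induced block on the edge $(l+n/2,\, l+1+n/2)$ over phases $\{J_l+2,\ldots,(J_l+2)^2 = J_{l+1}\}$. The only slip is your opening count of ``two at phase $J_l$ and two at phase $J_l+1$,'' which contradicts your own later (and correct) description of three contacts at $J_l$ and one at $J_l+1$.
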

\begin{proof}
The proof is somewhat lengthy and proceeds in four steps.

{\bf Step 1: Probabilistic arguments.} First, we define the events
\begin{equation*}
\mathcal{G}_1 = \cap_{i=l+n/2}^{l+2+n/2} \{  H_{J_l}^{(i)} = n+1 \} , \quad \mathcal{G}_2 = \{ H_{J_l+1}^{(l+1+n/2)} = l + n/2 \} , \quad  \mathcal{G} = \mathcal{G}_1 \cap \mathcal{G}_2 .
\end{equation*}
Then by the law of total expectation, we know that
\begin{equation} \label{eqBadGenLTotExp1}
\P (  \mathcal{E}_l \cap \mathcal{G} ) = \E [ \E_{J_l+1} [ \ind(\mathcal{E}_l \cap \mathcal{G}_1 ) \ind(\mathcal{G}_2)  ] ] = \E [ \ind(\mathcal{E}_l \cap \mathcal{G}_1) \P_{J_l+1} (\mathcal{G}_2)  ] .
\end{equation}
Now if $\mathcal{E}_l \cap \mathcal{G}_1$ occurs, then $l+n/2 \notin P_{J_l}^{(l+1+n/2)}$ (by $\mathcal{E}_l$) and $H_{J_l}^{(l+1+n/2)} = n+1$ (by $\mathcal{G}_1$); the latter implies $l+n/2 \notin P_{J_l+1}^{(l+1+n/2)} \setminus P_{J_l}^{(l+1+n/2)}$, so combined with the former, $l+n/2 \notin P_{J_l+1}^{(l+1+n/2)}$. Thus, $\mathcal{E}_l \cap \mathcal{G}_1$ implies that $H_{J_l+1}^{(l+1+n/2)}$ is sampled from a set of most three agents containing $l+n/2$, so $\P_{J_l+1}(\mathcal{G}_2) \geq 1/3$. Substituting into \eqref{eqBadGenLTotExp1}, and again using total expectation, we thus obtain
\begin{equation*}
\P (  \mathcal{E}_l \cap \mathcal{G} )  \geq \P(\mathcal{E}_l \cap \mathcal{G}_1)  / 3 = \E [ \E_{J_l} [ \ind (\mathcal{E}_l ) \ind ( \mathcal{G}_1) ] ] / 3 = \E [ \ind (\mathcal{E}_l  ) \P_{J_l} ( \mathcal{G}_1 ) ] / 3 .
\end{equation*}
Analogously, when $\mathcal{E}_l$ holds, $n+1 \notin P_{J_l}^{(i)}\ \forall\ i \in \{ l+n/2 , \ldots , l+2+n/2 \}$, which by similar logic gives $\P_{J_l}(\mathcal{G}_1) \geq 3^{-3}$. Therefore, combining the previous two inequalities, we have shown $\P (  \mathcal{E}_l \cap \mathcal{G} ) \geq 3^{-4} \P(\mathcal{E}_l)$. Consequently, it suffices to show that $\mathcal{E}_l \cap \mathcal{G} \subset \mathcal{E}_{l+1}$.

{\bf Step 2: Event decomposition.} For $l' \in \{l,l+1\}$, we decompose $\mathcal{E}_{l'} = \cap_{h=1}^4 \mathcal{H}_{l',h}$, where
\begin{gather}
\mathcal{H}_{l',1} = \{ l' + n/2 \notin P_{J_{l'}}^{(l'+1+n/2)} \} ,\ \mathcal{H}_{l',2} = \cap_{j=1}^{J_{l'-1}}  \cap_{i = l'+n/2}^n \{  \min S_j^{(i)} > 1 - l' + n/2 , n+1 \notin P_j^{(i)} \} , \\
\mathcal{H}_{l',3} = \cap_{j=J_{l'-1}+1}^{J_{l'}}  \cap_{i = l'+n/2}^n \{  \min S_j^{(i)} > 1 - l' + n/2 \} ,\ \mathcal{H}_{l',4} = \cap_{j=J_{l'-1}+1}^{J_{l'}}  \cap_{i = l'+n/2}^n \{ n+1 \notin P_j^{(i)} \} .
\end{gather}
As a simple consequence of these definitions, we note that
\begin{align}
\mathcal{H}_{l,2} \cap \mathcal{H}_{l,3} \cap \mathcal{H}_{l,4} & = \cap_{j=1}^{J_l}  \cap_{i = l+n/2}^n \{  \min S_j^{(i)} > 1 - l + n/2 , n+1 \notin P_j^{(i)} \} \\
& \subset \cap_{j=1}^{J_l}  \cap_{i = l+1+n/2}^n \{  \min S_j^{(i)} > 1 - (l+1) + n/2 , n+1 \notin P_j^{(i)} \} = \mathcal{H}_{l+1,2} .
\end{align}
Hence, to prove $\mathcal{E}_l \cap \mathcal{G} \subset \mathcal{E}_{l+1}$, it suffices to show $\mathcal{E}_l \cap \mathcal{G} \subset \mathcal{H}_{l+1,1} \cap \mathcal{H}_{l+1,3} \cap \mathcal{H}_{l+1,4}$. For the remainder of the proof, we thus assume $\mathcal{E}_l \cap \mathcal{G}$ holds and argue $\mathcal{H}_{l+1,1} \cap \mathcal{H}_{l+1,3} \cap \mathcal{H}_{l+1,4}$ holds.

{\bf Step 3: Some consequences.} We begin by deriving several consequences of $\mathcal{E}_l \cap \mathcal{G}$. First, note each $i \in \{ l + 1 + n/2 , l + 2 + n/2 \}$ contacts $n+1$ at phase $J_l$ (by $\mathcal{G}_1$), who recommends $1-l+n/2$ (by the malicious strategy). Since $\min S_{J_l}^{(i)} > 1 - l + n/2$ (by $\mathcal{H}_{l,3}$), this implies $1 - l + n/2 = \min S_{J_l+1}^{(i)}$, so $1-l+n/2$ is most played in phase $J_l+1$ (by Lemma \ref{lemBestArmId}). In summary, we have shown
\begin{equation}\label{eqBadGenLConRec}
H_{J_l}^{(i)} = n+1 , R_{J_l}^{(i)} = B_{J_l+1}^{(i)} = 1 - l + n/2\ \forall\ i \in \{ l + 1 + n/2 , l + 2 + n/2 \} .
\end{equation}
Second, as a consequence of the above and Lemma \ref{lemBestArmId}, we can also write
\begin{equation}\label{eqBadGenLMono}
1 - l + n/2 = \min S_{J_l+1}^{(i)} \geq \min S_{J_l+2}^{(i)} \geq \cdots\ \forall\ i \in \{ l + 1 + n/2 , l + 2 + n/2 \} .
\end{equation}
Third, we know $l+n/2$ contacts $n+1$ at phase $J_l$ (by $\mathcal{G}_1$), who responds with a currently active arm (by the malicious strategy), so since $\min S_{J_l}^{(l+n/2)} > 1 - l + n/2$ (by $\mathcal{H}_{l,3}$), we have
\begin{equation}\label{eqBadGenLlastBad}
\min S_{J_l+1}^{(l+n/2)} > 1 - l + n/2 .
\end{equation}
As a consequence of \eqref{eqBadGenLlastBad}, we see that when $l+1+n/2$ contacts $l+n/2$ at phase $J_l+1$ (which occurs by $\mathcal{G}_2$), $l+n/2$ recommends some arm strictly worse than $1-l+n/2$. On the other hand, by \eqref{eqBadGenLMono} and Lemma \ref{lemBestArmId}, we know the most played arm for $l+1+n/2$ in phase $J_l+2$ has index at most $1-l+n/2$. Taken together, the recommendation is not most played, so
\begin{equation} \label{eqLplus1blockL}
 l + n/2 \in P_j^{(l+1+n/2)}\ \forall\ j \in \{ J_l + 2 , \ldots , ( J_l + 2 )^2 = J_{l+1} \} .
\end{equation}

{\bf Step 4: Completing the proof.} Using the above, we prove in turn that $\mathcal{H}_{l+1,4}$, $\mathcal{H}_{l+1,3}$, and $\mathcal{H}_{l+1,1}$ hold. For $\mathcal{H}_{l+1,4}$, we use proof by contradiction: if $\mathcal{H}_{l+1,4}$ fails, we can find $i \geq l+1+n/2$ and $j \in \{ J_l+1 , \ldots , J_{l+1} \}$ such that $n+1 \in P_j^{(i)}$. Let $j^\dagger = \min \{ j \in \{ J_l+1 , \ldots , J_{l+1} \} : n+1 \in P_j^{(i)} \}$ be the minimal such $j$ (for this $i$). Since $n+1 \notin P_{J_l}^{(i)}$ (by $\mathcal{H}_{l,4}$) and $j^\dagger$ is minimal, we must have $n+1 \in P_{j^\dagger}^{(i)} \setminus P_{j^\dagger-1}^{(i)}$, i.e., $n+1$ was blocked for the recommendation it provided at $j^\dagger-1$. If $i \geq l+3+n/2$, this contradicts the malicious strategy, since $j^\dagger-1 \in \{ J_l,\ldots,J_{l+1}-1\}$ and the strategy avoids blocking for such $i$ and $j^\dagger$. A similar contradiction arises if $i \in \{ l + 1 + n/2 , l + 2 + n/2 \}$ and $j^\dagger \geq J_l+2$ (since $j^\dagger-1 \in \{ J_l+1, \ldots , J_{l+1}-1\}$ in this case), so we must have $i \in \{ l + 1 + n/2 , l + 2 + n/2 \}$ and $j^\dagger = J_l+1$. But in this case, $n+1 \in P_{j^\dagger}^{(i)} \setminus P_{j^\dagger-1}^{(i)} = P_{J_l+1}^{(i)} \setminus P_{J_l}^{(i)}$ contradicts \eqref{eqBadGenLConRec}.

Next, we show $\mathcal{H}_{l+1,3}$ holds. The logic is similar to the end of the Lemma \ref{lemBadBaseL} proof. If instead $\mathcal{H}_{l+1,3}$ fails, we can find $j \in \{J_l+1,\ldots,J_{l+1}\}$ and $i \in \{l+1+n/2,\ldots,n\}$ such that $\min S_j^{(i)} \leq (n/2)-l$. Let $j^\dagger$ be the minimal such $j$ and $i^\dagger \geq l+1+n/2$ an agent with $\min S_{j^\dagger}^{(i^\dagger)} = k^\dagger$ for some $k^\dagger \leq (n/2)-l$. Since $\min S_{J_l}^{(i^\dagger)} > 1 - l + n/2$ (by $\mathcal{H}_{l,3}$), $j^\dagger \geq J_l+1$, and $j^\dagger$ is minimal, we know that $k^\dagger$ was recommended to $i^\dagger$ at phase $j^\dagger - 1 \in \{ J_l , \ldots , J_{l+1} - 1 \}$. By the malicious strategy, this implies that the recommending agent (say, $i^\ddagger$) was honest. Therefore, $k^\dagger$ was active for $i^\ddagger$ at phase $j^\dagger-1$, so since $j^\dagger$ is minimal, $i^\ddagger \leq l + n/2$. Hence, by the assumed graph structure, $i^\dagger = l+1+n/2$ contacted $i^\ddagger = l+n/2$ at phase $j^\dagger-1$, who recommended $k^\dagger$. If $j^\dagger - 1  \in \{ J_l , J_l+2 , \ldots , J_{l+1} - 1 \}$, this contact cannot occur, since $l+1+n/2$ instead contacts $n+1$ at $J_l$ (by $\mathcal{G}_1$) and does not contact $l+n/2$ at $J_l+2,\ldots,J_{l+1}$ (by \eqref{eqLplus1blockL}). Hence, we must have $j^\dagger - 1 = J_l + 1$, so $\min S_{J_l+1}^{(l+n/2)} \leq k^\dagger \leq (n/2)-l$, contradicting \eqref{eqBadGenLlastBad}. 

Finally, we prove $\mathcal{H}_{l+1,1}$. Suppose instead that $l+1+n/2 \in P_{J_{l+1}}^{(l+2+n/2)}$, i.e., $l+1+n/2$ is blocked at $J_{l+1}$. Then since $P_0^{(l+2+n/2)} = \emptyset$, we must have $l+1+n/2 \in P_j^{(l+2+n/2)} \setminus P_{j-1}^{(l+2+n/2)}$ for some $j \in [ J_{l+1} ]$. Let $j^\dagger$ be the maximal such $j$. Then $j^\dagger \geq \sqrt{J_{l+1}} = J_l+2$; otherwise, if $j^\dagger < \sqrt{J_{l+1}}$, $l+1+n/2$ would have been un-blocked by phase $J_{l+1}$. Therefore, the blocking rule implies 
\begin{equation} \label{eqBadGenL3fin} 
B_{j^\dagger}^{(l+2+n/2)} \neq R_{j^\dagger-1}^{(l+2+n/2)} =  B_{j^\dagger-1}^{(l+1+n/2)} .
\end{equation}
By $j^\dagger \in \{ J_l+2 , \ldots , J_{l+1} \}$, $\mathcal{H}_{l+1,3}$, and \eqref{eqBadGenLMono}, we also know
\begin{equation*}
- l + n/2 < \min S_{j^\dagger}^{(l+2+n/2)} , \min S_{j^\dagger-1}^{(l+1+n/2)} \leq 1 - l + n/2 ,
\end{equation*}
so $\min S_{j^\dagger}^{(l+2+n/2)} =  \min S_{j^\dagger-1}^{(l+1+n/2)} = 1 - l + n/2$. Combined with \eqref{eqBadGenL3fin}, we must have $B_{j^\dagger+h-2}^{(l+h+n/2)} > \min S_{j^\dagger+h-2}^{(l+h+n/2)} = 1 - l +n/2$ for some $h \in \{1,2\}$, which contradicts Lemma \ref{lemBestArmId} (since $j^\dagger \geq J_l+2$).
\end{proof}

Finally, we can prove the theorem. Define $\sigma = \min \{ j \in \N : 1 \in S_j^{(n)} \}$. Then by definition, $I_t^{(n)} \neq 1$ for any $t \leq A_{\sigma-1}$. Hence, because $\Delta_2 = 1/15$ in the problem instance of the theorem, we obtain
\begin{equation*}
\frac{A_{\sigma-1} \wedge T}{15}  = \sum_{t=1}^{A_{\sigma-1} \wedge T} \frac{\ind ( I_t^{(n)} \neq 1 )}{15} = \sum_{t=1}^{A_\sigma \wedge T} \sum_{k=2}^K \frac{\ind ( I_t^{(n)} = k )}{15} \leq \sum_{t=1}^T \sum_{k=2}^K \Delta_k \ind ( I_t^{(n)} = k ) .
\end{equation*}
Thus, by Claim \ref{clmDecomposition} from Appendix \ref{appBasicRes} and since $A_{\sigma-1} = ( \sigma-1 )^2$ by the choice $\beta = 2$, we can write
\begin{equation} \label{eqBadJensen}
R_T^{(n)} = \E \left[ \sum_{t=1}^T \sum_{k=2}^K \Delta_k   \ind ( I_t^{(i)} = k ) \right] \geq \frac{\E [ A_{\sigma-1} \wedge T  ]}{15} = \frac{\E [ ( \sigma-1 )^2 \wedge T ]}{15} .
\end{equation}
Let $l \in [n/2]$ be chosen later. Then $\sigma > J_l$ implies $\sigma-1 \geq J_l$ (since $\sigma , J_l \in \N$). Thus, we can write
\begin{equation*}
\E [ ( \sigma-1 )^2 \wedge T ] \geq \E [ ( (  \sigma-1 )^2 \wedge T ) \ind ( \sigma > J_l ) ] \geq ( J_l^2 \wedge T ) \P ( \sigma > J_l ) .
\end{equation*}
By definition of $\sigma$ and $\mathcal{E}_l$, along with Lemmas \ref{lemBadBaseL} and \ref{lemBadGenL}, we also know
\begin{equation*}
\P ( \sigma > J_l ) \geq \P ( \mathcal{E}_l ) \geq 3^{-4(l-1)} \P(\mathcal{E}_1) \geq 3^{-4(l-1)}  \cdot 3^{-2(J_1-1)} = 9^{ 3 - 2 l - J_1 } .
\end{equation*}
By \eqref{eqJlDoubExp} from Section \ref{secBadInstance}, we know $J_l^2 \geq J_1^{2^l}  = ( 2^8 )^{ 2^l } = 2^{2^{l+3}}$. Combined with the previous three bounds, and letting $C$ denote the constant $C =  9^{3-J_1} / 15$, we thus obtain
\begin{equation} \label{eqBadBeforeL}
R_T^{(n)} \geq  ( 2^{2^{l+3}} \wedge T ) \cdot 9^{ 3 - 2 l - J_1 } / 15 = C \cdot 81^{-l} \cdot ( 2^{2^{l+3}}\wedge T )\ \forall\ l \in [n/2] . 
\end{equation}
We now consider three different cases, each with a different choice of $l$.
\begin{itemize}
\item If $T > 2^{2^{(n/2)+3}}$, choose $l=n/2$. Then \eqref{eqBadBeforeL} becomes $R_T^{(n)} \geq C \cdot 81^{-n/2} \cdot 2^{2^{(n/2)+3} }$. Observe that
\begin{align*}
81^{-n/2} \cdot 2^{2^{(n/2)+3} } &  \geq 16^{-n} \cdot 2^{2^{(n/2)+3} } = ( 2^4 )^{ 2 \cdot 2^{n/2} - n } \geq ( 2^4 )^{2^{n/2}} > \exp ( 2^{n/2} ) \\
& = \exp ( \exp ( n \log(2) / 2 ) )  > \exp ( \exp ( n/3) ) ,
\end{align*}
where the second inequality is $n \leq 2^{n/2}$ for $n \in \{2,4,8,\ldots\}$. On the other hand, Claim \ref{clmLogRegNaive} below shows $R_T^{(n)} \geq \log(T) / C_{\ref{clmLogRegNaive}}$ for some absolute constant $C_{\ref{clmLogRegNaive}} > 0$. Thus, we have shown
\begin{equation*}
R_T^{(n)} = ( R_T^{(n)} / 2 ) + ( R_T^{(n)} / 2 ) \geq ( C / 2 ) \exp ( \exp (n/3) ) + \log(T) / ( 2 C_{\ref{clmLogRegNaive}} ) .
\end{equation*}
\item If $T \in ( 2^8 ,  2^{2^{(n/2)+3}} ]$, let $l = \ceil{ \log_2 ( \log_2 ( T ) ) -3}$. Then $2^{2^{l+3}} \geq T$, so $2^{2^{l+3}}\wedge T  = T$. Furthermore, we know $l \leq \log_2 ( \log_2 ( T ) ) - 2$, which implies 
\begin{equation*}
\quad 81^{-l} \geq 81^2 \cdot 81^{ - \log_2 ( \log_2 ( T ) ) } \geq 81^2 \cdot 2^{ - 7 \log_2 ( \log_2 ( T ) ) } = 81^2 / \log_2^7 (T) = 81^2 \log^7(2) / \log^7(T) .
\end{equation*}
Next, observe that $0 = \log_2(\log_2(2^8)) -3 < \log_2(\log_2(T)) -3 \leq n/2$ for this case of $T$, so $l \in [n/2]$. Thus, we can choose this $l$ in \eqref{eqBadBeforeL} and combine with the above bounds to lower bound regret as $R_T^{(n)} \geq 81^2 \log^7(2) C T / \log^7(T)$.
\item If $T \leq 2^8$, choose $l = 1$. Then $2^{2^{l+3}} = 2^{16} \geq T$, so \eqref{eqBadBeforeL} implies $R_T^{(n)} \geq C T / 81$. 
\end{itemize}
Hence, in all three cases, we have shown $R_T^{(n)} \geq C' \min \{ \log(T) + \exp ( \exp ( n / 3 ) ) , T / \log^7(T) \}$ for some absolute constant $C' > 0$. This establishes the theorem.

We return to state and prove the aforementioned Claim \ref{clmLogRegNaive}. We note the analysis is rather coarse; our only goal here is to establish a $\log T$ scaling (not optimize constants).
\begin{clm} \label{clmLogRegNaive}
Under the assumptions of Theorem \ref{thmExisting}, we have $R_T^{(n)} \geq \log (T) / C_{\ref{clmLogRegNaive}}$, where $C_{\ref{clmLogRegNaive}} = 15 \log 99$.
\end{clm}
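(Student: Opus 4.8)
The plan is to lower-bound the number of suboptimal pulls made by honest agent $n$, exploiting the fact that its (unique) sticky arm is a zero-mean arm that stays active forever. Write $k_\star$ for the single element of $\hat{S}^{(n)}$; since $\hat{S}^{(n)} \subset S_1^{(n)}$ and $\min S_1^{(n)} > n/2$, we have $k_\star > n/2$, hence $\mu_{k_\star} = 0$ and $\Delta_{k_\star} = 1$, and $k_\star$ is active at every phase because sticky arms are never discarded in Algorithm \ref{algGeneral}. Every arm other than $1$ has gap at least $\Delta_2 > 1/15$, so it suffices to show that agent $n$ makes at least $\log(T)/\log 99$ suboptimal pulls during $[1,T]$: this gives $R_T^{(n)} \ge \Delta_2 \log(T)/\log 99 \ge \log(T)/(15\log 99)$. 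First I would dispose of small $T$: since $I_1^{(n)} \in S_1^{(n)}$ and every arm in $S_1^{(n)}$ has mean $0$, we get $R_T^{(n)} \ge \E[\Delta_{I_1^{(n)}}] = 1$, which already beats $\log(T)/(15\log 99)$ for all $T \le 99^{15}$ (as $\log(99^{15}) = 15\log 99$). So the remaining work is for $T > 99^{15}$.

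For such $T$, set $L = \log(T)/\log 99$ and argue pathwise by contradiction (the only randomness is in the sampled recommenders $H_j^{(i)}$): suppose agent $n$ makes fewer than $\lceil L\rceil$ suboptimal pulls in $[1,T]$, so it pulls arm $1$ at all but at most $\lceil L\rceil - 1 < L$ of the times in $[1,T]$. Then among the $\lceil L\rceil + 1$ times in $\{T-\lceil L\rceil,\dots,T\}$ at least one time $t'$ has $I_{t'}^{(n)} = 1$; since $L$ is only logarithmic in $T$ and $T > 99^{15}$, such a $t'$ satisfies $t' > T/2$ and (because all but at most $\lceil L\rceil-1$ earlier pulls were of arm $1$) $T_1^{(n)}(t'-1) > T/2$. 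I would then bound the UCB index of arm $1$ at $t'$: using deterministic rewards ($\hat{\mu}_1^{(n)}(t'-1) = \mu_1 = 1$) and $\alpha = 4$, it equals $1 + \sqrt{4\log(t')/T_1^{(n)}(t'-1)} < 1 + \sqrt{8\log(T)/T} < 2$. Because arm $1$ is the argmax at $t'$ and $k_\star$ is active there, arm $1$'s index dominates $k_\star$'s index $\sqrt{4\log(t')/T_{k_\star}^{(n)}(t'-1)}$, forcing $T_{k_\star}^{(n)}(t'-1) > \log(t') > \log T - \log 2$. On the other hand $T_{k_\star}^{(n)}(T)$ is at most the number of suboptimal pulls, i.e.\ at most $\lceil L\rceil - 1 < \log(T)/\log 99$. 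Combining yields $\log T - \log 2 < \log(T)/\log 99$, i.e.\ $\log(T)\,(1 - 1/\log 99) < \log 2$, which (since $1/\log 99 < 1$) fails for every $T \ge 3$ --- a contradiction. Hence agent $n$ makes at least $\lceil L\rceil \ge L$ suboptimal pulls, and the claimed bound follows.

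I expect the main obstacle to be exactly the UCB estimate in the regime where arm $1$ is pulled almost always. The naive hope --- that the shrinking confidence width of $k_\star$ alone forces $\Omega(\log T)$ pulls of it regardless --- fails, because $\mu_{k_\star}=0$ lies a full unit below $\mu_1=1$, so $k_\star$'s index need not exceed that of a competitor that has been sampled even once; one therefore cannot reason ``locally'' at a single time step. The resolution is to observe that in the hypothetical bad case arm $1$'s count near time $T$ is essentially $T$, so its confidence width is essentially $0$ --- this is precisely why the evaluation point $t'$ must be taken within the last $O(\log T)$ steps, and why the argument is run on a per-sample-path basis. The remaining ingredients --- the reduction to counting suboptimal pulls, the trivial small-$T$ estimate, and the bookkeeping of pull counts --- are routine, and the constant $99$ plays no role beyond making the final numerical inequality comfortable (any sufficiently large absolute constant would do).
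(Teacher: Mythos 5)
Your proof is correct and follows essentially the same strategy as the paper's: reduce to counting suboptimal pulls, dispose of small $T$ using the fact that the first pull is of a zero-mean arm, and for large $T$ derive a contradiction by locating a late time $t'$ with $I_{t'}^{(n)}=1$, bounding arm $1$'s UCB index above by $2$ (its count is nearly $t'$) and a competitor's index below (its count is at most the number of suboptimal pulls). The only cosmetic differences are that you anchor the comparison on the sticky arm $k_\star$ while the paper uses an arbitrary active competitor, and your small-$T$ threshold is $99^{15}$ rather than $100$ because you exploit the full unit gap of the initial pull.
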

\begin{proof}
If $T=1$, the bound holds by nonnegativity. If $T \in \{2,\ldots,99\}$, then since $\min S_1^{(n)} > n/2$ and $\Delta_2 \geq 1/15$ by assumption in Theorem \ref{thmExisting}, we know $\Delta_{I_1^{(n)}} \geq 1/15$, which implies $R_T^{(n)} \geq 1/15 \geq \log(T) / C_{\ref{clmLogRegNaive}}$. Thus, only the case $T \geq 100$ remains. By Claim \ref{clmDecomposition} from Appendix \ref{appBasicRes} and $\Delta_2 \geq 1/15$,
\begin{equation*}
R_T^{(n)} \geq \frac{ \E [ \sum_{t=1}^T \ind (I_t^{(n)} \neq 1) ] }{15} = \frac{ \log (99) \E [ \sum_{t=1}^T \ind (I_t^{(n)} \neq 1) ]  }{ C_{\ref{clmLogRegNaive}} } > \frac{ 2 \E [ \sum_{t=1}^T \ind (I_t^{(n)} \neq 1) ]  }{ C_{\ref{clmLogRegNaive}} } . 
\end{equation*}
Thus, it suffices to show that $\sum_{t=1}^T \ind (I_t^{(n)} \neq 1) \geq \log(T) / 2$. Suppose instead that this inequality fails. Then since the left side is an integer, we have $\sum_{t=1}^T \ind (I_t^{(n)} \neq 1) \leq \floor{ \log(T) / 2 }$ by assumption. Therefore, we can find $t \in \{ T - \floor{\log(T)/2} + 1 , \ldots , T \}$ such that $I_t^{(n)} = 1$ (else, we violate the assumed inequality). By this choice of $t$ and the assumed inequality, we can then write
\begin{equation*}
T_1^{(n)}(t-1) = t - 1 - \sum_{s=1}^{t-1} \ind ( I_t^{(n)} \neq 1 ) \geq \left( T - \floor{\log(T)/2} \right) - \left( \floor{\log(T)/2} \right)  \geq T - \log T .
\end{equation*}
We can lower bound the right side by $4 \log T$ (else, applying Claim \ref{clmLogTrick} from Appendix \ref{appBasicRes} with $x=T$, $y=1$, and $z=5$ yields $T < 100$, a contradiction), which is further bounded by $4 \log t$. Combined with the fact that rewards are deterministic, $\mu_1 = 1$, and $\alpha = 4$ in Theorem \ref{thmExisting}, we obtain
\begin{equation} \label{eqLogRegNaive1}
\hat{\mu}_1^{(n)}(t-1) + \sqrt{ \alpha \log(t) / T_1^{(n)}(t-1) } = 1 + \sqrt{ 4 \log(t) / T_1^{(n)}(t-1) } \leq 2 .
\end{equation}
Next, let $k \in S_{A^{-1}(t)}^{(n)}$ be any other arm which is active for $n$ at time $t$. Then clearly
\begin{gather}
T_k^{(n)}(t-1) \leq \sum_{t=1}^T \ind ( I_t^{(n)} = k ) \leq \sum_{t=1}^T \ind ( I_t^{(n)} \neq 1 ) \leq \floor{ \log(T) / 2 } \leq \log(T) / 2 \\
\Rightarrow 
\hat{\mu}_k^{(n)}(t-1) + \sqrt{ \alpha \log(t) / T_k^{(n)}(t-1) } \geq \sqrt{ 8 \log(t) / \log(T) } = 2 \sqrt{ \log(t) / \log(\sqrt{T}) } . \label{eqLogRegNaiveK}
\end{gather}
By \eqref{eqLogRegNaive1}, \eqref{eqLogRegNaiveK}, the fact that $I_t^{(n)} = 1$, and the UCB policy, we conclude $t \leq \sqrt{T}$. Since $T \geq 4$, this further implies $t \leq T/2$. But we also know that $t \geq T - \floor{\log(T)/2} + 1 > T - \log(T)/2$. Combining these inequalities gives $T < \log T$, a contradiction.
\end{proof}

\section{Proofs from Section \ref{secProposed}} \label{appProofProposed}

\subsection{Proof of Theorem \ref{thmProposed}} \label{appProofThmProposed}

Fix an honest agent $i \in [n]$. Let $\tau^{(i)} = \tau_{\text{spr}} \vee \tau_{\text{blk}}^{(i)}$, where we recall from the proof sketch that
\begin{gather}
\tau_{\text{spr}} = \inf \{ j \in \N : B_{j'}^{(i')} = 1\ \forall\ i' \in [n] , j' \geq j \} , \\ 
\tau_{\text{blk}}^{(i)} = \inf \{ j \in \N : H_{j'-1}^{(i)} \in P_{j'}^{(i)} \setminus P_{j'-1}^{(i)}\ \forall\ j' \geq j\ s.t.\ R_{j'-1}^{(i)} \neq 1 \}  . 
\end{gather}
Let $\gamma_i \in (0,1)$ be chosen later. Denote by $\overline{S}^{(i)} = \{2,\ldots,K\} \cap \hat{S}^{(i)}$ and $\underline{S}^{(i)} = \{2,\ldots,K\} \setminus \hat{S}^{(i)}$ the suboptimal sticky and non-sticky arms, respectively, for agent $i$. Then by Claim \ref{clmDecomposition} from Appendix \ref{appBasicRes}, we can decompose regret as $R_T^{(i)} = \sum_{h=1}^4 R_{T,h}^{(i)}$, where
\begin{gather}
R_{T,1}^{(i)} = \E \left[ \sum_{t=1}^{A_{\tau_{\text{spr}}} \wedge T}  \Delta_{I_t^{(i)}} \right] , \quad R_{T,2}^{(i)} = \sum_{k \in \overline{S}^{(i)} } \Delta_k \E \left[ \sum_{t=1+A_{\tau_{\text{spr}}}}^T  \ind( I_t^{(i)} = k ) \right] , \\
R_{T,3}^{(i)} = \sum_{k \in \underline{S}^{(i)} } \Delta_k \E \left[ \sum_{t=1+A_{\tau_{\text{spr}}}}^{ A_{ \tau^{(i)} \vee \ceil{T^{\gamma_i/\beta}} } \wedge T  } \ind( I_t^{(i)} = k ) \right]  , \quad R_{T,4}^{(i)} = \sum_{k \in \underline{S}^{(i)} } \Delta_k \E \left[ \sum_{t=1+ A_{ \tau^{(i)} \vee \ceil{T^{\gamma_i/\beta}} }}^T \ind( I_t^{(i)} = k ) \right] ,
\end{gather}
and where $\sum_{t=s_1}^{s_2} \ind ( I_t^{(i)} = k ) = 0$ whenever $s_1 > s_2$ by convention. Thus, we have rewritten regret as the sum of four terms: $R_{T,1}^{(i)}$, which accounts for regret before the best arm spreads; $R_{T,2}^{(i)}$, the regret due to sticky arms after the best arm spreads; $R_{T,3}^{(i)}$, regret from non-sticky arms after the best arm spreads but before phase $\tau^{(i)} \vee \ceil{ T^{\gamma_i/\beta} }$; and $R_{T,4}^{(i)}$, regret from non-sticky arms after this phase. The subsequent lemmas bound these terms in turn.

\begin{lem} \label{lemEarly}
Under the assumptions of Theorem \ref{thmProposed}, for any $i \in [n]$ and $T \in \N$, we have
\begin{equation*}
R_{T,1}^{(i)} \leq \E [A_{\tau_{\text{spr}}}] = O (  S^{\beta/( \rho_1^2(\beta-1)) }  \vee ( S \log(S/\Delta_2) / \Delta_2^2 )^{\beta/(\beta-1)} \vee (\bar{d} \log(n+m) )^{\beta/\rho_1} \vee n K^2 S ) +  \E [A_{ 2 \bar{\tau}_{\text{spr}} }]  .
\end{equation*}
\end{lem}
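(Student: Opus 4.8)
The first inequality is immediate and I would dispatch it first: since rewards lie in $[0,1]$ (Assumption \ref{assReward}), every $\Delta_k = \mu_1 - \mu_k \leq 1$, so $R_{T,1}^{(i)} = \E[\sum_{t=1}^{A_{\tau_{\text{spr}}} \wedge T} \Delta_{I_t^{(i)}}] \leq \E[A_{\tau_{\text{spr}}} \wedge T] \leq \E[A_{\tau_{\text{spr}}}]$. Everything then reduces to bounding $\E[A_{\tau_{\text{spr}}}]$, which is exactly what the gossip analysis of Section \ref{secAnalysis} is built to deliver; the plan is to assemble its Steps 1--3 together with the coupling of Section \ref{secCoupling}.

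First I would invoke the tail bound for $\tau_{\text{arm}}$ (Lemma \ref{lemLearnArm}) and the tail bound for $\tau_{\text{com}}$ (Lemma \ref{lemTauCom}). With $\delta_{j,1} = \tilde{\Theta}(\sqrt{S/j^{\beta-1}})$, $\delta_{j,2} = \tilde{\Theta}(1/\sqrt{\kappa_j})$, and $\kappa_j = j^{\rho_2}/(K^2 S)$, the $\tau_{\text{arm}}$ tail decays fast enough that its moments are governed by the thresholds $S^{1/(\beta-1)}$, $(S\log(S/\Delta_2)/\Delta_2^2)^{1/(\beta-1)}$, and $n K^2 S$ (the last coming from needing $\kappa_j \gtrsim 1$ together with a union bound over the $n$ agents), while the $\tau_{\text{com}}$ tail is exponential with scale $\bar{d}\log(n+m)$, since the event $\Xi_j^{(i\to i')}$ forces $i'$ to miss a fixed honest neighbor over $\Theta(j)$ consecutive phases once $\theta_j^\eta \leq j/3$ (guaranteed by \eqref{eqParamRequire}).

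Next I would combine these with Lemma \ref{lemNoBlock}: for all $j \geq j_0$ with $j_0 = \Theta(\max\{\tau_{\text{com}}^{1/\rho_1}, \tau_{\text{arm}}^{1/\rho_1^2}\})$ --- forced by $\theta_j = (j/3)^{\rho_1}$ and the requirements $\theta_j \geq \tau_{\text{com}}$, $\theta_{\theta_j} \geq \tau_{\text{arm}}$ in \eqref{eqNoBlocking} --- no honest agent blocks another, so by the coupling of Section \ref{secCoupling} (Claim \ref{clmInformedRecBest}, Lemma \ref{lemTXfail}) the arm-spreading process restarted at $j_0$ stochastically dominates the noisy rumor process of Definition \ref{defnRumor}; thus $\tau_{\text{spr}} \leq j_0 + \bar{\tau}_{\text{spr}}$ on the event that the $\tau_{\text{arm}}, \tau_{\text{com}}$ tails are small, plus a correction on the complementary rare event. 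Using $A_{a+b} \leq A_{2a} + A_{2b}$ (from $(a+b)^\beta \leq (2(a\vee b))^\beta \leq (2a)^\beta + (2b)^\beta$), this yields $\E[A_{\tau_{\text{spr}}}] \leq \E[A_{2j_0}] + \E[A_{2\bar{\tau}_{\text{spr}}}] + (\text{correction})$, and since $A_{2j_0} = \Theta(j_0^\beta) = \Theta(\max\{\tau_{\text{com}}^{\beta/\rho_1}, \tau_{\text{arm}}^{\beta/\rho_1^2}\})$, taking expectations and substituting the moment bounds above gives the claimed polynomial maximum (the exponents $\beta/\rho_1$ and $\beta/(\rho_1^2(\beta-1))$ reflecting the composition of $A_j = j^\beta$ with $j_0$), with the correction absorbed because it is super-polynomially small in the parameters. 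This is exactly Corollary \ref{corEarly} / Theorem \ref{thmTail}.

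The summation of tails is routine; the genuinely delicate step --- and what I expect to be the main obstacle --- is establishing \eqref{eqNoBlocking}, i.e.\ that honest agents eventually stop blocking one another. It hinges on $\rho_2 < \rho_1(\beta-1)$ from \eqref{eqParamRequire}: were $i$ to block a neighbor $i'$ at a large phase $j$, then by the second criterion in \eqref{eqNewBlock} $i$ has held the same best-arm estimate $k$ since $\theta_j$, hence relayed $k$ to $i'$ at some $j' \in \{\theta_j,\ldots,j-2\}$ (via $\tau_{\text{com}}$), so by \eqref{eqCorMuDecay} the arm $i'$ recommends back has mean within $\tilde{O}(\sqrt{K^2 S/j^{\rho_1(\beta-1)}})$ of $\mu_k$, while the first criterion in \eqref{eqNewBlock} plus the definition of $\tau_{\text{arm}}$ forces a gap of at least $\delta_{j,2} = \tilde{\Theta}(\sqrt{K^2 S/j^{\rho_2}})$ --- impossible once $j^{\rho_1(\beta-1)} \gg j^{\rho_2}$. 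Pushing this through, along with the coupling bookkeeping (verifying that agents with the best arm active do recommend it, which needs $\delta_{j,1} < \Delta_2$, i.e.\ $j \gtrsim (S/\Delta_2^2)^{1/(\beta-1)}$ --- the source of the $(S\log(S/\Delta_2)/\Delta_2^2)^{\beta/(\beta-1)}$ term --- and that an honest agent contacts a fixed honest neighbor with probability at least $\Upsilon$, which needs $P_j^{(i)}\cap[n] = \emptyset$), with mutually consistent tail bounds for $\tau_{\text{arm}}$, $\tau_{\text{com}}$, $j_0$, and $\bar{\tau}_{\text{spr}}$, is where the real work lies; this is carried out in Appendix \ref{appProofSpread}.
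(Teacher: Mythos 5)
Your proposal is correct and follows essentially the same route as the paper: the first inequality is dispatched exactly as in the paper via $\Delta_k \leq 1$ from Assumption \ref{assReward}, and the bound on $\E[A_{\tau_{\text{spr}}}]$ is obtained by the same chain (tail bounds for $\tau_{\text{arm}}$ and $\tau_{\text{com}}$, Lemma \ref{lemNoBlock}, the coupling of Lemma \ref{lemTXfail}, and summing tails as in Theorem \ref{thmTail} and Corollary \ref{corEarly}). The only cosmetic difference is that the paper compares tails directly, $\P(\tau_{\text{spr}}>j) \le (\text{poly tail}) + \P(\bar{\tau}_{\text{spr}}>j/2)$ for $j \ge J_\star$, rather than asserting a pathwise bound $\tau_{\text{spr}} \le j_0 + \bar{\tau}_{\text{spr}}$, but this leads to the same summation and the same final expression.
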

\begin{proof}
Assumption \ref{assReward} ensures $\Delta_k \leq 1$, so $R_{T,1}^{(i)} \leq \E [A_{\tau_{\text{spr}}}]$. The result follows from the bound on $\E [ A_{\tau_{\text{spr}}} ]$ discussed in Section \ref{secSpread} and formally stated as Corollary \ref{corEarly} in Appendix \ref{appSpread}.
\end{proof}

\begin{lem} \label{lemLateSticky}
Under the assumptions of Theorem \ref{thmProposed}, for any $i \in [n]$ and $T \in \N$, we have
\begin{equation}
R_{T,2}^{(i)} \leq \sum_{k \in \overline{S}^{(i)}}  \frac{4 \alpha \log  T }{\Delta_k} + \frac{ 4 ( \alpha-1 ) | \overline{S}^{(i)} | }{ 2 \alpha - 3 }   .
\end{equation}
\end{lem}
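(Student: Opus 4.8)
The plan is to recognize $R_{T,2}^{(i)}$ as essentially a standard single-agent UCB regret term, once we isolate the right structural fact: after the best arm has spread (phase $\tau_{\text{spr}}$), agent $i$'s active set always contains both arm $1$ and every suboptimal sticky arm $k\in\overline{S}^{(i)}$. The latter is immediate because $\hat{S}^{(i)}\subseteq S_j^{(i)}$ at every phase by construction (Lines \ref{algInitActiveSet}, \ref{algNewActive}, \ref{algSameActive} of Algorithm \ref{algGeneral}). For arm $1$, I would show $1\in S_j^{(i)}$ for all $j\ge\tau_{\text{spr}}$: by definition of $\tau_{\text{spr}}$ we have $B_{\tau_{\text{spr}}}^{(i)}=1$, and $B_j^{(i)}\in S_j^{(i)}$, so $1\in S_{\tau_{\text{spr}}}^{(i)}$; then inductively, because $B_j^{(i)}=1$ is the most-played arm in phase $j$ among all of $S_j^{(i)}$, Lines \ref{algUjPlus1}--\ref{algSameActive} carry it into $S_{j+1}^{(i)}$ (either it is sticky; or it is non-sticky, hence one of $\{U_j^{(i)},L_j^{(i)}\}$ and the more-played of the two, so $U_{j+1}^{(i)}=1$; or the recommendation was already active and $S_{j+1}^{(i)}=S_j^{(i)}$). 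Consequently, for every time $t>A_{\tau_{\text{spr}}}$ both $1$ and $k$ lie in $S_{A^{-1}(t)}^{(i)}$, so $I_t^{(i)}=k$ forces the UCB index of $k$ to be at least that of arm $1$.

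From here I would run the classical UCB bookkeeping (packaged as bandit lemmas in Appendix \ref{appBanditRes}), along the lines used already for the analogous step in \cite{vial2021robust}. Fix $k\in\overline{S}^{(i)}$ and a threshold $u_k\approx 4\alpha\log(T)/\Delta_k^2$. Among the pulls of $k$ after $A_{\tau_{\text{spr}}}$, at most $u_k$ occur while $T_k^{(i)}(t-1)<u_k$; for any later pull, $t>A_{\tau_{\text{spr}}}$ guarantees arm $1$ is active, and since $\sqrt{\alpha\log(t)/T_k^{(i)}(t-1)}\le\sqrt{\alpha\log(T)/u_k}\le\Delta_k/2$, the fact that $k$'s UCB index exceeds arm $1$'s is impossible unless arm $1$ is under-estimated or arm $k$ is over-estimated (each at its own confidence width). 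A Hoeffding bound for $[0,1]$-rewards (Assumption \ref{assReward}) plus a union bound over the possible sample counts controls the probability of either event, and summing over times yields a convergent tail whose value is $4(\alpha-1)/(2\alpha-3)$ per arm; the convergence is exactly where we use $\alpha>3/2$, which holds by \eqref{eqParamRequire}. Thus $\E[\sum_{t=1+A_{\tau_{\text{spr}}}}^T\ind(I_t^{(i)}=k)]\le 4\alpha\log(T)/\Delta_k^2+4(\alpha-1)/(2\alpha-3)$. Multiplying by $\Delta_k$, summing over $k\in\overline{S}^{(i)}$, and using $\Delta_k\le1$ (Assumption \ref{assReward}) to bound $\sum_k\Delta_k\cdot 4(\alpha-1)/(2\alpha-3)$ by $4(\alpha-1)|\overline{S}^{(i)}|/(2\alpha-3)$ gives the claimed bound.

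The main obstacle is handling the randomness of $\tau_{\text{spr}}$: the window $(A_{\tau_{\text{spr}}},T]$ has a random left endpoint that is not a stopping time with respect to a convenient filtration, and the counts $T_1^{(i)}(t-1),T_k^{(i)}(t-1)$ feeding the UCB indices are adaptively chosen and include rewards gathered before $\tau_{\text{spr}}$, while agent $i$ may have been pulling entirely different arms. The way around this is to never condition on $\tau_{\text{spr}}$ or treat any count as fixed: one uses the event $\{t>A_{\tau_{\text{spr}}}\}$ only to assert that arm $1$ is active at time $t$, then immediately discards it via an upper bound, and relies on the standard union-bound-over-all-count-values device, which is insensitive to both the adaptivity of the counts and the randomness of the window. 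A secondary point worth verifying with care is the structural claim itself — that arm $1$ genuinely persists in the active set, not merely that $B_j^{(i)}=1$ holds — which requires reading off the precise updates in Lines \ref{algUjPlus1}--\ref{algSameActive}.
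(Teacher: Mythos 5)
Your proposal is correct and follows essentially the same route as the paper: split the pulls of each $k\in\overline{S}^{(i)}$ by whether $T_k^{(i)}(t-1)$ is below or above $4\alpha\log(t)/\Delta_k^2$, bound the first part deterministically (Claim \ref{clmBanditPlays}) and the second via a Hoeffding/union-bound tail that only uses $t>A_{\tau_{\text{spr}}}$ to assert $1\in S_{A^{-1}(t)}^{(i)}$ (Corollary \ref{corBanditTailDelta}), then multiply by $\Delta_k\le 1$ and sum. The only superfluous step is your induction showing arm $1$ persists in the active set: since $\tau_{\text{spr}}$ is defined so that $B_{j'}^{(i)}=1$ for \emph{all} $j'\ge\tau_{\text{spr}}$, the containment $1\in S_{j'}^{(i)}$ is immediate from the definition.
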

\begin{proof}
For any $k \in \overline{S}^{(i)}$, Claim \ref{clmBanditPlays} and Corollary \ref{corBanditTailDelta} from Appendix \ref{appBanditRes} imply
\begin{align}
\E \left[ \sum_{t = 1 + A_{\tau_{\text{spr}}}}^T \ind \left( I_t^{(i)} = k \right) \right] & =   \E \left[  \sum_{t = 1 + A_{\tau_{\text{spr}}}}^T \ind \left( I_t^{(i)} = k , T_k^{(i)}(t-1) < \frac{4 \alpha \log t}{\Delta_k^2} \right) \right] \\ 
& \quad + \E\left[  \sum_{t = 1 + A_{\tau_{\text{spr}}}}^T \ind \left( I_t^{(i)} = k , T_k^{(i)}(t-1) \geq \frac{4 \alpha \log t}{\Delta_k^2} \right) \right] \\
& \leq \frac{ 4 \alpha \log T}{ \Delta_k^2} + \frac{ 4(\alpha-1)}{2\alpha-3} ,
\end{align}
so multiplying by $\Delta_k$, using $\Delta_k \leq 1$, and summing over $k \in \overline{S}^{(i)}$ completes the proof.
\end{proof}

\begin{lem} \label{lemIntNonSticky}
Under the assumptions of Theorem \ref{thmProposed}, for any $i \in [n]$, $\gamma_i \in (0,1)$, and $T \in \N$, we have
\begin{equation*}
R_{T,3}^{(i)} \leq \sum_{k \in \underline{S}^{(i)}}  \frac{4 \alpha \log  A_{\ceil{T^{\gamma_i/\beta}}} }{\Delta_k}  + \frac{ 4(\alpha-1) |\underline{S}^{(i)}| }{2\alpha-3} + \frac{ 4 C_{\ref{clmIntNonStickSmallT}} \alpha K }{ \Delta_2 \gamma_i } \log \left( \frac{C_{\ref{clmIntNonStickSmallT}} K}{ \Delta_2 \gamma_i } \right) + 1 + \E [A_{\tau_{\text{spr}}}] .
\end{equation*}
\end{lem}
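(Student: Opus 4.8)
The plan is to split the summation window at the \emph{deterministic} phase $M:=\ceil{T^{\gamma_i/\beta}}$. Writing $\tau^{(i)}=\tau_{\text{spr}}\vee\tau_{\text{blk}}^{(i)}$ and using the conventions $\sum_{s_1}^{s_2}(\cdot)=0$ when $s_1>s_2$ and $A_j=\ceil{j^\beta}$ nondecreasing, I would first verify
\begin{equation*}
R_{T,3}^{(i)}\le\sum_{k\in\underline{S}^{(i)}}\Delta_k\,\E\!\left[\sum_{t=1+A_{\tau_{\text{spr}}}}^{A_M\wedge T}\ind(I_t^{(i)}=k)\right]+\E\!\left[(A_{\tau^{(i)}}\wedge T)\,\ind(\tau^{(i)}>M)\right]=:(I)+(II),
\end{equation*}
since on $\{\tau^{(i)}\le M\}$ the original window is contained in $[1+A_{\tau_{\text{spr}}},A_M\wedge T]$, while on $\{\tau^{(i)}>M\}$ the extra portion (beyond $A_M\wedge T$) has at most $A_{\tau^{(i)}}\wedge T$ time steps, each contributing $\Delta_{I_t^{(i)}}\le1$.

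For $(I)$: every $t$ in the sum lies in a phase $j\ge\tau_{\text{spr}}$, and by definition of $\tau_{\text{spr}}$ the best arm $1$ is then active for $i$ (as $B_j^{(i)}=1$ forces $1\in S_j^{(i)}$ in Algorithm~\ref{algGeneral}); also $t\le A_M$ throughout, so $\log t\le\log A_M$. Hence for each fixed $k\in\underline{S}^{(i)}$ I can apply exactly the bandit estimates used to prove Lemma~\ref{lemLateSticky} -- Claim~\ref{clmBanditPlays} together with Corollary~\ref{corBanditTailDelta} -- to get $\E[\sum_{t=1+A_{\tau_{\text{spr}}}}^{A_M\wedge T}\ind(I_t^{(i)}=k)]\le 4\alpha\log(A_M)/\Delta_k^2+4(\alpha-1)/(2\alpha-3)$. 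Multiplying by $\Delta_k\le1$ and summing over $k\in\underline{S}^{(i)}$ yields $(I)\le\sum_{k\in\underline{S}^{(i)}}4\alpha\log(A_{\ceil{T^{\gamma_i/\beta}}})/\Delta_k+4(\alpha-1)|\underline{S}^{(i)}|/(2\alpha-3)$, which is the first two terms of the claimed bound.

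For $(II)$: split according to whether $\tau_{\text{blk}}^{(i)}\le\tau_{\text{spr}}$. On $\{\tau^{(i)}>M\}\cap\{\tau_{\text{blk}}^{(i)}\le\tau_{\text{spr}}\}$ we have $\tau^{(i)}=\tau_{\text{spr}}$, so this part contributes at most $\E[A_{\tau_{\text{spr}}}]$; on $\{\tau^{(i)}>M\}\cap\{\tau_{\text{blk}}^{(i)}>\tau_{\text{spr}}\}$ we have $\tau^{(i)}=\tau_{\text{blk}}^{(i)}>\tau_{\text{spr}}\vee M$, so this part equals $\E[(A_{\tau_{\text{blk}}^{(i)}}\wedge T)\,\ind(\tau_{\text{blk}}^{(i)}>\tau_{\text{spr}}\vee M)]$. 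The first contribution is the claimed $\E[A_{\tau_{\text{spr}}}]$ term; bounding the second by $\frac{4C_{\ref{clmIntNonStickSmallT}}\alpha K}{\Delta_2\gamma_i}\log\!\big(\frac{C_{\ref{clmIntNonStickSmallT}}K}{\Delta_2\gamma_i}\big)+1$ is the content of a separate intermediate claim, Claim~\ref{clmIntNonStickSmallT}, and finishes the lemma.

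Establishing that intermediate claim is the main obstacle, and it is where the machinery of Section~\ref{secAnalysis} enters. The idea: on $\{\tau_{\text{blk}}^{(i)}>\tau_{\text{spr}}\vee j\}$ there is a phase $j'>j\ (\ge\tau_{\text{spr}})$ at which a neighbor recommends a suboptimal arm $k=R_{j'-1}^{(i)}\ne1$ but $i$ does \emph{not} newly block; since $j'>\tau_{\text{spr}}$ gives $B_{j''}^{(i)}=1$ for all $j''\ge\tau_{\text{spr}}$, once $\theta_{j'}\ge\tau_{\text{spr}}$ the ``estimate unchanged'' clause of \eqref{eqNewBlock} holds, so the only way $i$ fails to block is $T_k^{(i)}(A_{j'})>\kappa_{j'}=j'^{\rho_2}/(K^2S)$. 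But for $j'\ge\tau_{\text{spr}}$ the best arm is active for $i$, so the pulls of the suboptimal arm $k$ over phases $[\tau_{\text{spr}},j']$ number $O(\log A_{j'}/\Delta_k^2)$ in expectation plus a controllable upper tail (again Claim~\ref{clmBanditPlays}/Corollary~\ref{corBanditTailDelta}), while the pulls before $\tau_{\text{spr}}$ number at most $A_{\tau_{\text{spr}}}$; since $\kappa_{j'}$ grows polynomially in $j'$ whereas $\log A_{j'}=\Theta(\beta\log j')$ grows only logarithmically, this forces $j'$ below an explicit polynomial in $K,S,\Delta_2^{-1},\tau_{\text{spr}},\tau_{\text{arm}}$. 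Turning this into the desired bound on $\E[(A_{\tau_{\text{blk}}^{(i)}}\wedge T)\ind(\tau_{\text{blk}}^{(i)}>\tau_{\text{spr}}\vee M)]$ then requires (i) feeding in the tail bounds for $\tau_{\text{spr}}$ (Theorem~\ref{thmTail}) and $\tau_{\text{arm}}$ (Lemma~\ref{lemLearnArm}), together with the UCB upper-tail probabilities $\P(T_k^{(i)}(A_{j'})>\kappa_{j'})$, which are summable in $j'$ precisely because $\rho_2>1/(2\alpha-3)$ in \eqref{eqParamRequire}; and (ii) a careful phase-by-phase summation that, for \emph{small} $T$ (where $M$ is small and the excess can be of order $T$) reproduces the stated $T$-independent constant, and for large $T$ shows the excess decays. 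Getting the $\gamma_i$-dependence of that constant to come out as $1/\gamma_i$ times a logarithm -- i.e., correctly balancing the effective horizon $A_M=\ceil{T^{\gamma_i}}$ against the tail decay -- is the fiddly step; everything else is bookkeeping built on the tail estimates already established in Section~\ref{secAnalysis}.
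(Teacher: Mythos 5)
Your decomposition of $R_{T,3}^{(i)}$ into $(I)+(II)$, and your treatment of $(I)$, match the paper's handling of the corresponding terms. The problem is $(II)$: by bounding the per-step regret on the excess window $(A_M\wedge T,\,A_{\tau^{(i)}}\wedge T]$ by $\Delta_{I_t^{(i)}}\le 1$, you are asking for $\E[(A_{\tau_{\text{blk}}^{(i)}}\wedge T)\,\ind(\tau_{\text{blk}}^{(i)}>\tau_{\text{spr}}\vee M)]$ to be bounded by a $T$-independent constant, and under the parameter assumptions \eqref{eqParamRequire} this is false. The available tail bound (the one you yourself sketch, and the one the paper proves in Claim \ref{clmIntNonStickLargeT}) gives $\P(\tau_{\text{blk}}^{(i)}>j,\ \tau_{\text{spr}}\text{ small})\lesssim K\sum_{j'\ge j}(\ceil{\kappa_{j'}}-1)^{3-2\alpha}\sim j^{\,1-\rho_2(2\alpha-3)}$. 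To make $\E[A_{\tau_{\text{blk}}^{(i)}}\wedge T]$ finite on this event you would need $\sum_j (A_j-A_{j-1})\,j^{\,1-\rho_2(2\alpha-3)}\sim\sum_j j^{\,\beta-\rho_2(2\alpha-3)}$ to converge, i.e.\ $\rho_2(2\alpha-3)>\beta+1$, whereas \eqref{eqParamRequire} only guarantees $\rho_2(2\alpha-3)>1$ (and the paper's recommended parameters $\alpha=4$, $\beta=2$, $\rho_2=1/3$ give $\rho_2(2\alpha-3)=5/3<3$). Even the single term $A_M\,\P(\tau_{\text{blk}}^{(i)}>M)\sim T^{\gamma_i}\cdot T^{\gamma_i(1-\rho_2(2\alpha-3))/\beta}$ grows with $T$ for these parameters. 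You have also misattributed the role of Claim \ref{clmIntNonStickSmallT}: it does not bound this expectation; it only shows that when any of the ``large $T$'' inequalities \eqref{eqInterLargeT1}--\eqref{eqInterLargeT2} fails, $\log T$ itself is bounded by the stated constant.

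The repair is to not discard the bandit structure on the excess window. Since $t>A_{\tau_{\text{spr}}}$ there, the best arm is active, so Claim \ref{clmBanditPlays} and Corollary \ref{corBanditTailDelta} still apply and bound the excess regret by $\sum_k \Delta_k\cdot\frac{4\alpha\log T}{\Delta_k^2}\,\ind(\tau^{(i)}>M)$ plus an already-summable tail; this is $O(K\log(T)/\Delta_2)$ times an indicator, not $O(T)$ times one. One then only needs $\P(\tau^{(i)}>M)$ to be $O(\Delta_2/(K\log T))$, which is exactly what the paper establishes: Markov's inequality on $A_{\tau_{\text{spr}}}$ peels off the $\E[A_{\tau_{\text{spr}}}]$ term, Claim \ref{clmIntNonStickLargeT} handles large $T$ (giving the $+1$), and Claim \ref{clmIntNonStickSmallT} handles small $T$ (giving the $\frac{4C\alpha K}{\Delta_2\gamma_i}\log(\cdot)$ term). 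Your sketch of why $\tau_{\text{blk}}^{(i)}>M$ forces $T_k^{(i)}(A_{j'})>\kappa_{j'}$ at some $j'\ge M$ is the right mechanism for that probability bound, but it must be paired with the $\log T$ prefactor, not with $A_{\tau_{\text{blk}}^{(i)}}\wedge T$.
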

\begin{proof}
The proof is nontrivial; we defer it to the end of this sub-appendix.
\end{proof}

\begin{lem} \label{lemLateNonSticky}
Under the assumptions of Theorem \ref{thmProposed}, for any $i \in [n]$, $\gamma_i \in (0,1)$, and $T \in \N$, we have
\begin{equation*}
R_{T,4}^{(i)} \leq \frac{2 \eta-1}{\eta-1} \max_{ \tilde{S} \subset \underline{S}^{(i)} : |\tilde{S}| \leq d_{\text{mal}}(i)+2 } \sum_{k \in \tilde{S}} \frac{ 4 \alpha \log T }{ \Delta_k } + \frac{ 8 \alpha \beta \log_{\eta}(1/\gamma_i) (d_{\text{mal}}(i)+2) }{\Delta_2} + \frac{ 4(\alpha-1) |\underline{S}^{(i)} | }{2\alpha-3}  .
\end{equation*}
\end{lem}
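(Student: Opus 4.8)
The plan is to analyze $i$'s active set over the phases after $\tilde{\tau}^{(i)} \triangleq \tau^{(i)} \vee \ceil{T^{\gamma_i/\beta}}$ and to show that only boundedly many suboptimal non-sticky arms are played much thereafter. Two facts start the argument. Since $\tilde{\tau}^{(i)} \geq \tau_{\text{spr}}$, at every phase $\geq \tilde{\tau}^{(i)}$ each honest agent's most played arm is $1$, so $i$'s honest neighbors only recommend arm $1$; hence a suboptimal arm can enter $i$'s active set after $\tilde{\tau}^{(i)}$ only via a recommendation from one of $i$'s $d_{\text{mal}}(i)$ malicious neighbors. And since $\tilde{\tau}^{(i)} \geq \tau_{\text{blk}}^{(i)}$, whenever a malicious neighbor recommends a suboptimal arm at a phase $j$, agent $i$ blocks it for phases $j,\dots,\ceil{j^\eta}$ (Algorithm \ref{algUpdateBlockNew}).

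Next, for each malicious neighbor $i'$, I would bound the phases $j_1 < j_2 < \cdots$ at which $i'$ injects a \emph{new} suboptimal arm into $i$'s active set after $\tilde{\tau}^{(i)}$: the blocking fact gives $j_{\ell+1} > j_\ell^\eta$, hence $j_\ell > (\tilde{\tau}^{(i)})^{\eta^{\ell-1}} \geq T^{\gamma_i\eta^{\ell-1}/\beta}$, and since no communication occurs past phase $A^{-1}(T) \leq \ceil{T^{1/\beta}}$ there are only $O(\log_\eta(1/\gamma_i))$ such $\ell$, with the $\log j_\ell$ growing by a factor $>\eta$ at each step. I would then split the suboptimal arms played after $A_{\tilde{\tau}^{(i)}}$ into (a) the two occupants of $i$'s non-sticky slots at the start of phase $\tilde{\tau}^{(i)}+1$; (b) for each malicious neighbor the unique ``dominant'' injection with $j_\ell \geq T^{1/(\eta\beta)}$; and (c) the remaining injections ($j_\ell < T^{1/(\eta\beta)}$). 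For (a) and (b), the standard UCB bound (Claim \ref{clmBanditPlays} with Corollary \ref{corBanditTailDelta}) gives $\leq 4\alpha\log T/\Delta_k^2 + 4(\alpha-1)/(2\alpha-3)$ plays of arm $k$ after $A_{\tilde{\tau}^{(i)}}$, so these $\leq d_{\text{mal}}(i)+2$ arms contribute at most $\sum_{k\in\tilde S}4\alpha\log T/\Delta_k$ for some $\tilde S\subset\underline{S}^{(i)}$ with $|\tilde S|\leq d_{\text{mal}}(i)+2$, plus part of the $4(\alpha-1)|\underline{S}^{(i)}|/(2\alpha-3)$ remainder. For (c), I would use that past $\tau_{\text{spr}}$ the slot-update rule (Lines \ref{algUjPlus1}--\ref{algNewActive} of Algorithm \ref{algGeneral}) evicts a freshly injected arm from the $L$-slot at the next communication step (arm $1$, being most played, wins the $U$-slot), so such an arm is active at most over phases $j_\ell,\dots,\ceil{j_\ell^\eta}$ and hence played $\leq A_{\ceil{j_\ell^\eta}} - A_{j_\ell} + O(1) = O(\alpha\eta\beta\log j_\ell/\Delta_k^2)$ times; multiplying by $\Delta_k \geq \Delta_2$ and summing the geometrically-growing values $\log j_\ell < \tfrac{1}{\eta\beta}\log T$ over $\ell$ gives $\leq \tfrac{\eta}{\eta-1}\cdot\tfrac{4\alpha\log T}{\Delta_2}$ per malicious neighbor, plus $O(1)$-per-phase leftovers whose total is $\leq 8\alpha\beta\log_\eta(1/\gamma_i)(d_{\text{mal}}(i)+2)/\Delta_2$. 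Combining (a)+(b) with (c) and merging via $1/\Delta_k\leq1/\Delta_2$ turns the coefficient into $1+\tfrac{\eta}{\eta-1}=\tfrac{2\eta-1}{\eta-1}$, yielding the claimed bound.

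The hardest part will be the combinatorial bookkeeping behind group (c): showing rigorously that, past $\tau_{\text{spr}}$, an injected suboptimal arm really is evicted from $i$'s active set within its recommender's blocking window. This needs a careful case analysis of Lines \ref{algUjPlus1}--\ref{algNewActive} of Algorithm \ref{algGeneral}, and in particular a separate treatment of whether arm $1$ lies in $i$'s sticky set: if it does, both non-sticky slots can carry suboptimal arms and one may migrate from the $L$-slot to the $U$-slot and persist, which is exactly the source of the ``$+2$'' in the bound. This is the step where I would follow and adapt the accounting of \cite{vial2021robust}; the only structural change from their complete-graph argument is that here all suboptimal recommendations originate from the $d_{\text{mal}}(i)$ malicious \emph{neighbors} of $i$ rather than from all $m$ malicious agents, which is what replaces their $m$ by $d_{\text{mal}}(i)$.
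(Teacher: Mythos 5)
Your proposal takes essentially the same route as the paper, which omits this proof entirely and states only that it follows Lemma 4 of \cite{vial2021robust} with $m$ replaced by $d_{\text{mal}}(i)$ (after $\tau_{\text{spr}}$ only malicious \emph{neighbors} can inject suboptimal non-sticky arms) and with the summation bounded via Claim \ref{clmPolySeries} --- exactly the adaptation you describe, including the $j\mapsto\ceil{j^{\eta}}$ blocking windows from $\tau_{\text{blk}}^{(i)}$, the cap near phase $T^{1/\beta}$ yielding $O(\log_\eta(1/\gamma_i))$ injections per malicious neighbor, the geometric summation producing the $(2\eta-1)/(\eta-1)$ coefficient, and the ``$+2$'' from the initial non-sticky slots. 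One slip to correct in your write-up: the identity $A_{\ceil{j_\ell^{\eta}}}-A_{j_\ell}+O(1)=O(\alpha\eta\beta\log (j_\ell)/\Delta_k^2)$ is false as written (the left side is polynomial in $j_\ell$); what you actually need there is the UCB play-count bound $4\alpha\log(A_{\sigma_\ell})/\Delta_k^2$ plus the tail term of Corollary \ref{corBanditTailDelta}, where $\sigma_\ell$ is the last phase the injected arm is active, and the geometric sum then hinges on showing $\log A_{\sigma_\ell}=O(\eta\beta\log j_\ell)$ --- which is precisely the eviction bookkeeping you (like the paper) defer to \cite{vial2021robust}.
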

\begin{proof}[Proof sketch]
The proof follows the same logic as that of \cite[Lemma 4]{vial2021robust}, so we omit it. The only differences are (1) we replace $m$ (the number of malicious agents connected to $i$ for the complete graph) with $d_{\text{mal}}(i)$, and (2) we use Claim \ref{clmPolySeries} from Appendix \ref{appBasicRes} to bound the summation in \cite[Lemma 4]{vial2021robust}. We refer the reader to the Theorem \ref{thmProposed} proof sketch for intuition.
\end{proof}

Additionally, we note the sum $R_{T,3}^{(i)} + R_{T,4}^{(i)}$ can be naively bounded as follows.
\begin{lem} \label{lemNonStickyNaive}
Under the assumptions of Theorem \ref{thmProposed}, for any $i \in [n]$ and $T \in \N$, we have
\begin{equation*}
R_{T,3}^{(i)} + R_{T,4}^{(i)} \leq \sum_{k \in \underline{S}^{(i)}}  \frac{4 \alpha \log  T }{\Delta_k} +  \frac{ 4 ( \alpha-1 ) | \underline{S}^{(i)} | }{ 2 \alpha - 3 }  .
\end{equation*}
\end{lem}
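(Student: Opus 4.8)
The plan is to recognize that Lemma~\ref{lemNonStickyNaive} is, after one bookkeeping step, the exact analogue of Lemma~\ref{lemLateSticky} with the sticky suboptimal arms $\overline{S}^{(i)}$ replaced by the non-sticky suboptimal arms $\underline{S}^{(i)}$. First I would merge the two time windows appearing in $R_{T,3}^{(i)}$ and $R_{T,4}^{(i)}$. Since $\tau^{(i)} \vee \ceil{T^{\gamma_i/\beta}} \ge \tau^{(i)} \ge \tau_{\text{spr}}$ and $j \mapsto A_j = \ceil{j^\beta}$ is non-decreasing, we have $A_{\tau^{(i)} \vee \ceil{T^{\gamma_i/\beta}}} \ge A_{\tau_{\text{spr}}}$. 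Splitting on whether $A_{\tau^{(i)} \vee \ceil{T^{\gamma_i/\beta}}} \le T$ or $> T$ (in the latter case the summation window of $R_{T,4}^{(i)}$ is empty by the convention $\sum_{s_1}^{s_2} = 0$ for $s_1 > s_2$, and that of $R_{T,3}^{(i)}$ is all of $\{1+A_{\tau_{\text{spr}}},\dots,T\}$), one checks that in both cases the windows of $R_{T,3}^{(i)}$ and $R_{T,4}^{(i)}$ partition $\{1+A_{\tau_{\text{spr}}},\dots,T\}$. Hence $R_{T,3}^{(i)} + R_{T,4}^{(i)} = \sum_{k \in \underline{S}^{(i)}} \Delta_k\, \E[\sum_{t=1+A_{\tau_{\text{spr}}}}^{T} \ind(I_t^{(i)} = k)]$.

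Second, for each fixed $k \in \underline{S}^{(i)}$ I would bound $\E[\sum_{t=1+A_{\tau_{\text{spr}}}}^{T} \ind(I_t^{(i)} = k)]$ exactly as in the proof of Lemma~\ref{lemLateSticky}: split the indicator according to whether $T_k^{(i)}(t-1) < 4\alpha \log t / \Delta_k^2$ or not, bound the first part deterministically by $4\alpha \log T / \Delta_k^2$ via Claim~\ref{clmBanditPlays}, and bound the expectation of the second part by $4(\alpha-1)/(2\alpha-3)$ via Corollary~\ref{corBanditTailDelta}. The only point to verify is that Corollary~\ref{corBanditTailDelta} applies with the gap $\Delta_k = \mu_1 - \mu_k$ on this range, which holds because for every $t \in \{1+A_{\tau_{\text{spr}}},\dots,T\}$ the best arm is active for $i$: indeed $A^{-1}(t) > \tau_{\text{spr}}$ (as $A_{\tau_{\text{spr}}} < t$), so $B_{A^{-1}(t)}^{(i)} = 1$ by definition of $\tau_{\text{spr}}$, and $B_j^{(i)} \in S_j^{(i)}$ by Line~\ref{algMostPlayed} of Algorithm~\ref{algGeneral}. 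Multiplying through by $\Delta_k \le 1$ (Assumption~\ref{assReward}) and summing over $k \in \underline{S}^{(i)}$ then yields $\sum_{k \in \underline{S}^{(i)}} 4\alpha \log T / \Delta_k + 4(\alpha-1)|\underline{S}^{(i)}|/(2\alpha-3)$, which is the claimed bound.

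I do not anticipate a genuine obstacle here: Lemma~\ref{lemNonStickyNaive} is the deliberately loose fallback that, combined with Lemmas~\ref{lemEarly} and~\ref{lemLateSticky} and the partition $\overline{S}^{(i)} \cup \underline{S}^{(i)} = \{2,\dots,K\}$, produces the second branch $\sum_{k=2}^{K} 1/\Delta_k$ of the minimum in \eqref{eqRegLogT}. The only steps needing mild care are the empty-window edge case when collapsing $R_{T,3}^{(i)} + R_{T,4}^{(i)}$, and confirming the best arm is active at all times past $A_{\tau_{\text{spr}}}$ so that the existing bandit estimates apply with gap $\Delta_k$; both are immediate, and the rest is a verbatim repetition of the Lemma~\ref{lemLateSticky} argument.
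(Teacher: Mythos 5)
Your proposal is correct and matches the paper's intent exactly: the paper omits this proof with the remark that it ``follows the exact same logic as that of Lemma \ref{lemLateSticky},'' and your argument is precisely that, with the window-merging bookkeeping (which the paper leaves implicit) and the applicability of Corollary \ref{corBanditTailDelta} both checked correctly.
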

\begin{proof}
The proof follows the exact same logic as that of Lemma \ref{lemLateSticky} so is omitted.
\end{proof}

We can now prove the theorem. First, we use the regret decomposition $R_T^{(i)} = \sum_{h=1}^4 R_{T,h}^{(i)}$, Lemmas \ref{lemEarly}-\ref{lemLateNonSticky}, and the fact that $|\overline{S}^{(i)}|+|\underline{S}^{(i)}| \leq K$ to write
\begin{align}
R_T^{(i)} & \leq \frac{2 \eta-1}{\eta-1} \max_{ \tilde{S} \subset \underline{S}^{(i)} : |\tilde{S}| \leq d_{\text{mal}}(i)+2 } \sum_{k \in \tilde{S}} \frac{ 4 \alpha \log T }{ \Delta_k } + \sum_{k \in \overline{S}^{(i)}}  \frac{4 \alpha \log  T }{\Delta_k} + \sum_{k \in \underline{S}^{(i)}}  \frac{4 \alpha \log  A_{\ceil{T^{\gamma_i/\beta}}} }{\Delta_k} \label{eqProposedProofT} \\
& \quad + \frac{ 8 \alpha \beta \log_{\eta}(1/\gamma_i) (d_{\text{mal}}(i)+2) }{\Delta_2} + \frac{ 8(\alpha-1) K  }{2\alpha-3} +  \frac{ 4 C_{\ref{clmIntNonStickSmallT}} \alpha K }{ \Delta_2 \gamma_i } \log \left( \frac{C_{\ref{clmIntNonStickSmallT}} K}{ \Delta_2 \gamma_i } \right) + 1 + 2 \E [A_{\tau_{\text{spr}}}] . \label{eqProposedProofNoT} 
\end{align}
Now choose $\gamma_i = \Delta_2 / ( K \Delta_{S+d_{\text{mal}}(i) + 4} ) \in (0,1)$. Then
\begin{align}
\eqref{eqProposedProofNoT} & = \tilde{O} \left( ( d_{\text{mal}}(i) / \Delta_2 ) \vee ( K / \Delta_2 )^2 \right) + 2 \E [A_{\tau_{\text{spr}}}] \\
& = \tilde{O} \left( ( d_{\text{mal}}(i) / \Delta_2 ) \vee ( K / \Delta_2 )^2 \vee S^{\beta/( \rho_1^2(\beta-1)) }  \vee ( S / \Delta_2^2 )^{\beta/(\beta-1)} \vee \bar{d}^{\beta/\rho_1} \vee n K^2 S  \right)  +  4 \E [A_{ 2 \bar{\tau}_{\text{spr}}}] ,
\end{align}
where the second inequality is due to Lemma \ref{lemEarly}. Furthermore, by Claim \ref{clmPropAj} from Appendix \ref{appBasicRes}, we know that $\log ( A_{ \ceil{ T^{\gamma_i/\beta} } } ) \leq \log ( e^{2 \beta} ( T^{\gamma_i / \beta } )^\beta ) = 2 \beta + \gamma_i \log T$. Combined with $\underline{S}^{(i)} \leq K$, $\Delta_k \geq \Delta_2\ \forall\ k \in \underline{S}^{(i)}$, and the choice of $\gamma_i$, we can thus write
\begin{equation*}
\sum_{k \in \underline{S}^{(i)}} \frac{4 \alpha \log  A_{\ceil{T^{\gamma_i/\beta}}} }{\Delta_k} \leq \sum_{k \in \underline{S}^{(i)}} \frac{ 8 \alpha \beta + \gamma_i \log T}{\Delta_k} \leq \frac{ 8 \alpha \beta K + \gamma_i K \log T }{ \Delta_2 } = \frac{ 8 \alpha \beta K }{ \Delta_2 } + \frac{ \log T }{ \Delta_{S+d_{\text{mal}}(i) + 4} } .
\end{equation*}
Therefore, we can bound \eqref{eqProposedProofT} as follows:
\begin{align}
\eqref{eqProposedProofT}  & \leq \frac{2 \eta-1}{\eta-1} \max_{ \tilde{S} \subset \underline{S}^{(i)} : |\tilde{S}| \leq d_{\text{mal}}(i)+2 } \sum_{k \in \tilde{S}} \frac{ 4 \alpha \log T }{ \Delta_k } + \sum_{k \in \overline{S}^{(i)}}  \frac{4 \alpha \log  T }{\Delta_k} + \frac{ \log T }{ \Delta_{S+d_{\text{mal}}(i) + 4} } + \frac{ 8 \alpha \beta K }{ \Delta_2 } \\
& \leq \frac{2 \eta-1}{\eta-1} \sum_{k=2}^{d_{\text{mal}}(i)+3} \frac{4 \alpha \log T}{\Delta_k} + \sum_{k=d_{\text{mal}}(i)+4}^{S+d_{\text{mal}}(i)+4} \frac{4 \alpha \log T}{\Delta_k} + \frac{8\alpha \beta K}{\Delta_2} ,
\end{align}
where the second inequality holds by $\Delta_2 \leq \cdots \leq \Delta_K$ and $|\overline{S}^{(i)}| \leq S$. Combining the above yields
\begin{align} \label{eqProofProposedMin1}
R_T^{(i)} & \leq 4 \alpha \log(T) \left( \frac{2 \eta-1}{\eta-1} \sum_{k=2}^{d_{\text{mal}}(i)+3} \frac{1}{\Delta_k} + \sum_{k=d_{\text{mal}}(i)+4}^{S+d_{\text{mal}}(i)+4} \frac{1}{\Delta_k} \right) +  2 \E [A_{ 2 \bar{\tau}_{\text{spr}}}]  \\
& \quad +  \tilde{O} \left( ( d_{\text{mal}}(i) / \Delta_2 ) \vee ( K / \Delta_2 )^2 \vee S^{\beta/( \rho_1^2(\beta-1)) }  \vee ( S / \Delta_2^2 )^{\beta/(\beta-1)} \vee \bar{d}^{\beta/\rho_1} \vee n K^2 S  \right) . 
\end{align}
Alternatively, we can simply use Lemmas \ref{lemEarly}, \ref{lemLateSticky}, and \ref{lemNonStickyNaive} to write
\begin{equation} \label{eqProofProposedMin2}
R_T^{(i)} \leq 4 \alpha \log(T) \sum_{k=2}^K \frac{1}{\Delta_k} + \frac{4(\alpha-1)K}{2\alpha-3} + \E [A_{\tau_{\text{spr}}}] .
\end{equation}
Therefore, combining the previous two expressions and again invoking Lemma \ref{lemEarly} to bound the additive terms in \eqref{eqProofProposedMin2} by those in \eqref{eqProofProposedMin1}, we obtain the desired bound.

Thus, it only remains to prove Lemma \ref{lemIntNonSticky}. We begin by using some standard bandit arguments recounted in Appendix \ref{appBanditRes} to bound $R_{T,3}^{(i)}$ in terms of a particular tail of $\tau^{(i)}$.
\begin{clm} \label{clmIntNonStickDecomp}
Under the assumptions of Theorem \ref{thmProposed}, for any $i \in [n]$, $\gamma_i \in (0,1)$, and $T \in \N$, we have
\begin{align}
R_{T,3}^{(i)} & \leq \sum_{k \in \underline{S}^{(i)}}  \frac{4 \alpha \log  A_{\ceil{T^{\gamma_i/\beta}}} }{\Delta_k} + \frac{ 4 ( \alpha-1 ) | \underline{S}^{(i)} | }{ 2 \alpha - 3 } +  \E [A_{\tau_{\text{spr}}}] \\
& \quad + \frac{ 4 \alpha K \log T}{ \Delta_2 } \P \left( \tau^{(i)} > \ceil{ T^{\gamma_i/\beta} } , A_{\tau_{\text{spr}}} < \frac{4 \alpha K \log T}{\Delta_2} \right)  . \label{eqIntRemaining}
\end{align}
\end{clm}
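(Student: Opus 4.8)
The plan is to bound $R_{T,3}^{(i)}$ by splitting the summation over $t$ into the regime $t \le A_{\ceil{T^{\gamma_i/\beta}}}$ and the regime $t > A_{\ceil{T^{\gamma_i/\beta}}}$, and then further refining the latter according to whether $\tau^{(i)} \le \ceil{T^{\gamma_i/\beta}}$ or not. Recall that $R_{T,3}^{(i)} = \sum_{k \in \underline{S}^{(i)}} \Delta_k \E[\sum_{t=1+A_{\tau_{\text{spr}}}}^{A_{\tau^{(i)} \vee \ceil{T^{\gamma_i/\beta}}} \wedge T} \ind(I_t^{(i)} = k)]$. The idea is that once $t$ exceeds $A_{\ceil{T^{\gamma_i/\beta}}}$, the only way a non-sticky suboptimal arm $k$ is still being played is if $\tau^{(i)} > \ceil{T^{\gamma_i/\beta}}$, i.e., either the best arm has not yet spread everywhere ($\tau_{\text{spr}}$ is large) or agent $i$ is still willing to accept bad recommendations without blocking ($\tau_{\text{blk}}^{(i)}$ is large).

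\textbf{Step 1.} First I would handle the contribution from times $t \in \{1+A_{\tau_{\text{spr}}}, \ldots, A_{\ceil{T^{\gamma_i/\beta}}} \wedge T\}$. For these times, the best arm is active (since $t > A_{\tau_{\text{spr}}}$), so this is a standard UCB situation with an effective horizon $A_{\ceil{T^{\gamma_i/\beta}}}$. Invoking Claim~\ref{clmBanditPlays} and Corollary~\ref{corBanditTailDelta} from Appendix~\ref{appBanditRes} exactly as in the proof of Lemma~\ref{lemLateSticky}, and splitting each $\ind(I_t^{(i)} = k)$ according to whether $T_k^{(i)}(t-1)$ is below or above $4\alpha \log t / \Delta_k^2$, gives a contribution of at most $\sum_{k \in \underline{S}^{(i)}} 4\alpha \log(A_{\ceil{T^{\gamma_i/\beta}}})/\Delta_k + 4(\alpha-1)|\underline{S}^{(i)}|/(2\alpha-3)$, which is the first two terms on the right of the claimed bound.

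\textbf{Step 2.} Next I would handle times $t > A_{\ceil{T^{\gamma_i/\beta}}}$. On the event $\{\tau^{(i)} \le \ceil{T^{\gamma_i/\beta}}\}$, we have $A_{\tau^{(i)} \vee \ceil{T^{\gamma_i/\beta}}} = A_{\ceil{T^{\gamma_i/\beta}}}$, so the sum over $t$ is empty and these times contribute nothing. Hence, on the complementary event $\{\tau^{(i)} > \ceil{T^{\gamma_i/\beta}}\}$, I would crudely bound $\Delta_k \le 1$ and the number of plays by the length of the window, giving a contribution of at most $\E[(A_{\tau^{(i)}} \wedge T)\ind(\tau^{(i)} > \ceil{T^{\gamma_i/\beta}})]$ or so. The remaining work is to split according to whether $A_{\tau_{\text{spr}}}$ is large or small: on $\{A_{\tau_{\text{spr}}} \ge 4\alpha K \log T / \Delta_2\}$, I bound this contribution by $\E[A_{\tau_{\text{spr}}}]$ (the third additive term); on $\{A_{\tau_{\text{spr}}} < 4\alpha K \log T / \Delta_2\}$, I need to show that regret from the non-sticky arms past $A_{\ceil{T^{\gamma_i/\beta}}}$ is at most $\frac{4\alpha K \log T}{\Delta_2}\P(\tau^{(i)} > \ceil{T^{\gamma_i/\beta}}, A_{\tau_{\text{spr}}} < \tfrac{4\alpha K \log T}{\Delta_2})$ --- roughly, because on this event the best arm is active by time $4\alpha K\log T/\Delta_2$, and after that UCB plays each of the $\le K$ remaining suboptimal arms at most $O(\alpha \log T / \Delta_2^2)$ times, so the total number of suboptimal plays (times $\Delta_k \le 1$) is $O(\alpha K \log T / \Delta_2)$, conditioned on the bad event. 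Assembling these pieces gives \eqref{eqIntRemaining}.

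\textbf{The main obstacle} I anticipate is being careful about the interaction between $\tau_{\text{spr}}$ and $\tau_{\text{blk}}^{(i)}$ in the term $\tau^{(i)} = \tau_{\text{spr}} \vee \tau_{\text{blk}}^{(i)}$: the window for $R_{T,3}^{(i)}$ ends at $A_{\tau^{(i)} \vee \ceil{T^{\gamma_i/\beta}}}$, and I need to argue that even though blocking (the $\tau_{\text{blk}}^{(i)}$ part) may not have "kicked in" yet, once the best arm is active the standard UCB tail bounds still control the number of plays of any fixed suboptimal arm \emph{that happens to be active at time} $t$ --- the subtlety being that the set of active non-sticky arms is changing, so one cannot directly apply a single UCB instance but must sum the per-arm bounds over all $k \in \underline{S}^{(i)}$, accepting a worst-case factor of $K$ in the $\{A_{\tau_{\text{spr}}} < 4\alpha K \log T/\Delta_2\}$ branch. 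Getting the bookkeeping right so that the $\frac{4\alpha K\log T}{\Delta_2}$ coefficient (rather than something larger) multiplies the probability is the delicate part, but it follows the same template as the corresponding step in \cite{vial2021robust}.
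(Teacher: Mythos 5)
Your overall decomposition is the same as the paper's (split at phase $\ceil{T^{\gamma_i/\beta}}$, then split the late regime according to the event $\{\tau^{(i)}>\ceil{T^{\gamma_i/\beta}}\}$ and the size of $A_{\tau_{\text{spr}}}$), and Step 1 is correct. But Step 2 contains a genuine gap: you discard the UCB structure in the late window by "crudely bound[ing] the number of plays by the length of the window," arriving at $\E[(A_{\tau^{(i)}}\wedge T)\ind(\tau^{(i)}>\ceil{T^{\gamma_i/\beta}})]$, and then claim that on $\{A_{\tau_{\text{spr}}}\geq 4\alpha K\log(T)/\Delta_2\}$ this is bounded by $\E[A_{\tau_{\text{spr}}}]$. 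That step fails because $\tau^{(i)}=\tau_{\text{spr}}\vee\tau_{\text{blk}}^{(i)}$: the window can extend to $A_{\tau_{\text{blk}}^{(i)}}\wedge T$, and $\tau_{\text{blk}}^{(i)}$ is in no way controlled by $\tau_{\text{spr}}$ (e.g.\ $\tau_{\text{blk}}^{(i)}$ huge, $\tau_{\text{spr}}$ just above the threshold, so $A_{\tau^{(i)}}\wedge T=T\gg A_{\tau_{\text{spr}}}$). The same crude bound also cannot produce the coefficient $\frac{4\alpha K\log T}{\Delta_2}$ in the other branch; you gesture at the correct UCB argument there, but it is inconsistent with the window-length bound you committed to.

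The fix — which is what the paper does — is to never abandon the per-arm UCB play-count bound in the late window: since the best arm is active for every $t>A_{\tau_{\text{spr}}}$, Claim \ref{clmBanditPlays} bounds the number of plays of arm $k$ with $T_k^{(i)}(t-1)<4\alpha\log(t)/\Delta_k^2$ by $4\alpha\log(T)/\Delta_k^2$ regardless of how long the window $[1+A_{\tau_{\text{spr}}},A_{\tau^{(i)}}\wedge T]$ is, and Corollary \ref{corBanditTailDelta} handles the complementary indicator. Multiplying by $\Delta_k\ind(\tau^{(i)}>\ceil{T^{\gamma_i/\beta}})$ and summing over the at most $K$ arms gives a deterministic bound of $\frac{4\alpha K\log T}{\Delta_2}\ind(\tau^{(i)}>\ceil{T^{\gamma_i/\beta}})$ for the late-window regret; the paper then applies Markov's inequality to $\P(\tau^{(i)}>\ceil{T^{\gamma_i/\beta}})$, splitting it into $\P(\tau^{(i)}>\ceil{T^{\gamma_i/\beta}},A_{\tau_{\text{spr}}}<\tfrac{4\alpha K\log T}{\Delta_2})+\tfrac{\Delta_2\E[A_{\tau_{\text{spr}}}]}{4\alpha K\log T}$, whose second piece yields exactly the $\E[A_{\tau_{\text{spr}}}]$ term after multiplication. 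Your indicator split on $A_{\tau_{\text{spr}}}$ would also work, but only after the window-length bound is replaced by this UCB bound.
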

\begin{proof}
If $T=1$, we can naively bound $R_{T,3}^{(i)} \leq 1$, which completes the proof. Thus, we assume $T>1$ (which will allow us to divide by $\log T$ later). For any $k \in \underline{S}^{(i)}$, we first write
\begin{align}
& \E \left[ \sum_{t = 1 + A_{\tau_{\text{spr}}}}^{ A_{\tau^{(i)} \vee \ceil{ T^{\gamma_i/\beta} } }\wedge T } \ind \left( I_t^{(i)} = k \right) \right] \leq \E \left[ \sum_{t = 1 + A_{\tau_{\text{spr}}}}^T \ind \left( I_t^{(i)} = k , T_k^{(i)}(t-1) \geq \frac{4 \alpha \log t}{\Delta_k^2} \right) \right]
\label{eqInterProbTerm} \\
&  \quad + \E \left[ \ind ( \tau^{(i)}\leq \ceil{ T^{\gamma_i/\beta} }  ) \sum_{t = 1 + A_{\tau_{\text{spr}}}}^{ A_{\ceil{ T^{\gamma_i/\beta} } } \wedge T } \ind \left( I_t^{(i)} = k , T_k^{(i)}(t-1) < \frac{4 \alpha \log t}{\Delta_k^2} \right) \right] \label{eqInterAsTermSmallTau} \\
&  \quad + \E \left[ \ind ( \tau^{(i)}> \ceil{ T^{\gamma_i/\beta} }  ) \sum_{t = 1 + A_{\tau_{\text{spr}}}}^{ A_{\tau^{(i)}}\wedge T } \ind \left( I_t^{(i)} = k , T_k^{(i)}(t-1) < \frac{4 \alpha \log t}{\Delta_k^2} \right) \right]. \label{eqInterAsTermLargeTau} 
\end{align}
By Corollary \ref{corBanditTailDelta} from Appendix \ref{appBanditRes}, \eqref{eqInterProbTerm} is bounded by $4(\alpha-1)/(2\alpha-3)$. By Claim \ref{clmBanditPlays} from Appendix \ref{appBanditRes} and $\ind ( \cdot )  \leq 1$, \eqref{eqInterAsTermSmallTau} is bounded by $4 \alpha \log ( A_{\ceil{ T^{\gamma_i/\beta} } } ) / \Delta_k^2$. For \eqref{eqInterAsTermLargeTau}, Claim \ref{clmBanditPlays} similarly gives
\begin{equation} \label{eqInterAsTermLargeTauInitA}
\sum_{t = 1 + A_{\tau_{\text{spr}}}}^{ A_{\tau^{(i)}}\wedge T } \ind \left( I_t^{(i)} = k , T_k^{(i)}(t-1) < \frac{4 \alpha \log t}{\Delta_k^2} \right) \leq \frac{ 4 \alpha \log T }{ \Delta_k^2 } ,
\end{equation}
which clearly implies the following bound for \eqref{eqInterAsTermLargeTau}:
\begin{equation}\label{eqInterAsTermLargeTauInitB}
\E \left[ \ind ( \tau^{(i)}> \ceil{ T^{\gamma_i/\beta} }  ) \sum_{t = 1 + A_{\tau_{\text{spr}}}}^{ A_{\tau^{(i)}}\wedge T } \ind \left( I_t^{(i)} = k , T_k^{(i)}(t-1) < \frac{4 \alpha \log t}{\Delta_k^2} \right) \right] \leq \frac{ 4 \alpha \log T }{ \Delta_k^2 } \P ( \tau^{(i)}> \ceil{ T^{\gamma_i/\beta} }  ) .
\end{equation}
For the remaining probability term, by Markov's inequality, we have
\begin{equation}
\P ( \tau^{(i)}> \ceil{ T^{\gamma_i/\beta} }  )  \leq \P \left( \tau^{(i)}> \ceil{ T^{\gamma_i/\beta} } , A_{\tau_{\text{spr}}} < \frac{4 \alpha K \log T}{\Delta_2} \right) + \frac{ \Delta_2 \E [A_{\tau_{\text{spr}}}]  }{ 4 \alpha K \log T } .
\end{equation}
Hence, combining the previous two inequalities, and since $\Delta_2 \leq \Delta_k$, we can bound \eqref{eqInterAsTermLargeTau} by
\begin{equation}\label{eqInterAsTermLargeTauFin}
\frac{ 4 \alpha \log T }{ \Delta_2 \Delta_k } \P \left( \tau^{(i)}> \ceil{ T^{\gamma_i/\beta} } , A_{\tau_{\text{spr}}} < \frac{4 \alpha K \log T}{\Delta_2} \right) + \frac{\E [A_{ \tau_{\text{spr}}} ] }{ K \Delta_k } .
\end{equation}
Finally, combining these bounds, then multiplying by $\Delta_k$, summing over $k \in \underline{S}^{(i)}$, and using $\Delta_k \leq 1$ and $| \underline{S}^{(i)} | \leq K$, we obtain the desired result.
\end{proof}

To bound \eqref{eqIntRemaining}, we consider two cases defined in terms of the following inequalities:
\begin{gather} 
4 \alpha K \log (T) / \Delta_2 < \floor{ \theta_{ \ceil{T^{\gamma_i/\beta}} } }^\beta , \quad 4 \alpha K \log (T) / \Delta_2 < \kappa_{ \ceil{T^{\gamma_i/\beta}} } ,  \label{eqInterLargeT1} \\
\frac{4 \alpha \log A_j }{ \Delta_2^2 }  < \ceil{ \kappa_j } - 1\ \forall\  j \geq  \ceil{T^{\gamma_i/\beta}}  , \quad \log (T) \sum_{j = \ceil{ T^{\gamma_i/\beta} } }^\infty ( \ceil{\kappa_j} - 1 )^{ 3-2\alpha }  < \frac{ \Delta_2 ( 2 \alpha - 3 ) }{ 8 \alpha K^2  } . \label{eqInterLargeT2}
\end{gather}

Roughly speaking, when all of these inequalities hold, then $T$ is large enough to ensure that the event $\{ \tau^{(i)} = \Omega( \text{poly}(T) ) , A_{\tau_{\text{spr}}} = O(\log T) \}$ in \eqref{eqIntRemaining} is unlikely. The next claim makes this precise.
\begin{clm} \label{clmIntNonStickLargeT}
Under the assumptions of Theorem \ref{thmProposed}, for any $i \in [n]$, $\gamma_i \in (0,1)$, and $T \in \N$ such that all of the inequalities in \eqref{eqInterLargeT1}-\eqref{eqInterLargeT2} hold, we have 
\begin{equation}
\frac{4 \alpha K \log T}{\Delta_2} \P \left( \tau^{(i)}> \ceil{ T^{\gamma_i/\beta} } , A_{\tau_{\text{spr}}} < \frac{4 \alpha K \log T}{\Delta_2} \right)  \leq  1 .
\end{equation}
\end{clm}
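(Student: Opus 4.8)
The plan is to bound the probability directly. Write $M=4\alpha K\log(T)/\Delta_2$, so the goal is to show $\P\big(\tau^{(i)}>\ceil{T^{\gamma_i/\beta}},\,A_{\tau_{\text{spr}}}<M\big)\le 1/M$.

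First I would use the hypothesis $A_{\tau_{\text{spr}}}<M$ together with the first inequality in \eqref{eqInterLargeT1} to locate $\tau_{\text{spr}}$ early: since $\tau_{\text{spr}}^\beta\le A_{\tau_{\text{spr}}}<M<\floor{\theta_{\ceil{T^{\gamma_i/\beta}}}}^\beta$ and $\theta_j\le j$, we get $\tau_{\text{spr}}<\floor{\theta_{\ceil{T^{\gamma_i/\beta}}}}\le\ceil{T^{\gamma_i/\beta}}$. Consequently $\tau^{(i)}>\ceil{T^{\gamma_i/\beta}}$ forces $\tau_{\text{blk}}^{(i)}>\ceil{T^{\gamma_i/\beta}}$, i.e.\ there is a phase $j'\ge\ceil{T^{\gamma_i/\beta}}$ with $R_{j'-1}^{(i)}\ne 1$ at which \eqref{eqNewBlock} fails. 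But for every such $j'$ we have $\floor{\theta_{j'}}\ge\floor{\theta_{\ceil{T^{\gamma_i/\beta}}}}>\tau_{\text{spr}}$ and $B_{j''}^{(i)}=1$ for all $j''\ge\tau_{\text{spr}}$ (by definition of $\tau_{\text{spr}}$, whose defining set is a nonempty up-set of $\N$), so the second conjunct of \eqref{eqNewBlock} automatically holds at $j'$; hence the first conjunct must fail, i.e.\ $T_{R_{j'-1}^{(i)}}^{(i)}(A_{j'})>\kappa_{j'}$, with $k:=R_{j'-1}^{(i)}\in S_{j'}^{(i)}$ a suboptimal arm. Union-bounding over the at most $K$ suboptimal arms $k$ and over $j'\ge\ceil{T^{\gamma_i/\beta}}$, it then suffices to bound $\P\big(A_{\tau_{\text{spr}}}<M,\,k\in S_{j'}^{(i)},\,T_k^{(i)}(A_{j'})>\kappa_{j'}\big)$ for each such pair $(k,j')$.

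Next I would reduce each of these terms to the probability of a single ``bad'' UCB pull. On the event in question, $T_k^{(i)}(A_{\tau_{\text{spr}}})\le A_{\tau_{\text{spr}}}<M<\kappa_{j'}$ (the last step by the second inequality in \eqref{eqInterLargeT1} and monotonicity of $\kappa$), whereas $T_k^{(i)}(A_{j'})>\kappa_{j'}$; since $T_k^{(i)}$ is integer-valued and increments by one at each pull of $k$, a short argument (using integrality of $A_{\tau_{\text{spr}}}$) shows it passes through the value $\ceil{\kappa_{j'}}-1$ strictly after $A_{\tau_{\text{spr}}}$, i.e.\ there is a pull time $t_0\in(A_{\tau_{\text{spr}}},A_{j'}]$ with $T_k^{(i)}(t_0-1)=\ceil{\kappa_{j'}}-1$. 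Because $t_0>A_{\tau_{\text{spr}}}$, the best arm $1$ is active at phase $A^{-1}(t_0)$ (again by definition of $\tau_{\text{spr}}$, since $B_{A^{-1}(t_0)}^{(i)}=1\in S_{A^{-1}(t_0)}^{(i)}$), and the first inequality in \eqref{eqInterLargeT2} (with $\Delta_2\le\Delta_k$ and $t_0\le A_{j'}$) gives $T_k^{(i)}(t_0-1)=\ceil{\kappa_{j'}}-1>4\alpha\log(A_{j'})/\Delta_2^2\ge 4\alpha\log(t_0)/\Delta_k^2$. So the event is contained in $\bigcup_{t_0\ge\ceil{\kappa_{j'}}}\big\{\,1\in S_{A^{-1}(t_0)}^{(i)},\ I_{t_0}^{(i)}=k,\ T_k^{(i)}(t_0-1)\ge\ceil{4\alpha\log(t_0)/\Delta_k^2}\,\big\}$.

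Each event in this union is a pull of a suboptimal arm at a time when a better arm is active and has already been played enough, so by the per-step UCB estimate underlying Corollary \ref{corBanditTailDelta} (the one used in the proof of Lemma \ref{lemLateSticky}) it has probability at most $2t_0^{2-2\alpha}$; summing over $t_0\ge\ceil{\kappa_{j'}}$ via Claim \ref{clmPolySeries} bounds $\P\big(A_{\tau_{\text{spr}}}<M,\,k\in S_{j'}^{(i)},\,T_k^{(i)}(A_{j'})>\kappa_{j'}\big)\le\frac{2}{2\alpha-3}(\ceil{\kappa_{j'}}-1)^{3-2\alpha}$. Summing over the at most $K$ arms $k$ and over $j'\ge\ceil{T^{\gamma_i/\beta}}$ and invoking the second inequality in \eqref{eqInterLargeT2} then yields $\P\big(\tau^{(i)}>\ceil{T^{\gamma_i/\beta}},A_{\tau_{\text{spr}}}<M\big)\le\frac{2K}{2\alpha-3}\sum_{j\ge\ceil{T^{\gamma_i/\beta}}}(\ceil{\kappa_j}-1)^{3-2\alpha}<\frac{2K}{2\alpha-3}\cdot\frac{\Delta_2(2\alpha-3)}{8\alpha K^2\log T}=\frac{\Delta_2}{4\alpha K\log T}=\frac1M$, and multiplying through by $M$ gives the claim. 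I expect the main obstacle to be the UCB estimate in this last step: making the per-step bound rigorous for a pull with a \emph{fixed}, deterministically large sample count $\ceil{\kappa_{j'}}-1$ requires decoupling the empirical arm means from the algorithm's adaptive choices, which is exactly the content of the auxiliary bandit results in Appendix \ref{appBanditRes}; a secondary nuisance is the integer/ceiling bookkeeping needed to guarantee the bad pull $t_0$ really occurs after $A_{\tau_{\text{spr}}}$ and that all the inequalities in \eqref{eqInterLargeT1}--\eqref{eqInterLargeT2} apply exactly as stated.
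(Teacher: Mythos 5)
Your proposal is correct and follows essentially the same route as the paper's proof: reduce to $\tau_{\text{blk}}^{(i)}>\ceil{T^{\gamma_i/\beta}}$ via the first part of \eqref{eqInterLargeT1}, note the second blocking criterion holds automatically once $B_j^{(i)}=1$ persists so the first must fail, union-bound over arms, phases, and pull times, and apply Corollary \ref{corBanditTailEll} with $\ell=\ceil{\kappa_j}-1$ before summing via Claim \ref{clmPolySeries} and the last inequality of \eqref{eqInterLargeT2}. The only (immaterial) difference is how you certify that the best arm is active at the bad pull time: you locate the passage through $\ceil{\kappa_{j'}}-1$ strictly after $A_{\tau_{\text{spr}}}$ using $A_{\tau_{\text{spr}}}<M<\kappa_{j'}$, whereas the paper deduces $\tau_{\text{spr}}<\kappa_{\ceil{T^{\gamma_i/\beta}}}^{1/\beta}$ and hence $t\geq A_{\tau_{\text{spr}}}$ directly — both rest on the same inequality in \eqref{eqInterLargeT1}.
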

\begin{proof}
If $T = 1$, the left side is zero and the bound is immediate, so we assume $T>1$. First note that if $A_{\tau_{\text{spr}}} < 4 \alpha K \log(T) / \Delta_2$, then since $A_{\tau_{\text{spr}}} = \ceil{ \tau_{\text{spr}}^\beta } \geq \tau_{\text{spr}}^\beta$ by definition and $4 \alpha K \log(T) / \Delta_2 < \floor{ \theta_{ \ceil{T^{\gamma_i/\beta} } } }^\beta \wedge \kappa_{ \ceil{T^{\gamma_i/\beta}} }$ by \eqref{eqInterLargeT1}, $\tau_{\text{spr}} < \floor{ \theta_{ \ceil{T^{\gamma_i/\beta} } } } \wedge \kappa_{ \ceil{T^{\gamma_i/\beta}} }^{1/\beta}$. By definition $\theta_{ \ceil{T^{\gamma_i/\beta} } } = ( \ceil{T^{\gamma_i/\beta} } / 3 )^{\rho_1}$ with $\rho_1 \in (0,1)$, this further implies $\tau_{\text{spr}} < \ceil{T^{\gamma_i/\beta}}$. Thus, when $\tau^{(i)}> \ceil{ T^{\gamma_i/\beta} }$ and $A_{\tau_{\text{spr}}} < 4 \alpha K \log(T) / \Delta_2$, we have $\tau_{\text{spr}} < \tau^{(i)}$, which by definition of $\tau^{(i)}$ implies $\tau^{(i)} = \tau_{\text{blk}}^{(i)}$. Therefore, we can write
\begin{equation}
\P \left( \tau^{(i)}> \ceil{ T^{\gamma_i/\beta} } , A_{\tau_{\text{spr}}} < \frac{4 \alpha K \log T}{\Delta_2} \right) \leq \P ( \tau_{\text{blk}}^{(i)} > \ceil{ T^{\gamma_i/\beta} } , \tau_{\text{spr}} < \floor{ \theta_{ \ceil{T^{\gamma_i/\beta} } } } \wedge \kappa_{ \ceil{T^{\gamma_i/\beta}} }^{1/\beta} ) .
\end{equation}
Now by definition, $\tau_{\text{spr}} < \floor{ \theta_{ \ceil{T^{\gamma_i/\beta} } } }$ implies that $B_j^{(i)} = 1\ \forall\ j \geq \floor{ \theta_{ \ceil{T^{\gamma_i/\beta} } } }$. Also by definition, $\tau_{\text{blk}}^{(i)} > \ceil{ T^{\gamma_i/\beta} }$ implies that for some $j \geq \ceil{ T^{\gamma_i/\beta} }$ and $k > 1$, $R_{j-1}^{(i)} = k$ but $H_{j-1}^{(i)} \notin P_j^{(i)} \setminus P_{j-1}^{(i)}$. Thus,
\begin{align}
& \P ( \tau_{\text{blk}}^{(i)} > \ceil{ T^{\gamma_i/\beta} } , \tau_{\text{spr}} < \floor{ \theta_{ \ceil{T^{\gamma_i/\beta} } } } \wedge \kappa_{ \ceil{T^{\gamma_i/\beta}} }^{1/\beta} ) \\
& \quad \leq  \sum_{k=2}^K \sum_{j = \ceil{ T^{\gamma_i/\beta} } }^\infty \P ( R_{j-1}^{(i)} = k , H_{j-1}^{(i)} \notin P_j^{(i)} \setminus P_{j-1}^{(i)} , \tau_{\text{spr}} < \floor{ \theta_{ \ceil{T^{\gamma_i/\beta} } } } \wedge \kappa_{ \ceil{T^{\gamma_i/\beta}} }^{1/\beta} ) .
\end{align}
Now fix $k$ and $j$ as in the double summation. Again using $\tau_{\text{spr}} < \floor{ \theta_{ \ceil{T^{\gamma_i/\beta} } } } \Rightarrow B_j^{(i)} = 1\ \forall\ j \geq \floor{ \theta_{ \ceil{T^{\gamma_i/\beta} } } }$, the blocking rules implies that if $\tau_{\text{spr}} < \floor{ \theta_{ \ceil{T^{\gamma_i/\beta} } } }$, $R_{j-1}^{(i)} = k$, and $H_{j-1}^{(i)} \notin P_j^{(i)} \setminus P_{j-1}^{(i)}$, then $T_k^{(i)}(A_j) > \kappa_j$. Since $T_k^{(i)}(A_j) \in \N$, this means $T_k^{(i)}(A_j) \geq \ceil{ \kappa_j }$. Hence, there must exist some $t \in \{ \ceil{\kappa_j} , \ldots , A_j \}$ such that $T_k^{(i)}(t-1) \geq \ceil{ \kappa_j } - 1$ and $I_t^{(i)} = k$. Thus, taking another union bound,
\begin{align}
&  \P ( R_{j-1}^{(i)} = k , H_{j-1}^{(i)} \notin P_j^{(i)} \setminus P_{j-1}^{(i)} , \tau_{\text{spr}} < \floor{ \theta_{ \ceil{T^{\gamma_i/\beta} } } } \wedge \kappa_{ \ceil{T^{\gamma_i/\beta}} }^{1/\beta} )\\
& \quad \leq \sum_{ t = \ceil{\kappa_j} }^{A_j} \P ( T_k^{(i)}(t-1) \geq \ceil{ \kappa_j } - 1 , I_t^{(i)} = k , \tau_{\text{spr}} <  \kappa_{ \ceil{T^{\gamma_i/\beta}} }^{1/\beta}  ) .
\end{align}
Next, note $\tau_{\text{spr}} <  \kappa_{ \ceil{T^{\gamma_i/\beta}} }^{1/\beta}$ implies that for any $j \geq \ceil{ T^{\gamma_i/\beta} }$ and $t \geq \ceil{\kappa_j}$, we have $t \geq \ceil{ \kappa_{ \ceil{ T^{\gamma_i/\beta} } } } \geq \ceil{ \tau_{\text{spr}}^\beta } = A_{ \tau_{\text{spr}} }$, so $1 \in S_{A^{-1}(t)}^{(i)}$ by definition of $\tau_{\text{spr}}$. Therefore, for any $t \in \{ \ceil{\kappa_j} , \ldots , A_j \}$, 
\begin{equation}
\P ( T_k^{(i)}(t-1) \geq \ceil{ \kappa_j } - 1 , I_t^{(i)} = k , \tau_{\text{spr}} <  \kappa_{ \ceil{T^{\gamma_i/\beta}} }^{1/\beta}  )  \leq  \P ( T_k^{(i)}(t-1) \geq \ceil{ \kappa_j } - 1 , 1 \in S_{A^{-1}(t)}^{(i)} , I_t^{(i)} = k ) .
\end{equation}
Now let $k_1 = k$, $k_2 = 1$, and $\ell = \ceil{ \kappa_j } - 1$. Then $\mu_{k_2} - \mu_{k_1} = \Delta_k \geq \sqrt{ 4 \alpha \log(t) / ( \ceil{\kappa_j} - 1 ) }$ by definition and \eqref{eqInterLargeT2}, respectively. Therefore, we can use Corollary \ref{corBanditTailEll} from Appendix \ref{appBanditRes} to obtain
\begin{align}
\P ( T_k^{(i)}(t-1) \geq \ceil{ \kappa_j } - 1 , 1 \in S_{A^{-1}(t)}^{(i)} , I_t^{(i)} = k ) \leq 2 t^{2(1-\alpha)} .
\end{align}
Combining the above five inequalities, then using Claim \ref{clmPolySeries} from Appendix \ref{appBasicRes} and \eqref{eqInterLargeT2}, we obtain
\begin{align*}
\P \left( \tau^{(i)}> \ceil{ T^{\gamma_i/\beta} } , A_{\tau_{\text{spr}}} < \frac{4 \alpha K \log T}{\Delta_2} \right) & \leq 2 K \sum_{ j = \ceil{ T^{\gamma_i / \beta} } }^\infty \sum_{ t = \ceil{\kappa_j} }^\infty t^{2(1-\alpha) } \\
& \leq \frac{ 2 K }{  2 \alpha - 3  } \sum_{ j = \ceil{ T^{\gamma_i / \beta} } }^\infty ( \ceil{\kappa_j} - 1 )^{ 3-2\alpha} \leq \frac{ \Delta_2 }{ 4 \alpha K \log T } , 
\end{align*}
so multiplying both sides by $4 \alpha K \log(T) / \Delta$ completes the proof.
\end{proof}

On the other hand, when \eqref{eqInterLargeT1}-\eqref{eqInterLargeT2} fails, we can show that $T$ is bounded, and thus we bound the $\log T$ term in \eqref{eqIntRemaining} by a constant and the probability term by $1$, as shown in the following claim.
\begin{clm} \label{clmIntNonStickSmallT}
Under the assumptions of Theorem \ref{thmProposed}, there exists a constant $C_{\ref{clmIntNonStickSmallT}} > 0$ such that, for any $i \in [n]$, $\gamma_i \in (0,1)$, and $T \in \N$ for which any of the inequalities in \eqref{eqInterLargeT1}-\eqref{eqInterLargeT2} fails, we have
\begin{equation}
\frac{4 \alpha K \log T}{\Delta_2} \P \left( \tau^{(i)}> \ceil{ T^{\gamma_i/\beta} } , A_{\tau_{\text{spr}}} < \frac{4 \alpha K \log T}{\Delta_2} \right)  \leq \frac{ 4 C_{\ref{clmIntNonStickSmallT}} \alpha K }{ \Delta_2 \gamma_i } \log \frac{C_{\ref{clmIntNonStickSmallT}} K}{ \Delta_2 \gamma_i } . 
\end{equation}
\end{clm}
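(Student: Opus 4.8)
The plan is to prove the claim by establishing its quantitative content directly: whenever some inequality in \eqref{eqInterLargeT1}--\eqref{eqInterLargeT2} fails, the horizon $T$ is already small, in the sense that $\log T \le \frac{C}{\gamma_i}\log\frac{C K}{\Delta_2\gamma_i}$ for a constant $C$ depending only on $\alpha,\beta,\eta,\rho_1,\rho_2$. Granting this, the claim follows at once: bounding the probability in the statement by $1$, the left-hand side is at most $\frac{4\alpha K\log T}{\Delta_2}\le\frac{4C\alpha K}{\Delta_2\gamma_i}\log\frac{C K}{\Delta_2\gamma_i}$, so we may take $C_{\ref{clmIntNonStickSmallT}}=C$. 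Thus the whole argument reduces to four threshold computations, one per inequality, each showing that the inequality holds once $T$ exceeds some $T^\star$ with $\log T^\star = O\!\big(\gamma_i^{-1}\log(K/(\Delta_2\gamma_i))\big)$; taking $C_{\ref{clmIntNonStickSmallT}}$ to be the largest of the four resulting constants (and using that $\frac{C}{\gamma_i}\log\frac{C K}{\Delta_2\gamma_i}$ is increasing in $C$, while $\log\frac{C K}{\Delta_2\gamma_i}$ dominates each of $\log C$, $\log(K/\Delta_2)$, and $\log(1/\gamma_i)$) finishes it.

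The common setup I would use is that $j_0:=\ceil{T^{\gamma_i/\beta}}$ satisfies $T^{\gamma_i/\beta}\le j_0\le 2T^{\gamma_i/\beta}$, hence $\tfrac{\gamma_i}{\beta}\log T\le\log j_0\le\log 2+\tfrac{\gamma_i}{\beta}\log T$, together with the explicit forms $A_j=\ceil{j^\beta}$, $\theta_j=(j/3)^{\rho_1}$, $\kappa_j=j^{\rho_2}/(K^2S)$, the bound $S\le K-2\le K$, and the elementary fact $\log x\le\sqrt x$ for all $x>0$. For the first inequality, if $\floor{\theta_{j_0}}^\beta\le 4\alpha K\log(T)/\Delta_2$ then either $j_0<3\cdot 2^{1/\rho_1}$ (an absolute constant, which directly bounds $\log T$), or else $\theta_{j_0}\ge 2$ so $\floor{\theta_{j_0}}\ge\theta_{j_0}/2=(j_0/3)^{\rho_1}/2$; substituting $j_0\ge T^{\gamma_i/\beta}$ and using $\log x\le\sqrt x$ to absorb the stray $\log T$ on the right yields the desired bound. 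The second inequality is analogous and simpler, since $\kappa_{j_0}$ is already a monotone power of $j_0$. For the third, I would lower bound $\ceil{\kappa_j}-1-\tfrac{4\alpha\log A_j}{\Delta_2^2}\ge\tfrac{j^{\rho_2}}{K^2S}-1-\tfrac{4\alpha(\log 2+\beta\log j)}{\Delta_2^2}$, whose right side is eventually increasing and becomes positive once $j\ge j_1$ for an explicit $j_1=O\!\big((K^2S/\Delta_2^2)^{2/\rho_2}\big)$ (using $\log j\le j^{\rho_2/2}$ past an absolute constant); a failure of the third inequality for some $j\ge j_0$ then forces $j_0<j_1$, hence $\log T<\tfrac{\beta}{\gamma_i}\log j_1$.

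The delicate case, and the step I expect to be the main obstacle, is the fourth inequality. First I would isolate the range $j_0<j_2:=(2K^2S)^{1/\rho_2}$: there some summand has $\kappa_j\le 1$, so $\ceil{\kappa_j}-1=0$ and the series equals $+\infty$, so the inequality fails trivially, but then $\log T<\tfrac{\beta}{\gamma_i}\log j_2$ is small and we are done. In the complementary range $j_0\ge j_2$ every $\kappa_j\ge 2$, so $\ceil{\kappa_j}-1\ge\kappa_j/2$ and the series is bounded by $(2K^2S)^{2\alpha-3}\sum_{j\ge j_0}j^{\rho_2(3-2\alpha)}$, which converges precisely because $\rho_2>\tfrac{1}{2\alpha-3}$ is assumed in \eqref{eqParamRequire}; comparison with an integral bounds it by $C(K^2S)^{2\alpha-3}j_0^{\,1-\rho_2(2\alpha-3)}$, which tends to $0$ as $T\to\infty$ since $1-\rho_2(2\alpha-3)<0$. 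Demanding $\log(T)$ times this bound be $<\tfrac{\Delta_2(2\alpha-3)}{8\alpha K^2}$ again forces $\log T=O\!\big(\gamma_i^{-1}\log(K/(\Delta_2\gamma_i))\big)$. The two subtleties here, both genuinely governed by the constraint $\rho_2(2\alpha-3)>1$, are the well-definedness of the infinite series (the $0^{3-2\alpha}=+\infty$ terms must be peeled off as a separate small-$T$ case) and the precise rate at which its tail vanishes; the remaining three cases are standard "polynomial beats logarithm" estimates, and the only bookkeeping care needed is to verify that every threshold indeed has the form $\log T^\star=O\!\big(\gamma_i^{-1}\log(K/(\Delta_2\gamma_i))\big)$, for which $S\le K$ and the domination observation above suffice.
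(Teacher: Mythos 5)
Your proposal is correct and follows essentially the same route as the paper, which likewise reduces the claim to four computations of the form ``failure of an inequality in \eqref{eqInterLargeT1}--\eqref{eqInterLargeT2} implies $\log T \leq (C/\gamma_i)\log(CK/(\Delta_2\gamma_i))$'' (Claims \ref{clmSmallTtheta}--\ref{clmSmallTSum} in Appendix \ref{appIntCalc}), bounds the probability by $1$, and takes the maximum of the resulting constants; your treatment of the fourth inequality, including peeling off the $\ceil{\kappa_j}-1=0$ corner case and using $\rho_2(2\alpha-3)>1$ to control the tail, matches the paper's Claim \ref{clmSmallTSum}. The only cosmetic slip is that for $j_0<j_2$ the series need not literally be $+\infty$ (once $\kappa_{j_0}>1$ every summand is positive), but as you observe the conclusion in that range follows directly from $j_0<j_2$ without inspecting the series, so nothing is lost.
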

\begin{proof}
By Claims \ref{clmSmallTtheta}-\ref{clmSmallTSum} in Appendix \ref{appIntCalc}, we can set $C_{\ref{clmIntNonStickSmallT}} = \max_{i \in \{ \ref{clmSmallTtheta} , \ldots , \ref{clmSmallTSum} \} } C_i$ to ensure that, if any of the inequalities in \eqref{eqInterLargeT1}-\eqref{eqInterLargeT2} fail, then $\log T \leq ( C_{\ref{clmIntNonStickSmallT}} / \gamma_i ) \log ( C_{\ref{clmIntNonStickSmallT}}K / (\Delta_2 \gamma_i) )$. The claim follows after upper bounding the probability by $1$.
\end{proof}

Finally, Lemma \ref{lemIntNonSticky} follows by combining the previous three claims.

\subsection{Proof of Corollary \ref{corRegular}} \label{appProofCorRegular}

As discussed in the proof sketch, we will couple with the noiseless process. We define this process as follows: let $\{ \underline{H}_j^{(i)} \}_{j=1}^\infty$ be i.i.d.\ $\text{Uniform}(N_{\text{hon}}(i))$ random variables for each $i \in \N$, and
\begin{equation} \label{eqPullSets}
\underline{\mathcal{I}}_0 = \{ i^\star \} , \quad \underline{\mathcal{I}}_j  = \underline{\mathcal{I}}_{j-1} \cup \{ i \in [n] \setminus \underline{\mathcal{I}}_{j-1} :  \underline{H}_j^{(i)} \in \underline{\mathcal{I}}_{j-1}  \}\ \forall\ j \in \N .
\end{equation}
For the coupling, we first define
\begin{equation}
\sigma_0 = 0 , \quad \sigma_l = \inf \left\{ j > \sigma_{l-1} : \min_{i \in [n] } \sum_{j' = 1 + \sigma_{l-1} }^j \bar{Y}_{j'}^{(i)} \geq 1 \right\}\ \forall\ l \in \N .
\end{equation} 
Next, for each $i \in [n]$ and $l \in \N$, let $Z_l^{(i)} = \min \{ j \in \{ 1 + \sigma_{l-1} , \ldots , \sigma_l \} : \bar{Y}_{j}^{(i)} = 1 \}$. Note this set is nonempty, and since $Z_l^{(i)}$ is a deterministic function of $\{ \bar{Y}_j \}_{j=1}^\infty$, which is independent of $\{ \bar{H}_j^{(i)} \}_{j=1}^\infty$, $\bar{H}_{Z_l^{(i)}}^{(i)}$ is $\text{Uniform}(N_{\text{hon}}(i))$ for each $l \in \N$. Hence, we can set
\begin{equation*}
\underline{H}_j^{(i)} = \begin{cases} \bar{H}_{Z_l^{(i)}}^{(i)} , & \text{if } j = Z_l^{(i)} \text{ for some } l \in \N \\ \text{Uniform}(N_{\text{hon}}(i)) , & \text{if } j \notin \{ Z_l^{(i)} \}_{l=1}^\infty \end{cases}
\end{equation*}
without changing the distribution of $\{ \underline{\mathcal{I}}_j \}_{j=0}^\infty$. This results in a coupling where the noiseless process dominates the noisy one, in the following sense.

\begin{clm} \label{clmTwoRumorMono}
For the coupling described above, $\underline{\mathcal{I}}_j \subset \bar{\mathcal{I}}_{ \sigma_j }$ for any $j \geq 0$.
\end{clm}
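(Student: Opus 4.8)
The plan is to prove $\underline{\mathcal{I}}_j \subset \bar{\mathcal{I}}_{\sigma_j}$ by induction on $j$, leaning on two elementary facts and on the defining property of the coupling. The two facts, both immediate from the recursions \eqref{eqPullsSetsNoisy} and \eqref{eqPullSets}, are: (i) the sequences $\{\bar{\mathcal{I}}_j\}_{j=0}^\infty$ and $\{\underline{\mathcal{I}}_j\}_{j=0}^\infty$ are nondecreasing, since each update only adjoins vertices; and (ii) $\sigma_0 < \sigma_1 < \sigma_2 < \cdots$, and by the very definition of $\sigma_j$ the set $\{j' \in \{1+\sigma_{j-1},\ldots,\sigma_j\} : \bar{Y}_{j'}^{(i)} = 1\}$ is nonempty for every honest $i$, so that $Z_j^{(i)} \in \{1+\sigma_{j-1},\ldots,\sigma_j\}$ is well defined (and all $\sigma_j, Z_j^{(i)}$ are a.s.\ finite because $\Upsilon > 0$ in the nontrivial case $n \ge 2$). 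The coupling property we use is that $\underline{H}_j^{(i)} = \bar{H}_{Z_j^{(i)}}^{(i)}$: the target contacted by $i$ at its $j$-th (window-indexed) step of the noiseless process is exactly the target it uses at its first $\bar{Y}$-success $Z_j^{(i)}$ inside the window $\{1+\sigma_{j-1},\ldots,\sigma_j\}$ of the noisy process.

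For the base case $j=0$ there is nothing to prove, since $\underline{\mathcal{I}}_0 = \{i^\star\} = \bar{\mathcal{I}}_0 = \bar{\mathcal{I}}_{\sigma_0}$. For the inductive step, assume $\underline{\mathcal{I}}_{j-1} \subset \bar{\mathcal{I}}_{\sigma_{j-1}}$ and take any $i \in \underline{\mathcal{I}}_j$. If $i \in \underline{\mathcal{I}}_{j-1}$, then the inductive hypothesis together with fact (i) and $\sigma_{j-1} \le \sigma_j$ gives $i \in \bar{\mathcal{I}}_{\sigma_{j-1}} \subset \bar{\mathcal{I}}_{\sigma_j}$. Otherwise $i \in \underline{\mathcal{I}}_j \setminus \underline{\mathcal{I}}_{j-1}$, so \eqref{eqPullSets} gives $\underline{H}_j^{(i)} \in \underline{\mathcal{I}}_{j-1}$. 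Since $Z_j^{(i)} \ge 1 + \sigma_{j-1}$, fact (i) and the inductive hypothesis yield the chain $\underline{\mathcal{I}}_{j-1} \subset \bar{\mathcal{I}}_{\sigma_{j-1}} \subset \bar{\mathcal{I}}_{Z_j^{(i)}-1}$, so $\bar{H}_{Z_j^{(i)}}^{(i)} = \underline{H}_j^{(i)} \in \bar{\mathcal{I}}_{Z_j^{(i)}-1}$. Combined with $\bar{Y}_{Z_j^{(i)}}^{(i)} = 1$ (the definition of $Z_j^{(i)}$), the recursion \eqref{eqPullsSetsNoisy} applied at phase $Z_j^{(i)}$ forces $i \in \bar{\mathcal{I}}_{Z_j^{(i)}}$. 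Finally, $Z_j^{(i)} \le \sigma_j$ and fact (i) give $i \in \bar{\mathcal{I}}_{\sigma_j}$, completing the induction.

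I do not anticipate a serious obstacle, as the argument is essentially bookkeeping once the coupling is in place; the one place that requires care is keeping the two time scales aligned. The noiseless clock ticks once per window $\{1+\sigma_{j-1},\ldots,\sigma_j\}$, and within such a window the only moment at which one is entitled to reproduce a noiseless step in the noisy process is the first success time $Z_j^{(i)}$, because that is the only time the recommendation target is guaranteed to equal $\underline{H}_j^{(i)}$ and the noise coin is guaranteed to fire. The inequalities $\sigma_{j-1} < Z_j^{(i)} \le \sigma_j$ are precisely what make both monotonicity sandwiches — $\bar{\mathcal{I}}_{\sigma_{j-1}} \subset \bar{\mathcal{I}}_{Z_j^{(i)}-1}$ and $\bar{\mathcal{I}}_{Z_j^{(i)}} \subset \bar{\mathcal{I}}_{\sigma_j}$ — valid, so I would be careful to spell these out rather than leave them implicit.
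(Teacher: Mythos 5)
Your proof is correct and follows essentially the same route as the paper's: induction on $j$, splitting into the case $i \in \underline{\mathcal{I}}_{j-1}$ (handled by monotonicity) and the case where $i$ is newly informed (handled via $Z_j^{(i)}$, the coupling identity $\underline{H}_j^{(i)} = \bar{H}_{Z_j^{(i)}}^{(i)}$, and the sandwich $\sigma_{j-1} < Z_j^{(i)} \le \sigma_j$). Your explicit attention to the two monotonicity inclusions matches the paper's argument exactly.
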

\begin{proof}
We use induction on $j$. For $j = 0$, we simply have $\underline{\mathcal{I}}_j = \{ i^\star \} = \bar{\mathcal{I}}_j = \bar{\mathcal{I}}_{ \sigma_j }$. Now assume $\underline{\mathcal{I}}_{j-1} \subset \bar{\mathcal{I}}_{ \sigma_{j-1} }$; we aim to show that if $i \in \underline{\mathcal{I}}_j$, then $i \in \bar{\mathcal{I}}_{ \sigma_j }$. We consider two cases, the first of which is straightforward: if $i \in \underline{\mathcal{I}}_{j-1}$, then $i \in \bar{\mathcal{I}}_{ \sigma_{j-1} }$ by the inductive hypothesis, so since $\sigma_{j-1} < \sigma_j$ by definition and $\{ \bar{\mathcal{I}}_{j'} \}_{j'=0}^\infty$ increases monotonically, we obtain $i \in \bar{\mathcal{I}}_{ \sigma_{j} }$, as desired.

For the second case, we assume $i \in [n] \setminus \underline{\mathcal{I}}_{j-1}$ and $\underline{H}_j^{(i)} \in \underline{\mathcal{I}}_{j-1}$. Set $j' = Z_j^{(i)}$ and recall $j' \in \{ 1 + \sigma_{j-1} , \ldots , \sigma_j \}$ by definition. From the coupling above, we know $\bar{Y}_{ j' }^{(i)} = 1$ and $\bar{H}_{ j' }^{(i)} = \underline{H}_{ j }^{(i)}$. Since $\underline{H}_j^{(i)} \in \underline{\mathcal{I}}_{j-1}$ in the present case, we have $\bar{H}_{ j' }^{(i)}  \in \underline{\mathcal{I}}_{j-1}$ as well. Hence, because $\underline{\mathcal{I}}_{j-1} \subset \bar{\mathcal{I}}_{ \sigma_{j-1} }$ by the inductive hypothesis, $j' - 1 \geq \sigma_{j-1}$ by definition, and $\{ \bar{\mathcal{I}}_{j''} \}_{j''=0}^\infty$ is increasing, we obtain $\bar{H}_{ j' }^{(i)}  \in \bar{\mathcal{I}}_{j'-1}$. We have thus shown $\bar{Y}_{ j' }^{(i)} = 1$ and $\bar{H}_{ j' }^{(i)}  \in \bar{\mathcal{I}}_{j'-1}$, so $i \in \bar{\mathcal{I}}_{j'}$ by Definition \ref{defnRumor} Finally, using $j' \leq \sigma_j$ and again appealing to monotonocity, we conclude that $i \in \bar{\mathcal{I}}_{ \sigma_{j} }$.
\end{proof}

We can now relate the rumor spreading times of the two processes. In particular, let $\bar{\tau}_{\text{spr}} = \inf \{ j \in \N : \bar{\mathcal{I}}_j = [n] \}$ (as in Definition \ref{defnRumor}) and $\underline{\tau}_{\text{spr}} = \inf \{ j \in \N : \underline{\mathcal{I}}_j = [n] \}$. We then have the following.
\begin{clm} \label{clmTwoRumorProb}
For any $j \in \{3,4,\ldots\}$ and $\iota > 1$, we have $\P ( \bar{\tau}_{\text{spr}} > \iota  j \log(j) / \Upsilon  ) \leq \P ( \underline{\tau}_{\text{spr}} > j ) + 27 n j^{1-\iota}$.
\end{clm}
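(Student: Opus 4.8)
The plan is to transfer the tail of $\underline{\tau}_{\text{spr}}$ to that of $\bar{\tau}_{\text{spr}}$ through the monotone coupling of Claim~\ref{clmTwoRumorMono}, paying for it with a tail bound on the random time-change $\sigma_j$: a union bound splits the target event into the $\underline{\tau}_{\text{spr}}$-tail and a $\{\sigma_j \ge T\}$-event, and the latter is handled by decomposing $\sigma_j$ into $j$ geometric-type increments. Set $T=\iota j\log(j)/\Upsilon$ throughout.

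\emph{Step 1: reduce to a tail bound for $\sigma_j$.} By Claim~\ref{clmTwoRumorMono} and monotonicity of $\{\bar{\mathcal{I}}_{j'}\}$ and $\{\underline{\mathcal{I}}_{j'}\}$, the event $\{\underline{\tau}_{\text{spr}}\le j\}$ forces $\underline{\mathcal{I}}_j=[n]$ and hence $\bar{\mathcal{I}}_{\sigma_j}\supseteq\underline{\mathcal{I}}_j=[n]$, i.e.\ $\bar{\tau}_{\text{spr}}\le\sigma_j$; contrapositively $\{\bar{\tau}_{\text{spr}}>\sigma_j\}\subseteq\{\underline{\tau}_{\text{spr}}>j\}$. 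Since $\{\bar{\tau}_{\text{spr}}>T\}\subseteq\{\bar{\tau}_{\text{spr}}>\sigma_j\}\cup\{\sigma_j\ge T\}$ (if $\bar{\tau}_{\text{spr}}>T$ while $\sigma_j<T$, then $\bar{\tau}_{\text{spr}}>\sigma_j$), a union bound gives
\[
\P(\bar{\tau}_{\text{spr}}>T)\le\P(\underline{\tau}_{\text{spr}}>j)+\P(\sigma_j\ge T),
\]
so it remains to show $\P(\sigma_j\ge T)\le27nj^{1-\iota}$.

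\emph{Step 2: tail of $\sigma_j$ via its increments.} Write $\sigma_j=\sum_{l=1}^j(\sigma_l-\sigma_{l-1})$. Each $\sigma_{l-1}$ is a stopping time for the i.i.d.\ $\text{Bernoulli}(\Upsilon)$ array $\{\bar{Y}_{j'}^{(i)}\}_{i\in[n],\,j'\in\N}$, so by the strong Markov property $\sigma_l-\sigma_{l-1}$ has the same law as $\max_{i\in[n]}G^{(i)}$, where $\{G^{(i)}\}_{i\in[n]}$ are i.i.d.\ $\text{Geometric}(\Upsilon)$ (number of trials to the first success), independent of the pre-$\sigma_{l-1}$ history. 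By pigeonhole, $\{\sigma_j\ge T\}\subseteq\bigcup_{l=1}^j\{\sigma_l-\sigma_{l-1}\ge\iota\log(j)/\Upsilon\}$, so unioning over $l$ and then over $i$,
\[
\P(\sigma_j\ge T)\le jn\,\P\big(G^{(1)}\ge\iota\log(j)/\Upsilon\big).
\]
Now, for any $t>0$, $\P(G^{(1)}\ge t)=(1-\Upsilon)^{\lceil t\rceil-1}\le(1-\Upsilon)^{t-1}$; with $t=\iota\log(j)/\Upsilon$ and $j\ge3$ we have $t>\log3>1$ (so $t-1>0$), hence applying $1-x\le e^{-x}$ gives $(1-\Upsilon)^{t-1}\le e^{-\Upsilon(t-1)}=e^{\Upsilon}j^{-\iota}\le e\,j^{-\iota}$ (using $\Upsilon\le1$). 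Therefore $\P(\sigma_j\ge T)\le e\,nj^{1-\iota}\le27nj^{1-\iota}$, which completes the proof.

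\emph{Main obstacle.} The only genuinely non-routine point is the geometric tail estimate. Splitting $(1-\Upsilon)^{t-1}=(1-\Upsilon)^{-1}(1-\Upsilon)^{t}$ and only afterward bounding $(1-\Upsilon)^{t}\le e^{-\Upsilon t}=j^{-\iota}$ leaves a factor $(1-\Upsilon)^{-1}$ that blows up as $\Upsilon\to1$; alternatively, bounding $\lfloor t\rfloor\ge t/2$ yields only the insufficient rate $j^{-\iota/2}$, which, multiplied by $jn$, does not give $27nj^{1-\iota}$. Applying $1-x\le e^{-x}$ to $(1-\Upsilon)^{t-1}$ as a single block is exactly what reduces the correction to the harmless bounded factor $e^{\Upsilon}\le e$. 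Two edge cases are immediate: $\Upsilon>0$ always, since $G_{\text{hon}}$ connected with $n\ge2$ gives every honest agent an honest neighbor, so the increments $\sigma_l-\sigma_{l-1}$ are a.s.\ finite and the decomposition of $\sigma_j$ makes sense; and if $\Upsilon=1$ then $\sigma_j=j<\iota j\log j$ for $j\ge3$, so $\P(\sigma_j\ge T)=0$.
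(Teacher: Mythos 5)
Your proof is correct and follows essentially the same strategy as the paper: the same decomposition of $\{\bar{\tau}_{\text{spr}} > \iota j \log(j)/\Upsilon\}$ into the part where $\sigma_j$ is small (handled via the coupling of Claim~\ref{clmTwoRumorMono}, exactly as in the paper) plus a tail event for $\sigma_j$. The only difference is in bounding $\P(\sigma_j \geq \iota j \log(j)/\Upsilon)$: the paper iterates conditional bounds over the deterministic windows between $\ceil{h(j'-1)}$ and $\floor{h(j')}$, whereas you invoke the strong Markov property to write each increment $\sigma_l - \sigma_{l-1}$ as a maximum of $n$ i.i.d.\ geometrics and apply pigeonhole; both arguments are sound and yield the claimed $27 n j^{1-\iota}$ (yours with the slightly sharper constant $e$).
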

\begin{proof}
Let $h(0) = 0$ and $h(j') = \iota j' \log(j) / \Upsilon$ for each $j' \in \N$. Then clearly
\begin{equation} \label{eqTwoRumorProb}
\P ( \bar{\tau}_{\text{spr}} > \iota j \log(j) / \Upsilon  ) = \P ( \bar{\tau}_{\text{spr}} > h(j) , \sigma_j \leq h(j) ) + \P (  \bar{\tau}_{\text{spr}} > h(j) , \sigma_j > h(j) ) .
\end{equation}
For the first term in \eqref{eqTwoRumorProb}, by definition of $\bar{\tau}_{\text{spr}}$ and Claim \ref{clmTwoRumorMono}, we have
\begin{equation} \label{eqTwoRumorProbFirst}
\{ \bar{\tau}_{\text{spr}} > h(j) , \sigma_j \leq h(j) \} \subset \{ \bar{\tau}_{\text{spr}} > \sigma_j \} = \{ \bar{\mathcal{I}}_{\sigma_j} \neq [n] \} \subset \{ \underline{\mathcal{I}}_j \neq [n] \} = \{ \underline{\tau}_{\text{spr}} > j \} .
\end{equation}
For the second term in \eqref{eqTwoRumorProb}, we first observe that for any $j' \in \N$,
\begin{equation*}
\floor{h(j')} - \ceil{h(j'-1)} - 1 \geq h(j') - h(j'-1) - 3 = ( \iota \log(j) / \Upsilon ) - 3 > 0 ,
\end{equation*}
where the last inequality holds by assumption on $j$ and $\iota$. Thus, by the union bound, we can write
\begin{align}
 \P ( \sigma_{j'} > h(j') , \sigma_{j'-1} \leq h(j'-1) ) & \leq \sum_{i=1}^n \P ( \cap_{j'' = 1+ \ceil{h(j'-1)} }^{ \floor{h(j')} } \{ \bar{Y}_{j''}^{(i)} = 0 \} ) = n ( 1 - \Upsilon )^{ \floor{h(j')} - \ceil{h(j'-1)} - 1 } \\
&  \leq n ( 1 - \Upsilon )^{ ( \iota \log(j) / \Upsilon ) - 3 } \leq n \exp ( - \iota \log(j) + 3 \Upsilon ) < 27 n j^{-\iota}  .
\end{align}
Hence, because $\sigma_0 = 0 = h(0)$, we can iterate this argument to obtain that for any $j' \in \N$,
\begin{align} 
\P ( \sigma_{j'} > h(j') ) & \leq \P ( \sigma_{j'} > h(j') , \sigma_{j'-1} \leq h(j'-1) ) + \P (  \sigma_{j'-1} > h(j'-1) ) \\
& \leq 27 n j^{-\iota} + \P (  \sigma_{j'-1} > h(j'-1) ) \leq \cdots \leq 27 n j' j^{-\iota} .
\end{align}
In particular, $\P ( \sigma_j > h(j) ) \leq 27 n j^{1-\iota}$. Combining with \eqref{eqTwoRumorProb} and \eqref{eqTwoRumorProbFirst} completes the proof.
\end{proof}

To bound the tail of $\underline{\tau}_{\text{spr}}$, we will use the following result.
\begin{clm}[Lemma 19 from \cite{chawla2020gossiping}] \label{clmNoiselessTail}
Under the assumptions of Corollary \ref{corRegular}, there exists an absolute constant $C_{\ref{clmNoiselessTail}} > 0$ such that, for any $h \in \N$, we have $\P( \underline{\tau}_{\text{spr}} \geq C_{\ref{clmNoiselessTail}} h \log (n) / \phi ) \leq n^{-4h}$.
\end{clm}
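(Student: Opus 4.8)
The statement is quoted verbatim from \cite[Lemma~19]{chawla2020gossiping}, so the plan is essentially to cite it; what follows is the argument I would record for orientation. The process $\{\underline{\mathcal{I}}_j\}_{j\ge 0}$ defined in \eqref{eqPullSets} is exactly the synchronous \emph{pull} rumor protocol on the $d$-regular honest subgraph $G_{\text{hon}}$: at each round every still-uninformed vertex $i$ queries a uniformly random neighbour $\underline{H}_j^{(i)}\in N_{\text{hon}}(i)$ and becomes informed iff that neighbour already was, and these events are independent across vertices. The goal is to show the rumor reaches all $n$ vertices within $O(h\log n/\phi)$ rounds except with probability $n^{-4h}$.

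The backbone is the standard conductance estimate. Writing $S=[n]\setminus\underline{\mathcal{I}}_j$ for the uninformed set and $p_u=|N_{\text{hon}}(u)\cap\underline{\mathcal{I}}_j|/d$ for the probability that $u\in S$ is informed in round $j+1$, the edge-conductance bound gives $\sum_{u\in S}p_u\ge\phi\min(|S|,n-|S|)$ (the left-hand side is the number of $S$-to-$\underline{\mathcal{I}}_j$ edges divided by $d$). I would split the run into a \emph{growth} regime $|\underline{\mathcal{I}}_j|\le n/2$, where this yields $\E[\,|\underline{\mathcal{I}}_{j+1}|\,]\ge(1+\phi)|\underline{\mathcal{I}}_j|$, and a \emph{shrink} regime $|\underline{\mathcal{I}}_j|\ge n/2$, where it yields $\E[\,|S_{j+1}|\,]\le(1-\phi)|S_j|$. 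In each regime a Chernoff bound applied to the independent Bernoulli increments upgrades the expectation to an actual multiplicative change by a $(1\pm\phi/2)$ factor with probability $1-\exp(-c\phi|\cdot|)$ for an absolute constant $c>0$, which is polynomially small in $n$ once the relevant set size exceeds $\Theta(\log n/\phi)$; geometric growth (resp.\ decay) then carries the process across each regime in $O(\log n/\phi)$ rounds. A union bound over these $O(\log n/\phi)$ rounds gives full spread with probability $1-n^{-\Omega(1)}$, and to reach the exponent $4h$ I would run $h$ consecutive independent blocks of $O(\log n/\phi)$ rounds and note that failure requires every block to fail; choosing $C_{\ref{clmNoiselessTail}}$ to absorb the constants then gives the bound $n^{-4h}$.

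The delicate point — and the reason \cite{chawla2020gossiping} (following the techniques of \cite{chierichetti2010almost}) does real work here rather than a one-line Chernoff argument — is the early part of the growth regime, where $|\underline{\mathcal{I}}_j|$ is below the $\Theta(\log n/\phi)$ concentration threshold: there the Bernoulli increments are too few to concentrate, they are ``lumpy'', and the naive substitute ``$|\underline{\mathcal{I}}_{j+1}|\ge|\underline{\mathcal{I}}_j|+1$ with probability $\ge\phi$'' costs $\Theta(\log n/\phi^2)$ rounds instead of $\Theta(\log n/\phi)$. Handling it requires a more careful potential/second-moment argument showing that multiplicative growth survives on average across the $O(\log(1/\phi))=O(\log n)$ doubling levels below the threshold, with constant-probability stalls absorbed without inflating the round count. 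Since this is exactly the content of \cite[Lemma~19]{chawla2020gossiping} and its proof draws on \cite{chierichetti2010almost}, I would present Claim~\ref{clmNoiselessTail} as a direct citation, with the sketch above included only for context.
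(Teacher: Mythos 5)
Your proposal matches the paper exactly: Claim \ref{clmNoiselessTail} is presented there as a direct citation of Lemma 19 from \cite{chawla2020gossiping} with no proof given, which is precisely what you do. The orientation sketch of the conductance-based pull-protocol argument is a reasonable accurate summary of the cited result but is not needed for the paper's purposes.
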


Using the previous two claims, we can prove a tail bound for $\bar{\tau}_{\text{spr}}$.
\begin{clm} \label{clmNoisyTail}
Under the assumptions of Corollary \ref{corRegular}, there exists an absolute constant $C_{\ref{clmNoisyTail}} > 0$ such that, for any $h \in \{3,4,\ldots\}$, we have $\P ( \bar{\tau}_{\text{spr}} \geq \xi(h) ) \leq 56 \cdot 2^{-h}$, where we define
\begin{equation*}
\xi(h) =  C_{\ref{clmNoisyTail}} ( \log n )^2 h^3 \log ( C_{\ref{clmNoisyTail}} \log ( n ) / \phi ) / (\phi \Upsilon ) .
\end{equation*}
\end{clm}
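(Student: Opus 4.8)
The plan is to combine the two tail bounds already in hand. Claim~\ref{clmTwoRumorProb} transfers a tail bound on the noiseless spreading time $\underline{\tau}_{\text{spr}}$ into one on the noisy spreading time $\bar{\tau}_{\text{spr}}$, at the price of inflating the threshold by a factor $\iota\log(j)/\Upsilon$ and adding an error term $27nj^{1-\iota}$; and Claim~\ref{clmNoiselessTail} (Lemma~19 of \cite{chawla2020gossiping}) supplies the geometric-type tail $\P(\underline{\tau}_{\text{spr}}\ge C_{\ref{clmNoiselessTail}}h'\log(n)/\phi)\le n^{-4h'}$ for $d$-regular $G_{\text{hon}}$. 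The whole proof is essentially a matter of choosing the free parameters $j$ and $\iota$ in Claim~\ref{clmTwoRumorProb} appropriately and then verifying that the inflated threshold is still dominated by $\xi(h)$.

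Concretely, I would fix $h\in\{3,4,\dots\}$, set the integer $j=\max\{3,\lceil C_{\ref{clmNoiselessTail}}h\log(n)/\phi\rceil\}$, and apply Claim~\ref{clmNoiselessTail} with $h'=h$ to get $\P(\underline{\tau}_{\text{spr}}>j)\le\P(\underline{\tau}_{\text{spr}}\ge C_{\ref{clmNoiselessTail}}h\log(n)/\phi)\le n^{-4h}\le 2^{-h}$, using $n\ge 2$ and $j\ge C_{\ref{clmNoiselessTail}}h\log(n)/\phi$. I would then take $\iota=1+(\log n+h)/\log j$, which is $>1$ and satisfies $\iota\log(j)=\log j+\log n+h>3\ge 3\Upsilon$, so that the implicit requirement $\iota\log(j)/\Upsilon>3$ used inside the proof of Claim~\ref{clmTwoRumorProb} holds. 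With this $\iota$ one has $nj^{1-\iota}=n\,e^{-(\log n+h)}=e^{-h}<2^{-h}$, so Claim~\ref{clmTwoRumorProb} gives $\P(\bar{\tau}_{\text{spr}}>\iota j\log(j)/\Upsilon)\le 2^{-h}+27\cdot 2^{-h}=28\cdot 2^{-h}\le 56\cdot 2^{-h}$, which is the asserted probability bound once we know $\iota j\log(j)/\Upsilon\le\xi(h)$.

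The remaining work is to certify $\iota j\log(j)/\Upsilon\le\xi(h)$ for a suitable absolute $C_{\ref{clmNoisyTail}}$. After disposing of the degenerate regimes ($C_{\ref{clmNoiselessTail}}h\log(n)/\phi<1$ just forces $j=3$, and $n=2$, if not excluded by $d\ge 2$, is handled by hand), I would use the crude bounds $j\le C'h\log(n)/\phi$, $\iota\le 1+\log n+h\le 3h\log n$, and $\log j\le C'h\log(C'\log(n)/\phi)$ — the last by writing $\log j\le\log C'+\log h+\log(\log(n)/\phi)$, using $\log h\le h$, and absorbing the remaining $+1$ via $1+2\log x=\log(ex^2)\le 2\log(e^{1/2}x)$ — for an absolute constant $C'$. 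Multiplying these three estimates gives $\iota j\log(j)/\Upsilon\le C''(\log n)^2h^3\log(C''\log(n)/\phi)/(\phi\Upsilon)$; taking $C_{\ref{clmNoisyTail}}\ge C''$ and invoking monotonicity of the logarithm yields $\iota j\log(j)/\Upsilon\le\xi(h)$, hence $\{\bar{\tau}_{\text{spr}}\ge\xi(h)\}\subseteq\{\bar{\tau}_{\text{spr}}>\iota j\log(j)/\Upsilon\}$, and the claim follows.

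The only delicate point I anticipate is the constant-chasing in this last step: $C_{\ref{clmNoisyTail}}$ appears both as the prefactor of $\xi(h)$ and inside its logarithm, so one must check (via monotonicity of $\log$) that enlarging $C_{\ref{clmNoisyTail}}$ is self-consistent, and one must make sure the powers $(\log n)^2$ and $h^3$ together with the factor $\log(C_{\ref{clmNoisyTail}}\log(n)/\phi)$ genuinely absorb the product $\iota\cdot j\cdot\log j$ uniformly over $h\ge 3$ and over all admissible $n,\phi,\Upsilon$, in particular when $\log n$ or $\log(\log(n)/\phi)$ is only $O(1)$. This is bookkeeping rather than a conceptual obstacle.
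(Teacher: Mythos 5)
Your proposal is correct and follows essentially the same route as the paper: invoke Claim~\ref{clmTwoRumorProb} with a threshold $j\approx C_{\ref{clmNoiselessTail}}h\log(n)/\phi$, control the noiseless tail via Claim~\ref{clmNoiselessTail}, and verify that $\iota j\log(j)/\Upsilon$ fits under $\xi(h)$ by constant-chasing. The only (immaterial) difference is the choice of $\iota$ — you take $\iota=1+(\log n+h)/\log j$ so that $nj^{1-\iota}=e^{-h}$ exactly, whereas the paper takes $\iota=h\log n$ — and you correctly flag and verify the implicit requirement $\iota\log(j)/\Upsilon>3$ from the proof of Claim~\ref{clmTwoRumorProb}, which both parameter choices satisfy.
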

\begin{proof}
Since $G_{\text{hon}}$ is $d$-regular with $d \geq 2$ by assumption, we have $n \geq 2$ as well. Therefore, setting $C_{\ref{clmNoisyTail}} = ( 2 C_{\ref{clmNoiselessTail}} ) \vee ( (e+1) / \log(2) )$, we know $\log ( C_{\ref{clmNoisyTail}}  \log (n) / \phi ) \geq 1$, which implies
\begin{equation*}
(h-1) \log ( C_{\ref{clmNoisyTail}} \log (n) / \phi ) \geq h  - 1 \geq \log h \quad \Rightarrow \quad h \log ( C_{\ref{clmNoisyTail}} \log (n) / \phi ) \geq \log ( C_{\ref{clmNoisyTail}} h \log (n) / \phi ) .
\end{equation*}
Consequently, if we define $\iota = h \log n$ and $j = \floor{C_{\ref{clmNoisyTail}} h \log(n) / \phi}$, we can write
\begin{align}
\xi(h) & = \left( h \log n \right)  \left( C_{\ref{clmNoisyTail}} h \log(n) / \phi \right)  \left( h \log ( C_{\ref{clmNoisyTail}} \log (n) / \phi ) \right) / \Upsilon \\
&  \geq \left( h \log n \right)  \left( C_{\ref{clmNoisyTail}} h \log(n) / \phi \right)  \log ( C_{\ref{clmNoisyTail}} h \log (n) / \phi ) / \Upsilon \geq \iota j \log(j) / \Upsilon .
\end{align}
Because $C_{\ref{clmNoisyTail}} \geq (e+1) / \log(2)$, $h \geq 3$, and $n \geq 2$, we also know
\begin{equation} \label{eqNoisyTailJlow}
j \geq C_{\ref{clmNoisyTail}} \cdot h \cdot \log(n)  - 1 \geq ( (e+1) / \log(2) ) \cdot 3 \cdot \log(2) - 1 = 3 (e+1) - 1 > 3 e > e^2 .
\end{equation}
Hence, $j \in \{3,4,\ldots\}$; combined with $\iota \geq 3 \log 2 > 1$, we can apply Claim \ref{clmTwoRumorProb} to obtain
\begin{equation} \label{eqNoisyTailJApp1}
\P \left( \bar{\tau}_{\text{spr}} > \xi(h) \right) \leq \P \left( \bar{\tau}_{\text{spr}} > 
\iota j \log (j) / \Upsilon \right) \leq \P \left( \underline{\tau}_{\text{spr}} > j \right) + 27 n j^{1-\iota} .
\end{equation}
On the other hand, \eqref{eqNoisyTailJlow} implies $C_{\ref{clmNoisyTail}} h \log(n) / \phi \geq 2$, so by definition of $C_{\ref{clmNoisyTail}}$,
\begin{equation*}
j \geq (C_{\ref{clmNoisyTail}} h \log(n) / \phi ) - 1 \geq ( C_{\ref{clmNoisyTail}} h \log(n) /  \phi ) / 2 = ( C_{\ref{clmNoisyTail}} / 2 ) h \log(n) / \phi \geq C_{\ref{clmNoiselessTail}} h \log(n) / \phi .
\end{equation*}
Therefore, by Claim \ref{clmNoiselessTail}, we know that 
\begin{equation*}
\P \left( \underline{\tau}_{\text{spr}} > j \right) \leq \P \left( \underline{\tau}_{\text{spr}} >  C_{\ref{clmNoiselessTail}} h \log (n) / \phi \right) \leq n^{-4h} < n^{1-h} .
\end{equation*}
Finally, notice that $\iota = h \log n \geq 3 \log 2 > 2$, so $1-\iota < -\iota/2$, thus by \eqref{eqNoisyTailJlow},
\begin{equation*}
27 n j^{1-\iota} < 27 n j^{-\iota/2} = 27 n \sqrt{j}^{-\iota} < 2 7 n \exp ( - \iota ) = 27 n^{1-h}  .
\end{equation*}
Hence, substituting the previous two inequalities into \eqref{eqNoisyTailJApp1} and using $n \geq 2$ completes the proof.
\end{proof}

We now bound $\E [A_{2 \bar{\tau}_{\text{spr}}}]$. First, we define
\begin{equation*}
\chi = \ceil{C_{\ref{clmNoisyTail}} ( \log n )^2 \log ( C_{\ref{clmNoisyTail}} \log(n) / \phi ) / ( \phi \Upsilon ) } , \quad C = 8 \beta / \log 2 , \quad h_\star = \ceil{C \log C} \vee 3 .
\end{equation*}
Notice that for any $h \geq h_\star  \geq C \log C$, we have $2^{-h} \leq h^{-4 \beta}$ (else, we can invoke Claim \ref{clmLogTrick} from Appendix \ref{appBasicRes} with $x = h$, $y=1$, and $z = C/2$ to obtain $h < C \log C$, a contradiction). We write
\begin{equation} \label{eqCondInit}
\sum_{j=1}^\infty ( A_j - A_{j-1} ) \P ( \bar{\tau}_{\text{spr}} \geq j/2 ) \leq A_{ 2 \chi h_\star^3  } + \sum_{h=h_\star}^\infty \sum_{j =  2 \chi h^3 + 1 }^{ 2 \chi (h+1)^3 }  ( A_j - A_{j-1} ) \P ( \bar{\tau}_{\text{spr}} \geq j/2 ) .
\end{equation}
Now for any $h \geq h_\star$ and $j \geq 2 \chi h^3+1$, we use $2^{-h} \leq h^{-4\beta}$, the definition of $\chi$, and Claim \ref{clmNoisyTail} to write
\begin{equation*}
\P ( \bar{\tau}_{\text{spr}} \geq j/2 ) \leq \P ( \bar{\tau}_{\text{spr}} \geq \chi h^3 ) \leq \P ( \bar{\tau}_{\text{spr}} \geq \xi(h) ) \leq 56 \cdot 2^{-h} \leq 56 h^{-4 \beta} .
\end{equation*}
Therefore, for any such $h$, we obtain
\begin{equation*}
\sum_{j =  2 \chi h^3 + 1 }^{ 2 \chi (h+1)^3 }  ( A_j - A_{j-1} ) \P ( \bar{\tau}_{\text{spr}} \geq j/2 ) \leq 56 h^{-4 \beta} \sum_{j =  2 \chi h^3 + 1 }^{ 2 \chi (h+1)^3 }  ( A_j - A_{j-1} ) \leq 56 h^{-4 \beta} A_{ 2 \chi (h+1)^3 } .
\end{equation*}
Furthermore, by Claim \ref{clmPropAj} in Appendix \ref{appBasicRes} and $h \geq 1$, we know that
\begin{equation*}
A_{ 2 \chi (h+1)^3 } \leq e^{2\beta} ( 2 \chi (h+1)^3 )^{\beta} \leq e^{2 \beta} ( 2 \chi ( 2 h )^3 )^\beta = ( 4 e )^{2 \beta} \chi^\beta h^{3\beta}\ \forall\ h \geq 1 .
\end{equation*}
Therefore, by the previous two inequalities and \eqref{eqCondInit}, and since $h_\star \geq 3$, we have shown
\begin{equation*}
\sum_{j=1}^\infty ( A_j - A_{j-1} ) \P ( \bar{\tau}_{\text{spr}} \geq j/2 ) \leq ( 4 e )^{2 \beta} \chi^\beta \left( h_\star^{3 \beta} + 56 \sum_{h=h_\star}^\infty h^{-\beta} \right) \leq ( 4 e )^{2 \beta} \chi^\beta \left( h_\star^{3 \beta} + \frac{56}{\beta-1} \right) ,
\end{equation*}
where the second inequality follows from Claim \ref{clmPolySeries} from Appendix \ref{appBasicRes}, $h_\star \geq 3$, and $\beta> 1$. Because $h_\star$ is a constant, the right side is $O(\chi^\beta)$. Therefore, we have shown
\begin{equation*}
\E [_{ 2 \bar{\tau}_{\text{spr}} } ]= \E \left[ \sum_{j=1}^\infty (A_j-A_{j-1}) \ind ( 2 \bar{\tau}_{\text{spr}} \geq j ) \right] = \sum_{j=1}^\infty  (A_j-A_{j-1}) \P (  \bar{\tau}_{\text{spr}} \geq j / 2 ) = O ( \chi^\beta ) .
\end{equation*}
Hence, by definition of $\chi$, we obtain $\E [A_{ 2 \bar{\tau}_{\text{spr}} }] = O ( ( ( \log n )^2 \log( \log(n) / \phi ) / ( \phi \Upsilon ) )^\beta )$. Combining this bound with Theorem \ref{thmProposed} completes the proof of Corollary \ref{corRegular}.

\subsection{Proof of Corollary \ref{corNoMal}}

Similar to the analysis in Appendix \ref{appProofThmProposed}, we can use the decomposition $R_T^{(i)} = \sum_{h=1}^4 R_{T,h}^{(i)}$, along with Lemmas \ref{lemEarly} and \ref{lemLateSticky}, to bound regret as follows:
\begin{align} \label{eqNoMalInit}
R_T^{(i)} \leq \sum_{k \in \overline{S}^{(i)}}  \frac{4 \alpha \log  T }{\Delta_k} + \frac{ 4 ( \alpha-1 ) | \overline{S}^{(i)} | }{ 2 \alpha - 3 } + R_{T,3}^{(i)} + R_{T,4}^{(i)} + \E [A_{\tau_{\text{spr}}}] .
\end{align}
Next, for each $k \in \underline{S}^{(i)}$, let $Y_k = \ind(\cup_{j=\tau_{\text{spr}}}^\infty \{k\in S_j^{(i)}\})$ be the indicator that $k$ was active after $A_{\tau_{\text{spr}}}$. Then as in the proof of Lemma \ref{lemLateSticky}, we can use Claim \ref{clmBanditPlays} and Corollary \ref{corBanditTailDelta} from Appendix \ref{appBanditRes} to write
\begin{equation} \label{eqNoMalYk}
\E \left[ \sum_{t = 1 + A_{\tau_{\text{spr}}}}^T \ind \left( I_t^{(i)} = k \right) \right]  \leq \E [ Y_k ] \frac{ 4 \alpha \log T}{ \Delta_k^2} + \frac{ 4(\alpha-1)}{2\alpha-3} .
\end{equation}
(The only difference from the proof of Lemma \ref{lemLateSticky} is that, when applying Claim \ref{clmBanditPlays}, we write
\begin{align*}
\sum_{t = 1 + A_{\tau_{\text{spr}}}}^T \ind \left( I_t^{(i)} = k , T_k^{(i)}(t-1) < \frac{4 \alpha \log t}{\Delta_k^2} \right) \leq Y_k \frac{ 4 \alpha \log T}{ \Delta_k^2} ,
\end{align*}
where we can multiply by $Y_k$ because the left side is also zero when $Y_k = 0$.) Combining \eqref{eqNoMalYk} with the definitions of $R_{T,3}^{(i)}$ and $R_{T,4}^{(i)}$ and using $\Delta_k \leq 1$, we thus obtain
\begin{equation*}
R_{T,3}^{(i)} + R_{T,4}^{(i)} \leq \E \left[  \sum_{k \in \underline{S}^{(i)}} Y_k \frac{4 \alpha \log T}{\Delta_k} \right] + \frac{ 4 ( \alpha-1 ) | \underline{S}^{(i)} | }{ 2 \alpha - 3 }  .
\end{equation*}
We claim, and will return to prove, that when $d_{\text{mal}}(i) = 0$,
\begin{equation} \label{eqNoMalKey}
\sum_{k \in \overline{S}^{(i)}}  \frac{ 1 }{\Delta_k}  + \sum_{k \in \underline{S}^{(i)}} \frac{Y_k}{\Delta_k} \leq \sum_{k=2}^{S+2} \frac{1}{\Delta_k} .
\end{equation}
Assuming \eqref{eqNoMalKey} holds, we can combine the previous two inequalities and substitute into \eqref{eqNoMalInit} to obtain $R_T^{(i)} \leq 4 \alpha \log(T) \sum_{k=2}^{S+2} \Delta_k^{-2} + O(K) + \E [A_{\tau_{\text{spr}}}]$. Bounding $\E [A_{\tau_{\text{spr}}}]$ as in Lemma \ref{lemEarly} yields the sharper version of Theorem \ref{thmProposed}, and further bounding $\E [A_{2\bar{\tau}_{\text{spr}}}]$ as in Appendix \ref{appProofCorRegular} sharpens Corollary \ref{corNoMal}.

To prove \eqref{eqNoMalKey}, we first show $\sum_{k \in \underline{S}^{(i)}} Y_k \leq 1 + \ind ( 1 \in \hat{S}^{(i)} )$. Suppose instead that $\sum_{k \in \underline{S}^{(i)}} Y_k \geq 2 + \ind ( 1 \in \hat{S}^{(i)} ) \triangleq H$. Then we can find $H$ distinct arms $k_1 , \ldots , k_H \in \underline{S}^{(i)}$, and $H$ corresponding phases $j_h \geq \tau_{\text{spr}}$, such that $k_h$ was active at phase $j_h$ for each $h \in [H]$. Without loss of generality, we can assume each $j_h$ is minimal, i.e., $j_h = \min \{ j \geq \tau_{\text{spr}}: k_h \in S_j^{(i)} \}$. We consider two cases (which are exhaustive since $j_h \geq \tau_{\text{spr}}$) and show that both yield contradictions.
\begin{itemize}
\item $j_h = \tau_{\text{spr}}\ \forall\ h \in [H]$: We consider two further sub-cases.
\begin{itemize}
\item $1 \in \hat{S}^{(i)}$, i.e., the best arm is sticky. Then $H=3$, so $k_1,\ldots,k_3$ are all active at phase $\tau_{\text{spr}}$. But all of these arms are non-sticky and only two such arms are active per phase.
\item $1 \notin \hat{S}^{(i)}$. Here $k_1 , k_2$ are both active at phase $\tau_{\text{spr}}$, as is $1$ (by definition of $\tau_{\text{spr}}$). But since $k_1$ and $k_2$ are suboptimal, we again have three non-sticky active arms.
\end{itemize}
\item $\max_{h \in [H]} j_h > \tau_{\text{spr}}$: We can assume (after possibly relabeling) that $j_1 > \tau_{\text{spr}}$. Thus, by minimality of $j_1$, $k_1$ was not active at phase $j_1-1$ but became active at $j_1$, so it was recommended by some neighbor $i'$ at $j_1-1$. But since $d_{\text{mal}}(i) = 0$, $i'$ is honest, and since $j_1-1 \geq \tau_{\text{spr}}$, the best arm was most played for $i'$ in phase $j_1-1$, so $i'$ would not have recommended $k_1$.
\end{itemize}
Thus, $\sum_{k \in \underline{S}^{(i)}} Y_k \leq 1 + \ind ( 1 \in \hat{S}^{(i)} )$ holds. Combined with the fact that $|\overline{S}^{(i)}| = S - \ind ( 1 \in \hat{S}^{(i)} )$ by definition, at most $S+1$ terms are nonzero in the summations on the left side of \eqref{eqNoMalKey}. Since $\Delta_2 \leq \cdots \leq \Delta_K$ by the assumed ordering of the arm means, this completes the proof.

\subsection{Coarse analysis of the noisy rumor process} \label{appCoarseBound}

For completeness, we provide a coarser though more general bound for $\E [A_{2\bar{\tau}_{\text{spr}}}]$ than the one derived in Appendix \ref{appProofCorRegular}. To begin, let $\P_j'$ and $\E_j'$ denote probability and expectation conditioned on $\{ \bar{Y}_{j'}^{(i)} , \bar{H}_{j'}^{(i)} \}_{j'=1}^{j-1}$. For each $h \in [n]$, define the random phase $\bar{\tau}(h) = \inf \{ j \in \N : |\bar{\mathcal{I}}_j| = h \}$. Note that $\bar{\tau}(1) = 1$ and $\bar{\tau}_{\text{spr}} = \bar{\tau}(n)$. We then have the following tail bound.

\begin{clm} \label{clmTauH}
For any $l ,j \in \N$, we have $\P(\bar{\tau}(l) > l j ) \leq l ( 1 - \Upsilon / \bar{d}_{\text{hon}} )^j$.
\end{clm}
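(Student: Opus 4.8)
The plan is to turn connectivity of $G_{\text{hon}}$ (Assumption \ref{assGraph}) into a uniform lower bound on the probability that the informed set grows in any single phase, and then package this into the stated tail bound via a block decomposition and a union bound. Throughout I read $\bar\tau(h)$ as the first phase with \emph{at least} $h$ informed agents (consistent with the assertion $\bar\tau(1)=1$), and I recall from Definition \ref{defnRumor} that, conditionally on the first $j'-1$ periods, $\bar Y_{j'}^{(i)}\sim\mathrm{Bernoulli}(\Upsilon)$ and $\bar H_{j'}^{(i)}$ is uniform on $N_{\text{hon}}(i)$, independently of each other. We may assume $l\ge 2$, since for $l=1$ the event $\{\bar\tau(1)>j\}$ is empty and there is nothing to prove.

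The first step is a one-phase progress estimate: for every $j'\in\N$, on the event $\{|\bar{\mathcal I}_{j'-1}|<n\}$ (which is determined by the first $j'-1$ periods), one has $\P_{j'}'\big(|\bar{\mathcal I}_{j'}|>|\bar{\mathcal I}_{j'-1}|\big)\ge \Upsilon/\bar d_{\text{hon}}$. Indeed, since $G_{\text{hon}}$ is connected and $0<|\bar{\mathcal I}_{j'-1}|<n$, there is some $i\in[n]\setminus\bar{\mathcal I}_{j'-1}$ with $N_{\text{hon}}(i)\cap\bar{\mathcal I}_{j'-1}\neq\emptyset$; if $\bar Y_{j'}^{(i)}=1$ and $\bar H_{j'}^{(i)}\in\bar{\mathcal I}_{j'-1}$ then $i$ enters $\bar{\mathcal I}_{j'}$ by \eqref{eqPullsSetsNoisy}, and this conditional event has probability $\Upsilon\,|N_{\text{hon}}(i)\cap\bar{\mathcal I}_{j'-1}|/d_{\text{hon}}(i)\ge \Upsilon/d_{\text{hon}}(i)\ge\Upsilon/\bar d_{\text{hon}}$, using the conditional law of $\bar H_{j'}^{(i)}$ and $d_{\text{hon}}(i)\le\bar d_{\text{hon}}$.

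The second step converts this into the claim. Partition the phases $1,\dots,lj$ into $l$ consecutive length-$j$ blocks $B_1,\dots,B_l$. On $\{\bar\tau(l)>lj\}$ we have $|\bar{\mathcal I}_{j''}|<l$ for all $j''\le lj$ (monotonicity of $j''\mapsto\bar{\mathcal I}_{j''}$ together with the reading of $\bar\tau$ above), so the number of phases $j''\le lj$ at which $|\bar{\mathcal I}|$ strictly increases is at most $|\bar{\mathcal I}_{lj}|-|\bar{\mathcal I}_0|\le l-2$; hence at least one block $B_b$ contains no such phase. Therefore $\{\bar\tau(l)>lj\}\subset\bigcup_{b=1}^l\big(\{|\bar{\mathcal I}_{bj}|<l\}\cap\{|\bar{\mathcal I}|\text{ does not increase during }B_b\}\big)$, and by a union bound it suffices to bound each term by $(1-\Upsilon/\bar d_{\text{hon}})^j$. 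Fix $b$: on that event the size is frozen throughout $B_b$ at its value $|\bar{\mathcal I}_{(b-1)j}|<l\le n$, so conditioning phase-by-phase through $B_b$ and applying the one-phase estimate at each of the $j$ steps (valid because the size stays $<n$ as long as it has not increased) telescopes to the factor $(1-\Upsilon/\bar d_{\text{hon}})^j$. Summing over $b\in[l]$ yields $\P(\bar\tau(l)>lj)\le l(1-\Upsilon/\bar d_{\text{hon}})^j$.

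The block bookkeeping and the telescoping of conditional probabilities are routine; the one point needing care is the interface between the two steps — when bounding the probability that a block contains no increase, one must keep intersecting with ``no increase so far within the block'' precisely so that the hypothesis $|\bar{\mathcal I}_{j'-1}|<n$ of the one-phase estimate remains valid at every phase of the block. I expect that (mild) interplay to be the main obstacle; everything else is a geometric-type estimate.
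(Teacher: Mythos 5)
Your proof is correct and rests on exactly the same estimate as the paper's: connectivity supplies a frontier edge whose activation probability in any single phase is at least $\Upsilon/\bar{d}_{\text{hon}}$, so an increase-free block of $j$ phases has probability at most $(1-\Upsilon/\bar{d}_{\text{hon}})^j$, and the factor $l$ comes from combining the $l$ blocks. The only difference is organizational: the paper runs an induction on $l$, conditioning on $\bar{\tau}(l)\le lj$ and holding a single frontier pair $(i,i')$ (chosen from $\bar{\mathcal{I}}_{lj}$) fixed throughout the next block, whereas you locate an increase-free block by pigeonhole, take a union bound over which block it is, and re-select the frontier vertex phase by phase --- and the point you flag about preserving $|\bar{\mathcal{I}}_{j'-1}|<n$ throughout the block via the ``no increase so far'' intersection is exactly the right thing to worry about and is handled correctly.
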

\begin{proof}
We use induction on $l$. For $l=1$, $\bar{\tau}(1) = 1$ ensures $\P(\bar{\tau}(l) > l j ) = 0$ for any $j \in \N$, so the bound is immediate. Next, assume the bound holds for $l \in \N$. We first write
\begin{equation*}
\P(\bar{\tau}(l+1) > (l+1) j ) \leq \P (\bar{\tau}(l+1) > (l+1) j , \bar{\tau}(l) \leq l j ) + \P ( \bar{\tau}(l) > l j  ) .
\end{equation*}
Thus, by the inductive hypothesis, it suffices to bound the first term by $( 1 - \Upsilon / \bar{d}_{\text{hon}} )^j$. We first write
\begin{equation*}
\P (\bar{\tau}(l+1) > (l+1) j , \bar{\tau}(l) \leq l j ) = \E [ \ind ( \bar{\tau}(l) \leq l j ) \P_{lj+1}' (  \bar{\tau}(l+1) > (l+1) j ) ] .
\end{equation*}
Now suppose $\bar{\tau}(l) \leq l j$. By Assumption \ref{assGraph}, we can find $i \in \bar{\mathcal{I}}_{lj}$ and $i' \notin \bar{\mathcal{I}}_{lj}$ such that $i \in N_{\text{hon}}(i')$. Then for $ \bar{\tau}(l+1) > (l+1) j$ to occur, it must be the case that, for each $j' \in \{ lj + 1 , \ldots , (l+1) j \}$, the event $\{ \bar{H}_{j'}^{(i')} = i , \bar{Y}_{j'}^{(i')} = 1 \}$ did \textit{not} occur. Therefore, we have
\begin{equation}
\P_{lj+1}' (  \bar{\tau}(l+1) > (l+1) j ) \leq \P_{lj+1}' (  \cap_{j'=lj+1}^{(l+1)j} \{ \bar{H}_{j'}^{(i')} = i , \bar{Y}_{j'}^{(i')} = 1 \}^C ) .
\end{equation}
By the law of total expectation, we can write
\begin{align}
& \P_{lj+1}' (  \cap_{j'=lj+1}^{(l+1)j} \{ \bar{H}_{j'}^{(i')} = i , \bar{Y}_{j'}^{(i')} = 1 \}^C ) \\
& \quad = \E_{lj+1}' [ \ind (  \cap_{j'=lj+1}^{(l+1)j-1} \{ \bar{H}_{j'}^{(i')} = i , \bar{Y}_{j'}^{(i')} = 1 \}^C ) ( 1 - \P_{(l+1)j}' ( \bar{H}_{(l+1)j}^{(i')} = i , \bar{Y}_{(l+1)j}^{(i')} = 1 ) ) ] .
\end{align}
Since $\bar{H}_{(l+1)j}^{(i')}$ is $\text{Uniform}(N_{\text{hon}}(i))$ and $\bar{Y}_{(l+1)j}^{(i')}$ is $\text{Bernoulli}(\Upsilon)$, we have
\begin{equation*}
 \P_{(l+1)j}' ( \bar{H}_{(l+1)j}^{(i')} = i , \bar{Y}_{(l+1)j}^{(i')} = 1 ) = \Upsilon / d_{\text{hon}}(i) \geq \Upsilon / \bar{d}_{\text{hon}} .
\end{equation*}
Therefore, combining the previous two expressions and iterating, we obtain
\begin{align*}
\P_{lj+1}' (  \cap_{j'=lj+1}^{(l+1)j} \{ \bar{H}_{j'}^{(i')} = i , \bar{Y}_{j'}^{(i')} = 1 \}^C ) & \leq \P_{lj+1}' (  \cap_{j'=lj+1}^{(l+1)j-1} \{ \bar{H}_{j'}^{(i')} = i , \bar{Y}_{j'}^{(i')} = 1 \}^C ) ( 1 - \Upsilon / \bar{d}_{\text{hon}} ) \\
& \leq \cdots \leq ( 1 - \Upsilon / \bar{d}_{\text{hon}} )^j . \qedhere
\end{align*}
\end{proof}

Next, we have a simple technical claim.
\begin{clm} \label{clmCoarseBeta}
Let $h_\dagger = ( 8 \beta \bar{d}_{\text{hon}} / \Upsilon ) \log ( 8 \beta \bar{d}_{\text{hon}} n / \Upsilon )$. Then for any $h \geq h_\dagger$, $\exp ( - h \Upsilon / \bar{d}_{\text{hon}}  ) \leq h^{-2\beta} / n$.
\end{clm}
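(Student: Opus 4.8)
The plan is to strip away the probabilistic dressing and reduce the statement to a one-variable real inequality. Writing $a = \Upsilon/\bar{d}_{\text{hon}}$, the claim $\exp(-h\Upsilon/\bar{d}_{\text{hon}}) \le h^{-2\beta}/n$ is, after taking logarithms, equivalent to $a h \ge 2\beta \log h + \log n$, so the whole proof amounts to showing this holds for every $h \ge h_\dagger$, where $h_\dagger = (8\beta/a)\log(8\beta n/a)$ after substituting $\bar{d}_{\text{hon}}/\Upsilon = 1/a$.

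First I would record a few crude bounds on the parameters. Since $\Upsilon = \min_{i} d_{\text{hon}}(i)/d(i) \le 1 \le \bar{d}_{\text{hon}}$, we have $a \le 1$; combined with $\beta > 1$ and $n \ge 1$ this gives $8\beta n/a \ge 8\beta > e$, hence $\log(8\beta n/a) > 1$, and therefore $h_\dagger > 8\beta/a > 2\beta/a$.

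Next I would use monotonicity. The function $g(h) = a h - 2\beta\log h - \log n$ has derivative $a - 2\beta/h$, which is positive once $h > 2\beta/a$; by the previous paragraph this holds throughout $[h_\dagger,\infty)$, so $g$ is increasing there and it suffices to check $g(h_\dagger) \ge 0$. Plugging in, $a h_\dagger = 8\beta\log(8\beta n/a)$, while $\log h_\dagger = \log(8\beta/a) + \log\log(8\beta n/a) \le 2\log(8\beta n/a)$ (bounding $\log(8\beta/a) \le \log(8\beta n/a)$ via $n \ge 1$, and $\log\log(8\beta n/a) \le \log(8\beta n/a)$ via $\log x \le x$), and $\log n \le \log(8\beta n/a)$. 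Hence $2\beta\log h_\dagger + \log n \le (4\beta+1)\log(8\beta n/a) \le 8\beta\log(8\beta n/a) = a h_\dagger$, i.e.\ $g(h_\dagger) \ge 0$, which finishes the argument.

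There is no serious obstacle here; this is a routine "$x$ at least a constant times the log of that constant" estimate, of the same flavor as the $2^{-h} \le h^{-4\beta}$ bound used just above Claim \ref{clmCoarseBeta} in the text, and it could alternatively be packaged through Claim \ref{clmLogTrick}. The only points requiring a bit of care are the side estimates $a \le 1$, $\log(8\beta n/a) > 1$, and $h_\dagger > 2\beta/a$, which justify the monotonicity step; all three follow immediately from $\beta > 1$, $n \ge 1$, and $\Upsilon \le \bar{d}_{\text{hon}}$.
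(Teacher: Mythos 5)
Your proof is correct. The reduction to $ah \ge 2\beta\log h + \log n$ with $a = \Upsilon/\bar{d}_{\text{hon}}$ is exactly the right target, the side estimates ($a \le 1$, $\log(8\beta n/a) > 1$, $h_\dagger > 2\beta/a$) all hold, the monotonicity of $g(h) = ah - 2\beta\log h - \log n$ on $[h_\dagger,\infty)$ is justified, and the endpoint computation $2\beta\log h_\dagger + \log n \le (4\beta+1)\log(8\beta n/a) \le 8\beta\log(8\beta n/a) = ah_\dagger$ (using $4\beta + 1 \le 8\beta$ from $\beta>1$) closes the argument.

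The route differs slightly from the paper's. The paper argues by contradiction: if the bound fails at some $h \ge h_\dagger$, then $h < (2\beta\bar{d}_{\text{hon}}/\Upsilon)\log h + (\bar{d}_{\text{hon}}/\Upsilon)\log n$; it absorbs the $\log n$ term via $(\bar{d}_{\text{hon}}/\Upsilon)\log n < h_\dagger/2 \le h/2$ to get $h < (4\beta\bar{d}_{\text{hon}}/\Upsilon)\log h$, and then invokes the generic inversion lemma (Claim \ref{clmLogTrick}) to conclude $h < h_\dagger$, a contradiction. That version never needs a derivative or monotonicity check, since it directly shows every counterexample lies below $h_\dagger$; your version needs the extra (but trivial) verification that $g$ is increasing past $2\beta/a$, in exchange for a self-contained, fully explicit endpoint computation that avoids the black-box lemma. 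Both are sound; the content is the same elementary ``$x \ge C\log C$ beats $z\log x$'' estimate, as you note.
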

\begin{proof}
If the claimed bound fails, we have $h < ( 2 \beta \bar{d}_{\text{hon}} / \Upsilon ) \log(h) + ( \bar{d}_{\text{hon}} / \Upsilon ) \log(n)$. Then since $( \bar{d}_{\text{hon}} / \Upsilon ) \log(n) < h_\dagger / 2 \leq h / 2$, we obtain $h < ( 4  \beta \bar{d}_{\text{hon}} / \Upsilon ) \log h$. Applying Claim \ref{clmLogTrick} from Appendix \ref{appBasicRes} with $x=h$, $y=1$, and $z = 4 \beta \bar{d}_{\text{hon}} / \Upsilon$, we obtain $h < 2 z \log ( 2 z ) \leq h_\dagger$, a contradiction.
\end{proof}

We can now bound $\E [A_{ 2 \bar{\tau}_{\text{spr}} }]$. The analysis is similar to Appendix \ref{appProofCorRegular}. We first write
\begin{align*}
\E [A_{ 2 \bar{\tau}_{\text{spr}} }] \leq A_{2 n \ceil{h_\dagger}} + \sum_{h=\ceil{h_\dagger}}^\infty \P ( \bar{\tau}_{\text{spr}} > n h ) \sum_{j= 2 n h +1}^{2n (h+1) } ( A_j - A_{j-1} ) .
\end{align*}
By the previous two claims, we know
\begin{equation*}
\P ( \bar{\tau}_{\text{spr}} > n h ) = \P ( \bar{\tau}(n) > n h ) \leq n ( 1 - \Upsilon / \bar{d}_{\text{hon}} )^h \leq n \exp ( - h \Upsilon / \bar{d}_{\text{hon}} ) \leq h^{-2\beta} .
\end{equation*}
By Claim \ref{clmPropAj} from Appendix \ref{appBasicRes}, we also have
\begin{equation*}
A_{2 n \ceil{h_\dagger} } \leq ( 2 e )^{2\beta} ( n h_\dagger )^\beta , \quad A_{2n(h+1)} \leq e^{2\beta} ( 2n(h+1) )^{\beta} \leq e^{2\beta} ( 2 n (2h) )^\beta = ( 2 e )^{2\beta} ( n h )^\beta\ \forall\ h \in \N .
\end{equation*}
Therefore, combining the previous three expressions, we obtain
\begin{equation*}
\E A_{ 2 \bar{\tau}_{\text{spr} }} \leq ( 2 e )^{2 \beta} n^{\beta} \left( h_\dagger^\beta + \sum_{h = \ceil{h_\dagger} }^\infty h^{-\beta} \right) \leq ( 2 e )^{2 \beta} n^{\beta} \left( h_\dagger^\beta + \frac{1}{\beta-1} \right) ,
\end{equation*}
where the second inequality uses $\beta > 1$, $h_\dagger \geq 2$, and Claim \ref{clmPolySeries} from Appendix \ref{appBasicRes}. Hence, we have shown $\E [A_{ 2 \bar{\tau}_{\text{spr} }} ]= O ( ( n h_\dagger )^\beta ) = \tilde{O} ( ( n \bar{d}_{\text{hon}} / \Upsilon )^\beta )$. Note this bound cannot be improved in general -- for example, if $G_{\text{hon}}$ is a line graph, it becomes $\tilde{O} ( ( n / \Upsilon )^\beta )$, so since $\E [\bar{\tau}_{\text{spr}}]^\beta = O( \E [A_{\bar{\tau}_{\text{spr}}}] )$, we have $\E [\bar{\tau}_{\text{spr}}] = \tilde{O} ( n / \Upsilon )$, which is the correct scaling (up to log terms) in Definition \ref{defnRumor}.

\section{Details from Section \ref{secAnalysis}} \label{appProofSpread}

In this appendix, we formalize the analysis that was discussed in Section \ref{secAnalysis}. In particular, the subsequent five sub-appendices provide details on the respective five subsections of Section \ref{secAnalysis}.

\subsection{Details from Section \ref{secLearnArm}} \label{appLearnArm}

Let $\psi' = (\rho_2 + \beta(2\alpha-1) ) / ( 2 \alpha \beta )$. Note that since $\alpha > 2$ and $\rho_2 \in (0,\beta)$ by assumption,
\begin{equation*}
0 < 1 - 1 / ( 2 \alpha ) =  \beta (2 \alpha - 1) / ( 2 \alpha \beta ) < \psi' < (\beta + \beta (2\alpha-1) ) / ( 2 \alpha \beta ) = 1 ,
\end{equation*}
so $\psi = \sqrt{ \psi' }$ is well-defined and $\psi \in (0,1)$. Next, for any $j \in \N$, define
\begin{equation}
\delta_{j,1} = \sqrt{ \frac{ 4 \alpha \log A_j }{ (\frac{A_j-A_{j-1}}{S+2} - 1 ) \vee 1  } }  , \quad \delta_{j,2} = \sqrt{ \alpha \log (A_{j-1}\vee 1) }  \left( \frac{1-\psi}{\sqrt{\kappa_j} } - \frac{2}{ \sqrt{(\frac{A_j-A_{j-1}}{S+2} - 1 ) \vee 1 }} \right) \vee 0 .
\end{equation}
Since $A_j - A_{j-1} = \Theta ( j^{\beta-1} )$, $\psi < 1$, and $\rho_2 < \beta - 1$, we are guaranteed that $A_j - A_{j-1} \geq 2(S+2)$ and $\delta_{j,2} > 0$ for large $j$, so the following is well-defined:
\begin{equation} \label{eqJ1starDefn}
J_1^\star = \min \left\{ j \in \N :  A_{j'}-A_{j'-1} \geq 2(S+2) , \delta_{j',2} > 0\ \forall\ j' \geq j \right\} .
\end{equation}
Also note $J_1^\star \geq 2$ (since $A_1 - A_0 = 1$). Next, recall from Section \ref{secLearnArm} that
\begin{gather}
\Xi_{j,1}^{(i)}   = \left\{ B_j^{(i)} \notin G_{\delta_{j,1}}( S_j^{(i)}) \right\} , \quad \Xi_{j,2}^{(i)} = \left\{ \min_{ w \in G_{\delta_{j,2}}( S_j^{(i)}) } T_w^{(i)} (A_j) \leq \kappa_j \right\}  , \quad \Xi_j^{(i)} = \Xi_{j,1}^{(i)} \cup \Xi_{j,2}^{(i)} .
\end{gather} 
Hence, if we let $\mathcal{S}^{(i)} = \{ W \subset [K] : |W| = S+2 , \hat{S}^{(i)} \subset W \}$ denote the possible active sets for agent $i$ (i.e., $S_j^{(i)} \in \mathcal{S}^{(i)}$ for any phase $j$), we can write
\begin{equation}
\Xi_j^{(i)}  = \cup_{W \in \mathcal{S}^{(i)}} (  ( \Xi_{j,1}^{(i)} \cap \{ S_j^{(i)} = W \}  ) \cup ( (\Xi_{j,1}^{(i)})^C \cap \Xi_{j,2}^{(i)} \cap \{ S_j^{(i)} = W \} ) ) .
\end{equation}
Consequently, by the union bound, we obtain
\begin{equation} \label{eqCapXiTwoTerms}
\P ( \Xi_j^{(i)} ) \leq \sum_{ W \in \mathcal{S}^{(i)} } \left( \P ( \Xi_{j,1}^{(i)} \cap \{ S_j^{(i)} = W \}  ) + \P ( (\Xi_{j,1}^{(i)})^C \cap \Xi_{j,2}^{(i)} \cap \{ S_j^{(i)} = W \} ) \right)  .
\end{equation}
The next two claims bound the two summands on the right side.

\begin{clm} \label{clmMostPlayed}
Under the assumptions of Theorem \ref{thmProposed}, for any $i \in [n]$, $j \geq J_1^\star$, and $W \in \mathcal{S}^{(i)}$, we have
\begin{equation}
\P ( \Xi_{j,1}^{(i)} \cap \{ S_j^{(i)} = W \}  )\leq   4 S (j-1)^{ \beta ( 3 - 2  \alpha ) } / (  2 \alpha - 3 ) .
\end{equation}
\end{clm}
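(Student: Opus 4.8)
The plan is to show that the bad event $\Xi_{j,1}^{(i)} \cap \{S_j^{(i)} = W\}$ forces some suboptimal active arm to be over-pulled within phase $j$, in a way that contradicts UCB concentration, and then to bound the probability of such an over-pull using the bandit tail estimate of Appendix~\ref{appBanditRes}. Fix $W \in \mathcal{S}^{(i)}$ and write $k_1 = \min W$, so that $G_{\delta_{j,1}}(W) = \{w \in W : \mu_w \geq \mu_{k_1} - \delta_{j,1}\}$ and, on $\{S_j^{(i)} = W\}$, the active set equals $W$ throughout phase $j$.

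First I would extract the combinatorial consequence of the bad event, exactly as in the proof of Lemma~\ref{lemBestArmId}. On $\Xi_{j,1}^{(i)} \cap \{S_j^{(i)} = W\}$ the most played arm $k_2 := B_j^{(i)}$ satisfies $\mu_{k_2} < \mu_{k_1} - \delta_{j,1}$; being most played among the $S+2$ arms of $W$ over the $A_j - A_{j-1}$ steps of phase $j$, it is pulled at least $(A_j - A_{j-1})/(S+2)$ times in the phase, so (looking at its last pull in the phase) there is a time $t \in \{1+A_{j-1},\dots,A_j\}$ with $I_t^{(i)} = k_2$ and $T_{k_2}^{(i)}(t-1) \geq L := (A_j - A_{j-1})/(S+2) - 1$; moreover $L \geq 1$ since $j \geq J_1^\star$ forces $A_j - A_{j-1} \geq 2(S+2)$. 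Hence $\Xi_{j,1}^{(i)} \cap \{S_j^{(i)} = W\}$ is contained in the union, over $t \in \{1+A_{j-1},\dots,A_j\}$ and over the at most $S+1$ arms $k_2 \in W$ with $\mu_{k_2} < \mu_{k_1} - \delta_{j,1}$, of the events $\{I_t^{(i)} = k_2,\ T_{k_2}^{(i)}(t-1) \geq \lceil L \rceil,\ k_1 \in S_{A^{-1}(t)}^{(i)}\}$ (the last inclusion being free, as $k_1 \in W$).

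Next I would bound each such event via Corollary~\ref{corBanditTailEll}. The point is that $\delta_{j,1}$ is calibrated so that $\delta_{j,1} = \sqrt{4\alpha \log A_j / L} \geq \sqrt{4\alpha \log t / \lceil L \rceil}$ for $t \leq A_j$, whence $\mu_{k_1} - \mu_{k_2} > \delta_{j,1} \geq \sqrt{4\alpha \log t / \lceil L \rceil}$ --- exactly the gap hypothesis of Corollary~\ref{corBanditTailEll} with played arm $k_2$, better active arm $k_1$, and $\ell = \lceil L \rceil$ --- giving $\P(I_t^{(i)} = k_2,\ T_{k_2}^{(i)}(t-1) \geq \lceil L \rceil,\ k_1 \in S_{A^{-1}(t)}^{(i)}) \leq 2 t^{2(1-\alpha)}$. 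Summing over the at most $S+1 \leq 2S$ candidate arms $k_2$ (using $S \geq 1$, which holds under Assumption~\ref{assSticky}) and over $t$,
\[
\P(\Xi_{j,1}^{(i)} \cap \{S_j^{(i)} = W\}) \;\leq\; 4S \sum_{t = 1+A_{j-1}}^{A_j} t^{2(1-\alpha)} \;\leq\; 4S \sum_{t = A_{j-1}+1}^{\infty} t^{2(1-\alpha)}.
\]

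Finally I would bound the tail series: since $\alpha > 3/2$ by~\eqref{eqParamRequire}, an integral comparison (or Claim~\ref{clmPolySeries}) gives $\sum_{t > A_{j-1}} t^{2(1-\alpha)} \leq A_{j-1}^{3-2\alpha}/(2\alpha-3)$, and $A_{j-1} \geq (j-1)^\beta$ together with $3-2\alpha < 0$ gives $A_{j-1}^{3-2\alpha} \leq (j-1)^{\beta(3-2\alpha)}$, yielding the claimed bound. The one delicate point is the bookkeeping in the third step --- making the calibration of $\delta_{j,1}$ line up exactly with the gap hypothesis of Corollary~\ref{corBanditTailEll} (including the $t \leq A_j$ and the integer rounding of $L$) and checking that the constant $4S$ absorbs the $S+1$ candidate arms; the probabilistic content itself is entirely inherited from the UCB tail machinery of Appendix~\ref{appBanditRes}.
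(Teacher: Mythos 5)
Your proposal is correct and follows essentially the same route as the paper's proof: union bound over the at most $S+1 \leq 2S$ arms of $W$ outside $G_{\delta_{j,1}}(W)$ and over the times $t$ in phase $j$, reduce the "most played" condition to a pull count of at least $(A_j-A_{j-1})/(S+2)-1$ at some pull time, apply Corollary \ref{corBanditTailEll} using the calibration of $\delta_{j,1}$, and close with the tail series bound $\sum_{t>A_{j-1}} t^{2(1-\alpha)} \leq A_{j-1}^{3-2\alpha}/(2\alpha-3)$ and $A_{j-1}\geq (j-1)^\beta$. The only differences (working with $\lceil L\rceil$ rather than $L$, and performing the two union bounds simultaneously rather than sequentially) are cosmetic.
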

\begin{proof}
If $W \setminus G_{\delta_{j,1}}(W) = \emptyset$, the claim is immediate. Otherwise, $\Xi_{j,1}^{(i)} \cap \{ S_j^{(i)} = W \}$ implies $B_j^{(i)} = w$ for some $w \in W \setminus G_{\delta_{j,1}}(W)$. Thus, by the union bound,
\begin{equation} \label{eqMostPlayedToW}
\P ( \Xi_{j,1}^{(i)} \cap \{ S_j^{(i)} = W \}  ) \leq \sum_{w \in W \setminus G_{\delta_{j,1}}(W)}  \P ( B_j^{(i)} = w  , \min W \in S_j^{(i)}  )  .
\end{equation}
Fix $w \in W \setminus G_{\delta_{j,1}}(W)$. Then $B_j^{(i)} = w$ implies $T_w^{(i)} ( A_j ) - T_w^{(i)} ( A_{j-1} ) \geq ( A_j - A_{j-1} ) / (S+2)$ (else, by definition of $B_j^{(i)}$, $\sum_{k \in S_j^{(i)}} ( T_k^{(i)}(A_j) - T_k^{(i)}(A_{j-1}) ) < A_j - A_{j-1}$). Since $A_j - A_{j-1} \geq S+2$ (by $j \geq J_1^\star$), we conclude $T_w^{(i)} ( A_j ) - T_w^{(i)} ( A_{j-1} ) \geq 1$, so there exists $t \in \{1+A_{j-1},\ldots,A_j\}$ such that
\begin{equation}
T_w^{(i)}(t-1) - T_w^{(i)} ( A_{j-1} )  = T_w^{(i)} ( A_j ) - T_w^{(i)} ( A_{j-1} ) - 1 , \quad I_t^{(i)} = w .
\end{equation}
Combining and using the union bound and with $T_w^{(i)} ( A_{j-1} )  \geq 0$ by definition, we obtain
\begin{align} \label{eqMostPlayedToT}
\P (   B_j^{(i)} = w , \min W \in S_j^{(i)}  )   \leq \sum_{t=1+A_{j-1}}^{A_j}  \P \left(  T_w^{(i)}(t-1) \geq \frac{A_j-A_{j-1}}{S+2} - 1  , \min W \in S_j^{(i)} , I_t^{(i)} = w \right)  .
\end{align}
Now fix $t$ as in the summation. Observe that since $w \in W \setminus G_{\delta_{j,1}}(W)$ and $j \geq J_1^\star$, we have
\begin{equation*}
\mu_{\min W}  - \mu_w > \delta_{j,1} =  \sqrt{ \frac{ 4\alpha \log A_j }{ \frac{A_j-A_{j-1}}{S+2} - 1   } } \geq \sqrt{ \frac{ 4\alpha \log t }{ \frac{A_j-A_{j-1}}{S+2} - 1   } } .
\end{equation*}
Therefore, for any such $t$, we can apply a basic bandit tail (namely, Corollary \ref{corBanditTailEll} from Appendix \ref{appBanditRes} with the parameters $k_1 = w$, $k_2 = \min W$, and $\ell = ( A_j - A_{j-1} ) / (S+2) - 1$) to obtain
\begin{equation*}
\P \left(  T_w^{(i)}(t-1) \geq \frac{A_j-A_{j-1}}{S+2} - 1 , \min W \in S_j^{(i)} , I_t^{(i)} = w  \right)  \leq 2 t^{2(1-\alpha)} .
\end{equation*}
Substituting into \eqref{eqMostPlayedToT} and using Claim \ref{clmPolySeries} from Appendix \ref{appBasicRes} (which applies since $\alpha > 2$), we obtain
\begin{equation*}
\P (   B_j^{(i)} = w , \min W \in S_j^{(i)}  ) \leq 2 \sum_{t=1+A_{j-1}}^\infty t^{2(1-\alpha)} \leq \frac{ 2 A_{j-1}^{ 3 - 2 \alpha } }{  2 \alpha - 3  } \leq \frac{ 2 (j-1)^{ \beta (3 - 2 \alpha ) } }{  2 \alpha - 3  } .
\end{equation*}
Substituting into \eqref{eqMostPlayedToW} and using $| W \setminus G_{\delta_{j,1}}(W)| \leq |W| - 1 = S+1 \leq 2 S$ completes the proof.
\end{proof}

\begin{clm} \label{clmDeltaOpt}
Under the assumption of Theorem \ref{thmProposed}, for any $i \in [n]$, $j \geq J_1^\star$, and $W \in \mathcal{S}^{(i)}$,
\begin{equation}
\P ( (\Xi_{j,1}^{(i)})^C \cap \Xi_{j,2}^{(i)} \cap \{ S_j^{(i)} = W \} )   \leq  6 \cdot 2^\beta  S (j-1)^{\beta(3-2\alpha)}  / (2 \alpha - 3 ) .
\end{equation}
\end{clm}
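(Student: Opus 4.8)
The plan is to bound $\P((\Xi_{j,1}^{(i)})^C \cap \Xi_{j,2}^{(i)} \cap \{S_j^{(i)} = W\})$ for a fixed honest agent $i$, phase $j \ge J_1^\star$, and active set $W \in \mathcal{S}^{(i)}$, and then sum over $W$ via \eqref{eqCapXiTwoTerms}. Write $\ell = (A_j - A_{j-1})/(S+2) - 1$, which is $\ge 1$ since $j \ge J_1^\star$; note that $\delta_{j,2} > 0$ forces $(1-\psi)/\sqrt{\kappa_j} > 2/\sqrt{\ell}$ and in particular $\ell > \kappa_j$. First I would extract the deterministic structure. On the event in question, let $k_2 = B_j^{(i)}$; being the most played arm of the size-$(S+2)$ active set during phase $j$, $k_2$ is played at least $(A_j - A_{j-1})/(S+2) = \ell + 1 > \kappa_j$ times in phase $j$, so it has a last in-phase play time $t^\star \in \{1 + A_{j-1}, \ldots, A_j\}$ with $T_{k_2}^{(i)}(t^\star - 1) \ge \ell$, and $\mu_{k_2} \le \mu_{\min W}$ since $k_2 \in W$. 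On $\Xi_{j,2}^{(i)}$ there is $w \in G_{\delta_{j,2}}(W)$, i.e.\ $\mu_w \ge \mu_{\min W} - \delta_{j,2}$, with $T_w^{(i)}(A_j) \le \kappa_j$; since $T_{k_2}^{(i)}(A_j) > \kappa_j$ we have $w \ne k_2$, and $1 \le T_w^{(i)}(t^\star - 1) \le \kappa_j$ (the lower bound because otherwise UCB would have played $w$ at $t^\star$). As $w$ is active at $t^\star$, UCB's choice of $k_2$ gives $\hat\mu_w^{(i)}(t^\star - 1) + \sqrt{\alpha \log t^\star / T_w^{(i)}(t^\star - 1)} \le \hat\mu_{k_2}^{(i)}(t^\star - 1) + \sqrt{\alpha \log t^\star / T_{k_2}^{(i)}(t^\star - 1)}$.

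Next I would introduce the favorable event that empirical means behave at time $t^\star$ as expected: the one-sided bound $\hat\mu_{k_2}^{(i)}(t^\star - 1) \le \mu_{k_2} + \sqrt{\alpha \log t^\star / T_{k_2}^{(i)}(t^\star - 1)}$ for the well-sampled arm $k_2$, and the \emph{tightened} lower bound $\hat\mu_w^{(i)}(t^\star - 1) \ge \mu_w - \psi \sqrt{\alpha \log t^\star / T_w^{(i)}(t^\star - 1)}$ for $w$ (recall $\psi = \sqrt{\psi'} \in (0,1)$). On this event the UCB inequality reduces to $\mu_w + (1-\psi)\sqrt{\alpha \log t^\star / T_w^{(i)}(t^\star-1)} \le \mu_{k_2} + 2\sqrt{\alpha\log t^\star / T_{k_2}^{(i)}(t^\star-1)}$, and inserting $T_w^{(i)}(t^\star-1) \le \kappa_j$ and $T_{k_2}^{(i)}(t^\star-1) \ge \ell$ and rearranging,
$$\mu_w \le \mu_{k_2} - \sqrt{\alpha\log t^\star}\left(\frac{1-\psi}{\sqrt{\kappa_j}} - \frac{2}{\sqrt{\ell}}\right).$$
Because $\delta_{j,2} > 0$ the bracketed quantity is strictly positive, and because $t^\star \ge 1 + A_{j-1} > A_{j-1}$ we have $\log t^\star > \log(A_{j-1}\vee 1)$ (using $j \ge J_1^\star \ge 2$); hence the right side is strictly less than $\mu_{k_2} - \delta_{j,2} \le \mu_{\min W} - \delta_{j,2}$, contradicting $w \in G_{\delta_{j,2}}(W)$. (Note $\delta_{j,1}$ and $(\Xi_{j,1}^{(i)})^C$ play no role here beyond $\mu_{k_2} \le \mu_{\min W}$; they appear only because of the decomposition \eqref{eqCapXiTwoTerms}.) Thus $(\Xi_{j,1}^{(i)})^C \cap \Xi_{j,2}^{(i)} \cap \{S_j^{(i)}=W\}$ forces the favorable event to fail for some $w \in W$ and some time $t \in \{1+A_{j-1},\ldots,A_j\}$.

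It then remains to union-bound. For fixed $w \in W$ and $t \in \{1+A_{j-1},\ldots,A_j\}$, failure of the favorable event is either a deviation of $\hat\mu_w^{(i)}(t-1)$ below $\mu_w$ by at least $\psi\sqrt{\alpha\log t / T_w^{(i)}(t-1)}$ with $1 \le T_w^{(i)}(t-1) \le \kappa_j$, or a failure of the standard one-sided bound for the arm $I_t^{(i)}$; Hoeffding bounds the first by $t^{-2\psi^2\alpha}$ for each of the $\le \kappa_j$ possible values of $T_w^{(i)}(t-1)$ and the second by $t^{-2\alpha}$ for each of the $\le t$ possible sample counts, so (using $\kappa_j \le t^{\rho_2/\beta}$, valid since $t \ge (j-1)^\beta$, and $\rho_2/\beta - 2\psi^2\alpha < 2(1-\alpha)$, which is exactly what the choice $\psi^2 = \psi' = (\rho_2 + \beta(2\alpha-1))/(2\alpha\beta)$ — just above $1-1/(2\alpha)$ — guarantees given $\alpha > 2$) the total is at most $t^{2(1-\alpha)} + t^{1-2\alpha} \le 2 t^{2(1-\alpha)}$; this is exactly the estimate behind Corollary \ref{corBanditTailEll}. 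Summing over $t \ge 1 + A_{j-1}$ with Claim \ref{clmPolySeries} gives $2 A_{j-1}^{3-2\alpha}/(2\alpha-3) \le 2(j-1)^{\beta(3-2\alpha)}/(2\alpha-3)$, and summing over the $|W| = S+2$ candidate arms $w$ and absorbing numerical constants (a crude factor $2^\beta$ appears when relating $A_j$-scale to $(j-1)^\beta$-scale quantities) yields the claimed bound $6\cdot 2^\beta S (j-1)^{\beta(3-2\alpha)}/(2\alpha-3)$. I expect the main obstacle to be closing the chain of inequalities in the middle paragraph with precisely the stated $\delta_{j,2}$: the key realization is that the positivity of $(1-\psi)/\sqrt{\kappa_j} - 2/\sqrt{\ell}$ (equivalently, $\delta_{j,2} > 0$) is what lets one keep $\log t^\star$ rather than bounding it by $\log A_j$, which supplies exactly the slack needed to beat the subtracted $2\sqrt{\alpha\log(A_{j-1}\vee 1)/\ell}$ term in the definition of $\delta_{j,2}$, and the remaining delicacy is pinning the exponent of $\psi$ so the union over sample counts in the last step still sums to $t^{2(1-\alpha)}$.
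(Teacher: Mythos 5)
Your proposal is correct and is essentially the paper's argument: the paper's proof of this claim reduces the event to ``some $w_1\in G_{\delta_{j,1}}(W)$ is played at least $(A_j-A_{j-1})/(S+2)$ times while some $w_2\in G_{\delta_{j,2}}(W)$ is played at most $\kappa_j$ times,'' and then invokes Claim \ref{clmBanditTail} with $\iota=\psi$, whose proof is exactly the asymmetric-confidence-radius contradiction you derive inline (the tightened $\psi$-deviation on the under-sampled arm, calibrated by $\psi^2=\psi'$ so that the $\kappa_j$-fold union over its sample counts still sums to $t^{2(1-\alpha)}$, and the positivity of $(1-\psi)/\sqrt{\kappa_j}-2/\sqrt{\ell}$ letting $\log t^\star>\log A_{j-1}$ beat $\delta_{j,2}$). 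The one place your bookkeeping is loose is the union bound for the played arm: since $I_{t^\star}^{(i)}=B_j^{(i)}$ is itself random, bounding the failure of its one-sided concentration costs a union over its identity (a factor of $|W|\leq S+2$, or $|G_{\delta_{j,1}}(W)|$ using $(\Xi_{j,1}^{(i)})^C$), not just over its $\leq t$ sample counts; the paper pays this by summing over the $<6S^2$ ordered pairs $(w_1,w_2)$ and recovers one factor of $S$ from the $1/(K^2S)$ in $\kappa_j$ (via $\kappa_j t^{-\rho_2/\beta}\leq 2^{\beta-1}/S$), a factor you discarded when writing $\kappa_j\leq t^{\rho_2/\beta}$. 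Restoring that factor absorbs the extra $S$, so the stated constant $6\cdot 2^{\beta}S$ still comes out; as literally written, though, your per-$(w,t)$ bound $2t^{2(1-\alpha)}$ omits this union and the claim that the second failure mode costs only $t\cdot t^{-2\alpha}$ is not justified.
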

\begin{proof}
By definition, we have
\begin{gather}
 (\Xi_{j,1}^{(i)})^C \cap \Xi_{j,2}^{(i)} \cap \{ S_j^{(i)} = W \} = \left\{  B_j^{(i)} \in G_{\delta_{j,1}}(W) , \min_{ w \in G_{\delta_{j,2}}(W) } T_w^{(i)} (A_j) \leq \kappa_j , S_j^{(i)} = W \right\} .
\end{gather} 
As in the proof of Claim \ref{clmMostPlayed}, we know that
\begin{equation}
T_{B_j^{(i)}}^{(i)} ( A_j ) \geq T_{B_j^{(i)}}^{(i)} ( A_j ) - T_{B_j^{(i)}}^{(i)} ( A_{j-1} ) \geq \frac{A_j-A_{j-1}}{S+2} > \left( \frac{A_j-A_{j-1}}{S+2} - 1 \right) \vee 1 > \kappa_j ,
\end{equation}
where the final inequality holds since $\delta_{j,2} > 0$ by assumption $j \geq J_1^\star$, which implies
\begin{equation}
\frac{( \frac{A_j-A_{j-1}}{S+2} - 1 ) \vee 1}{ \kappa_j } > \left( \frac{2}{1-\psi} \right)^2 > 1 .
\end{equation}
Thus, $(\Xi_{j,1}^{(i)})^C \cap \Xi_{j,2}^{(i)} \cap \{ S_j^{(i)} = W \}$ implies $B_j^{(i)} \notin \argmin_{ w \in G_{\delta_{j,2}}(W) } T_w^{(i)} (A_j)$, so by the union bound,
\begin{align} \label{eqDeltaOptToW}
& \P ( (\Xi_{j,1}^{(i)})^C \cap \Xi_{j,2}^{(i)} \cap \{ S_j^{(i)} = W \} ) \\ & \quad \leq \sum_{ w_1 \in G_{\delta_{j,1}}(W) , w_2 \in G_{\delta_{j,2}}(W) \setminus \{w_1\}} \P \left( T_{w_1}^{(i)}(A_j) - T_{w_1}^{(i)}(A_{j-1}) \geq \frac{A_j-A_{j-1}}{S+2} , T_{w_2}^{(i)} (A_j) \leq \kappa_j , S_j^{(i)} = W \right) .
\end{align}
Now fix $w_1, w_2$ as in the double summation. Then similar to the proof of Claim \ref{clmMostPlayed},
\begin{align} \label{eqDeltaOptToT}
& \P \left(  T_{w_1}^{(i)}(A_j) - T_{w_1}^{(i)}(A_{j-1}) \geq \frac{A_j-A_{j-1}}{S+2} , T_{w_2}^{(i)} (A_j) \leq \kappa_j , S_j^{(i)} = W \right) \\
& \quad \leq \sum_{t=1+A_{j-1}}^{A_j} \P \left(  T_{w_1}^{(i)}(t-1) \geq \frac{A_j-A_{j-1}}{S+2} - 1 , T_{w_2}^{(i)} (A_j) \leq \kappa_j , w_2 \in S_j^{(i)} , I_t^{(i)} = w_1 \right) \\
& \quad \leq  \sum_{t=1+A_{j-1}}^{A_j} 2 \kappa_j t^{1-2\alpha \psi^2} = 2  \sum_{t=1+A_{j-1}}^{A_j} ( \kappa_j t^{ - \rho_2 / \beta } ) t^{ 2 - 2 \alpha   } ,
\end{align}
where the second bound follows from applying Claim \ref{clmBanditTail} from Appendix \ref{appBanditRes} with $k_1 = w_1$, $k_2 = w_2$, $\ell = ( A_j - A_{j-1} ) / (S+2) - 1$, $u = \kappa_j$, and $\iota = \psi$; note this claim applies since by assumption $j \geq J_1^\star$,
\begin{equation*}
\mu_{w_2} - \mu_{w_1} \geq \mu_{w_2} - \mu_{ \min W } \geq - \delta_{j,2} \geq \sqrt{ \alpha \log t }  \left( \frac{2}{ \sqrt{\frac{A_j-A_{j-1}}{S+2} - 1 }}  - \frac{1-\psi}{\sqrt{\kappa_j} }\right) .
\end{equation*}
Next, observe that for any $t \geq A_{j-1} \geq (j-1)^\beta$, by definition of $\kappa_j$, $j \geq J_1^\star \geq 2$, and $\rho_2 < \beta-1$,
\begin{equation*}
\kappa_j t^{ - \rho_2 / \beta } \leq \kappa_j ( j-1 )^{ - \rho_2 } = ( 1 - 1 / j )^{ - \rho_2 } / ( K^2 S ) \leq 2^{ \beta - 1 } / S .
\end{equation*}
Similar to the proof of Claim \ref{clmMostPlayed}, we can then use Claim \ref{clmPolySeries} to obtain
\begin{equation*}
 2  \sum_{t=1+A_{j-1}}^{A_j} ( \kappa_j t^{ - \rho_2 / \beta } ) t^{ 2 - 2 \alpha   }  \leq \frac{2^\beta}{S} \sum_{t=1+A_{j-1}}^{A_j} t^{2-2\alpha} \leq \frac{ 2^\beta (j-1)^{\beta(3-2\alpha) } }{ S (2\alpha-3) } .
\end{equation*}
Combining with \eqref{eqDeltaOptToW} and \eqref{eqDeltaOptToT} completes the proof, since
\begin{align*}
& | G_{\delta_{j,1}}(W) | | G_{\delta_{j,2}}(W) \setminus \{w_1\} | < (S+2)(S+1) \leq (3S)(2S) = 6 S^2 . \qedhere
\end{align*}
\end{proof}

Finally, we provide the tail for $\tau_{\text{arm}} = \inf \{ j \in \N  : \ind (  \Xi_{j'}^{(i)} ) = 0\ \forall\ i \in [n], j' \geq j \}$.
\begin{lem} \label{lemLearnArm}
Under the assumptions of Theorem \ref{thmProposed}, for any $j  \geq J_1^\star \vee 3$,
\begin{equation}
\P ( \tau_{\text{arm}} > j ) \leq \frac{ ( 6^\beta  + 2 ) n K^2 S (j-2)^{ \beta ( 3 - 2 \alpha) + 1 } }{ (2 \alpha  - 3) ( \beta ( 2 \alpha  - 3 ) - 1 ) } . 
\end{equation}
\end{lem}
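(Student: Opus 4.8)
The plan is a direct union bound layered on top of the per‑phase estimates already established. By the definition of $\tau_{\text{arm}}$ in \eqref{eqTauArmDefn}, the event $\{\tau_{\text{arm}} > j\}$ occurs only if $\Xi_{j'}^{(i)}$ holds for some honest agent $i \in [n]$ and some phase $j' \geq j$, so
\begin{equation*}
\P(\tau_{\text{arm}} > j) \leq \sum_{i=1}^n \sum_{j'=j}^\infty \P(\Xi_{j'}^{(i)}).
\end{equation*}
Since $j \geq J_1^\star$, every $j'$ in the inner sum also satisfies $j' \geq J_1^\star$, which is exactly the hypothesis needed to apply Claims \ref{clmMostPlayed} and \ref{clmDeltaOpt}. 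Combining those two claims with the union bound \eqref{eqCapXiTwoTerms} over $W \in \mathcal{S}^{(i)}$ gives
\begin{equation*}
\P(\Xi_{j'}^{(i)}) \leq |\mathcal{S}^{(i)}| \cdot \frac{(4 + 6 \cdot 2^\beta) S}{2\alpha - 3} \, (j'-1)^{\beta(3-2\alpha)}.
\end{equation*}

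Next I would package the constants. Because $\mathcal{S}^{(i)}$ is the family of $(S+2)$-subsets of $[K]$ containing the fixed $S$-set $\hat{S}^{(i)}$, one has $|\mathcal{S}^{(i)}| = \binom{K-S}{2} \leq K^2/2$, so summing over the $n$ honest agents produces the prefactor $n K^2 S (2 + 3 \cdot 2^\beta)/(2\alpha-3)$. Since $\beta > 1$ forces $3 \leq 3^\beta$, this is at most $(6^\beta + 2) n K^2 S/(2\alpha-3)$, which already matches the $6^\beta + 2$ in the statement. The remaining piece is the tail $\sum_{j'=j}^\infty (j'-1)^{\beta(3-2\alpha)}$: writing $p = \beta(2\alpha-3)$ — which exceeds $1$ since $2\alpha-3 > 1/\beta + 1/\rho_1^2 > 1/\beta$ by \eqref{eqParamRequire} — and reindexing by $k = j'-1$, Claim \ref{clmPolySeries} bounds this by $(j-2)^{1-p}/(p-1) = (j-2)^{\beta(3-2\alpha)+1}/(\beta(2\alpha-3)-1)$, where the hypothesis $j \geq 3$ is what keeps $j-2 \geq 1$. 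Multiplying the prefactor by this tail bound yields the claimed inequality.

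So the proof is essentially bookkeeping once Claims \ref{clmMostPlayed} and \ref{clmDeltaOpt} are available — the genuinely technical content (the bandit best‑arm‑identification tails and the choices of $\delta_{j,1},\delta_{j,2},J_1^\star$) has already been carried out there. The only points requiring mild care are (i) checking that $j \geq J_1^\star$ propagates to every $j' \geq j$ so those claims apply termwise, (ii) collapsing $2 + 3 \cdot 2^\beta$ into $6^\beta + 2$ via $3 \leq 3^\beta$, and (iii) confirming that Claim \ref{clmPolySeries} delivers the summation tail in precisely the form $(j-2)^{\beta(3-2\alpha)+1}/(\beta(2\alpha-3)-1)$, which is why the extra restriction $j \geq 3$ (beyond $j \geq J_1^\star$) appears in the hypotheses.
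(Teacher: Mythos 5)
Your proposal is correct and follows essentially the same route as the paper's proof: union bound over agents and phases $j' \geq j$, apply Claims \ref{clmMostPlayed} and \ref{clmDeltaOpt} via \eqref{eqCapXiTwoTerms} (the paper uses $|\mathcal{S}^{(i)}| = \binom{K}{2} < K^2/2$, which agrees with your $\binom{K-S}{2} \leq K^2/2$ for the purposes of the bound), collapse $3\cdot 2^\beta + 2 \leq 6^\beta+2$ using $\beta>1$, and sum the tail with Claim \ref{clmPolySeries} using $\beta(2\alpha-3)>1$, with $j\geq 3$ ensuring the reindexed series starts at a positive integer. No gaps.
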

\begin{proof}
By \eqref{eqCapXiTwoTerms}, Claims \ref{clmMostPlayed} and \ref{clmDeltaOpt}, $|\mathcal{S}^{(i)}| = \binom{K}{2} < \frac{K^2}{2}$, and $\beta \geq 1$, we can write
\begin{equation}
\P ( \Xi_{j'}^{(i)} ) \leq  \frac{ ( 6 \cdot 2^\beta  + 4 ) K^2 S (j'-1)^{\beta(3-2\alpha)} }{ 2 (2 \alpha - 3) } \leq \frac{ ( 6^\beta  + 2 ) K^2 S (j'-1)^{\beta(3-2\alpha)} }{ 2 \alpha - 3 } .
\end{equation}
Thus, because $\tau_{\text{arm}} > j$ implies $\ind(\cup_{i=1}^n \Xi_{j'}^{(i)}) = 1$ for some $i \in [n]$ and $j' \geq j$, the union bound gives
\begin{equation}
\P ( \tau_{\text{arm}} > j ) \leq \sum_{j'=j}^\infty \sum_{i=1}^n \P ( \Xi_{j'}^{(i)} ) \leq \frac{ ( 6^\beta  + 2 ) n K^2 S  }{ 2 \alpha - 3 } \sum_{j'=j}^\infty (j'-1)^{\beta(3-2\alpha)} .
\end{equation}
Finally, use Claim \ref{clmPolySeries} (which applies since $\beta(2\alpha-3) > 1$) to bound the sum.
\end{proof}

\subsection{Details from Section \ref{secCommFreq}} \label{appCommFreq}

Recall $\theta_j = (j/3)^{\rho_1}$, where $\rho_1 \in (0,1/\eta]$ and $\eta > 1$. Hence, for all large $j$, we have
\begin{equation}
1 \leq \floor{\theta_j} \leq j-2 , \quad \ceil{ \floor{ \theta_j }^\eta } \leq \theta_j^\eta + 1 \leq (j/3) +1 \leq (j-2) -  j / 3  .
\end{equation}
Thus, the following is well-defined:
\begin{equation} \label{eqJ2starDefn}
J_2^\star = \min \left\{ j \in \N : 1 \leq \floor{\theta_{j'}} \leq j'-2 ,  j' / 3  \leq (j'-2) - \ceil{ \floor{ \theta_{j'} }^\eta }\ \forall\ j' \geq j \right\} .
\end{equation}
Now recall from Section \ref{secCommFreq} that $\Xi_j^{(i \rightarrow i')} = \cap_{j'=\floor{\theta_j} }^{j-2} \{ H_{j'}^{(i')} \neq i \}$, and
\begin{equation*}
\tau_{\text{com}} = \inf \{ j \in \N : \ind ( \cup_{ (i, i') \in E_{\text{hon}} } \Xi_{j'}^{(i \rightarrow i')}  ) = 0\ \forall\ j' \in \{ j , j+1 , \ldots \} \} .
\end{equation*}
The next lemma provides a tail bound for this random phase.

\begin{lem} \label{lemTauCom}
Under the assumptions of Theorem \ref{thmProposed}, for any $j \geq J_2^\star$,
\begin{equation}
\P ( \tau_{\text{com}} > j ) \leq  3 (n+m)^3 \exp ( - j / ( 3 \bar{d} ) ) .
\end{equation}
\end{lem}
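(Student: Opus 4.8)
The plan is to reduce, via a union bound, to controlling the probability of a single event $\Xi_{j}^{(i\rightarrow i')}$. First I would dispose of the trivial case $n\le 1$ (then $E_{\text{hon}}=\emptyset$, the union defining $\tau_{\text{com}}$ is empty, $\tau_{\text{com}}=1$, and there is nothing to prove), so assume $n\ge 2$; connectedness of $G_{\text{hon}}$ then forces $\bar d\ge 1$. If $\tau_{\text{com}}>j$, then $\Xi_{j'}^{(i\rightarrow i')}$ holds for some $j'\ge j$ and some directed honest edge $(i,i')$, of which there are at most $n(n-1)\le(n+m)^2$; so it suffices to prove $\P(\Xi_{j'}^{(i\rightarrow i')})\le e^{-j'/(3\bar d)}$ for each fixed pair and each $j'\ge j\ge J_2^\star$, and then to sum the geometric series in $j'$ (which costs a factor $O(\bar d)=O(n+m)$) and absorb the resulting absolute constant and the bound $\bar d\le n+m$ into the stated $3(n+m)^3$.

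For the single-event bound, fix $(i,i')$ and $j\ge J_2^\star$ and set $L=\ceil{\floor{\theta_j}^\eta}+1$, $U=j-2$. Using \eqref{eqJ2starDefn}: $\floor{\theta_j}\ge1$ and $\eta>1$ give $L>\floor{\theta_j}$, so $\{L,\dots,U\}\subseteq\{\floor{\theta_j},\dots,j-2\}$; and $U-L+1=(j-2)-\ceil{\floor{\theta_j}^\eta}\ge j/3$. The crucial deterministic claim I would establish is: \emph{on $\Xi_{j}^{(i\rightarrow i')}$, one has $i\notin P_{j'}^{(i')}$ for every $j'\in\{L,\dots,U\}$.} By Algorithm \ref{algUpdateBlockNew}, $i'$ adds $i$ to its blocklist at a phase $j''$ only if $H_{j''-1}^{(i')}=i$, i.e. only if $i$ recommended to $i'$ at phase $j''-1$; on $\Xi_j^{(i\rightarrow i')}$ this never occurs for $j''-1\in\{\floor{\theta_j},\dots,j-2\}$, so no block of $i$ by $i'$ is initiated at any phase in $\{\floor{\theta_j}+1,\dots,j-1\}$. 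Hence if $i$ were in $P_{j^\dagger}^{(i')}$ for some $j^\dagger\in\{L,\dots,U\}$, the responsible block was initiated at some phase $\tilde j\le\floor{\theta_j}$; but such a block expires after phase $\ceil{\tilde j^\eta}\le\ceil{\floor{\theta_j}^\eta}=L-1<j^\dagger$ and (again by $\Xi_j^{(i\rightarrow i')}$) is never refreshed, a contradiction.

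Next I would peel off phases $U,U-1,\dots,L$ in turn. Let $\mathcal G_{j'}$ be the $\sigma$-field of everything known just before $H_{j'}^{(i')}$ is sampled in Algorithm \ref{algGetArm}; since the \texttt{Update-Blocklist} call in Algorithm \ref{algGeneral} precedes the \texttt{Get-Recommendation} call, the blocklist $P_{j'}^{(i')}$ is $\mathcal G_{j'}$-measurable, and conditionally on $\mathcal G_{j'}$ and on $\{i\notin P_{j'}^{(i')}\}$ the draw $H_{j'}^{(i')}$ is uniform on $N(i')\setminus P_{j'}^{(i')}\ni i$, a set of size $\le d(i')\le\bar d$, whence $\P(H_{j'}^{(i')}\ne i\mid\mathcal G_{j'})\le 1-1/\bar d$ on that event. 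Put $V_{j'}=\ind(i\notin P_{j'}^{(i')})\,\ind(H_{j'}^{(i')}\ne i)$; then $\E[V_{j'}\mid\mathcal G_{j'}]\le1-1/\bar d$ (the bound being trivial when $i\in P_{j'}^{(i')}$, since then $V_{j'}=0$), and by the previous paragraph $\Xi_j^{(i\rightarrow i')}\subseteq\bigcap_{j'=L}^U\{V_{j'}=1\}$. Since $V_L,\dots,V_{j'-1}$ are $\mathcal G_{j'}$-measurable, iterating the tower property gives $\P(\Xi_j^{(i\rightarrow i')})\le\E\big[\prod_{j'=L}^U V_{j'}\big]\le(1-1/\bar d)^{U-L+1}\le(1-1/\bar d)^{j/3}\le e^{-j/(3\bar d)}$, using $1-y\le e^{-y}$. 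Combining with the union-bound reduction above completes the argument.

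I expect the main obstacle to be the deterministic claim in the second paragraph: converting ``no contact on $\{\floor{\theta_j},\dots,j-2\}$'' into ``$i$ unblocked throughout $\{L,\dots,U\}$'' uses the precise mechanics of Algorithm \ref{algUpdateBlockNew} (blocks are initiated only upon contact, persist exactly through phase $\ceil{\cdot^\eta}$, and are never refreshed without a further contact) together with the bespoke choice of $J_2^\star$ in \eqref{eqJ2starDefn}, which is tailored so that $\ceil{\floor{\theta_j}^\eta}$ is small enough to leave a communication window of length at least $j/3$. A secondary, more technical point---precisely the blocking/sampling interdependence the paper emphasizes---is verifying the measurability used in the third paragraph, namely that $P_{j'}^{(i')}$ is fully determined before $H_{j'}^{(i')}$ is drawn, which is what licenses the conditional-expectation peeling.
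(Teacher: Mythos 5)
Your proof is correct and follows essentially the same route as the paper's: the same union bound over directed honest edges and phases $j'\ge j$, the same deterministic argument that $\Xi_{j'}^{(i\to i')}$ forces $i\notin P_{j''}^{(i')}$ on a window of length at least $j'/3$ (via the definition of $J_2^\star$ and the expiry of any block initiated at or before $\floor{\theta_{j'}}$), and the same iterated conditioning yielding $(1-1/\bar d)^{j'/3}$. The only discrepancy is at the final summation: converting to $e^{-j'/(3\bar d)}$ \emph{before} summing the geometric series loses a factor of roughly $e^{1/3}$, so your "absorb the constant" step does not quite reach the stated $3(n+m)^3$; to get exactly $3$, sum $\sum_{j'\ge j}(1-1/\bar d)^{j'/3}$ first and bound $1-(1-1/\bar d)^{1/3}\ge 1/(3\bar d)$ by Bernoulli's inequality, as the paper does.
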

\begin{proof}
We first use the union bound to write
\begin{align} \label{eqFreqCommUnion}
\P ( \tau_{\text{com}} > j ) \leq \sum_{j'=j}^\infty \sum_{ i \rightarrow i' \in E_{\text{hon}}} \P ( \Xi_{j'}^{(i \rightarrow i')}  ) .
\end{align}
Fix $i \rightarrow i' \in E_{\text{hon}}$ and $j' \geq j$. Suppose $\Xi_{j'}^{(i \rightarrow i')}$ holds. Then $i \notin P_{j''}^{(i')} \setminus P_{j''-1}^{(i')}\ \forall\ j'' \in \{ \floor{ \theta_{j'} }+1,\ldots,j'-1 \}$; else, we can find $j'' \in \{ \floor{ \theta_{j'} }+1,\ldots,j'-1\}$ such that $H_{j''-1}^{(i')} = i$ (i.e., $H_{j''}^{(i')} = i$ for $j'' \in \{ \floor{ \theta_{j'} }, \ldots , j' -2\}$), contradicting $\Xi_{j'}^{(i \rightarrow i')}$. Hence, we have two cases: $i \notin P_{j''}^{(i')} \setminus P_{j''-1}^{(i')}\ \forall\ j'' \in [j'-1]$, or $i \in P_{j''}^{(i')} \setminus P_{j''-1}^{(i')}$ for some $j'' \in [j'-1]$ and $\max \{ j'' \in [j'-1] : i \in P_{j''}^{(i')} \setminus P_{j''-1}^{(i')} \} \leq \floor{ \theta_{j'} }$. In the former case, $i \notin P_{j''}^{(i')}\ \forall\ j'' \in [j'-1]$; in the latter, $i \notin P_{j''}^{(i')}\ \forall\ j'' \in \{ \ceil{ \floor{ \theta_{j'} }^\eta } +1 , \ldots , j'-1 \}$. Thus,
\begin{align} 
\P ( \Xi_{j'}^{(i \rightarrow i')} ) & \leq \P ( \cap_{j''=\ceil{\floor{ \theta_{j'} }^\eta} + 1}^{j'-2} \{ i \notin P_{j''}^{(i')} , H_{j''}^{(i')} \neq i \} ) \\
& = \E [ \ind ( \cap_{j''=\ceil{\floor{ \theta_{j'} }^\eta} + 1}^{j'-3} \{ i \notin P_{j''}^{(i')} , H_{j''}^{(i')} \neq i \} ) \ind (i \notin P_{j'-2}^{(i')}  ) \P_{j'-2} (  H_{j'-2}^{(i)} \neq i ) ] .
\end{align}
Now given that $i \notin P_{j'-2}^{(i')}$, $H_{j'-2}^{(i)}$ is sampled uniformly from a set of at most $\bar{d}$ elements which includes $i$, so $\P_{j'-2} (  H_{j'-2}^{(i)} \neq i ) \leq ( 1 - 1 / \bar{d} )$. Substituting above and iterating yields
\begin{equation} \label{eqFreqCommDmax}
 \P ( \Xi_{j'}^{(i \rightarrow i')} ) \leq ( 1 - 1 / \bar{d} )^{ j' - 2 - \ceil{ \floor{ \theta_j' }^\eta } } \leq ( 1 - 1 / \bar{d} )^{ j' / 3 } ,
\end{equation}
where the final inequality uses $j' \geq j \geq J_2^\star$. Combining \eqref{eqFreqCommUnion} and \eqref{eqFreqCommDmax} and computing a geometric series, we obtain
\begin{equation}
\P ( \tau_{\text{com}} > j )  \leq \frac{ |E_{\text{hon}}| ( 1 - 1 / \bar{d} )^{j/3} }{ 1 - ( 1 - 1 / \bar{d} )^{ j/3  } } \leq \frac{ |E_{\text{hon}}| ( 1 - 1 / \bar{d} )^{j/3} }{ 1 - ( 1 - 1 / \bar{d} )^{ 1/3  } } 
\end{equation}
Finally, using $|E_{\text{hon}}| \leq n^2 < (n+m)^2$, $1-x \leq e^{-x}\ \forall\ x \in \R$, $(1+x)^r \leq 1 + r x$ for any $r \in (0,1)$ and $x \geq -1$, and $\bar{d} \leq m+n$, we obtain the desired bound.
\end{proof}

\subsection{Details from Section \ref{secNoBlock}} \label{appNoBlock}

We begin with some intermediate claims.

\begin{clm} \label{clmMuDecayOne}
If the assumptions of Theorem \ref{thmProposed} hold, then for any $i \in [n]$ and $j \geq \tau_{\text{arm}}$, we have
\begin{equation*}
\mu_{\min S_j^{(i)}}  \leq \mu_{ B_j^{(i)} } + \delta_{j,1} \leq \mu_{\min S_{j+1}^{(i)}} + \delta_{j,1} .
\end{equation*}
\end{clm}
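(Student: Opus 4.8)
The plan is to prove the two inequalities separately: the first one follows immediately from the definition of $\tau_{\text{arm}}$, and the second one from the bookkeeping of the active sets in Algorithm \ref{algGeneral}.

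For the first inequality, I would use that $j \geq \tau_{\text{arm}}$ implies $\ind(\Xi_j^{(i)}) = 0$ by \eqref{eqTauArmDefn}, hence in particular $\ind(\Xi_{j,1}^{(i)}) = 0$, i.e., $B_j^{(i)} \in G_{\delta_{j,1}}(S_j^{(i)})$. Unwinding the definition $G_\delta(W) = \{w \in W : \mu_w \geq \mu_{\min W} - \delta\}$ with $W = S_j^{(i)}$ and $\delta = \delta_{j,1}$, and recalling that the arms are labeled so that $\mu_1 \geq \cdots \geq \mu_K$ (so that $\mu_{\min S_j^{(i)}} = \max_{w \in S_j^{(i)}} \mu_w$), this gives $\mu_{B_j^{(i)}} \geq \mu_{\min S_j^{(i)}} - \delta_{j,1}$, which rearranges to the first inequality.

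For the second inequality, it suffices to show $B_j^{(i)} \in S_{j+1}^{(i)}$, since then $\mu_{B_j^{(i)}} \leq \max_{w \in S_{j+1}^{(i)}} \mu_w = \mu_{\min S_{j+1}^{(i)}}$, and adding $\delta_{j,1}$ to both sides finishes. To verify $B_j^{(i)} \in S_{j+1}^{(i)}$, I would split into the two cases of Lines \ref{algUpdateActiveStart}--\ref{algUpdateActiveEnd}. If $R_j^{(i)} \in S_j^{(i)}$, then $S_{j+1}^{(i)} = S_j^{(i)}$ by Line \ref{algSameActive}, which trivially contains $B_j^{(i)}$. If $R_j^{(i)} \notin S_j^{(i)}$, then $S_{j+1}^{(i)} = \hat{S}^{(i)} \cup \{U_{j+1}^{(i)}, L_{j+1}^{(i)}\}$ by Line \ref{algNewActive}; if $B_j^{(i)} \in \hat{S}^{(i)}$ we are done, and otherwise $B_j^{(i)} \in \{U_j^{(i)}, L_j^{(i)}\}$ because $S_j^{(i)} = \hat{S}^{(i)} \cup \{U_j^{(i)}, L_j^{(i)}\}$, in which case $B_j^{(i)}$ (the maximizer of $T_k^{(i)}(A_j) - T_k^{(i)}(A_{j-1})$ over the larger set $S_j^{(i)}$, by Line \ref{algMostPlayed}) is also a maximizer over the pair $\{U_j^{(i)}, L_j^{(i)}\}$, so $B_j^{(i)} = U_{j+1}^{(i)} \in S_{j+1}^{(i)}$ by Line \ref{algUjPlus1}.

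The only genuinely delicate point --- and essentially the only obstacle --- is the consistency of tie-breaking in the $\argmax$ operations: the identity $B_j^{(i)} = U_{j+1}^{(i)}$ in the last case requires the $\argmax$ over $\{U_j^{(i)}, L_j^{(i)}\}$ in Line \ref{algUjPlus1} to resolve ties the same way as the $\argmax$ over $S_j^{(i)}$ in Line \ref{algMostPlayed}. I would handle this by adopting a fixed deterministic tie-breaking convention for $\argmax$ (as is standard), under which a global maximizer of a function on a subset, if it lies in a smaller subset, remains the selected maximizer on that smaller subset; with this convention the inclusion $B_j^{(i)} \in S_{j+1}^{(i)}$ holds in all cases and the claim follows.
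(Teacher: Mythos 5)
Your proof is correct and follows essentially the same route as the paper's: the first inequality is read off from $\ind(\Xi_{j,1}^{(i)})=0$ for $j\geq\tau_{\text{arm}}$, and the second from the inclusion $B_j^{(i)}\in S_{j+1}^{(i)}$, which the paper simply asserts ``in the algorithm.'' Your explicit case analysis of Lines \ref{algUpdateActiveStart}--\ref{algUpdateActiveEnd} and the observation that the identity $B_j^{(i)}=U_{j+1}^{(i)}$ relies on consistent tie-breaking between the $\argmax$ in Line \ref{algMostPlayed} and the one in Line \ref{algUjPlus1} are legitimate details the paper leaves implicit, and your fixed deterministic convention resolves them properly.
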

\begin{proof}
The first inequality holds by definition of $\tau_{\text{arm}}$ and assumption $j \geq \tau_{\text{arm}}$. The second holds since $\min S_{j+1}^{(i)}$ is the best arm in $S_{j+1}^{(i)}$ and $B_j^{(i)} \in S_{j+1}^{(i)}$ in the algorithm. 
\end{proof}

\begin{clm} \label{clmMuDecayMulti}
If the assumptions of Theorem \ref{thmProposed} hold, then for any $i \in [n]$ and $j' \geq j \geq \tau_{\text{arm}}$,
\begin{equation}
\mu_{\min S_{j'}^{(i)}} \geq \mu_{ \min S_{j}^{(i)} } - (K-1) \sup_{j'' \in \{ j, \ldots, j'\} } \delta_{j'',1} .
\end{equation}
\end{clm}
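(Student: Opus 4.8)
The plan is to bootstrap Claim~\ref{clmMuDecayOne} via a pigeonhole argument on the finitely many possible values of $\mu_{\min S_{j''}^{(i)}}$. Claim~\ref{clmMuDecayOne} gives, for every $j'' \geq \tau_{\text{arm}}$, the one-step bound $\mu_{\min S_{j''+1}^{(i)}} \geq \mu_{\min S_{j''}^{(i)}} - \delta_{j'',1}$. Naively summing this over $j'' \in \{j,\ldots,j'-1\}$ would yield $\mu_{\min S_{j'}^{(i)}} \geq \mu_{\min S_j^{(i)}} - \sum_{j''=j}^{j'-1}\delta_{j'',1}$, which is far too weak since $j'-j$ can be enormous; the whole point is to replace $\sum_{j''}\delta_{j'',1}$ by $(K-1)\sup_{j''}\delta_{j'',1}$.

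To do this, I would write $\delta = \sup_{j'' \in \{j,\ldots,j'\}}\delta_{j'',1}$ and track the running minimum $M_{j''} = \min_{j''' \in \{j,\ldots,j''\}} \mu_{\min S_{j'''}^{(i)}}$ for $j'' \in \{j,\ldots,j'\}$. By construction $M$ is non-increasing and $\mu_{\min S_{j'}^{(i)}} \geq M_{j'}$, so it suffices to lower bound $M_{j'}$. The key observation is that each strict decrease of $M$ is by at most $\delta$: if $M_{j''+1} < M_{j''}$, then necessarily $M_{j''+1} = \mu_{\min S_{j''+1}^{(i)}}$, and combining $\mu_{\min S_{j''+1}^{(i)}} \geq \mu_{\min S_{j''}^{(i)}} - \delta_{j'',1}$ (Claim~\ref{clmMuDecayOne}, valid since $j'' \geq j \geq \tau_{\text{arm}}$) with $\mu_{\min S_{j''}^{(i)}} \geq M_{j''}$ and $\delta_{j'',1} \leq \delta$ gives $M_{j''+1} \geq M_{j''} - \delta$.

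Finally, I would bound the number of strict decreases: $M_{j''} \in \{\mu_1,\ldots,\mu_K\}$ for every $j''$ (it equals $\mu_{\min S_{j'''}^{(i)}}$ for some $j'''$), and $M$ is non-increasing, so the distinct values it attains form a strictly decreasing subsequence of $\{\mu_1,\ldots,\mu_K\}$, which has length at most $K$; hence $M$ strictly decreases at most $K-1$ times between phases $j$ and $j'$. Chaining the per-decrease bound then yields $M_{j'} \geq M_j - (K-1)\delta = \mu_{\min S_j^{(i)}} - (K-1)\sup_{j''\in\{j,\ldots,j'\}}\delta_{j'',1}$, and since $\mu_{\min S_{j'}^{(i)}} \geq M_{j'}$, the claim follows. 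The only mildly delicate point is the pigeonhole step -- making precise that ``$M$ non-increasing with values in a set of size at most $K$'' forces at most $K-1$ strict drops -- but this is routine; everything else is an immediate consequence of Claim~\ref{clmMuDecayOne}.
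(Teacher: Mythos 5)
Your proof is correct and is essentially the paper's argument: both rest on applying Claim \ref{clmMuDecayOne} at each of the at most $K-1$ phases where the best active arm's mean attains a new strict minimum. Your running-minimum $M_{j''}$ is just a slightly different bookkeeping of the paper's minimal phases $j^{(1)},\ldots,j^{(m)}$ (and it conveniently absorbs the paper's auxiliary inequality $\mu_{\min S_{j^{(l-1)}}^{(i)}} \leq \mu_{\min S_{j^{(l)}-1}^{(i)}}$ into the definition of $M$), so no substantive difference.
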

\begin{proof}
If $j = j'$ or $\mu_{\min S_{j'}^{(i)}} \geq \mu_{\min S_{j}^{(i)}}$, the bound is immediate, so we assume $j' > j$ and $\mu_{\min S_{j'}^{(i)}} < \mu_{\min S_{j}^{(i)}}$ for the remainder of the proof. Under this assumption, there must exist a phase $j'' \in \{j+1,\ldots,j'\}$ at which the mean of the best active arm reaches a new strict minimum since $j$, i.e., $\mu_{ \min S_{j''}^{(i)} } < \mu_{ \min S_{j}^{(i)}  }$. Let $m$ denote the number of phases this occurs and $j^{(1)} , \ldots , j^{(m)}$ these (ordered) phases; formally,
\begin{gather}
j^{(0)} = j , \quad j^{(l)} = \min \left\{ j'' \in \{ j^{(l-1)}+1, \ldots , j' \} : \mu_{\min S_{j''}^{(i)}} < \mu_{\min S_{j^{(l-1)}}^{(i)}} \right\}\ \forall\ l \in [m] .
\end{gather}
The remainder of the proof relies on the following three inequalities:
\begin{equation} \label{eqMuDecayMulti}
m \leq K-1 , \quad \mu_{\min S_{j^{(m)}}^{(i)} } \leq \mu_{\min S_{j'}^{(i)} } , \quad \mu_{\min S_{j^{(l-1)}}^{(i)} } \leq \mu_{\min S_{j^{(l)}-1}^{(i)} }\ \forall\ l \in [m] . 
\end{equation}
The first inequality holds since $\mu_{\min S_{j^{(0)}}^{(i)} } > \cdots > \mu_{\min S_{j^{(m)}}^{(i)} }$ by definition, so $\min S_{j^{(0)}}^{(i)}, \ldots , \min S_{j^{(m)}}^{(i)}$ are distinct arms; since there are $m+1$ of these arms and $K$ in total, $m+1 \leq K$. For the second, we have $\mu_{\min S_{j^{(m)}}^{(i)} } = \mu_{\min S_{j'}^{(i)} }$ when $j^{(m)} = j'$ and $\mu_{\min S_{j^{(m)}}^{(i)} } \leq \mu_{\min S_{j'}^{(i)} }$ when $j^{(m)} < j'$ (if the latter fails, we contradict the definition of $m$). For the third, note $j^{(l)} \geq j^{(l-1)} + 1$ by construction, so if $j^{(l)} = j^{(l-1)} + 1$, the bound holds with equality; else, $j^{(l)} - 1 \geq j^{(l-1)} + 1$, so if the bound fails,
\begin{equation}
j^{(l)} - 1 \in \left\{ j'' \in \{ j^{(l-1)}+1, \ldots , j' \} : \mu_{\min S_{j''}^{(i)}} < \mu_{\min S_{j^{(l-1)}}^{(i)}} \right\} ,
\end{equation}
which is a contradiction, since $j^{(l)}$ is the minimal element of the set at right. Hence, \eqref{eqMuDecayMulti} holds. Combined with Claim \ref{clmMuDecayOne} (note $j^{(l)} - 1 \geq j^{(0)} = j \geq \tau_{\text{arm}}\ \forall\ l \in [m]$, as required), we obtain
\begin{align}
\mu_{\min S_{j}^{(i)} }  - \mu_{\min S_{j'}^{(i)}} & = \sum_{l=1}^m \left( \mu_{\min S_{j^{(l-1)}}^{(i)} } - \mu_{\min S_{j^{(l)}}^{(i)} } \right) +  \mu_{\min S_{j^{(m)}}^{(i)} } - \mu_{\min S_{j'}^{(i)} } \\
& \leq \sum_{l=1}^m \left( \mu_{\min S_{j^{(l)}-1}^{(i)} } - \mu_{\min S_{j^{(l)}}^{(i)} } \right) \leq \sum_{l=1}^m \delta_{j^{(l)}-1,1} \leq  (K-1) \sup_{j'' \in \{ j, \ldots, j'\} } \delta_{j'',1} ,
\end{align}
where the last inequality uses $j = j^{(0)} \leq j^{(l)} - 1 < j'\ \forall\ l \in [m]$. 
\end{proof}

As a simple corollary of the previous two claims, we have the following.
\begin{cor} \label{corMuDecay}
If the assumptions of Theorem \ref{thmProposed} hold, then for any $i \in [n]$ and $j' \geq j \geq \tau_{\text{arm}}$,
\begin{equation}
\mu_{B_{j'}^{(i)}} \geq \mu_{ \min S_{j}^{(i)} } - K \sup_{j'' \in \{ j, \ldots, j'\} } \delta_{j'',1} .
\end{equation}
\end{cor}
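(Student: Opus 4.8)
The plan is to combine Claims \ref{clmMuDecayOne} and \ref{clmMuDecayMulti} directly; no new ideas are needed. First I would invoke Claim \ref{clmMuDecayMulti} with the same $i$, $j$, $j'$ (its hypothesis $j' \geq j \geq \tau_{\text{arm}}$ is precisely the hypothesis of the corollary), which gives
\[
\mu_{\min S_{j'}^{(i)}} \geq \mu_{\min S_{j}^{(i)}} - (K-1) \sup_{j'' \in \{ j, \ldots, j'\} } \delta_{j'',1} .
\]
Next I would apply the \emph{first} inequality of Claim \ref{clmMuDecayOne} at phase $j'$ — which is legitimate since $j' \geq j \geq \tau_{\text{arm}}$ — namely $\mu_{\min S_{j'}^{(i)}} \leq \mu_{B_{j'}^{(i)}} + \delta_{j',1}$, and rearrange it to $\mu_{B_{j'}^{(i)}} \geq \mu_{\min S_{j'}^{(i)}} - \delta_{j',1}$.

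Chaining the two bounds yields
\[
\mu_{B_{j'}^{(i)}} \geq \mu_{\min S_{j}^{(i)}} - (K-1) \sup_{j'' \in \{ j, \ldots, j'\} } \delta_{j'',1} - \delta_{j',1} .
\]
Since $j' \in \{ j, \ldots, j' \}$, we have $\delta_{j',1} \leq \sup_{j'' \in \{ j, \ldots, j'\}} \delta_{j'',1}$, so the last two terms combine to $-K \sup_{j'' \in \{ j, \ldots, j'\}} \delta_{j'',1}$, which is exactly the asserted inequality.

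There is essentially no obstacle here; the only points that warrant a second look are bookkeeping: that Claim \ref{clmMuDecayOne} is applied at the right endpoint $j'$ (not at $j$), that the supremum sets in Claim \ref{clmMuDecayMulti} and in the corollary are literally the same set $\{j,\ldots,j'\}$, and that the degenerate case $j'=j$ needs no separate treatment (both claims already hold trivially when $j'=j$). So I would present this as a two-line deduction immediately following the statement.
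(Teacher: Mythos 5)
Your proposal is correct and is essentially identical to the paper's own proof, which chains the first inequality of Claim \ref{clmMuDecayOne} at phase $j'$ with Claim \ref{clmMuDecayMulti} and absorbs $\delta_{j',1}$ into the supremum to turn $K-1$ into $K$. The bookkeeping points you flag (applying Claim \ref{clmMuDecayOne} at $j'$, matching supremum sets, triviality of $j'=j$) are all handled the same way in the paper.
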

\begin{proof}
Since $j' \geq j \geq \tau_{\text{arm}}$, we can use Claims \ref{clmMuDecayOne} and \ref{clmMuDecayMulti}, respectively, to obtain
\begin{align}
\mu_{B_{j'}^{(i)}} & \geq \mu_{\min S_{j'}^{(i)} } - \delta_{j',1} \geq \mu_{\min S_{j'}^{(i)} } - \sup_{j'' \in \{ j, \ldots, j'\} } \delta_{j'',1}  \geq \mu_{\min S_{j}^{(i)} } - K \sup_{j'' \in \{ j, \ldots, j'\} } \delta_{j'',1} . \qedhere
\end{align}
\end{proof}

Next, inspecting the analysis in Section \ref{secNoBlock}, we see that $\delta_{j,2} \geq (K+1) \sup_{j' \in \{ \floor{ \theta_j } , \ldots , j \} } \delta_{j',1}$ for large $j$. Thus, the following is well-defined: 
\begin{equation} \label{eqJ3starDefn}
J_3^\star = \min \left\{ j \in \N  : \delta_{j',2} \geq (K+1) \sup_{j'' \in \{ \floor{ \theta_{j' } } , \ldots , j' \} } \delta_{j'',1}\ \forall\ j' \geq j \right\} . 
\end{equation}

As discussed in Section \ref{secNoBlock}, we can now show that no \textit{new} accidental blocking occurs at late phases, at least among pairs of honest agents that have recently communicated.
\begin{clm} \label{clmNoEnterIfComm}
Under the assumptions of Theorem \ref{thmProposed}, if $j \geq J_3^\star$, $\floor{ \theta_j } \geq \tau_{\text{arm}}$, and $H_{j'}^{(i')} = i$ for some $i,i' \in [n]$ and $j' \in \{ \floor{ \theta_j }  , \ldots , j -2 \}$, then $i' \notin P_{j}^{(i)} \setminus P_{j-1}^{(i)}$. 
\end{clm}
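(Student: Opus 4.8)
The plan is to argue by contradiction: suppose $i' \in P_j^{(i)} \setminus P_{j-1}^{(i)}$. By Algorithm \ref{algUpdateBlockNew}, $i'$ can be added to $i$'s blocklist at phase $j$ only if $i'$ was $i$'s recommender at the previous phase and both conditions in \eqref{eqNewBlock} hold. So $H_{j-1}^{(i)} = i'$; since $i'$ is honest, Algorithm \ref{algGetArm} gives $R_{j-1}^{(i)} = B_{j-1}^{(i')}$, and Lines \ref{algUpdateActiveStart}--\ref{algUpdateActiveEnd} of Algorithm \ref{algGeneral} ensure $R_{j-1}^{(i)} \in S_j^{(i)}$. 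From \eqref{eqNewBlock} we extract (a) $T_{B_{j-1}^{(i')}}^{(i)}(A_j) \le \kappa_j$ and (b) $B_j^{(i)} = B_{j-1}^{(i)} = \cdots = B_{\floor{\theta_j}}^{(i)}$; write $k$ for this common estimate. Note also that $\floor{\theta_j} \le j' \le j-2$ forces $\floor{\theta_j} < j$, hence $j > \floor{\theta_j} \ge \tau_{\text{arm}}$ and $j-1 \ge j'+1 \ge \floor{\theta_j}+1 > \tau_{\text{arm}}$, which we use freely below.

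First I would lower bound $\mu_{B_{j-1}^{(i')}}$. Since $H_{j'}^{(i')} = i$ and $i$ is honest, Algorithm \ref{algGetArm} gives $R_{j'}^{(i')} = B_{j'}^{(i)}$, which equals $k$ by (b) (as $\floor{\theta_j} \le j' \le j$). Incorporating this recommendation into $i'$'s active set via Lines \ref{algUpdateActiveStart}--\ref{algUpdateActiveEnd} yields $k \in S_{j'+1}^{(i')}$, so $\mu_{\min S_{j'+1}^{(i')}} \ge \mu_k$. Applying Corollary \ref{corMuDecay} to agent $i'$ with phases $\tau_{\text{arm}} \le j'+1 \le j-1$ then gives $\mu_{B_{j-1}^{(i')}} \ge \mu_{\min S_{j'+1}^{(i')}} - K \sup_{l \in \{j'+1,\ldots,j-1\}} \delta_{l,1} \ge \mu_k - K \sup_{l \in \{\floor{\theta_j},\ldots,j\}} \delta_{l,1}$, where the final inequality uses $\{j'+1,\ldots,j-1\} \subset \{\floor{\theta_j},\ldots,j\}$.

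Next I would upper bound $\mu_{B_{j-1}^{(i')}}$ using $j \ge \tau_{\text{arm}}$, which forces neither $\Xi_{j,1}^{(i)}$ nor $\Xi_{j,2}^{(i)}$ to occur. Since $\Xi_{j,2}^{(i)}$ fails and (a) holds, the arm $R_{j-1}^{(i)} = B_{j-1}^{(i')} \in S_j^{(i)}$ cannot lie in $G_{\delta_{j,2}}(S_j^{(i)})$, i.e., $\mu_{B_{j-1}^{(i')}} < \mu_{\min S_j^{(i)}} - \delta_{j,2}$. Since $\Xi_{j,1}^{(i)}$ fails, $B_j^{(i)} = k \in G_{\delta_{j,1}}(S_j^{(i)})$, so $\mu_{\min S_j^{(i)}} \le \mu_k + \delta_{j,1} \le \mu_k + \sup_{l \in \{\floor{\theta_j},\ldots,j\}} \delta_{l,1}$. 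Chaining the last two displays gives $\mu_{B_{j-1}^{(i')}} < \mu_k + \sup_{l \in \{\floor{\theta_j},\ldots,j\}} \delta_{l,1} - \delta_{j,2}$.

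Combining the lower and upper bounds on $\mu_{B_{j-1}^{(i')}}$ and cancelling $\mu_k$ yields $\delta_{j,2} < (K+1) \sup_{l \in \{\floor{\theta_j},\ldots,j\}} \delta_{l,1}$, contradicting $j \ge J_3^\star$ by the definition \eqref{eqJ3starDefn}. The main obstacle is not conceptual but the index bookkeeping: verifying that every phase fed into Corollary \ref{corMuDecay} and into the definition of $\tau_{\text{arm}}$ is at least $\tau_{\text{arm}}$, that both supremum ranges ($\{j'+1,\ldots,j-1\}$ and the singleton $\{j\}$) are contained in $\{\floor{\theta_j},\ldots,j\}$ so the $J_3^\star$ inequality applies, and that the recommendation $k$ is genuinely active for $i'$ at phase $j'+1$; each of these is routine but must be checked carefully.
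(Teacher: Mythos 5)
Your proof is correct and follows essentially the same argument as the paper's: extract the two blocking conditions, use the failure of $\Xi_{j,1}^{(i)}$ and $\Xi_{j,2}^{(i)}$ at phase $j$, trace the recommendation $k$ into $S_{j'+1}^{(i')}$, apply Corollary \ref{corMuDecay}, and contradict the definition of $J_3^\star$. The only cosmetic difference is that you organize the chain as separate lower and upper bounds on $\mu_{B_{j-1}^{(i')}}$ and invoke the failure of $\Xi_{j,1}^{(i)}$ directly rather than via Claim \ref{clmMuDecayOne}, which is the same content.
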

\begin{proof}
Suppose instead that $i' \in P_{j}^{(i)} \setminus P_{j-1}^{(i)}$. Then by the algorithm,
\begin{equation} \label{eqNoEnterIfComm}
B_{j}^{(i)} = \cdots = B_{\floor{\theta_j}}^{(i)}  , \quad T_{ R_{j-1}^{(i)} }^{(i)} ( A_{j} ) \leq \kappa_j , \quad R_{j-1}^{(i)} = B_{j-1}^{(i')} \in S_{j}^{(i)} . 
\end{equation}
Since $j \geq \floor{ \theta_j } \geq \tau_{\text{arm}}$, this implies $B_{j-1}^{(i')} \notin G_{\delta_{j,2}}(S_{j}^{(i)})$. We then observe the following:
\begin{itemize}
\item Since $B_{j-1}^{(i')} \in S_j^{(i)} \setminus G_{\delta_{j,2}}(S_{j}^{(i)})$, the definition of $G_{\delta_{j,2}}(S_j^{(i)})$ implies $\mu_{ B_{j-1}^{(i')}  } < \mu_{ \min S_{j}^{(i)}  } - \delta_{j,2}$.
\item Again using $j \geq \tau_{\text{arm}}$, Claim \ref{clmMuDecayOne} implies $\mu_{ \min S_{j}^{(i)} } \leq \mu_{ B_{j}^{(i)} } + \delta_{j,1}$.
\item Since $\floor{\theta_j} \leq j' \leq j$, \eqref{eqNoEnterIfComm} implies $B_j^{(i)} = B_{j'}^{(i)}$, so $\mu_{ B_{j}^{(i)} }  = \mu_{ B_{j'}^{(i)}  }$.
\item Since $H_{j'}^{(i')} = i$, the algorithm implies $B_{j'}^{(i)} \in S_{j'+1}^{(i')}$, so $\mu_{ B_{j'}^{(i)}  } \leq \mu_{ \min S_{j'+1}^{(i')}  }$.
\item Since $\tau_{\text{arm}} \leq \floor{ \theta_j }  < j'+1 < j$, Corollary \ref{corMuDecay} implies $\mu_{ \min S_{j'+1}^{(i')}  } \leq \mu_{ B_{j-1}^{(i')} } + K \sup_{j'' \in \{ \floor{\theta_j},\ldots,j\} } \delta_{j'',1}$.
\end{itemize}
Stringing together these inequalities, we obtain
\begin{equation}
\mu_{ B_{j-1}^{(i')}  } <  \mu_{ B_{j-1}^{(i')} } + K \sup_{j'' \in \{ \floor{\theta_j},\ldots,j\} } \delta_{j'',1}  - \delta_{j,2} + \delta_{j,1} \leq \mu_{ B_{j-1}^{(i')} } + (K+1) \sup_{j'' \in \{ \floor{\theta_j},\ldots,j\} } \delta_{j'',1}  - \delta_{j,2} ,
\end{equation}
i.e., $\delta_{j,2} < (K+1) \sup_{j'' \in \{ \floor{\theta_j},\ldots,j\} } \delta_{j'',1}$, which contradicts $j \geq J_3^\star$.
\end{proof}

Finally, we prove that honest agents eventually stop blocking each other.
\begin{lem} \label{lemNoBlock}
Under the assumptions of Theorem \ref{thmProposed}, if $j \geq J_2^\star$, $\floor{\theta_j} \geq \tau_{\text{com}} \vee J_3^\star$, and $\floor{\theta_{\floor{\theta_j}}} \geq \tau_{\text{arm}}$, then for any $i \in [n]$ and $j' \geq j$, $P_{j'}^{(i)} \cap [n] = \emptyset$.
\end{lem}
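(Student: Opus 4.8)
The plan is to argue by contradiction, showing that any honest-on-honest block still in force at a phase $j'\ge j$ must have been \emph{created} at a phase where Claim~\ref{clmNoEnterIfComm} forbids its creation. So suppose there are $i,i'\in[n]$ and $j'\ge j$ with $i'\in P_{j'}^{(i)}$; note $(i,i')\in E_{\text{hon}}$ since $i$ can only block one of its neighbors. Because blocklists start empty, i.e.\ $P_0^{(i)}=\emptyset$ (Line~\ref{algInitCommGossip}), the set of phases $\le j'$ at which $i'$ is \emph{freshly} added to $i$'s blocklist is nonempty; let $j''$ be its maximum, so $i'\in P_{j''}^{(i)}\setminus P_{j''-1}^{(i)}$ and no fresh addition of $i'$ to $P^{(i)}$ occurs in $\{j''+1,\dots,j'\}$. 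By Algorithm~\ref{algUpdateBlockNew}, the addition at $j''$ keeps $i'$ blocked only through phase $\ceil{(j'')^\eta}$, so $i'\in P_{j'}^{(i)}$ forces $j'\le\ceil{(j'')^\eta}\le (j'')^\eta+1$; since $j'\ge j$ this gives $(j'')^\eta\ge j-1$, hence $j''\ge(j-1)^{1/\eta}$.

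Next I would line up the inequalities needed to invoke Claim~\ref{clmNoEnterIfComm} at phase $j''$. The hypothesis $\floor{\theta_j}\ge\tau_{\text{com}}\ge1$ gives $\theta_j=(j/3)^{\rho_1}\ge1$, hence $j\ge3$; combined with $\rho_1\le1/\eta$ this yields
\begin{equation*}
j''\ge(j-1)^{1/\eta}\ge(j/3)^{1/\eta}\ge(j/3)^{\rho_1}=\theta_j\ge\floor{\theta_j}\ge\tau_{\text{com}}\vee J_3^\star .
\end{equation*}
In particular $j''\ge J_3^\star$ and $j''\ge\tau_{\text{com}}$; moreover, since $\theta$ is nondecreasing and $j''\ge\floor{\theta_j}$, we get $\floor{\theta_{j''}}\ge\floor{\theta_{\floor{\theta_j}}}\ge\tau_{\text{arm}}$ by hypothesis. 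The inequality $j''\ge\tau_{\text{com}}$ also forces $\floor{\theta_{j''}}\le j''-2$ (otherwise $\Xi_{j''}^{(i\rightarrow i')}$ would be a vacuous intersection, hence always true, contradicting $\ind(\cup_{(\tilde i,\tilde i')\in E_{\text{hon}}}\Xi_{j''}^{(\tilde i\rightarrow \tilde i')})=0$ which holds for $j''\ge\tau_{\text{com}}$), so the index set $\{\floor{\theta_{j''}},\dots,j''-2\}$ is nonempty — this, together with $j\ge J_2^\star$, is what keeps all the relevant ranges in play. Applying the definition of $\tau_{\text{com}}$ to the edge $(i,i')$ at phase $j''\ge\tau_{\text{com}}$ then gives $H_{\tilde j}^{(i')}=i$ for some $\tilde j\in\{\floor{\theta_{j''}},\dots,j''-2\}$.

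Finally, Claim~\ref{clmNoEnterIfComm}, applied with its ``$j$'' set to $j''$, its ``$j'$'' set to $\tilde j$, and the pair $(i,i')$, concludes $i'\notin P_{j''}^{(i)}\setminus P_{j''-1}^{(i)}$, contradicting the defining property of $j''$. This proves $P_{j'}^{(i)}\cap[n]=\emptyset$ for all $i\in[n]$ and $j'\ge j$. The substantive work here is entirely upstream: Claim~\ref{clmNoEnterIfComm} (no \emph{new} accidental blocking among honest agents that have communicated recently), which rests on the arm-decay estimate (Corollary~\ref{corMuDecay}) and the parameter calibration $\rho_2<\rho_1(\beta-1)$. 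The only delicate point in the present lemma is the bookkeeping — isolating the last fresh blocking time $j''$, lower-bounding it via the $\eta$-th-power block duration, and checking that the nested compositions $\theta_j\mapsto\theta_{\floor{\theta_j}}\mapsto\cdots$ together with the thresholds $\tau_{\text{arm}},\tau_{\text{com}},J_2^\star,J_3^\star$ all align; beyond the elementary monotonicity observations above, nothing further is required.
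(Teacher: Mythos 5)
Your proof is correct and follows essentially the same route as the paper's: isolate the last phase $j''\le j'$ at which $i'$ was freshly added to $P^{(i)}$, use $\floor{\theta_j}\ge\tau_{\text{com}}$ to produce a recent contact $H_{\tilde j}^{(i')}=i$ with $\tilde j\in\{\floor{\theta_{j''}},\dots,j''-2\}$, and contradict via Claim~\ref{clmNoEnterIfComm}. The only (harmless) difference is organizational: the paper splits into the cases $j_{B,1}\le\floor{\theta_j}$ (block has worn off, using $j\ge J_2^\star$) and $j_{B,1}>\floor{\theta_j}$, whereas you collapse both by deducing $j''\ge(j-1)^{1/\eta}\ge\theta_j$ directly from the $\eta$-power block duration, which makes the hypothesis $j\ge J_2^\star$ unnecessary in your version.
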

\begin{proof}
Fix $i \rightarrow i' \in E_{\text{hon}}$ and $j' \geq j$; we aim to show $i' \notin P_{j'}^{(i')}$. This clearly holds if $i' \notin P_{j''}^{(i)} \setminus P_{j''-1}^{(i)}\ \forall\ j'' \leq j'$. Otherwise, $j_{B,1} = \max \{ j'' \leq j' : i' \in P_{j''}^{(i)} \setminus P_{j''-1}^{(i)} \}$ (the latest phase up to and including $j'$ that $i$ blocked $i'$) is well-defined. We consider two cases of $j_{B,1}$.

The first case is $j_{B,1} \leq \floor{ \theta_j }$. Let $j_{B,2} = \min \{ j'' > j' : P_{j''}^{(i)} \setminus P_{j''-1}^{(i)} \}$ denote the first phase after $j'$ that $i$ blocked $i'$. Combined with the definition of $j_{B,1}$, the algorithm implies
\begin{equation} \label{eqBetweenJB1JB2}
i' \notin P_{j''}^{(i)}\ \forall\ j'' \in \{ \ceil{ j_{B,1}^\eta } + 1 , \ldots , j_{B,2} \} .
\end{equation}
Since $j \geq J_2^\star$, the definition of $J_2^\star$ implies
\begin{equation}
\ceil{ \floor{ \theta_j }^\eta } + 1 \leq (j-2) - (j/3) + 1 < (j-2)+1 = j-1 ,
\end{equation}
so $\ceil{ j_B^\eta } + 1 < j-1 < j'$ as well. Combined with $j' \leq j_{B,2}-1$ by definition, $i' \notin P_{j'}^{(i')}$ holds by \eqref{eqBetweenJB1JB2}.

The second case is $j_{B,1} > \floor{ \theta_j }$. By assumption, $j_{B,1} > \floor{ \theta_j } \geq \tau_{\text{com}}$. Hence, by definition of $\tau_{\text{com}}$, $H_{j_C}^{(i')} = i$ for some $j_C \in \{ \floor{ \theta_{j_{B,1}} } , \ldots , j_{B,1} - 2 \}$. Note that $j_{B,1} \geq \floor{ \theta_j } \geq J_3^\star$ and $\floor{ \theta_{j_{B,1}} } \geq \floor{ \theta_{\floor{ \theta_j }} } \geq \tau_{\text{arm}}$ by assumption (and by monotonicity of $\{ \floor{\theta_{j''} } \}_{j'' \in \N}$ in the latter case). Hence, we can apply Claim \ref{clmNoEnterIfComm} (with $j = j_{B,1}$ and $j' = j_C$ in the claim) to obtain $i' \notin P_{j_{B,1}}^{(i)} \setminus P_{j_{B,1}-1}^{(i)}$. This is a contradiction.
\end{proof}

\subsection{Details from Section \ref{secCoupling}} \label{appCoupling}

We first verify that the sampling strategy in Section \ref{secCoupling} is identical to the one in Algorithm \ref{algGetArm}.
\begin{clm} \label{clmCoupling}
Suppose we replace the sampling of $H_j^{(i)}$ in Algorithm \ref{algGetArm} with the sampling of Section \ref{secCoupling}, and recall $\P_j$ denotes probability conditioned on all randomness before this sampling occurs. Then
\begin{equation}
\P_j ( H_j^{(i)} = i' ) =  \ind ( i' \in N(i) \setminus P_j^{(i)} ) / | N(i) \setminus P_j^{(i)} |\ \forall\ i \in [n] , i' \in [n+m] , j \in \N .
\end{equation}
\end{clm}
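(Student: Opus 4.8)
The claim is a routine consistency check: the two-case sampling procedure defined in Section~\ref{secCoupling} induces exactly the uniform distribution over $N(i)\setminus P_j^{(i)}$ that Algorithm~\ref{algGetArm} prescribes. The plan is to condition on the randomness available before $H_j^{(i)}$ is drawn (so that $P_j^{(i)}$, and hence $d_{\text{hon}}(i)$, $|N(i)\setminus P_j^{(i)}|$, and $|N_{\text{mal}}(i)\setminus P_j^{(i)}|$ are all determined), split into the two cases of the sampling strategy, and apply the law of total probability. Note that in either case, if $i'\notin N(i)\setminus P_j^{(i)}$ then $\P_j(H_j^{(i)}=i')=0$, matching the right-hand side; so we may assume $i'\in N(i)\setminus P_j^{(i)}$ throughout.

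First I would dispatch the easy case $P_j^{(i)}\cap[n]\neq\emptyset$: here $H_j^{(i)}$ is drawn uniformly from $N(i)\setminus P_j^{(i)}$ by definition, so $\P_j(H_j^{(i)}=i') = 1/|N(i)\setminus P_j^{(i)}|$, which is the claim. The substantive case is $P_j^{(i)}\cap[n]=\emptyset$, in which case $N_{\text{hon}}(i)\subset N(i)\setminus P_j^{(i)}$, so $|N(i)\setminus P_j^{(i)}| = d_{\text{hon}}(i) + |N_{\text{mal}}(i)\setminus P_j^{(i)}|$. Here $Y_j^{(i)} = \ind(\upsilon_j^{(i)} \le d_{\text{hon}}(i)/|N(i)\setminus P_j^{(i)}|)$, so $\P_j(Y_j^{(i)}=1) = d_{\text{hon}}(i)/|N(i)\setminus P_j^{(i)}|$ and $\P_j(Y_j^{(i)}=0) = |N_{\text{mal}}(i)\setminus P_j^{(i)}|/|N(i)\setminus P_j^{(i)}|$. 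I would then split on whether $i'$ is honest or malicious. If $i'\in N_{\text{hon}}(i)$, then $H_j^{(i)}=i'$ requires $Y_j^{(i)}=1$ and $\bar H_j^{(i)}=i'$, and since $\bar H_j^{(i)}$ is uniform on $N_{\text{hon}}(i)$ independently of $\upsilon_j^{(i)}$,
\begin{equation*}
\P_j(H_j^{(i)}=i') = \frac{d_{\text{hon}}(i)}{|N(i)\setminus P_j^{(i)}|}\cdot\frac{1}{d_{\text{hon}}(i)} = \frac{1}{|N(i)\setminus P_j^{(i)}|}.
\end{equation*}
If instead $i'\in N_{\text{mal}}(i)\setminus P_j^{(i)}$, then $H_j^{(i)}=i'$ requires $Y_j^{(i)}=0$ and that the uniform draw from $N_{\text{mal}}(i)\setminus P_j^{(i)}$ hits $i'$, giving
\begin{equation*}
\P_j(H_j^{(i)}=i') = \frac{|N_{\text{mal}}(i)\setminus P_j^{(i)}|}{|N(i)\setminus P_j^{(i)}|}\cdot\frac{1}{|N_{\text{mal}}(i)\setminus P_j^{(i)}|} = \frac{1}{|N(i)\setminus P_j^{(i)}|}.
\end{equation*}
In both sub-cases the probability is $1/|N(i)\setminus P_j^{(i)}|$, matching Algorithm~\ref{algGetArm}.

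There is essentially no obstacle here — the only thing to be careful about is the bookkeeping on which sets contain which agents when $P_j^{(i)}\cap[n]=\emptyset$, namely that removing only malicious agents from $N(i)$ leaves $N_{\text{hon}}(i)$ intact so the decomposition $|N(i)\setminus P_j^{(i)}| = d_{\text{hon}}(i) + |N_{\text{mal}}(i)\setminus P_j^{(i)}|$ holds, and that $d_{\text{hon}}(i)/|N(i)\setminus P_j^{(i)}| \le 1$ so that $Y_j^{(i)}$ is a well-defined indicator. One should also note the degenerate possibility $N_{\text{mal}}(i)\setminus P_j^{(i)}=\emptyset$, in which case $d_{\text{hon}}(i)=|N(i)\setminus P_j^{(i)}|$, $Y_j^{(i)}=1$ almost surely, and the malicious sub-case is vacuous — the honest-agent computation still goes through verbatim. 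The proof concludes by combining the cases.
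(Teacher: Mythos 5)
Your proof is correct and takes essentially the same route as the paper's: the trivial case $P_j^{(i)}\cap[n]\neq\emptyset$ is dispatched directly, and the substantive case is handled by the law of total probability over $Y_j^{(i)}$, with the cancellation $\tfrac{d_{\text{hon}}(i)}{|N(i)\setminus P_j^{(i)}|}\cdot\tfrac{1}{d_{\text{hon}}(i)}$ for honest $i'$ and the analogous one for malicious $i'$. Your explicit attention to the decomposition $|N(i)\setminus P_j^{(i)}| = d_{\text{hon}}(i) + |N_{\text{mal}}(i)\setminus P_j^{(i)}|$ and the degenerate case $N_{\text{mal}}(i)\setminus P_j^{(i)}=\emptyset$ is a minor bit of extra care beyond what the paper writes out, but the argument is the same.
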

\begin{proof}
Since $\P_j$ conditions on $P_j^{(i)}$, we can prove the identity separately in the cases $P_j^{(i)} \cap [n] \neq \emptyset$ and $P_j^{(i)} \cap [n] = \emptyset$. The identity is immediate in the former case. For the latter, we have
\begin{align} \label{eqNewProc}
\P_j ( H_j^{(i)} = i' ) & = \P_j ( H_j^{(i)} = i' | Y_j^{(i)} = 1 ) \P_j ( Y_j^{(i)} = 1 ) + \P_j ( H_j^{(i)} = i' | Y_j^{(i)} = 0 ) \P_j ( Y_j^{(i)} = 0 ) \\
& = \frac{ \ind ( i' \in N_{\text{hon}}(i) ) }{ d_{\text{hon}}(i) } \frac{ d_{\text{hon}}(i) }{ | N(i) \setminus P_j^{(i)}| }  + \frac{ \ind ( i' \in N_{\text{mal}}(i) \setminus P_j^{(i)} )  }{  | N_{\text{mal}}(i) \setminus P_j^{(i)} |  } \left( 1 - \frac{ d_{\text{hon}}(i) }{ | N(i) \setminus P_j^{(i)}| }\right) \\
& = \frac{ \ind ( i' \in N_{\text{hon}}(i) ) }{ |N(i) \setminus P_j^{(i)} | } + \frac{ \ind ( i' \in N_{\text{mal}}(i) \setminus P_j^{(i)} )  }{  | N_{\text{mal}}(i) \setminus P_j^{(i)} |  }  \frac{|N(i) \setminus P_j^{(i)} | - d_{\text{hon}}(i) }{ |N(i) \setminus P_j^{(i)} |} \\
& = \frac{ \ind ( i' \in N_{\text{hon}}(i) ) + \ind ( i' \in N_{\text{mal}}(i) \setminus P_j^{(i)} )  }{ |N(i) \setminus P_j^{(i)} | } = \frac{ \ind ( i' \in N(i) \setminus P_j^{(i)} ) }{ | N(i) \setminus P_j^{(i)} | } . \qedhere
\end{align}
\end{proof}

Next, note that since $\delta_{j',1} \rightarrow 0$ as $j' \rightarrow \infty$, the following is well-defined:
\begin{equation} \label{eqJ4starDefn}
J_4^\star = \min \{ j \in \{2,3,\ldots\} : \delta_{j',1} < \Delta_2\ \forall\ j' \geq \floor{j/2} \} .
\end{equation}
As in Section \ref{secCoupling}, we let $\mathcal{I}_j = \{ i \in [n] : 1 \in S_j^{(i)} \}$ be the agents with the best arm active at phase $j$.

\begin{clm} \label{clmInformedRecBest}
Under the assumptions of Theorem \ref{thmProposed}, if $j \geq J_4^\star$ and $\floor{j/2} \geq \tau_{\text{arm}}$, then $B_{j'}^{(i)}= 1\ \forall\ j' \geq \floor{ j/2 } , i \in \mathcal{I}_{j'}$.
\end{clm}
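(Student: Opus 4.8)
The plan is to chain together the definitions of $\tau_{\text{arm}}$, $J_4^\star$, and $\mathcal{I}_{j'}$ with the global arm ordering $\mu_1 > \mu_2 \geq \cdots \geq \mu_K$ from Assumption \ref{assReward}. First I would fix $j' \geq \floor{j/2}$ and $i \in \mathcal{I}_{j'}$. By definition of $\mathcal{I}_{j'}$ this means $1 \in S_{j'}^{(i)}$, and since arm $1$ has the strictly largest mean among all arms, it is the minimum-index element of $S_{j'}^{(i)}$; hence $\min S_{j'}^{(i)} = 1$ and $\mu_{\min S_{j'}^{(i)}} = \mu_1$.

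Next I would invoke the definition of $\tau_{\text{arm}}$: since $\floor{j/2} \geq \tau_{\text{arm}}$ and $j' \geq \floor{j/2}$, we have $j' \geq \tau_{\text{arm}}$, so $\ind(\Xi_{j'}^{(i)}) = 0$ for this $i \in [n]$ (note $\mathcal{I}_{j'} \subseteq [n]$). In particular $\Xi_{j',1}^{(i)}$ does not occur, i.e., $B_{j'}^{(i)} \in G_{\delta_{j',1}}(S_{j'}^{(i)})$, which by the definition of $G_{\delta_{j',1}}$ gives $\mu_{B_{j'}^{(i)}} \geq \mu_{\min S_{j'}^{(i)}} - \delta_{j',1} = \mu_1 - \delta_{j',1}$.

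Then I would use $J_4^\star$ to control $\delta_{j',1}$. Since $j \geq J_4^\star$ implies $\floor{j/2} \geq \floor{J_4^\star/2}$, and hence $j' \geq \floor{J_4^\star/2}$, the property defining $J_4^\star$ yields $\delta_{j',1} < \Delta_2$. Combining with the previous display, $\mu_{B_{j'}^{(i)}} > \mu_1 - \Delta_2 = \mu_2$. But every suboptimal arm $k \in \{2,\ldots,K\}$ satisfies $\mu_k \leq \mu_2$ by the assumed ordering, so the strict inequality $\mu_{B_{j'}^{(i)}} > \mu_2$ forces $B_{j'}^{(i)} = 1$, which is exactly the claim.

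This is essentially a bookkeeping lemma, so I do not expect a genuine obstacle. The one point that needs a moment of care — and which I would spell out explicitly — is the step $j \geq J_4^\star \Rightarrow \delta_{j',1} < \Delta_2$ for \emph{every} $j' \geq \floor{j/2}$: one must note $\{j' : j' \geq \floor{j/2}\} \subseteq \{j' : j' \geq \floor{J_4^\star/2}\}$ and that $J_4^\star$ is defined so that $\delta_{j'',1} < \Delta_2$ holds for all $j'' \geq \floor{J_4^\star/2}$ (not merely at $j'' = J_4^\star$), which is needed because $\delta_{j'',1}$ is not obviously monotone in $j''$; its convergence to $0$ is what makes $J_4^\star$ well-defined.
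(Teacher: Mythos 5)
Your proof is correct and uses exactly the same ingredients as the paper's: the paper argues by contradiction that $B_{j'}^{(i)} \neq 1$ together with $\delta_{j',1} < \Delta_2$ (from $j' \geq \floor{J_4^\star/2}$) and $1 \in S_{j'}^{(i)}$ forces $B_{j'}^{(i)} \in S_{j'}^{(i)} \setminus G_{\delta_{j',1}}(S_{j'}^{(i)})$, contradicting $j' \geq \tau_{\text{arm}}$, which is just the contrapositive phrasing of your direct argument. Your closing remark about $J_4^\star$ being defined via the condition holding for all $j' \geq \floor{j/2}$ is precisely the right point of care and matches the paper's definition in \eqref{eqJ4starDefn}.
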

\begin{proof}
Suppose instead that $B_{j'}^{(i)} \neq 1$ for some $j' \geq \floor{j/2}$ and $i \in \mathcal{I}_{j'}$. Since $j' \geq \floor{j/2} \geq \floor{J_4^\star/2}$, we know $\delta_{j',1} < \Delta_2$. Hence, because $1 \in S_{j'}^{(i)}$ by definition of $\mathcal{I}_{j'}$, we have $G_{ \delta_{j',1} } ( S_{j'}^{(i)} ) = \{ 1 \}$. Combined with $B_{j'}^{(i)} \neq 1$, we get $B_{j'}^{(i)} \in S_{j'}^{(i)} \setminus G_{ \delta_{j',1} } (S_{j'}^{(i)} )$, which contradicts $j' \geq \floor{j/2} \geq \tau_{\text{arm}}$.
\end{proof}

Finally, recall $\tau_{\text{spr}} = \inf \{ j \in \N : B_{j'}^{(i)} = 1\ \forall\ i \in [n] , j' \geq j \}$ and $\bar{\tau}_{\text{spr}} = \inf \{ j \in \N : \bar{\mathcal{I}}_j = [n] \}$, where $\{ \bar{\mathcal{I}}_j \}_{j=1}^\infty$ is the noisy rumor process from Definition \ref{defnRumor}.

\begin{lem} \label{lemTXfail}
Under the assumptions of Theorem \ref{thmProposed}, if $j \geq J_4^\star$, $\floor{j/2} \geq J_2^\star$, and $\floor{ \theta_{\floor{j/2}} } \geq J_3^\star$, then
\begin{equation}
\P ( \tau_{\text{com}} \leq \floor{ \theta_{\floor{j/2}} }  , \tau_{\text{arm}} \leq \floor{ \theta_{ \floor{ \theta_{\floor{j/2}} } } } , \tau_{\text{spr}} > j   ) \leq \P (\bar{\tau}_{\text{spr}} >j/2 ) . 
\end{equation}
\end{lem}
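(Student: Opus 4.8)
The plan is to recognize the left-hand event as $E\cap\{\tau_{\text{spr}}>j\}$, where $E:=\{\tau_{\text{com}}\le\floor{\theta_{\floor{j/2}}},\ \tau_{\text{arm}}\le\floor{\theta_{\floor{\theta_{\floor{j/2}}}}}\}$, and to show that on $E$ the arm-spreading process dominates a shifted copy of the noisy rumor process of Definition~\ref{defnRumor}, so that $\bar{\tau}_{\text{spr}}$ spreading within $j/2$ steps forces $\tau_{\text{spr}}\le j$. First I would extract two structural facts from $E$. Applying Lemma~\ref{lemNoBlock} with its ``$j$'' replaced by $\floor{j/2}$ -- legitimate since $\floor{j/2}\ge J_2^\star$ by hypothesis, $\floor{\theta_{\floor{j/2}}}\ge\tau_{\text{com}}\vee J_3^\star$ on $E$ (using $\floor{\theta_{\floor{j/2}}}\ge J_3^\star$ by hypothesis), and $\floor{\theta_{\floor{\theta_{\floor{j/2}}}}}\ge\tau_{\text{arm}}$ on $E$ -- gives $P_{j'}^{(i)}\cap[n]=\emptyset$ for all $i\in[n]$ and all $j'\ge\floor{j/2}$. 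Second, since $\theta_x\le x$ on the relevant range we have $\tau_{\text{arm}}\le\floor{\theta_{\floor{\theta_{\floor{j/2}}}}}\le\floor{j/2}$ on $E$, so Claim~\ref{clmInformedRecBest} (with its ``$j$'' equal to the lemma's $j$, using $j\ge J_4^\star$) yields $B_{j'}^{(i)}=1$ for all $j'\ge\floor{j/2}$ and $i\in\mathcal{I}_{j'}$; moreover, because $\delta_{j',1}<\Delta_2$ for $j'\ge\floor{j/2}$ (as $j\ge J_4^\star$), inspecting the active-set update of Algorithm~\ref{algGeneral} (Lines~\ref{algUjPlus1}--\ref{algSameActive}) shows the best arm, once active and most played, survives into the next active set, so $\mathcal{I}_{j'}$ is nondecreasing on $j'\ge\floor{j/2}$; and $i^\star\in\mathcal{I}_{j'}$ for every $j'$ by Assumption~\ref{assSticky}.

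Next I would introduce a shifted rumor process driven by the primitives of Section~\ref{secCoupling}: put $\bar{Y}_{j'}^{(i)}:=\ind(\upsilon_{j'}^{(i)}\le\Upsilon)\sim\text{Bernoulli}(\Upsilon)$, set $\bar{\mathcal{I}}_0'=\{i^\star\}$, and for $s\ge0$ define $\bar{\mathcal{I}}_{s+1}'=\bar{\mathcal{I}}_s'\cup\{i\in[n]\setminus\bar{\mathcal{I}}_s':\bar{Y}_{\floor{j/2}+s}^{(i)}=1,\ \bar{H}_{\floor{j/2}+s}^{(i)}\in\bar{\mathcal{I}}_s'\}$, with $\bar{\tau}_{\text{spr}}'=\inf\{s:\bar{\mathcal{I}}_s'=[n]\}$. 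Since $\{(\upsilon_{j'}^{(i)},\bar{H}_{j'}^{(i)})\}_{j',i}$ is i.i.d.\ in $j'$, the process $\{\bar{\mathcal{I}}_s'\}$ has the same law as $\{\bar{\mathcal{I}}_s\}$, so $\P(\bar{\tau}_{\text{spr}}'>j/2)=\P(\bar{\tau}_{\text{spr}}>j/2)$. I would then prove by induction on $s$ that, on $E$, $\bar{\mathcal{I}}_s'\subseteq\mathcal{I}_{\floor{j/2}+s}$ for all $s\ge0$. The base case $\bar{\mathcal{I}}_0'=\{i^\star\}\subseteq\mathcal{I}_{\floor{j/2}}$ is immediate. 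For the step, an agent $i$ newly entering $\bar{\mathcal{I}}_{s+1}'$ has $\bar{Y}_{\floor{j/2}+s}^{(i)}=1$ and $\bar{H}_{\floor{j/2}+s}^{(i)}\in\bar{\mathcal{I}}_s'\subseteq\mathcal{I}_{\floor{j/2}+s}$; since $P_{\floor{j/2}+s}^{(i)}\cap[n]=\emptyset$ and $\Upsilon\le d_{\text{hon}}(i)/|N(i)\setminus P_{\floor{j/2}+s}^{(i)}|$, the Section~\ref{secCoupling} sampling forces $Y_{\floor{j/2}+s}^{(i)}=1$ and hence $H_{\floor{j/2}+s}^{(i)}=\bar{H}_{\floor{j/2}+s}^{(i)}\in\mathcal{I}_{\floor{j/2}+s}$; that contact is to an honest agent whose most-played arm in phase $\floor{j/2}+s$ is $1$, so $R_{\floor{j/2}+s}^{(i)}=1$ and therefore $1\in S_{\floor{j/2}+s+1}^{(i)}$, i.e.\ $i\in\mathcal{I}_{\floor{j/2}+s+1}$; agents already in $\bar{\mathcal{I}}_s'$ are handled by monotonicity of $\mathcal{I}$.

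Combining, on $E$: if $\bar{\tau}_{\text{spr}}'\le\floor{j/2}$ then $\mathcal{I}_{2\floor{j/2}}\supseteq\bar{\mathcal{I}}_{\floor{j/2}}'=[n]$, hence $\mathcal{I}_j=[n]$ by monotonicity (as $\floor{j/2}\le2\floor{j/2}\le j$), and then Claim~\ref{clmInformedRecBest} gives $B_{j'}^{(i)}=1$ for all $i\in[n]$ and all $j'\ge j$ (since $\mathcal{I}_{j'}=[n]$ for such $j'$), i.e.\ $\tau_{\text{spr}}\le j$. Contrapositively, on $E$, $\tau_{\text{spr}}>j$ implies $\bar{\tau}_{\text{spr}}'>\floor{j/2}$, which, $\bar{\tau}_{\text{spr}}'$ being a positive integer and $\floor{j/2}\le j/2<\floor{j/2}+1$, is the same as $\bar{\tau}_{\text{spr}}'>j/2$. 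Thus $E\cap\{\tau_{\text{spr}}>j\}\subseteq\{\bar{\tau}_{\text{spr}}'>j/2\}$, and taking probabilities yields $\P(E\cap\{\tau_{\text{spr}}>j\})\le\P(\bar{\tau}_{\text{spr}}'>j/2)=\P(\bar{\tau}_{\text{spr}}>j/2)$, which is the claimed bound since the left-hand event of the lemma is exactly $E\cap\{\tau_{\text{spr}}>j\}$.

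The step I expect to be the main obstacle is the index bookkeeping: lining up the shifted rumor process so that step $s{+}1$ consumes exactly the primitives $(\upsilon_{\floor{j/2}+s}^{(i)},\bar{H}_{\floor{j/2}+s}^{(i)})$ that drive the real process at phase $\floor{j/2}+s$ (so the ``end-of-phase contact feeds the next phase's active set'' off-by-one is respected), and checking carefully that the lemma's hypotheses $j\ge J_4^\star$, $\floor{j/2}\ge J_2^\star$, $\floor{\theta_{\floor{j/2}}}\ge J_3^\star$ together with $E$ supply precisely what Lemma~\ref{lemNoBlock} and Claim~\ref{clmInformedRecBest} require after the substitution $j\mapsto\floor{j/2}$ and its iterates; the monotonicity of $\mathcal{I}_{j'}$ likewise needs a brief but careful reading of the active-set update rule.
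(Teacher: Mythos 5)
Your proposal is correct and follows essentially the same route as the paper's proof: the same decomposition into $E\cap\{\tau_{\text{spr}}>j\}$, the same applications of Lemma \ref{lemNoBlock} and Claim \ref{clmInformedRecBest} on $E$, and the same coupling with a shifted copy of the noisy rumor process started at phase $\floor{j/2}$ from $\{i^\star\}$, proved by induction to be dominated by $\{\mathcal{I}_{j'}\}$. The only cosmetic differences are that you run the coupled process for $\floor{j/2}$ steps and invoke monotonicity of $\mathcal{I}_{j'}$ to reach phase $j$ (the paper runs it for $j-\floor{j/2}$ steps directly), and you replace the paper's conditioning on $\P_{\floor{j/2}}$ with a set-inclusion plus equality-in-law argument; both variants are valid.
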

\begin{proof}
Let $\mathcal{E}_j = \{ \tau_{\text{com}} \leq \floor{ \theta_{\floor{j/2}} }  , \tau_{\text{arm}} \leq \floor{ \theta_{ \floor{ \theta_{\floor{j/2}} } } } \}$ and
\begin{equation} \label{eqInformTilde}
\tilde{\mathcal{I}}_{\floor{j/2}} = \{ i^\star \} ,\ \tilde{\mathcal{I}}_j  = \bar{\mathcal{I}}_{j -1} \cup \{ i \in [n] \setminus \tilde{\mathcal{I}}_{j -1} : \bar{Y}_j^{(i)} = 1 , \bar{H}_j^{(i)} \in \tilde{\mathcal{I}}_{j -1}  \}\ \forall\ j \in \{ \floor{j/2} + 1 , \floor{j/2} + 2 , \ldots \} .
\end{equation}
Then it suffices to prove the following:
\begin{equation} \label{eqTXfail}
\P_{\floor{j/2}} ( \mathcal{E}_j \cap \{ \tau_{\text{spr}} > j \}  ) \leq \P_{\floor{j/2}} ( \mathcal{E}_j \cap \{ \mathcal{I}_j \neq [n] \} ) \leq \P_{\floor{j/2}} ( \tilde{\mathcal{I}}_j \neq [n] ) = \P (\bar{\tau}_{\text{spr}} > j/2 ) .
\end{equation}

For the first inequality in \eqref{eqTXfail}, we begin by proving
\begin{equation}\label{eqTXfailEmptyset}
\mathcal{E}_j \cap \{ \tau_{\text{spr}} > j \} \cap \{ \mathcal{I}_j = [n] \} = \emptyset .
\end{equation}
To do so, we show $\mathcal{E}_j , \mathcal{I}_j = [n] \Rightarrow \tau_{\text{spr}} \leq j$. Assume $\mathcal{E}_j$ and $\mathcal{I}_j = [n]$ hold; by definition of $\tau_{\text{spr}}$, we aim to show $B_{j'}^{(i)} = 1\ \forall\ i \in [n] , j' \geq j$. We use induction. The base of induction ($j' = j$) holds by $\mathcal{I}_j = [n]$, $\mathcal{E}_j \subset \{ \tau_{\text{arm}} \leq \floor{ \theta_{ \floor{ \theta_{\floor{j/2}}}}} \leq \floor{j/2} \}$, and Claim \ref{clmInformedRecBest}. Given the inductive hypothesis $B_{j'}^{(i)} = 1\ \forall\ i \in [n]$, we have $1 \in S_{j'+1}^{(i)}$ by the algorithm and $\mathcal{I}_{j'+1} = [n]$ by definition, so we again use the assumption that $\mathcal{E}_j$ holds and Claim \ref{clmInformedRecBest} to obtain $B_{j'+1}^{(i)} = 1\ \forall\ i \in [n]$. Hence, \eqref{eqTXfailEmptyset} holds, so
\begin{equation}
\P_{\floor{j/2}} ( \mathcal{E}_j \cap \{ \tau_{\text{spr}} > j \}  ) = \P_{\floor{j/2}} ( \mathcal{E}_j \cap \{ \tau_{\text{spr}} > j \} \cap \{ \mathcal{I}_j \neq [n] \} ) \leq \P_{\floor{j/2}} ( \mathcal{E}_j \cap \{ \mathcal{I}_j \neq [n] \} )  .
\end{equation}

For the second inequality in \eqref{eqTXfail}, we claim, and will return to prove, the following:
\begin{equation} \label{eqTXfailEj}
\mathcal{E}_j \subset \cap_{j'=\floor{j/2}}^j \{ \tilde{\mathcal{I}}_{j'} \subset \mathcal{I}_{j'} \} .
\end{equation}
Assuming \eqref{eqTXfailEj} holds, we obtain
\begin{align}
\P_{\floor{j/2}} ( \mathcal{E}_j \cap \{ \mathcal{I}_j \neq [n] \} ) \leq \P_{\floor{j/2}} (  \tilde{\mathcal{I}}_j \subset \mathcal{I}_j , \mathcal{I}_j \neq [n] ,  ) \leq  \P_{\floor{j/2}} ( \tilde{\mathcal{I}}_j \neq [n] ) .
\end{align}
Hence, it only remains to prove \eqref{eqTXfailEj}. We show by induction on $j'$ when $\mathcal{E}_j$ holds, $\tilde{\mathcal{I}}_{j'} \subset \mathcal{I}_{j'}$ for each $j' \in \{ \floor{j/2} , \ldots , j \}$. For $j' = \floor{j/2}$, recall $1 \in S_{\floor{j/2}}^{(i^\star)}$ by Assumption \ref{assSticky}, so $\mathcal{I}_{\floor{j/2}} \supset \{ i^\star \} = \tilde{\mathcal{I}}_{\floor{j/2}}$. Now assume $\tilde{\mathcal{I}}_{j'-1} \subset \mathcal{I}_{j'-1}$ for some $j' \in \{\floor{j/2}+1,\ldots,j\}$. Let $i \in \tilde{\mathcal{I}}_{j'}$; we aim to show that $i \in \mathcal{I}_{j'}$, i.e., that $1 \in S_{j'}^{(i)}$. By \eqref{eqInformTilde}, we have two cases to consider:
\begin{itemize}
\item $i \in \tilde{\mathcal{I}}_{j'-1}$: By the inductive hypothesis, $i \in {\mathcal{I}}_{j'-1}$ as well, so $1 \in S_{j'-1}^{(i)}$ by definition. Combined with $j'-1 \geq \floor{j/2}$ and Claim \ref{clmInformedRecBest} (recall $j \geq J_4^\star$ and $\floor{j/2} \geq \floor{ \theta_{\floor{\theta_{\floor{j/2}}}}} \geq \tau_{\text{arm}}$ when $\mathcal{E}_j$ holds, so the claim applies), this implies $B_{j'-1}^{(i)} = 1$, so $1 \in S_{j'}^{(i)}$ by the algorithm.
\item $i \in [n] \setminus \tilde{\mathcal{I}}_{j' -1} , \bar{Y}_{j'}^{(i)} = 1 , \bar{H}_{j'}^{(i)} \in \tilde{\mathcal{I}}_{j'-1}$: First observe that since $\floor{ \theta_{\floor{j/2}} } \geq J_3^\star$ and $j' \geq \floor{j/2} \geq J_2^\star$, we can apply Lemma \ref{lemNoBlock} on the event $\mathcal{E}_j$ (with $j$ replaced by $\floor{j/2}$ in that lemma) to obtain $P_{j'}^{(i)} \cap [n] = \emptyset$. On the other hand, recall that $\bar{Y}_{j'}^{(i)}$ is $\text{Bernoulli}(\Upsilon)$ in Definition \ref{defnRumor} and $\upsilon_{j'}^{(i)}$ is $\text{Uniform}[0,1]$ for the Section \ref{secCoupling} sampling, so we can realize the former as $\bar{Y}_{j'}^{(i)} =  \ind ( \upsilon_{j'}^{(i)} \leq \Upsilon )$. Hence, by assumption $\bar{Y}_{j'}^{(i)} = 1$ and definition $\Upsilon = \min_{i \in [n]} d_{\text{hon}}(i) / d(i)$, we obtain
\begin{equation}
1 = \bar{Y}_{j'}^{(i)} =  \ind ( \upsilon_{j'}^{(i)} \leq \Upsilon ) \leq \ind \left( \upsilon_{j'}^{(i)} \leq \frac{ d_{\text{hon}}(i) }{ d(i) } \right) \leq \ind \left( \upsilon_{j'}^{(i)} \leq  \frac{ d_{\text{hon}}(i) }{ |N(i) \setminus P_j^{(i)}| } \right) = Y_j^{(i)} .
\end{equation}
In summary, we have shown that $P_{j'}^{(i)} \cap [n] = \emptyset$ and $Y_j^{(i)} = 1$. Hence, by the Section \ref{secCoupling} sampling, we conclude $H_j^{(i)} = \bar{H}_j^{(i)}$. Let $i' = H_j^{(i)}$ denote this honest agent. Then by the inductive hypothesis, we know that $i' \in \tilde{\mathcal{I}}_{j'-1} \subset {\mathcal{I}}_{j'-1}$, i.e., that $1 \in S_{j'-1}^{(i')}$. By Claim \ref{clmInformedRecBest}, this implies $B_{j'-1}^{(i')} = 1$, so by the algorithm, $1 = B_{j'-1}^{(i')} = R_{j'-1}^{(i)}  \in S_{j'}^{(i)}$.
\end{itemize}

Finally, for the equality in \eqref{eqTXfail}, note that $\tilde{\mathcal{I}}_{j'}$ is independent of the randomness before $\floor{j/2}$. By $\tilde{\mathcal{I}}_{\floor{j/2}} = \bar{\mathcal{I}}_0 = \{ i^\star \}$, the fact that $\bar{Y}_{j'}^{(i)}$ and $\bar{Y}_{j'-\floor{j/2}}^{(i)}$ are both $\text{Bernoulli}(\Upsilon)$ random variables and $\bar{H}_{j'}^{(i)}$ and $\bar{H}_{j'-\floor{j/2}}^{(i)}$ are both sampled uniformly from $N_{\text{hon}}(i)$, $\tilde{\mathcal{I}}_{j'}$ has the same distribution as $\bar{\mathcal{I}}_{j'-\floor{j/2}}$. Also, note that $\tilde{\mathcal{I}}_{j'}$ is independent of the randomness before $\floor{j/2}$. Finally, by definition of $\bar{\tau}_{\text{spr}}$, $\bar{\mathcal{I}}_{j-\floor{j/2}} \neq [n]$ implies that $\bar{\tau}_{\text{spr}} > j - \floor{j/2} \geq j/2$. These observations successively imply
\begin{align}
& \P_{\floor{j/2}} ( \tilde{\mathcal{I}}_j \neq [n] ) = \P ( \tilde{\mathcal{I}}_j \neq [n] ) = \P ( \bar{\mathcal{I}}_{j'-\floor{j/2}} \neq [n] ) \leq \P ( \bar{\tau}_{\text{spr}} > j /2 ) . \qedhere
\end{align}
\end{proof}

\subsection{Details from Section \ref{secSpread}} \label{appSpread}

Combining the lemmas of the above sub-appendices, we can bound the tail of the spreading time.

\begin{thm}\label{thmTail}
Under the assumptions of Theorem \ref{thmProposed}, for any $j \geq J_\star$, where $J_\star$ is defined in Claim \ref{clmJstar} from Appendix \ref{appEarlyCalc}, we have
\begin{equation}
\P ( \tau_{\text{spr}} > j  ) \leq \left( \frac{ 84^{ \rho_1^2 ( \beta ( 2 \alpha - 3 ) - 1 ) } ( 6^\beta  + 2 ) n K^2 S  }{  (2 \alpha  - 3) ( \beta ( 2 \alpha  - 3 ) - 1 ) } + 3 \right) j^{\rho_1^2 ( \beta (3-2\alpha) + 1 )} + \P \left( \bar{\tau}_{\text{spr}} > \frac{j}{2} \right) .
\end{equation}
\end{thm}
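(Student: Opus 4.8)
The plan is to split $\P(\tau_{\text{spr}} > j)$ according to whether the auxiliary phases $\tau_{\text{com}}$ and $\tau_{\text{arm}}$ have already occurred by the nested thresholds appearing in Lemma \ref{lemTXfail}. First I would take a union bound over the complement of the event in that lemma, writing
\[
\P(\tau_{\text{spr}} > j) \leq \P\bigl(\tau_{\text{com}} > \floor{\theta_{\floor{j/2}}}\bigr) + \P\bigl(\tau_{\text{arm}} > \floor{\theta_{\floor{\theta_{\floor{j/2}}}}}\bigr) + \P\bigl(\tau_{\text{com}} \leq \floor{\theta_{\floor{j/2}}},\, \tau_{\text{arm}} \leq \floor{\theta_{\floor{\theta_{\floor{j/2}}}}},\, \tau_{\text{spr}} > j\bigr) .
\]
For $j$ at least the constant $J_\star$ of Claim \ref{clmJstar} --- which I would define precisely so that, after the relevant floors and $\theta$-compositions, the hypotheses $j \geq J_4^\star$, $\floor{j/2} \geq J_2^\star$, $\floor{\theta_{\floor{j/2}}} \geq J_3^\star$ of Lemma \ref{lemTXfail} (and the analogous requirements of the two tail lemmas below) all hold --- the third term is at most $\P(\bar{\tau}_{\text{spr}} > j/2)$, which is the desired last summand. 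So the remaining work is to bound the first two terms by the stated polynomial in $j$.

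For the $\tau_{\text{com}}$ term, I would invoke Lemma \ref{lemTauCom} with $j$ replaced by $\floor{\theta_{\floor{j/2}}}$, obtaining a bound $3(n+m)^3 \exp\bigl(-\floor{\theta_{\floor{j/2}}}/(3\bar{d})\bigr)$. Since $\theta_j = (j/3)^{\rho_1}$ is a positive power of its argument and the floor loses only a constant factor, $\floor{\theta_{\floor{j/2}}}$ grows like $j^{\rho_1}$, so this quantity decays exponentially in $j^{\rho_1}$; in particular it is at most $3\,j^{\rho_1^2(\beta(3-2\alpha)+1)}$ once $j \geq J_\star$, the exponent here being negative because the parameter constraints force $\beta(2\alpha-3) > 1$ (which is exactly what both makes this exponent negative and makes the series in Lemma \ref{lemLearnArm} converge). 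This accounts for the additive $3$ in the statement.

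For the $\tau_{\text{arm}}$ term, I would invoke Lemma \ref{lemLearnArm} with $j$ replaced by $\floor{\theta_{\floor{\theta_{\floor{j/2}}}}}$, which gives
\[
\P\bigl(\tau_{\text{arm}} > \floor{\theta_{\floor{\theta_{\floor{j/2}}}}}\bigr) \leq \frac{(6^\beta + 2)\, n K^2 S \,\bigl(\floor{\theta_{\floor{\theta_{\floor{j/2}}}}} - 2\bigr)^{\beta(3-2\alpha)+1}}{(2\alpha-3)(\beta(2\alpha-3)-1)} .
\]
The only genuinely quantitative step is then the elementary inequality $\floor{\theta_{\floor{\theta_{\floor{j/2}}}}} - 2 \geq (j/84)^{\rho_1^2}$ for $j \geq J_\star$: unwinding the double composition $\theta\circ\theta$ contributes powers of $3$, the $\floor{j/2}$ and the successive truncations by floors contribute further constant factors, and since $\theta$ and $\floor{\cdot}$ are nondecreasing I can carry every inequality through the compositions; collecting the constants yields $j/84$. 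As $\beta(3-2\alpha)+1 < 0$, raising both sides to this negative power reverses the inequality and produces $\bigl(\floor{\theta_{\floor{\theta_{\floor{j/2}}}}} - 2\bigr)^{\beta(3-2\alpha)+1} \leq 84^{\rho_1^2(\beta(2\alpha-3)-1)}\, j^{\rho_1^2(\beta(3-2\alpha)+1)}$, which is precisely the coefficient $84^{\rho_1^2(\beta(2\alpha-3)-1)}(6^\beta+2)nK^2S/[(2\alpha-3)(\beta(2\alpha-3)-1)]$ appearing in the theorem. Summing the three contributions then gives the claim.

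The hard part will not be conceptual but the bookkeeping packaged into Claim \ref{clmJstar}: I need to simultaneously (i) certify that all of Lemmas \ref{lemLearnArm}, \ref{lemTauCom}, \ref{lemTXfail} are applicable at their shifted arguments once $j \geq J_\star$, (ii) pin down the constant $84$ that emerges from the nested floors and the two $\theta$-compositions, and (iii) verify that the exponentially small $\tau_{\text{com}}$ tail (evaluated at $\floor{\theta_{\floor{j/2}}}$) really is no larger than the polynomially small $\tau_{\text{arm}}$ tail (evaluated at $\floor{\theta_{\floor{\theta_{\floor{j/2}}}}}$) for every $j \geq J_\star$, so that it can be absorbed into the additive $3$. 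These are exactly the tedious estimates I would defer to Appendix \ref{appEarlyCalc}.
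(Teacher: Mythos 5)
Your proposal is correct and follows essentially the same route as the paper: the identical three-way union bound, with Lemma \ref{lemTauCom} at $\floor{\theta_{\floor{j/2}}}$, Lemma \ref{lemLearnArm} at $\floor{\theta_{\floor{\theta_{\floor{j/2}}}}}$, and Lemma \ref{lemTXfail} for the joint event, all certified applicable via Claim \ref{clmJstar}. The only (immaterial) divergence is in how the constant $84$ is threaded through the lower bound on $\floor{\theta_{\floor{\theta_{\floor{j/2}}}}}-2$ — the paper uses $j^{\rho_1^2}/84$ rather than your $(j/84)^{\rho_1^2}$ — which affects only the exponent on $84$ in the leading coefficient.
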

\begin{proof}
For any $j \geq J_\star$, we can use Claim \ref{clmJstar} to obtain
\begin{gather}
j \geq J_4^\star , \quad \floor{\theta_{\floor{j/2}}} \geq J_2^\star \vee  J_3^\star , \quad \floor{ \theta_{ \floor { \theta_{ \floor{j/2} } } } } \geq J_1^\star \vee ( 2 +  j^{\rho_1^2 } / 84 ) \geq 3 \label{eqUglyJapp1} \\
(n+m)^3 \exp ( - \floor{ \theta_{ \floor{j/2}}}  / ( 3 \bar{d} ) ) \leq j^{\rho_1^2 ( \beta ( 3 - 2 \alpha ) + 1 ) } . \label{eqUglyJapp2}
\end{gather}
In particular, $\floor{ \theta_{ \floor{ \theta_{\floor{j/2}} } } } \geq J_1^\star \vee 3$ implies that we can use Lemma \ref{lemLearnArm} with $j$ replaced by $\floor{ \theta_{ \floor{ \theta_{\floor{j/2}} } } }$. Combined with \eqref{eqUglyJapp1} (namely, $\floor{ \theta_{ \floor{ \theta_{ \floor{j/2} } } } } -2 \geq j^{\rho_1^2} / 84$); this yields
\begin{equation}
\P ( \tau_{\text{arm}} > \floor{ \theta_{ \floor{ \theta_{\floor{j/2}} } } } ) \leq \frac{ 84^{ \rho_1^2 ( \beta ( 2 \alpha - 3 ) - 1 ) } ( 6^\beta  + 2 ) n K^2 S  }{  (2 \alpha  - 3) ( \beta ( 2 \alpha  - 3 ) - 1 ) }   j^{\rho_1^2 ( \beta (3-2\alpha) + 1 )} .
\end{equation}
Since $\floor{ \theta_{\floor{j/2}} } \geq J_2^\star$ by \eqref{eqUglyJapp1}, we can use Lemma \ref{lemTauCom} (with $j$ replaced by $\floor{ \theta_{\floor{j/2}} }$) and \eqref{eqUglyJapp2} to obtain 
\begin{equation}
\P ( \tau_{\text{com}} > \floor{ \theta_{\floor{j/2}} } ) \leq 3(n+m)^2 \exp \left( - \frac{\floor{ \theta_{\floor{j/2}} }}{3 \bar{d}} \right) \leq 3 j^{\rho_1^2 ( \beta ( 3 - 2 \alpha ) + 1 ) } .
\end{equation}
Furthermore, using the bounds $j \geq J_4^\star$, $\floor{ \theta_{\floor{j/2}} } \geq J_3^\star$, and $\floor{ \theta_{\floor{j/2}} } \geq J_2^\star$ from \eqref{eqUglyJapp1} (the last of which implies $\floor{j/2} \geq J_2^\star$, since $\theta_{\floor{j/2}} = ( \floor{j/2} / 3 )^{\rho_1} \leq \floor{j/2}$ by $\rho_1 \in (0,1)$), we can use Lemma \ref{lemTXfail} to get
\begin{equation}
\P ( \tau_{\text{arm}} \leq \floor{ \theta_{ \floor{ \theta_{\floor{j/2}} } } } , \tau_{\text{com}} \leq \floor{ \theta_{\floor{j/2}} }  , \tau_{\text{spr}} > j   )  \leq \P (\bar{\tau}_{\text{spr}} >j/2 ) .
\end{equation}
Finally, by the union bound, we have
\begin{align}
\P ( \tau_{\text{spr}} > j  ) & \leq \P ( \tau_{\text{arm}} > \floor{ \theta_{ \floor{ \theta_{\floor{j/2}} } } } ) + \P ( \tau_{\text{com}} > \floor{ \theta_{\floor{j/2}} } ) \\
& \quad + \P ( \tau_{\text{arm}} \leq \floor{ \theta_{ \floor{ \theta_{\floor{j/2}} } } } , \tau_{\text{com}} \leq \floor{ \theta_{\floor{j/2}} }  , \tau_{\text{spr}} > j   ) ,
\end{align}
so combining the previous four inequalities yields the desired result.
\end{proof}

Finally, as a corollary, we can bound $\E [A_{ \tau_{\text{spr}}}]$.
\begin{cor} \label{corEarly}
Under the assumptions of Theorem \ref{thmProposed}, we have
\begin{align}
\E [A_{ \tau_{\text{spr}} }] & \leq  (J_\star)^\beta + 
\left( \frac{ 84^{ \rho_1^2 ( \beta ( 2 \alpha - 3 ) - 1 ) } ( 6^\beta  + 2 )   }{  (2 \alpha  - 3) ( \beta ( 2 \alpha  - 3 ) - 1 ) } + 3 \right) \frac{2 \beta n K^2 S }{\rho_1^2(\beta(2\alpha-3)-1)-\beta} +  \E [A_{ 2 \bar{\tau}_{\text{spr}} }]  \\
& = O \left(  S^{\beta/( \rho_1^2(\beta-1)) }  \vee ( S \log(S/\Delta_2) / \Delta_2^2 )^{\beta/(\beta-1)} \vee (\bar{d} \log(n+m) )^{\beta/\rho_1} \vee n K^2 S \right) +  \E [A_{ 2 \bar{\tau}_{\text{spr}} }]   ,
\end{align}
where $J_\star$ is defined as in Claim \ref{clmJstar} from Appendix \ref{appEarlyCalc}.
\end{cor}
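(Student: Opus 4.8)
The plan is to combine the tail bound of Theorem~\ref{thmTail} with the layer-cake representation of $\E[A_{\tau_{\text{spr}}}]$. First I would write, using $A_0=0$ and monotone convergence (cf.\ Claim~\ref{clmDecomposition}),
\begin{equation*}
\E[A_{\tau_{\text{spr}}}] = \E\left[\sum_{j=1}^\infty (A_j-A_{j-1})\ind(\tau_{\text{spr}}\geq j)\right] = \sum_{j=1}^\infty (A_j-A_{j-1})\,\P(\tau_{\text{spr}}\geq j),
\end{equation*}
and split the sum at $J_\star$. For $j\leq J_\star$ I bound $\P(\tau_{\text{spr}}\geq j)\leq 1$, so this portion telescopes to $A_{J_\star}$, which is at most $(J_\star)^\beta$ up to the ceiling in $A_j=\ceil{j^\beta}$ (absorbed into the constants). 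For $j>J_\star$ I apply Theorem~\ref{thmTail} with $j$ replaced by $j-1$ (since $\P(\tau_{\text{spr}}\geq j)=\P(\tau_{\text{spr}}>j-1)$), splitting $\P(\tau_{\text{spr}}\geq j)$ into a polynomial term $C\,j^{\rho_1^2(\beta(3-2\alpha)+1)}$ and a rumor term $\P(\bar{\tau}_{\text{spr}}>(j-1)/2)$, where $C$ is the constant appearing in Theorem~\ref{thmTail}.

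The rumor term contributes $\sum_{j>J_\star}(A_j-A_{j-1})\,\P(\bar{\tau}_{\text{spr}}>(j-1)/2)\leq\sum_{j\geq1}(A_j-A_{j-1})\,\P(2\bar{\tau}_{\text{spr}}\geq j)=\E[A_{2\bar{\tau}_{\text{spr}}}]$, which is exactly the leading term of the claimed bound (here I use that $2\bar{\tau}_{\text{spr}}$ is a positive integer, so $2\bar{\tau}_{\text{spr}}>j-1\iff 2\bar{\tau}_{\text{spr}}\geq j$). For the polynomial term I would use $A_j-A_{j-1}\leq 2\beta j^{\beta-1}$ for $j$ large (Claim~\ref{clmPropAj}), so each summand is at most a constant times $j^{\beta-1+\rho_1^2(\beta(3-2\alpha)+1)}$. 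The key algebraic check is that this exponent is strictly below $-1$, i.e.\ $\rho_1^2(\beta(2\alpha-3)-1)>\beta$; this follows by unpacking $\alpha>\tfrac32+\tfrac1{2\beta}+\tfrac1{2\rho_1^2}$ from \eqref{eqParamRequire} (multiply through by $2$, then by $\beta$, then by $\rho_1^2$). Then Claim~\ref{clmPolySeries} bounds the resulting tail sum by $\tfrac{2\beta}{\rho_1^2(\beta(2\alpha-3)-1)-\beta}$ times the Theorem~\ref{thmTail} constant; after crude over-bounding (in particular absorbing the additive ``$+3$'' from that constant into $nK^2S$) this yields precisely the first displayed bound in the corollary.

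For the second, simplified $O(\cdot)$ line, I would invoke the explicit bound on $J_\star$ proved in Claim~\ref{clmJstar}, which gives $(J_\star)^\beta=O\big(S^{\beta/(\rho_1^2(\beta-1))}\vee (S\log(S/\Delta_2)/\Delta_2^2)^{\beta/(\beta-1)}\vee(\bar{d}\log(n+m))^{\beta/\rho_1}\vee nK^2S\big)$, and observe that the remaining additive term $\tfrac{2\beta nK^2S}{\rho_1^2(\beta(2\alpha-3)-1)-\beta}$ is $O(nK^2S)$ since its denominator, together with the $84^{\rho_1^2(\beta(2\alpha-3)-1)}(6^\beta+2)/((2\alpha-3)(\beta(2\alpha-3)-1))$ factor, depends only on $\alpha,\beta,\eta,\rho_1,\rho_2$. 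Combining the two displays finishes the proof.

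The main obstacle is not conceptual: all the real work — the tail bound for $\tau_{\text{spr}}$ and the polynomial bound on $J_\star$ — is already carried out in Theorem~\ref{thmTail} and Claim~\ref{clmJstar}. What remains is the bookkeeping: verifying summability of the polynomial tail directly from the parameter constraints in \eqref{eqParamRequire}, and tracking how the various constants assemble into the stated expression; I expect no new ideas are needed beyond these routine manipulations.
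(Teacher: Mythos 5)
Your proposal is correct and follows essentially the same route as the paper's proof: the layer-cake expansion of $\E[A_{\tau_{\text{spr}}}]$ split at $J_\star$, Theorem \ref{thmTail} for the tail, the reindexing $\P(\bar{\tau}_{\text{spr}}>(j-1)/2)\leq\P(2\bar{\tau}_{\text{spr}}\geq j)$ to recover $\E[A_{2\bar{\tau}_{\text{spr}}}]$, summability of the polynomial tail from $\rho_1^2(\beta(2\alpha-3)-1)>\beta$, and Claim \ref{clmJstar} for the final $O(\cdot)$ form. The only (immaterial) deviations are that the paper truncates at $A_{J_\star-1}$ to get exactly $(J_\star)^\beta$ where you get $(J_\star)^\beta+1$, and it bounds the tail sum by an integral rather than Claim \ref{clmPolySeries}.
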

\begin{proof}
We first observe
\begin{equation} \label{eqEarlyTwoTerms}
\E [A_{ \tau_{\text{spr}} }] = \sum_{j=1}^\infty ( A_j - A_{j-1} ) \P ( \tau_{\text{spr}} \geq j ) \leq A_{J_\star-1}  + \sum_{j=J_\star}^\infty ( A_j - A_{j-1} ) \P  ( \tau_{\text{spr}} \geq j ) .
\end{equation}
For the first term in \eqref{eqEarlyTwoTerms}, using Claim \ref{clmPropAj} from Appendix \ref{appBasicRes}, we compute
\begin{equation} \label{eqEarlyAjStar}
A_{J_\star-1}  = A_{J_\star} - ( A_{J_\star} - A_{J_\star-1} ) = \ceil{ ( J_\star )^\beta } - ( A_{J_\star} - A_{J_\star-1} ) \leq ( J_\star )^\beta + 1 - 1 = ( J_\star )^\beta .
\end{equation}
For the second term in \eqref{eqEarlyTwoTerms}, define the constant
\begin{equation} \label{eqEarlyCdef}
C =  \frac{ 84^{ \rho_1^2 ( \beta ( 2 \alpha - 3 ) - 1 ) } ( 6^\beta  + 2 ) \ }{  (2 \alpha  - 3) ( \beta ( 2 \alpha  - 3 ) - 1 ) } + 3 .
\end{equation}
Then by Theorem \ref{thmTail}, we have
\begin{align} 
\sum_{j=J_\star}^\infty ( A_j - A_{j-1} ) \P  ( \tau_{\text{spr}} \geq j ) & \leq  \sum_{j=J_\star}^\infty ( A_j - A_{j-1} ) \P ( \bar{\tau}_{\text{spr}} > j/2 ) \label{eqEarlyTwoMoreA} \\
& \quad  + C n K^2 S \sum_{j=J_\star}^\infty ( A_j - A_{j-1} ) j^{\rho_1^2 ( \beta (3-2\alpha) + 1 )}  . \label{eqEarlyTwoMoreB}
\end{align}
For \eqref{eqEarlyTwoMoreA}, we simply use nonnegativity to write
\begin{equation} \label{eqEarlySpread}
\sum_{j=J_\star}^\infty ( A_j - A_{j-1} ) \P ( \bar{\tau}_{\text{spr}} > j/2 ) \leq \sum_{j=1}^\infty ( A_j - A_{j-1} ) \P ( 2 \bar{\tau}_{\text{spr}} \geq j ) = \E [A_{ 2 \bar{\tau}_{\text{spr}} }] .
\end{equation}
For the second term in \eqref{eqEarlyTwoMoreB}, we use Claim \ref{clmPropAj} and $\beta > 1$ and $\rho_1^2 ( \beta(2\alpha-3) - 1 ) > \beta$ by the assumptions of Theorem \ref{thmProposed} to write
\begin{align}
& C n K^2 S \sum_{j=J_\star}^\infty ( A_j - A_{j-1} ) j^{\rho_1^2 ( \beta (3-2\alpha) + 1 )}  < 2 \beta C n K^2 S \sum_{j=J_\star}^\infty j^{-1-(\rho_1^2 ( \beta (2\alpha-3) - 1 ) - \beta)} \\
& \quad \leq 2 \beta C n K^2 S \int_{j=1}^\infty j^{-1-(\rho_1^2 ( \beta (2\alpha-3) - 1 ) - \beta)} dj  = \frac{ 2 \beta C  n K^2 S  }{ \rho_1^2 ( \beta (2\alpha-3) - 1 ) - \beta } .
\end{align}
Finally, combining the above bounds completes the proof.
\end{proof}

\section{Other proofs} \label{appProofOther}

\subsection{Basic inequalities} \label{appBasicRes}

In this sub-appendix, we prove some simple inequalities used frequently in the analysis.

\begin{clm} \label{clmPropAj}
For any $j \in \N$, we have
\begin{equation*}
j^\beta \leq A_j \leq  j^{2\beta}   , \quad \beta (j-1)^{\beta-1} - 1 < A_j - A_{j-1} < \beta j^{\beta-1} + 1 , \quad A_j - A_{j-1} \geq 1  ,
\end{equation*}
and for any $z \geq 1$ and $l \in \N$, we have $A_{l \ceil{z}} \leq e^{2\beta} (lz)^\beta$.
\end{clm}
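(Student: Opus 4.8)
The plan is to unpack the definition $A_j = \ceil{j^\beta}$ (recall $\beta > 1$ from Line \ref{algInput}, and $A_0 = \ceil{0} = 0$) and to repeatedly apply the elementary sandwich $x \le \ceil{x} < x+1$ together with a mean value bound on the increments of $x \mapsto x^\beta$. The two-sided bound on $A_j$ comes first and is immediate: $A_j = \ceil{j^\beta} \geq j^\beta$ gives the lower bound, while $A_j \leq j^\beta + 1 \leq j^{2\beta}$ gives the upper bound — the last step being trivial for $j=1$ and, for $j \geq 2$, following from $j^\beta + 1 \leq 2 j^\beta \leq j^\beta \cdot j^\beta$ since $j^\beta \geq 2^\beta \geq 2$.

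Next I would treat the increment bounds by writing $A_j - A_{j-1} = \ceil{j^\beta} - \ceil{(j-1)^\beta}$ and combining $j^\beta \le \ceil{j^\beta} < j^\beta + 1$ with $(j-1)^\beta \le \ceil{(j-1)^\beta} < (j-1)^\beta + 1$ to obtain
\[
\bigl(j^\beta - (j-1)^\beta\bigr) - 1 \;<\; A_j - A_{j-1} \;<\; \bigl(j^\beta - (j-1)^\beta\bigr) + 1 .
\]
Since $\beta > 1$ the derivative $\beta x^{\beta-1}$ of $x^\beta$ is nondecreasing, so the mean value theorem gives $\beta (j-1)^{\beta-1} \le j^\beta - (j-1)^\beta \le \beta j^{\beta-1}$; substituting yields $\beta (j-1)^{\beta-1} - 1 < A_j - A_{j-1} < \beta j^{\beta-1} + 1$. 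For the bound $A_j - A_{j-1} \geq 1$, note this quantity is an integer, and for $j \geq 2$ the left inequality just proved gives $A_j - A_{j-1} > \beta(j-1)^{\beta-1} - 1 \geq \beta - 1 > 0$, hence $\geq 1$; the case $j = 1$ ($A_1 - A_0 = 1$) is checked directly, and it also covers the two displayed inequalities at $j=1$ since $-1 < 1 < \beta + 1$.

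Finally, for $A_{l \ceil{z}} \leq e^{2\beta}(lz)^\beta$ with $z \geq 1$ and $l \in \N$, I would use $\ceil{z} \leq z + 1 \leq 2z$ (valid as $z \geq 1$), so $A_{l\ceil{z}} = \ceil{(l\ceil{z})^\beta} \leq (l\ceil{z})^\beta + 1 \leq (2lz)^\beta + 1 = 2^\beta (lz)^\beta + 1 \leq (2^\beta + 1)(lz)^\beta$, where the last step uses $1 \leq (lz)^\beta$; it then remains to observe $2^\beta + 1 \leq e^{2\beta}$, which holds because $2^\beta < e^\beta$ and $e^\beta + 1 < (e^\beta)^2 = e^{2\beta}$ (as $e^\beta \geq e > 2$). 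I do not expect any genuine obstacle; the only points requiring a little care are getting the strict/non-strict directions of the ceiling inequalities right when deriving the increment bounds, and verifying the numerical constant $2^\beta + 1 \le e^{2\beta}$ uniformly in $\beta > 1$.
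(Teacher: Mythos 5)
Your proposal is correct and follows essentially the same route as the paper: the ceiling sandwich $x \le \ceil{x} < x+1$ for the two-sided bound on $A_j$, the mean value theorem (with monotonicity of $\beta x^{\beta-1}$) for the increment bounds, integrality plus the lower bound for $A_j - A_{j-1} \ge 1$, and $\ceil{z} \le 2z$ for the last inequality. The only cosmetic difference is in the final constant check, where the paper bounds the trailing $+1$ by $(2lz)^\beta$ to get $2^{\beta+1} < e^{2\beta}$ while you bound it by $(lz)^\beta$ to get $2^\beta + 1 \le e^{2\beta}$; both verifications are valid.
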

\begin{proof}
For the first pair of inequalities, observe $A_j = j^\beta = j^{2\beta} = 1$ when $j=1$, and for $j \geq 2$,
\begin{equation} \label{eqPropAj}
j^\beta \leq A_j \leq j^\beta + 1 \leq 2 j^\beta \leq j^{1+\beta} \leq j^{2 \beta} .
\end{equation}
For the second pair of inequalities, we first observe 
\begin{equation} \label{eqPropAjMvt}
A_j - A_{j-1} > j^\beta - (j-1)^\beta - 1 = \beta x^{\beta-1} - 1 \geq \beta (j-1)^{\beta-1} - 1 ,
\end{equation}
where the equality holds for some $x \in [j-1,j]$ by the mean value theorem and the second inequality is $x \geq j-1$. By analogous reasoning, one can also show $A_j - A_{j-1} < \beta j^{\beta-1} + 1$, so the second pair of inequalities holds. The third inequality holds with equality when $j=1$, and for $j \geq 2$, the lower bound in \eqref{eqPropAjMvt} and $\beta > 1$ imply $A_j - A_{j-1} > 0$, so since $A_j$ and $A_{j-1}$ are integers, $A_j - A_{j-1} \geq 1$. Finally, using $z \geq 1$, $\beta > 1$, and $2 < e$, we can write
\begin{equation*}
A_{ l \ceil{z} } < ( l(z + 1) )^\beta + 1 < ( 2 l z )^\beta + ( 2 l z )^\beta = 2^{\beta+1} (lz)^\beta < e^{2\beta} (zl)^\beta . \qedhere
\end{equation*}
\end{proof}

\begin{clm}\label{clmPolySeries}
For any $j \in \N$ and $c > 1$, $\sum_{i=j+1}^\infty i^{-c} \leq j^{1-c} / (c-1)$.
\end{clm}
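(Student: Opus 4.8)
The plan is to use the standard integral comparison test, exploiting that $x \mapsto x^{-c}$ is positive and strictly decreasing on $(0,\infty)$ for $c > 1$. First I would note that for each integer $i \geq j+1$ and each $x \in [i-1,i]$, monotonicity gives $i^{-c} \leq x^{-c}$, hence $i^{-c} \leq \int_{i-1}^i x^{-c}\,dx$. Summing this inequality over all $i \geq j+1$ and recognizing the right-hand side as a telescoping union of intervals covering $[j,\infty)$, I would obtain $\sum_{i=j+1}^\infty i^{-c} \leq \int_j^\infty x^{-c}\,dx$.

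Next I would evaluate the improper integral: since $c > 1$, an antiderivative of $x^{-c}$ is $x^{1-c}/(1-c)$, and $x^{1-c} \to 0$ as $x \to \infty$ because $1-c < 0$; this also justifies convergence of both the integral and the series. Thus $\int_j^\infty x^{-c}\,dx = 0 - j^{1-c}/(1-c) = j^{1-c}/(c-1)$, which yields the claimed bound $\sum_{i=j+1}^\infty i^{-c} \leq j^{1-c}/(c-1)$.

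There is essentially no obstacle here — the argument is a textbook integral test and the only points requiring (minor) care are that $c > 1$ is used both to make the antiderivative well-behaved at infinity and to guarantee convergence, and that $j \in \N$ ensures $j \geq 1 > 0$ so that the integrand is finite on $[j,\infty)$. The whole proof is two or three lines and uses no results beyond elementary calculus.
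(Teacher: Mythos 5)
Your proof is correct and is essentially identical to the paper's: both write $i^{-c} \leq \int_{i-1}^{i} x^{-c}\,dx$ by monotonicity, sum over $i \geq j+1$ to get $\int_j^\infty x^{-c}\,dx$, and evaluate the integral using $c>1$. No issues.
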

\begin{proof}
Since $j \geq 1$ and $c > 1$, we can write
\begin{align*}
& \sum_{i=j+1}^\infty i^{-c} = \sum_{i=j+1}^\infty \int_{x=i-1}^{i} i^{-c} d x \leq  \sum_{i=j+1}^\infty \int_{x=i-1}^{i} x^{-c} d x = \int_{x=j}^\infty x^{-c} dx = \frac{ j^{1-c} }{c-1} . \qedhere
\end{align*}
\end{proof}

\begin{clm} \label{clmLogTrick}
For any $x , y , z  > 0$ such that $x^y \leq z \log x$, $x < ( (2z/y) \log(2z/y) )^{1/y} < (2z/y)^{2/y}$.
\end{clm}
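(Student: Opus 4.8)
The plan is to strip the parameters $y$ and $z$ by a substitution and then run a two-step bootstrap. First, note that the hypothesis $x^y \le z\log x$ with $x^y, z > 0$ forces $\log x > 0$, hence $x > 1$. Setting $u = x^y$ and $w = z/y$, and using $\log x = y^{-1}\log u$, the hypothesis becomes $u \le w\log u$ with $u > 1$ and $w > 0$. It suffices to prove $u < 2w\log(2w)$ and $2w\log(2w) < (2w)^2$; raising to the $1/y$ power (monotone on $(0,\infty)$) and recalling $2w = 2z/y$ then yields both claimed inequalities.

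First I would obtain a crude bound. From $\log u = 2\log\sqrt{u} \le 2(\sqrt{u}-1) < 2\sqrt{u}$ (the elementary inequality $\log t \le t-1$, of the kind collected in Appendix \ref{appBasicRes}), the hypothesis gives $u \le w\log u < 2w\sqrt{u}$; dividing by $\sqrt{u} > 0$ yields $\sqrt{u} < 2w$, so $u < (2w)^2$. In particular $(2w)^2 > u > 1$, so $2w > 1$ and $\log(2w) > 0$, which is needed below. Then I would feed this back in: since $u < (2w)^2$ and $\log$ is increasing, $\log u < 2\log(2w)$, and substituting into $u \le w\log u$ once more gives $u < 2w\log(2w)$, the first desired inequality. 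For the second, $\log(2w) < 2w$ (again $\log t < t$ for $t > 0$) gives $2w\log(2w) < (2w)^2$.

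There is no genuine difficulty here; the only thing to watch is consistency of the substitutions, namely that $\log x$, $\log u$, and $\log(2w)$ are all positive exactly where they are divided through or bounded, which is guaranteed by the chain $x > 1 \Rightarrow u > 1 \Rightarrow (2w)^2 > 1 \Rightarrow 2w > 1$. Assembling the two displayed bounds and taking $1/y$-th powers then completes the proof: $x < ((2z/y)\log(2z/y))^{1/y} < (2z/y)^{2/y}$.
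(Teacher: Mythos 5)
Your proof is correct and follows essentially the same route as the paper's: the crude bound $\sqrt{u}<2w$ is exactly the paper's $x^{y/2}<2z/y$ (obtained there via $x^y\le(2z/y)\log x^{y/2}<(2z/y)x^{y/2}$), and the bootstrap of feeding this back into the logarithm is identical. The substitution $u=x^y$, $w=z/y$ is purely cosmetic, though your explicit verification that $2w>1$ so that $\log(2w)>0$ is a nice touch of care the paper leaves implicit.
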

\begin{proof}
Multiplying and dividing the right side of the assumed inequality by $y/2$, we obtain $x^y \leq ( 2z/y ) \log x^{y/2}$. We can then loosen this bound to get $x^y < (2z/y) x^{y/2}$, or $x < ( 2 z / y )^{2/y}$. Plugging into the $\log$ term of the assumed inequality yields $x^y < ( 2 z / y ) \log ( 2 z / y )$. Raising both sides to the power $1/y$ establishes the first bound. The second bound follows by using $\log(2z/y) < 2z/y$.
\end{proof}

\begin{rem}
We typically apply Claim \ref{clmLogTrick} with $y$ constant but $z$ not. It allows us to invert inequalities of the form $x^y \leq z \log x$ to obtain $x = \tilde{O}(z^{1/y})$.
\end{rem}

\subsection{Bandit inequalities} \label{appBanditRes}

Next, we state and prove some basic bandit inequalities. The proof techniques are mostly modified from existing work (e.g., \cite{auer2002finite}), but we provide the bounds in forms useful for our setting.

\begin{clm} \label{clmBanditTail}
Suppose that $k_1 , k_2 \in [K]$, $t \in \N$, $\ell , u > 0$, and $\iota \in (0,1]$ satisfy
\begin{equation*}
\mu_{k_2} - \mu_{k_1} \geq \sqrt{ \alpha \log t } \left( \frac{ 2 }{ \sqrt{\ell} } - \frac{ 1 - \iota }{ \sqrt{u} } \right) .
\end{equation*}
Let $j \in \N$ be such that $t \in \{1+A_{j-1} , \ldots , A_j\}$, i.e., $j \in A^{-1}(t)$. Then for any $i \in [n]$, we have
\begin{equation*}
\P ( T_{k_1}^{(i)}(t-1) \geq \ell , T_{k_2}^{(i)} (A_j) \leq u , k_2 \in S_j^{(i)} ,  I_t^{(i)} = k_1 ) \leq 2 ( \floor{u} \wedge t ) t^{1 - 2 \alpha \iota^2 }  
\end{equation*}
\end{clm}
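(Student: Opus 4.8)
The plan is the standard two-step argument for UCB tail bounds. Write $\mathcal{A}$ for the event $\{T_{k_1}^{(i)}(t-1)\ge\ell,\ T_{k_2}^{(i)}(A_j)\le u,\ k_2\in S_j^{(i)},\ I_t^{(i)}=k_1\}$ whose probability we must bound. First I would dispose of three degenerate cases in which $\mathcal{A}=\emptyset$, so that the claimed bound is immediate (its right side being $0$ in the first case): if $u<1$ then $T_{k_2}^{(i)}(A_j)=0$, hence $T_{k_2}^{(i)}(t-1)=0$ since $t-1<A_j$, so UCB must select the unplayed active arm $k_2$ at step $t$, forcing $I_t^{(i)}=k_2$; if $t=1$ then $T_{k_1}^{(i)}(0)=0<\ell$; and if $k_1=k_2$ then on $\mathcal{A}$ one has $\ell\le T_{k_1}^{(i)}(t-1)\le T_{k_1}^{(i)}(A_j)\le u$, whence $2/\sqrt{\ell}\ge 2/\sqrt{u}>(1-\iota)/\sqrt{u}$, and (using $t\ge2$) this makes the hypothesis read $0=\mu_{k_2}-\mu_{k_1}\ge\sqrt{\alpha\log t}\,(2/\sqrt{\ell}-(1-\iota)/\sqrt{u})>0$, a contradiction. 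Hence I may assume $k_1\ne k_2$, $u\ge1$, $t\ge2$.

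The core step reduces $\mathcal{A}$ to two rare concentration events. On $\mathcal{A}$ we have $k_1=I_t^{(i)}\in S_{A^{-1}(t)}^{(i)}=S_j^{(i)}$ and $k_2\in S_j^{(i)}$, so writing $s_1=T_{k_1}^{(i)}(t-1)$ and $s_2=T_{k_2}^{(i)}(t-1)$, the UCB selection rule in Line~\ref{algPullArm} of Algorithm~\ref{algGeneral} gives
\begin{equation}\label{eqPlanUCB}
\hat\mu_{k_1}^{(i)}(t-1)+\sqrt{\tfrac{\alpha\log t}{s_1}}\ \ge\ \hat\mu_{k_2}^{(i)}(t-1)+\sqrt{\tfrac{\alpha\log t}{s_2}},
\end{equation}
where $s_1\in\{\lceil\ell\rceil,\dots,t-1\}$ and $s_2\in\{1,\dots,\lfloor u\rfloor\wedge(t-1)\}$ (the lower bound $s_2\ge1$ as in the degenerate-case discussion, the upper bound from $T_{k_2}^{(i)}(t-1)\le T_{k_2}^{(i)}(A_j)\le u$ together with $t-1<A_j$). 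Let $\bar X_k^{(i)}(s)$ be the empirical mean of the first $s$ pulls of arm $k$ by $i$, so that $\hat\mu_k^{(i)}(t-1)=\bar X_k^{(i)}(T_k^{(i)}(t-1))$; by Assumption~\ref{assReward} these are averages of $[0,1]$-valued i.i.d.\ draws from $\nu_k$. Define
\begin{align*}
E_1 &=\bigcup_{s=\lceil\ell\rceil}^{t-1}\Big\{\bar X_{k_1}^{(i)}(s)\ge\mu_{k_1}+\iota\sqrt{\tfrac{\alpha\log t}{s}}\Big\}, \\
E_2 &=\bigcup_{s=1}^{\lfloor u\rfloor\wedge(t-1)}\Big\{\bar X_{k_2}^{(i)}(s)\le\mu_{k_2}-\iota\sqrt{\tfrac{\alpha\log t}{s}}\Big\}.
\end{align*}
I claim $\mathcal{A}\subseteq E_1\cup E_2$: on $\mathcal{A}\setminus(E_1\cup E_2)$, substituting $\bar X_{k_1}^{(i)}(s_1)<\mu_{k_1}+\iota\sqrt{\alpha\log t/s_1}$ and $\bar X_{k_2}^{(i)}(s_2)>\mu_{k_2}-\iota\sqrt{\alpha\log t/s_2}$ into \eqref{eqPlanUCB} and rearranging yields $\mu_{k_2}-\mu_{k_1}<(1+\iota)\sqrt{\alpha\log t/s_1}-(1-\iota)\sqrt{\alpha\log t/s_2}$, which is at most $2\sqrt{\alpha\log t/\ell}-(1-\iota)\sqrt{\alpha\log t/u}$ by $1+\iota\le2$, $s_1\ge\ell$, $1-\iota\ge0$, and $s_2\le u$; this contradicts the hypothesis.

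Finally I would conclude by Hoeffding's inequality and a union bound: the deviation $\iota\sqrt{\alpha\log t/s}$ makes each constituent event of $E_1$ and $E_2$ have probability $\exp(-2\iota^2\alpha\log t)=t^{-2\alpha\iota^2}$, uniformly in $s$ (the random-sample-count subtlety being handled precisely by this union over $s$, as is standard, cf.\ \cite{auer2002finite}), so $\P(E_1)\le(t-1)t^{-2\alpha\iota^2}\le t^{1-2\alpha\iota^2}$ and $\P(E_2)\le(\lfloor u\rfloor\wedge(t-1))t^{-2\alpha\iota^2}\le(\lfloor u\rfloor\wedge t)t^{-2\alpha\iota^2}$; since $\lfloor u\rfloor\wedge t\ge1$ in the non-degenerate case, both are at most $(\lfloor u\rfloor\wedge t)t^{1-2\alpha\iota^2}$, giving $\P(\mathcal{A})\le\P(E_1)+\P(E_2)\le2(\lfloor u\rfloor\wedge t)t^{1-2\alpha\iota^2}$. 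The argument is mostly bookkeeping; what needs care is the degenerate-case cleanup (which is exactly what makes the constant $2$ and the $\lfloor u\rfloor\wedge t$ prefactor come out) and the standard-but-delicate business of applying a concentration inequality with a random number of pulls, which is why the union is taken over sample counts rather than using a single Hoeffding bound.
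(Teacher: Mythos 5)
Your proof is correct and follows essentially the same route as the paper's: reduce the event to the UCB index comparison, handle the random pull counts by a union bound over deterministic sample sizes, and apply Hoeffding with a deviation calibrated so the hypothesis on $\mu_{k_2}-\mu_{k_1}$ yields a contradiction. The only differences are cosmetic — you use an $\iota$-scaled deviation for both arms where the paper uses a full-width deviation for $k_1$ (both give $2t^{-2\alpha\iota^2}$ per term via $\iota\le 1$), and you make explicit the degenerate cases that the paper handles implicitly through empty summation ranges.
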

\begin{proof}
For $k \in [K]$, let $\{ X_k(s) \}_{s=1}^\infty$ be an i.i.d.\ sequence distributed as $\nu_k$, and for $s \in \N$, let
\begin{equation}
\hat{\mu}_k^{(i)}(s) = \frac{1}{s} \sum_{s'=1}^s X_k(s') , \quad U_k^{(i)} ( t , s ) = \hat{\mu}_k^{(i)}(s) + \sqrt{ \frac{\alpha \log t}{s} } 
\end{equation}
denote the empirical mean and UCB index of if $i$ has pulled the $k$-th arm $s$ times before $t$. Then Algorithm \ref{algGeneral} implies that if $k_2 \in S_j^{(i)}$ and $I_t^{(i)} = k_1$, we must have
\begin{equation} \label{eqBanditTailUcb}
U_{k_1}^{(i)} ( t , T_{k_1}^{(i)}(t-1) ) \geq U_{k_2}^{(i)} ( t , T_{k_2}^{(i)}(t-1) ) .
\end{equation}
Next, note that if $T_{k_2}^{(i)}(A_j) \leq u$ , then by monotonicity, $T_{k_2}^{(i)}(t-1) \leq u$ as well. Combined with the fact that $T_{k_2}^{(i)}(t-1) \in [t]$ by definition, we conclude that $T_{k_2}^{(i)}(A_j) \leq u$ implies $T_{k_2}^{(i)}(t-1) \leq \floor{u} \wedge t$. Similarly, $T_{k_1}^{(i)}(t-1) \geq \ell$ implies $T_{k_1}^{(i)}(t-1) \geq \ceil{\ell}$ (since $T_{k_1}^{(i)}(t-1) \in \N$). Combined with \eqref{eqBanditTailUcb}, we obtain that if the event in the statement of the claim occurs, it must be the case that
\begin{equation*}
\max_{ s_1 \in \{ \ceil{ \ell } , \ldots , t \} } U_{k_1}^{(i)} ( t , s_1 ) \geq \min_{ s_2 \in [ \floor{u} \wedge t ] } U_{k_2}^{(i)} ( t , s_2 ) .
\end{equation*}
Therefore, by the union bound, we obtain
\begin{equation} \label{eqBanditTailToPlays}
\P ( T_{k_1}^{(i)}(t-1) \geq \ell , T_{k_2}^{(i)} (A_j) \leq u , k_2 \in S_j^{(i)} ,  I_t^{(i)} = k_1 )  \leq \sum_{s_1 = \ceil{\ell}}^t \sum_{s_2=1}^{ \floor{u} \wedge t } \P ( U_{k_1}^{(i)} ( t , s_1 ) \geq U_{k_2}^{(i)} ( t , s_2 ) ) .
\end{equation}
Now fix $s_1$ and $s_2$ as in the double summation. We claim $U_{k_1}^{(i)} ( t , s_1 ) \geq U_{k_2}^{(i)} ( t , s_2 )$ implies
\begin{equation*}
\hat{\mu}_{k_1}^{(i)}(s_1) \geq \mu_{k_1} + \sqrt{ \alpha \log (t) / s_1 } \quad \text{or} \quad \hat{\mu}_{k_2}^{(i)}(s_2) \leq \mu_{k_2} - \iota \sqrt{  \alpha \log (t) / s_2 } .
\end{equation*}
Indeed, if instead both inequalities fail, then by choice of $s_1,s_2$ and the assumption of the claim,
\begin{align}
U_{k_1}^{(i)} (t,s_1) & < \mu_{k_1} + 2 \sqrt{  \alpha \log (t)/s_1 } \leq \mu_{k_1} + 2 \sqrt{  \alpha \log (t) / \ell } \\
& \leq \mu_{k_2} + ( 1 - \iota ) \sqrt{ \alpha \log (t) / u }  \leq \mu_{k_2} + (1-\iota) \sqrt{  \alpha \log (t) / s_2 } < U_{k_2}^{(i)} (t,s_2) ,
\end{align}
which is a contradiction. Thus, by the union bound, Hoeffding's inequality, and $\iota \in (0,1)$, we obtain
\begin{align*}
\P ( U_{k_1}^{(i)} ( t , s_1 ) \geq U_{k_2}^{(i)} ( t , s_2 ) ) & \leq \P ( \hat{\mu}_{k_1}^{(i)}(s_1) \geq \mu_{k_1} + \sqrt{  \alpha \log (t)/s_1} ) + \P (  \hat{\mu}_{k_2}^{(i)}(s_2) \leq \mu_{k_2} - \iota \sqrt{  \alpha \log (t)/s_2 }   ) \\
& \leq e^{ - 2 \alpha \log t } + e^{ - 2 \alpha \iota^2 \log t } = t^{ -2 \alpha } + t^{ - 2 \alpha \iota^2 } \leq 2 t^{ -2 \alpha \iota^2 } ,
\end{align*}
so plugging into \eqref{eqBanditTailToPlays} completes the proof.
\end{proof}

\begin{cor} \label{corBanditTailEll}
Suppose that $k_1 , k_2 \in [K]$, $t \in \N$,and  $\ell > 0$ satisfy $\mu_{k_2} - \mu_{k_1} \geq \sqrt{ 4 \alpha \log  (t) / \ell  }$. Let $j \in \N$ be such that $t \in \{1+A_{j-1} , \ldots , A_j\}$, i.e., $j = A^{-1}(t)$. Then for any $i \in [n]$, we have
\begin{equation*}
\P ( T_{k_1}^{(i)}(t-1) \geq \ell ,  k_2 \in S_j^{(i)} ,  I_t^{(i)} = k_1 ) \leq 2 t^{2 (1- \alpha) } .  
\end{equation*}
\end{cor}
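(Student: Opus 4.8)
The plan is to obtain this as an immediate specialization of Claim \ref{clmBanditTail}. I would apply that claim with the same $k_1,k_2,t,\ell$ and with the auxiliary parameters chosen as $\iota = 1$ and $u = A_j$. With $\iota = 1$ the term $\tfrac{1-\iota}{\sqrt{u}}$ vanishes, so the hypothesis of Claim \ref{clmBanditTail}, namely $\mu_{k_2}-\mu_{k_1} \geq \sqrt{\alpha \log t}\bigl(\tfrac{2}{\sqrt{\ell}} - \tfrac{1-\iota}{\sqrt{u}}\bigr)$, collapses to $\mu_{k_2}-\mu_{k_1} \geq 2\sqrt{\alpha\log(t)/\ell} = \sqrt{4\alpha\log(t)/\ell}$, which is exactly the hypothesis of the corollary; hence Claim \ref{clmBanditTail} is applicable.

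Next I would note that with $u = A_j$ the extra event appearing in Claim \ref{clmBanditTail} is vacuous: since $T_{k_2}^{(i)}(A_j) = \sum_{s \in [A_j]} \ind(I_s^{(i)} = k_2) \leq A_j = u$ deterministically, the event $\{T_{k_2}^{(i)}(A_j) \leq u\}$ is all of $\Omega$, so the probability bounded by Claim \ref{clmBanditTail} equals the probability $\P(T_{k_1}^{(i)}(t-1) \geq \ell, k_2 \in S_j^{(i)}, I_t^{(i)} = k_1)$ appearing in the corollary. It then remains only to simplify the right-hand side $2(\lfloor u \rfloor \wedge t)\, t^{1-2\alpha\iota^2}$: because $u = A_j \in \N$ and $t \leq A_j$ (as $j = A^{-1}(t)$), we have $\lfloor u \rfloor \wedge t = A_j \wedge t = t$, and $1 - 2\alpha\iota^2 = 1 - 2\alpha$, so the bound becomes $2 t \cdot t^{1-2\alpha} = 2 t^{2(1-\alpha)}$, as claimed.

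There is essentially no obstacle here; all of the substance lives in Claim \ref{clmBanditTail}, and the only thing to get right is the parameter bookkeeping — choosing $u$ large enough (here $u = A_j$) to trivialize the auxiliary event while keeping $\lfloor u \rfloor \wedge t = t$, and choosing $\iota = 1$ so that the two-term confidence-width threshold of Claim \ref{clmBanditTail} reduces to the single-term threshold of the corollary. If one prefers not to commit to a specific finite $u$, the same conclusion follows by letting $u \to \infty$ in Claim \ref{clmBanditTail} (the hypothesis is $u$-independent once $\iota = 1$) and invoking continuity of measure along the increasing sequence of events $\{T_{k_2}^{(i)}(A_j) \leq u\} \uparrow \Omega$.
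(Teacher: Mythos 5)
Your proposal is correct and is essentially identical to the paper's own proof, which also specializes Claim \ref{clmBanditTail} with $u = A_j$ and $\iota = 1$, notes that $T_{k_2}^{(i)}(A_j) \leq A_j$ holds deterministically, and uses $\floor{A_j} \wedge t = t$ to obtain $2t^{2(1-\alpha)}$. The parameter bookkeeping you spell out matches the paper exactly.
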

\begin{proof}
Using $T_{k_2}^{(i)}(A_j) \leq A_j$ by definition and applying Claim \ref{clmBanditTail} with $u = A_j$ and $\iota = 1$,
\begin{align*}
\P ( T_{k_1}^{(i)}(t-1) \geq \ell ,  k_2 \in S_j^{(i)} ,  I_t^{(i)} = k_1 ) & = \P ( T_{k_1}^{(i)}(t-1) \geq \ell , T_{k_2}^{(i)}(A_j) \leq A_j ,  k_2 \in S_j^{(i)} ,  I_t^{(i)} = k_1 ) \\
& \leq 2 ( \floor{A_j} \wedge t ) t^{1-2\alpha} = t^{2(1-\alpha)} . \qedhere
\end{align*}
\end{proof}

\begin{cor} \label{corBanditTailDelta}
For any $i \in [n]$, $k \in [K]$, and $T \in \N$, we have
\begin{equation*}
\E \left[  \sum_{t = A_{\tau_{\text{spr}} } + 1}^T \ind \left(  I_t^{(i)} = k , T_k^{(i)}(t-1) \geq \frac{4 \alpha \log t}{\Delta_k^2} \right) \right] \leq  \frac{ 4(\alpha-1) }{ 2 \alpha - 3 }  .
\end{equation*}
\end{cor}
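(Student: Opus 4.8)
The plan is to reduce this to the bandit tail bound of Corollary \ref{corBanditTailEll}, exploiting that the best arm is active at every phase after $\tau_{\text{spr}}$. The crux — and it is essentially immediate — is the observation that $B_{j'}^{(i)} = 1$ for every $j' \geq \tau_{\text{spr}}$ by definition of $\tau_{\text{spr}}$, and since $B_{j'}^{(i)} \in S_{j'}^{(i)}$ always (Line \ref{algMostPlayed} of Algorithm \ref{algGeneral} defines $B_{j'}^{(i)}$ as an $\argmax$ over $S_{j'}^{(i)}$), we get $1 \in S_{j'}^{(i)}$ for all $j' \geq \tau_{\text{spr}}$. Because $A_j - A_{j-1} \geq 1$ (Claim \ref{clmPropAj}), any $t > A_{\tau_{\text{spr}}}$ has current phase $A^{-1}(t) \geq \tau_{\text{spr}}$, whence $1 \in S_{A^{-1}(t)}^{(i)}$; in particular $\ind(t > A_{\tau_{\text{spr}}}) \leq \ind(1 \in S_{A^{-1}(t)}^{(i)})$ pointwise.

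With this in hand I would replace the random lower limit $A_{\tau_{\text{spr}}}+1$ by summing over all $t \in \N$, inserting the indicator $\ind(1 \in S_{A^{-1}(t)}^{(i)})$ for free:
\[
\E\Big[\sum_{t=A_{\tau_{\text{spr}}}+1}^T \ind\big(I_t^{(i)}=k,\ T_k^{(i)}(t-1)\geq \tfrac{4\alpha\log t}{\Delta_k^2}\big)\Big]
\leq \sum_{t=1}^\infty \P\big(I_t^{(i)}=k,\ T_k^{(i)}(t-1)\geq \tfrac{4\alpha\log t}{\Delta_k^2},\ 1\in S_{A^{-1}(t)}^{(i)}\big).
\]
For each $t \geq 2$ I would apply Corollary \ref{corBanditTailEll} with $k_1 = k$, $k_2 = 1$, and $\ell = 4\alpha\log(t)/\Delta_k^2 > 0$, with $j = A^{-1}(t)$: its hypothesis $\mu_1 - \mu_k \geq \sqrt{4\alpha\log(t)/\ell}$ holds with equality since the right-hand side equals $\Delta_k$, so each summand is at most $2t^{2(1-\alpha)}$. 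The $t=1$ term is at most $1 \leq 2 = 2\cdot 1^{2(1-\alpha)}$, so the bound $2t^{2(1-\alpha)}$ in fact holds for every $t \in \N$.

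It then remains to sum the series: $\sum_{t=1}^\infty 2t^{2(1-\alpha)} = 2 + 2\sum_{t=2}^\infty t^{-(2\alpha-2)} \leq 2 + \tfrac{2}{2\alpha-3} = \tfrac{4(\alpha-1)}{2\alpha-3}$, where the middle step is Claim \ref{clmPolySeries} (applicable since $\alpha > 3/2$, indeed $\alpha > 2$ under \eqref{eqParamRequire}). This is exactly the claimed bound. I do not anticipate a genuine obstacle: the whole argument is the structural observation about $\tau_{\text{spr}}$ together with a direct invocation of Corollary \ref{corBanditTailEll} and a routine $p$-series estimate; the only minor care needed is the $t=1$ edge case (where $\log t = 0$ makes $\ell = 0$) and verifying the Corollary \ref{corBanditTailEll} hypothesis is met with equality.
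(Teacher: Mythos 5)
Your proposal is correct and follows essentially the same route as the paper's proof: bound the indicator of $t > A_{\tau_{\text{spr}}}$ by the indicator of $1 \in S_{A^{-1}(t)}^{(i)}$, invoke Corollary \ref{corBanditTailEll} with $k_1=k$, $k_2=1$, $\ell = 4\alpha\log(t)/\Delta_k^2$, and sum the series via Claim \ref{clmPolySeries}. Your explicit handling of the $t=1$ edge case (where $\ell=0$) is a minor refinement of the paper's presentation, which likewise splits off the $t=1$ term before summing.
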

\begin{proof}
First observe that since $1 \in S_{A^{-1}(t)}^{(i)}$ whenever $t \geq A_{\tau_{\text{spr}}}+1$ by definition, we have
\begin{align} \label{eqBanditTailDeltaAs}
\sum_{t = A_{\tau_{\text{spr}} } + 1}^T \ind \left(  I_t^{(i)} = k , T_k^{(i)}(t-1) \geq \frac{4 \alpha \log t}{\Delta_k^2} \right) \leq \sum_{t=1}^\infty \ind \left( T_k^{(i)}(t-1) \geq \frac{4 \alpha \log t}{\Delta_k^2} , 1 \in S_{A^{-1}(t)}^{(i)} , I_t^{(i)} = k \right) .
\end{align}
Next, let $k_1 = k$, $k_2 = 1$, and $\ell = 4 \alpha \log(t) / \Delta_k^2$. Then by definition, we have $\mu_{k_2} - \mu_{k_1} = \Delta_k = \sqrt{ 4 \alpha \log(t) / \ell }$. Therefore, we can use Corollary \ref{corBanditTailEll} to obtain
\begin{equation}  \label{eqBanditTailDeltaProb}
\P ( T_k^{(i)}(t-1) \geq 4 \alpha \log (t) / \Delta_k^2 , 1 \in S_{A^{-1}(t)}^{(i)} , I_t^{(i)} = k ) \leq 2 t^{2(1-\alpha)} .
\end{equation}
Hence, taking expectation in \eqref{eqBanditTailDeltaAs}, then plugging in \eqref{eqBanditTailDeltaProb} to the right side and using Claim \ref{clmPolySeries}, yields
\begin{equation*}
\E \left[ \sum_{t = A_{\tau_{\text{spr}} } + 1}^T \ind \left(  I_t^{(i)} = k , T_k^{(i)}(t-1) \geq \frac{4 \alpha \log t}{\Delta_k^2} \right) \right] \leq 2 \left( 1 + \sum_{t=2}^\infty t^{2(1-\alpha)} \right) \leq  \frac{ 4(\alpha-1) }{ 2 \alpha - 3 }   .\qedhere
\end{equation*}
\end{proof}

\begin{clm} \label{clmBanditPlays}
For any $i \in [n]$, $t_1 , t_2 \in \N$ such that $t_1 < t_2$, and $\{ \ell_t \}_{t=t_1}^{t_2} \subset (0,\infty)$,
\begin{equation*}
\sum_{t=t_1}^{t_2} \ind \left( I_t^{(i)} = k , T_k^{(i)} (t-1) < \ell_t \right) \leq \max_{ t \in \{t_1,\ldots,t_2\} } \ell_t .
\end{equation*}
\end{clm}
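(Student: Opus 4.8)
The plan is to prove Claim~\ref{clmBanditPlays} by a short counting argument on the pull counter $T_k^{(i)}$, exploiting that from its definition $T_k^{(i)}(s) = \sum_{r\le s}\ind(I_r^{(i)}=k)$ this counter is non-decreasing in $s$ and increases by exactly one at each pull of arm $k$, i.e.\ $T_k^{(i)}(t)=T_k^{(i)}(t-1)+1$ whenever $I_t^{(i)}=k$. First I would fix $k$ (and $i$) and write $\mathcal{T}=\{t\in\{t_1,\ldots,t_2\}:I_t^{(i)}=k,\ T_k^{(i)}(t-1)<\ell_t\}$ for the set of times contributing to the sum, so that the left-hand side equals $|\mathcal{T}|$. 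Then for any $t<t'$ in $\mathcal{T}$ we have $T_k^{(i)}(t-1)<T_k^{(i)}(t)\le T_k^{(i)}(t'-1)$, using the increment-by-one at $t$ together with monotonicity on $[t,t'-1]$; hence $t\mapsto T_k^{(i)}(t-1)$ is strictly increasing, and in particular injective, on $\mathcal{T}$.

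The second step combines this injectivity with the defining constraint: each $t\in\mathcal{T}$ has $T_k^{(i)}(t-1)$ a nonnegative integer strictly below $\ell_t\le\ell_{\max}:=\max_{t'\in\{t_1,\ldots,t_2\}}\ell_{t'}$. So the values $\{T_k^{(i)}(t-1):t\in\mathcal{T}\}$ are distinct nonnegative integers lying in $[0,\ell_{\max})$, whence $|\mathcal{T}|$ is at most the number of such integers, which is $\le\ell_{\max}$ (up to the usual integer rounding, immaterial wherever the claim is applied). An equivalent phrasing, which I might prefer for cleanliness, is to let $t^\star=\max\mathcal{T}$ and observe that every contributing counter value lies in $\{0,\ldots,T_k^{(i)}(t^\star-1)\}$ with $T_k^{(i)}(t^\star-1)<\ell_{t^\star}\le\ell_{\max}$, so there are at most $\ell_{\max}$ of them.

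I do not anticipate any real obstacle here; it is elementary bookkeeping about the increment structure of $T_k^{(i)}$. The only points warranting minor care are that $T_k^{(i)}(t_1-1)$ need not be zero (which only shifts the admissible counter values upward and so can only decrease $|\mathcal{T}|$) and that $\ell_t$ is real- rather than integer-valued (so that, strictly, what is bounded is the count of integer counter-values below $\ell_{\max}$, a distinction absorbed into the additive constants in every invocation of the claim, e.g.\ in Lemmas~\ref{lemLateSticky}, \ref{lemIntNonSticky}, and~\ref{lemNonStickyNaive}).
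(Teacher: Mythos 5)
Your argument is correct and is essentially the paper's own proof, recast as a direct injectivity/pigeonhole count (the counter $T_k^{(i)}(t-1)$ strictly increases across contributing times, so those times carry distinct nonnegative integer values below $\ell_{\max}$) rather than as a proof by contradiction on the total number of contributing times. The rounding caveat you flag is genuine -- the number of nonnegative integers in $[0,\ell_{\max})$ is $\lceil \ell_{\max} \rceil$, not $\ell_{\max}$ -- and the paper's own final step (deriving a ``contradiction'' from $\ell + 1 > \lceil \ell \rceil$, which in fact holds for every real $\ell$) quietly carries the same sub-unit slack, which, as you observe, is immaterial in every invocation of the claim.
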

\begin{proof}
Set $\ell = \max_{ t \in \{t_1,\ldots,t_2\} } \ell_t$. Then clearly
\begin{equation*}
\sum_{t=t_1}^{t_2} \ind \left( I_t^{(i)} = k , T_k^{(i)} (t-1) < \ell_t \right) \leq \sum_{t=t_1}^{t_2} \ind \left( I_t^{(i)} = k , T_k^{(i)} (t-1) < \ell \right) .
\end{equation*}
Now suppose the right strictly exceeds $\ell$. Then since the right side is an integer, we can find $\ceil{\ell}$ times $t \in \{t_1,\ldots,t_2\}$ such that $I_t^{(i)} = k$ and $T_k^{(i)}(t-1) < \ell$. Let $\bar{t}$ denote the largest such $t$. Because $I_t^{(i)} = k$ occurred at least $\ceil{\ell}-1$ times before $t$, we know $T_k^{(i)}(\bar{t}-1) \geq \ceil{\ell} - 1$. But since $T_k^{(i)}(\bar{t}-1) < \ell$, this implies $\ell + 1 > \ceil{\ell}$, which is a contradiction.
\end{proof}

Finally, we recall a well-known regret decomposition.
\begin{clm} \label{clmDecomposition}
The regret $R_T^{(i)}$ defined in \eqref{eqDefnRegret} satisfies $R_T^{(i)} = \sum_{k=2}^K \Delta_k \E [ \sum_{t=1}^T \ind ( I_t^{(i)} = k ) ]$.
\end{clm}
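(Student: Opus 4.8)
\textbf{Proof proposal for Claim \ref{clmDecomposition}.} The plan is to start from the definition of regret in \eqref{eqDefnRegret} and rewrite the per-step gap $\Delta_{I_t^{(i)}}$ as a sum over arms using an indicator partition. Concretely, for each fixed $t$ and each realization of $I_t^{(i)} \in [K]$, exactly one of the events $\{I_t^{(i)} = k\}$, $k \in [K]$, occurs, so $\Delta_{I_t^{(i)}} = \sum_{k=1}^K \Delta_k \ind(I_t^{(i)} = k)$ holds pointwise. The first thing I would do is substitute this identity into the last expression of \eqref{eqDefnRegret}, namely $R_T^{(i)} = \sum_{t=1}^T \E[\Delta_{I_t^{(i)}}]$.

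Next I would observe that $\Delta_1 = \mu_1 - \mu_1 = 0$, so the $k=1$ term drops out and the sum over arms can be restricted to $k \in \{2,\ldots,K\}$. This gives $R_T^{(i)} = \sum_{t=1}^T \E\big[\sum_{k=2}^K \Delta_k \ind(I_t^{(i)} = k)\big]$. Then I would pull the finite sum over $k$ outside the expectation by linearity, and finally interchange the two finite sums over $t$ and $k$ (this is trivial since both index sets are finite, or alternatively justified by Tonelli since every summand $\Delta_k \ind(I_t^{(i)} = k) \geq 0$ under Assumption \ref{assReward}). Collecting terms yields $R_T^{(i)} = \sum_{k=2}^K \Delta_k \sum_{t=1}^T \E[\ind(I_t^{(i)} = k)] = \sum_{k=2}^K \Delta_k \E\big[\sum_{t=1}^T \ind(I_t^{(i)} = k)\big]$, which is the claim.

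There is essentially no obstacle here; the only points that warrant a sentence of care are (i) noting $\Delta_1 = 0$ so the decomposition really runs from $k=2$, and (ii) justifying the interchange of expectation and the (finite) summations, for which nonnegativity of the terms or simple finiteness of the index sets suffices. I would keep the write-up to a few lines.
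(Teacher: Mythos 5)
Your proof is correct and is exactly the standard regret-decomposition argument; the paper itself does not write it out but simply cites \cite[Lemma 4.5]{lattimore2020bandit}, whose proof proceeds the same way (partition by the indicator $\ind(I_t^{(i)}=k)$, drop $k=1$ since $\Delta_1=0$, and interchange the finite sums with the expectation).
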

\begin{proof}
See, e.g., the proof of \cite[Lemma 4.5]{lattimore2020bandit}.
\end{proof}

\subsection{Calculations for the early regret} \label{appEarlyCalc}

In this sub-appendix, we assume $\alpha$, $\beta$, $\eta$, $\theta_j$, $\kappa_j$, $\rho_1$, and $\rho_2$ are chosen as in Theorem \ref{thmProposed}. Recall $C_i$, $C_i'$, etc.\ denote constants associated with Claim $i$ that only depend on $\alpha$, $\beta$, $\eta$, $\rho_1$, and $\rho_2$.

\begin{clm} \label{clmThetaLower}
There exists $C_{\ref{clmThetaLower}}, C_{\ref{clmThetaLower}}' > 0$ such that $\floor{ \theta_{\floor{j/2}} } \geq C_{\ref{clmThetaLower}} j^{\rho_1}$ and $\floor{ \theta_{ \floor{ \theta_{ \floor{j/2} } } } } \geq C_{\ref{clmThetaLower}} j^{\rho_1^2}\ \forall\ j \geq C_{\ref{clmThetaLower}}'$.
\end{clm}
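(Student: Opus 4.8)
The plan is to unwind the definition $\theta_j = (j/3)^{\rho_1}$ and apply, repeatedly, two elementary facts: $\floor{x} \geq x/2$ whenever $x \geq 2$, and $\floor{j/2} \geq j/4$ whenever $j \geq 4$; together with monotonicity of $x \mapsto x^{\rho_1}$ (which is increasing since $\rho_1 \in (0,1)$, as $0 < \rho_1 \leq 1/\eta$ with $\eta > 1$ in Theorem~\ref{thmProposed}). I would first record these inequalities as a preliminary observation, then carry out a two-step computation.

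For the first bound: for $j \geq 4$ we have $\floor{j/2} \geq j/4$, so $\theta_{\floor{j/2}} = (\floor{j/2}/3)^{\rho_1} \geq (j/12)^{\rho_1} = 12^{-\rho_1} j^{\rho_1}$. Once $j$ is large enough that $12^{-\rho_1} j^{\rho_1} \geq 2$ (i.e. $j \geq 2^{1/\rho_1}\cdot 12$), we get $\floor{\theta_{\floor{j/2}}} \geq \tfrac12\,\theta_{\floor{j/2}} \geq \tfrac12 \, 12^{-\rho_1} j^{\rho_1}$, giving the first inequality with constant $c_1 := \tfrac12\,12^{-\rho_1}$ for all sufficiently large $j$. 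For the second bound, set $m := \floor{\theta_{\floor{j/2}}}$ and feed $m \geq c_1 j^{\rho_1}$ back through $\theta$: by monotonicity, $\theta_m = (m/3)^{\rho_1} \geq (c_1/3)^{\rho_1} j^{\rho_1^2}$, and applying $\floor{x} \geq x/2$ once more (valid once $(c_1/3)^{\rho_1} j^{\rho_1^2} \geq 2$) yields $\floor{\theta_m} \geq \tfrac12 (c_1/3)^{\rho_1} j^{\rho_1^2}$. Taking $C_{\ref{clmThetaLower}} := \min\{c_1,\ \tfrac12 (c_1/3)^{\rho_1}\}$ and $C_{\ref{clmThetaLower}}'$ to be the largest of the finitely many explicit thresholds on $j$ that appeared, both claimed inequalities hold simultaneously for all $j \geq C_{\ref{clmThetaLower}}'$.

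There is no genuine obstacle here: the statement is purely arithmetic manipulation of floors and powers. The only points requiring a little care are (i) checking that $\rho_1 \in (0,1)$ so that all exponents ($\rho_1$, $\rho_1^2$) lie in $(0,1)$ and the monotonicity steps are legitimate, and (ii) bookkeeping the various ``$j$ sufficiently large'' conditions, each of which is an explicit function of $\rho_1$ alone (hence of the algorithmic parameters only) and can be absorbed into the single constant $C_{\ref{clmThetaLower}}'$.
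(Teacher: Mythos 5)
Your proof is correct and takes essentially the same route as the paper, which simply asserts that the claim "follows from the choice $\theta_{j'} = (j'/3)^{\rho_1}$" without spelling out the arithmetic. Your elaboration via $\floor{x} \geq x/2$ for $x \geq 2$, $\floor{j/2} \geq j/4$ for $j \geq 4$, and monotonicity of $x \mapsto x^{\rho_1}$ is exactly the intended computation, and your constants depend only on $\rho_1$ as required.
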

\begin{proof}
This follows from the choice $\theta_{j'} = (j'/3)^{\rho_1}\ \forall\ j' \in \N$ in Theorem \ref{thmProposed}.
\end{proof}

\begin{clm} \label{clmJ2starFin}
There exists $C_{\ref{clmJ2starFin}} > 0$ such that $\floor{ \theta_{ \floor{j/2} } } \geq J_2^\star\ \forall\ j \geq C_{\ref{clmJ2starFin}}$.
\end{clm}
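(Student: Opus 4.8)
The plan is to exploit the fact that $J_2^\star$, defined in \eqref{eqJ2starDefn}, is a \emph{fixed finite constant}: it depends only on the algorithmic parameters $\eta$ and $\rho_1$ (through $\theta_{j'} = (j'/3)^{\rho_1}$), and it is finite because the set appearing in \eqref{eqJ2starDefn} was already shown to be nonempty in the paragraph preceding that display. Consequently the claim reduces to the elementary observation that $\floor{\theta_{\floor{j/2}}}$ grows without bound in $j$, so it eventually exceeds the constant $J_2^\star$.

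Concretely, I would proceed as follows. First, invoke Claim \ref{clmThetaLower} to obtain constants $C_{\ref{clmThetaLower}}, C_{\ref{clmThetaLower}}' > 0$ with $\floor{\theta_{\floor{j/2}}} \geq C_{\ref{clmThetaLower}} j^{\rho_1}$ for all $j \geq C_{\ref{clmThetaLower}}'$. Second, note that $C_{\ref{clmThetaLower}} j^{\rho_1} \geq J_2^\star$ as soon as $j \geq (J_2^\star / C_{\ref{clmThetaLower}})^{1/\rho_1}$, using $\rho_1 > 0$. Third, set $C_{\ref{clmJ2starFin}} = C_{\ref{clmThetaLower}}' \vee (J_2^\star / C_{\ref{clmThetaLower}})^{1/\rho_1}$; for every $j \geq C_{\ref{clmJ2starFin}}$ both inequalities are in force and chain to give $\floor{\theta_{\floor{j/2}}} \geq C_{\ref{clmThetaLower}} j^{\rho_1} \geq J_2^\star$, which is the claim. (If one preferred not to cite Claim \ref{clmThetaLower}, the bound $\floor{\theta_{\floor{j/2}}} \geq (\floor{j/2}/3)^{\rho_1} - 1 \geq ((j-1)/6)^{\rho_1} - 1$ can be used directly and solved for $j$ to produce an explicit threshold.)

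The only point requiring care — and the closest thing to an obstacle — is verifying that $J_2^\star$ is genuinely a constant in the sense used throughout Appendix \ref{appEarlyCalc}, i.e., that it does not secretly depend on $n$, $K$, $m$, or $T$; this is immediate from \eqref{eqJ2starDefn} since $\theta_{j'}$ involves only $\rho_1$ and $\eta$, but it is worth stating so that $C_{\ref{clmJ2starFin}}$ is itself a legitimate constant. Beyond this bookkeeping there is no analytic content: the statement is a monotone-growth fact.
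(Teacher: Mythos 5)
Your proposal is correct and is exactly the paper's argument: the paper's proof is the one-liner "this follows from Claim \ref{clmThetaLower} and the fact that $J_2^\star$ is a constant by definition \eqref{eqJ2starDefn}," and you have simply spelled out the same two ingredients with an explicit threshold. Nothing further is needed.
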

\begin{proof}
This follows from Claim \ref{clmThetaLower} and the fact that $J_2^\star$ is a constant by definition \eqref{eqJ2starDefn}.
\end{proof}

\begin{clm} \label{clmAjDiffOverS}
There exists $C_{\ref{clmAjDiffOverS}} , C_{\ref{clmAjDiffOverS}}' > 0$ such that for any $j  \geq C_{\ref{clmAjDiffOverS}} S^{1/(\beta-1)}$, 
\begin{equation} \label{eqAjDiffOverS}
\left(\frac{A_j-A_{j-1}}{S+2} - 1 \right) \vee 1 = \frac{A_j-A_{j-1}}{S+2} - 1 \geq \frac{ j^{\beta-1} }{ C_{\ref{clmAjDiffOverS}}' S } .
\end{equation}
\end{clm}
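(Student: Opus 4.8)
The plan is to derive the bound directly from the elementary phase-increment estimates in Claim~\ref{clmPropAj}, absorbing all additive constants by imposing the stated lower bound on $j$. Claim~\ref{clmPropAj} gives $A_j - A_{j-1} > \beta (j-1)^{\beta-1} - 1$, and since $(j-1)^{\beta-1} \geq 2^{-(\beta-1)} j^{\beta-1}$ for every $j \geq 2$, this yields $A_j - A_{j-1} > c_\beta j^{\beta-1} - 1$, where I write $c_\beta = \beta 2^{-(\beta-1)}$ (a constant, since $\beta > 1$ is a fixed algorithmic parameter). It then suffices to show that, once $j$ exceeds a suitable constant multiple of $S^{1/(\beta-1)}$, one has both $c_\beta j^{\beta-1} - 1 \geq 2(S+2)$ — which forces the $\vee 1$ in the statement to be attained by the first term — and $\tfrac{1}{S+2}\bigl(c_\beta j^{\beta-1} - 1\bigr) - 1 \geq j^{\beta-1}/(C_{\ref{clmAjDiffOverS}}' S)$.

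First I would dispose of the additive $-1$ coming from the increment bound: imposing $j^{\beta-1} \geq 2/c_\beta$ (a purely $\beta$-dependent constant threshold, harmlessly absorbed into $C_{\ref{clmAjDiffOverS}}$ because $S \geq 1$) gives $c_\beta j^{\beta-1} - 1 \geq \tfrac12 c_\beta j^{\beta-1}$, hence $A_j - A_{j-1} > \tfrac12 c_\beta j^{\beta-1}$. Using the crude bound $S + 2 \leq 3S$ (valid for $S \geq 1$), this gives $\tfrac{A_j - A_{j-1}}{S+2} \geq \tfrac{c_\beta}{6S} j^{\beta-1}$. Now set $C_{\ref{clmAjDiffOverS}}' = 12/c_\beta$; then the target inequality $\tfrac{A_j - A_{j-1}}{S+2} - 1 \geq \tfrac{c_\beta}{12 S} j^{\beta-1}$ reduces, after rearranging, to $\tfrac{c_\beta}{12 S} j^{\beta-1} \geq 1$, i.e.\ $j^{\beta-1} \geq 12 S / c_\beta$, which holds as soon as $j \geq (12/c_\beta)^{1/(\beta-1)} S^{1/(\beta-1)}$. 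The very same threshold also gives $\tfrac12 c_\beta j^{\beta-1} \geq 6 S \geq 2(S+2)$, so the $\vee 1$ claim holds simultaneously. Taking $C_{\ref{clmAjDiffOverS}}$ to be the maximum of $(12/c_\beta)^{1/(\beta-1)}$ and the constant arising from the $j^{\beta-1} \geq 2/c_\beta$ requirement finishes the proof.

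There is no genuine obstacle here; the only thing to be careful about is the bookkeeping, namely checking that each additive or multiplicative slack — the $-1$ from Claim~\ref{clmPropAj}, the $-1$ and the $\vee 1$ in the statement, and the estimate $S+2 \leq 3S$ — is dominated once $j$ crosses the claimed multiple of $S^{1/(\beta-1)}$, and that the resulting $C_{\ref{clmAjDiffOverS}}, C_{\ref{clmAjDiffOverS}}'$ depend only on $\beta$. Since $S \geq 1$ and $\beta > 1$ is fixed, all of these absorptions go through cleanly, and the constants are of the claimed form.
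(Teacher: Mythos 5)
Your proof is correct and follows essentially the same route as the paper's: both extract $A_j - A_{j-1} \geq C j^{\beta-1}$ (for $j$ above a constant) from Claim \ref{clmPropAj}, bound $S+2 \leq 3S$, and absorb the additive $-1$'s and the $\vee 1$ by requiring $j^{\beta-1}$ to exceed a constant multiple of $S$. The only cosmetic difference is that you make the constants from Claim \ref{clmPropAj} explicit (and should also fold the requirement $j\geq 2$ into $C_{\ref{clmAjDiffOverS}}$), whereas the paper leaves them abstract.
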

\begin{proof}
By Claim \ref{clmPropAj}, we can find $C , C' > 0$ depending only on $\beta$ such that $A_j - A_{j-1} \geq C j^{\beta-1}$ whenever $j \geq C'$. Hence, for any $j \geq (6 S / C )^{1/(\beta-1)} \vee C'$, we know $A_j - A_{j-1} \geq C j^{\beta-1} \geq 6 S$, so
\begin{gather}
\frac{A_j-A_{j-1}}{S+2} - 1 \geq \frac{ C j^{\beta-1} - 3 S}{3S} \geq \frac{ C j^{\beta-1} - C j^{\beta-1} / 2 }{ 3 S } = \frac{ C j^{\beta-1} }{ 6 S } \geq 1 ,
\end{gather}
where we also used $S\geq 1$. The claim follows if we set $C_{\ref{clmAjDiffOverS}} = (6 / C )^{1/(\beta-1)} \vee C'$ and $C_{\ref{clmAjDiffOverS}}' = C / 6$.
\end{proof}

\begin{clm} \label{clmUglyDiff}
There exists $C_{\ref{clmUglyDiff}} , C_{\ref{clmUglyDiff}}' > 0$ such that for any $j \geq C_{\ref{clmUglyDiff}} S^{1/(\beta-1)}$,
\begin{equation} \label{eqUglyDiff}
\log(A_{j-1} \vee 1) \geq \beta \log(j) / 2 > 0 , \quad \frac{1-\psi}{\sqrt{\kappa_j} } - \frac{2}{ \sqrt{(\frac{A_j-A_{j-1}}{S+2} - 1 ) \vee 1 }} > \sqrt{ \frac{ C_{\ref{clmUglyDiff}}' K^2 S }{  j^{\rho_2} } }  . 
\end{equation}
\end{clm}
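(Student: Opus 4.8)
The plan is to dispatch the two displayed inequalities in Claim~\ref{clmUglyDiff} separately, in both cases via elementary estimates on $A_j = \lceil j^\beta \rceil$ and $\kappa_j = j^{\rho_2}/(K^2 S)$, using Claims~\ref{clmPropAj} and~\ref{clmAjDiffOverS}, and then combining the resulting thresholds into a single bound of the form $C_{\ref{clmUglyDiff}} S^{1/(\beta-1)}$.

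For the first inequality, I would use $A_{j-1} \ge (j-1)^\beta$ from Claim~\ref{clmPropAj}, so that for $j \ge 3$ we have $A_{j-1}\vee 1 = A_{j-1}$ and $\log(A_{j-1}\vee 1) \ge \beta\log(j-1)$. Then $\beta\log(j-1) \ge \tfrac{\beta}{2}\log j$ is equivalent to $(j-1)^2 \ge j$, i.e.\ $j^2 - 3j + 1 \ge 0$, which holds for every integer $j \ge 3$; and $\tfrac{\beta}{2}\log j > 0$ for $j \ge 2$. Since $S^{1/(\beta-1)} \ge 1$, requiring $j \ge C_{\ref{clmUglyDiff}} S^{1/(\beta-1)}$ with $C_{\ref{clmUglyDiff}} \ge 3$ already secures this part.

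For the second inequality, recall $\psi \in (0,1)$ is a constant depending only on $\alpha,\beta,\rho_2$ (it was defined by $\psi^2 = (\rho_2 + \beta(2\alpha-1))/(2\alpha\beta)$), so $1-\psi > 0$ is a constant. Writing $\kappa_j^{-1/2} = \sqrt{K^2 S / j^{\rho_2}}$, and invoking Claim~\ref{clmAjDiffOverS} (valid once $j \ge C_{\ref{clmAjDiffOverS}} S^{1/(\beta-1)}$) to lower bound $(\tfrac{A_j-A_{j-1}}{S+2}-1)\vee 1 \ge j^{\beta-1}/(C_{\ref{clmAjDiffOverS}}' S)$, the left-hand side of the target inequality is at least
\[
(1-\psi)\sqrt{\frac{K^2 S}{j^{\rho_2}}} - 2\sqrt{\frac{C_{\ref{clmAjDiffOverS}}' S}{j^{\beta-1}}} = \sqrt{\frac{K^2 S}{j^{\rho_2}}}\left( (1-\psi) - \frac{2}{K}\sqrt{\frac{C_{\ref{clmAjDiffOverS}}'}{j^{\beta-1-\rho_2}}} \right).
\]
Since Theorem~\ref{thmProposed} assumes $\rho_2 < \rho_1(\beta-1) \le \beta-1$ (as $\rho_1 \le 1/\eta < 1$), the exponent $\beta-1-\rho_2$ is strictly positive, so the parenthesized factor converges to $1-\psi$; bounding $1/K \le 1$, it exceeds $(1-\psi)/2$ as soon as $j^{\beta-1-\rho_2} \ge 16 C_{\ref{clmAjDiffOverS}}'/(1-\psi)^2$, i.e.\ once $j \ge C''$ for a constant $C''$ depending only on the algorithmic parameters. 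For such $j$ the left-hand side is at least $\tfrac{1-\psi}{2}\sqrt{K^2 S/j^{\rho_2}}$, which strictly exceeds $\sqrt{C_{\ref{clmUglyDiff}}' K^2 S / j^{\rho_2}}$ upon choosing $C_{\ref{clmUglyDiff}}' = (1-\psi)^2/8 < (1-\psi)^2/4$.

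I would then set $C_{\ref{clmUglyDiff}} = \max\{3,\, C_{\ref{clmAjDiffOverS}},\, C''\}$, so that $j \ge C_{\ref{clmUglyDiff}} S^{1/(\beta-1)}$ forces simultaneously $j \ge 3$, $j \ge C_{\ref{clmAjDiffOverS}} S^{1/(\beta-1)}$, and $j \ge C''$, giving both inequalities. I do not expect any genuine obstacle: the statement is a routine calculation, and the only thing requiring care is the bookkeeping of constants — preserving strict inequality and arranging that the three separate thresholds fit under one expression of the form $C_{\ref{clmUglyDiff}} S^{1/(\beta-1)}$ — which I expect to be the most error-prone, rather than mathematically deep, step.
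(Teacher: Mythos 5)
Your proposal is correct and follows essentially the same route as the paper's proof: both parts rest on Claim~\ref{clmAjDiffOverS} plus $\kappa_j = j^{\rho_2}/(K^2 S)$ and the assumption $\rho_2 < \beta-1$, with only cosmetic differences (you handle the first inequality via $(j-1)^2 \ge j$ for $j\ge 3$ where the paper uses a monotonicity argument for $j\ge 4$, and you take $C_{\ref{clmUglyDiff}}' = (1-\psi)^2/8$ versus the paper's $(1-\psi)^2/4$).
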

\begin{proof}
By Claim \ref{clmAjDiffOverS}, we can find constants $C_{\ref{clmAjDiffOverS}} , C_{\ref{clmAjDiffOverS}}' > 0$ such that for any $j  \geq C_{\ref{clmAjDiffOverS}} S^{1/(\beta-1)}$,
\begin{equation*}
2 / \sqrt{( [ (A_j-A_{j-1} ) / (S+2) ] - 1 ) \vee 1 } \leq \sqrt{ 4 C_{\ref{clmAjDiffOverS}}' S / j^{\beta-1} }  \leq \sqrt{  4 C_{\ref{clmAjDiffOverS}}' K^2 S / j^{\beta-1}  } ,
\end{equation*}
where we also used $K \geq 1$. Furthermore, since $\rho_2 < \beta - 1$ by assumption, we can find $C > 0$ depending only on $C_{\ref{clmAjDiffOverS}}'$, $\psi$, $\beta$, and $\rho_2$, such that for any $j \geq C$, we have $4 C_{\ref{clmAjDiffOverS}}' / j^{\beta-1} < (1-\psi)^2 / ( 4 j^{\rho_2} )$. Combined with the previous inequality and the choice $\kappa_j = j^{\rho_2} / ( K^2 S )$ in Theorem \ref{thmProposed}, we obtain
\begin{equation*}
\frac{1-\psi}{\sqrt{\kappa_j} } - \frac{2}{ \sqrt{(\frac{A_j-A_{j-1}}{S+2} - 1 ) \vee 1 }} >  \sqrt{ \frac{(1-\psi)^2 K^2 S}{ 4 j^{\rho_2}} }\ \forall\ j \geq ( C_{\ref{clmAjDiffOverS}} \vee C ) S^{1/(\beta-1)} .
\end{equation*}
Hence, if we set $C_{\ref{clmUglyDiff}} = C_{\ref{clmAjDiffOverS}} \vee C \vee 4$ and $C_{\ref{clmUglyDiff}}' = (1-\psi)^2 / 4$, the second inequality in \eqref{eqUglyDiff} holds for $j \geq C_{\ref{clmUglyDiff}} S^{1/(\beta-1)}$. Finally, define $h(j) = j-1-\sqrt{j}\ \forall\ j \in \N$. Then $h(4) = 1$ and $h'(j) = 1 - 1/(2\sqrt{j}) > 0\ \forall\ j \geq 4$, so $h(j) \geq 0\ \forall\ j \geq 4$. Thus, for any $j \geq C_{\ref{clmUglyDiff}} S^{1/(\beta-1)} \geq 4$, we know $j-1 \geq \sqrt{j}$, so by Claim \ref{clmPropAj}, $\log ( A_{j-1}  ) \geq \log ( (j-1)^\beta ) \geq \log ( \sqrt{j}^\beta ) = \beta \log(j) / 2$, i.e., the first inequality in \eqref{eqUglyDiff} holds.
\end{proof}

\begin{clm} \label{clmJ1starFin}
There exists $C_{\ref{clmJ1starFin}} > 0$ such that for any $j \geq C_{\ref{clmJ1starFin}} S^{1/(\rho_1^2(\beta-1))}$, $\floor{ \theta_{ \floor { \theta_{ \floor{j/2} } } } } \geq J_1^\star$.
\end{clm}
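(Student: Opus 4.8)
The plan is to first show that $J_1^\star$ cannot be too large---specifically, that $J_1^\star \leq C\, S^{1/(\beta-1)}$ for some constant $C$ depending only on the algorithmic parameters $\alpha,\beta,\eta,\rho_1,\rho_2$---and then to feed this into the lower bound on the doubly-iterated $\theta$ from Claim~\ref{clmThetaLower} to conclude. The exponents line up exactly: applying $\theta(\cdot)=(\cdot/3)^{\rho_1}$ twice to $\floor{j/2}$ produces a quantity of order $j^{\rho_1^2}$, so demanding that this dominate something of order $S^{1/(\beta-1)}$ forces $j \gtrsim S^{1/(\rho_1^2(\beta-1))}$, which is precisely the threshold in the statement.

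For the bound on $J_1^\star$, recall from \eqref{eqJ1starDefn} that $J_1^\star$ is the least $j$ such that $A_{j'}-A_{j'-1} \geq 2(S+2)$ and $\delta_{j',2} > 0$ for every $j' \geq j$; hence it suffices to exhibit a threshold of the form $C S^{1/(\beta-1)}$ above which both conditions hold for all $j'$. For the gap condition, Claim~\ref{clmPropAj} gives $A_{j'}-A_{j'-1} > \beta(j'-1)^{\beta-1}-1$, which exceeds $2(S+2)$ once $(j'-1)^{\beta-1} \geq (2S+5)/\beta$; using $S \geq 1$, this holds for all $j' \geq C_1 S^{1/(\beta-1)}$ for a suitable $C_1$. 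For positivity of $\delta_{j',2}$, Claim~\ref{clmUglyDiff} shows that for $j' \geq C_{\ref{clmUglyDiff}} S^{1/(\beta-1)}$ we have both $\log(A_{j'-1}\vee 1) > 0$ and the parenthesized factor in the definition of $\delta_{j',2}$ strictly positive, whence $\delta_{j',2} > 0$. Setting $C = \max\{C_1, C_{\ref{clmUglyDiff}}\}$ yields $J_1^\star \leq C S^{1/(\beta-1)}$.

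To finish, Claim~\ref{clmThetaLower} provides constants $C_{\ref{clmThetaLower}}, C_{\ref{clmThetaLower}}'$ with $\floor{\theta_{\floor{\theta_{\floor{j/2}}}}} \geq C_{\ref{clmThetaLower}}\, j^{\rho_1^2}$ for all $j \geq C_{\ref{clmThetaLower}}'$. Combined with the previous step, it is enough to guarantee $C_{\ref{clmThetaLower}}\, j^{\rho_1^2} \geq C S^{1/(\beta-1)}$, i.e.\ $j \geq (C/C_{\ref{clmThetaLower}})^{1/\rho_1^2} S^{1/(\rho_1^2(\beta-1))}$, along with $j \geq C_{\ref{clmThetaLower}}'$; since $S \geq 1$ makes $S^{1/(\rho_1^2(\beta-1))} \geq 1$, both are met by taking $C_{\ref{clmJ1starFin}} = \max\{(C/C_{\ref{clmThetaLower}})^{1/\rho_1^2}, C_{\ref{clmThetaLower}}'\}$, proving the claim.

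The main obstacle is purely bookkeeping: one must check that the thresholds coming out of Claims~\ref{clmPropAj}, \ref{clmUglyDiff}, and \ref{clmThetaLower} can all be expressed as a constant times a single power of $S$ (namely $S^{1/(\beta-1)}$, promoted to $S^{1/(\rho_1^2(\beta-1))}$ after the two $\theta$-layers), so that one constant $C_{\ref{clmJ1starFin}}$ absorbs them. The one subtlety worth care is that \eqref{eqJ1starDefn} quantifies over \emph{all} $j' \geq j$, so the "for all $j'$ above the threshold" versions of Claims~\ref{clmPropAj} and \ref{clmUglyDiff} are needed, not merely pointwise checks.
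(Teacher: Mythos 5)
Your proposal is correct and follows essentially the same route as the paper: bound $J_1^\star$ by $C\,S^{1/(\beta-1)}$ using the phase-length growth and the positivity of $\delta_{j',2}$ (the paper invokes Claim~\ref{clmAjDiffOverS} for the gap condition where you rederive it directly from Claim~\ref{clmPropAj}, an immaterial difference), then apply Claim~\ref{clmThetaLower} and absorb all thresholds into one constant. No gaps.
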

\begin{proof}
By Claims \ref{clmAjDiffOverS} and \ref{clmUglyDiff}, we can set $C = C_{\ref{clmAjDiffOverS}} \vee C_{\ref{clmUglyDiff}}$ to ensure that for $j \geq C S^{1/(\beta-1)}$, $A_j - A_{j-1} \geq 2(S+2)$ and $\delta_{j,2} > 0$. Hence, $J_1^\star \leq C S^{1/(\beta-1)}$ by definition \eqref{eqJ1starDefn}. On the other hand, by Claim \ref{clmThetaLower}, we know $\floor{ \theta_{ \floor { \theta_{ \floor{j/2} } } } } \geq C_{\ref{clmThetaLower}} j^{\rho_1^2}$ for $j \geq C_{\ref{clmThetaLower}}'$. Thus, if we set $C_{\ref{clmJ1starFin}} = (C / C_{\ref{clmThetaLower}})^{1/\rho_1^2} \vee C_{\ref{clmThetaLower}}'$, then for any $j \geq C_{\ref{clmJ1starFin}} S^{1/(\rho_1^2 (\beta-1))}$, we obtain $\floor{ \theta_{ \floor { \theta_{ \floor{j/2} } } } } \geq C_{\ref{clmThetaLower}} j^{\rho_1^2}$ (since $j \geq C_{\ref{clmThetaLower}}'$) and $C_{\ref{clmThetaLower}}  j^{\rho_1^2} \geq C S^{1/(\beta-1)}$ (since $j \geq (C / C_{\ref{clmThetaLower}})^{1/\rho_1^2} S^{1/(\rho_1^2 (\beta-1))} = ( (C / C_{\ref{clmThetaLower}}) S^{1/(\beta-1)} )^{1/\rho_1^2}$), which implies $\floor{ \theta_{ \floor { \theta_{ \floor{j/2} } } } } \geq C S^{1(\beta-1)} \geq J_1^\star$.
\end{proof}

\begin{clm} \label{clmJ3starFin}
There exists $C_{\ref{clmJ3starFin}} > 0$ such that for any $j \geq C_{\ref{clmJ3starFin}} S^{1/(\rho_1^2(\beta-1))}$, $\floor{\theta_{\floor{j/2}}} \geq J_3^\star$.
\end{clm}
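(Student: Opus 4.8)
The plan is to show that $J_3^\star \leq C_4 S^{1/(\rho_1(\beta-1))}$ for a constant $C_4$ depending only on $\alpha,\beta,\eta,\rho_1,\rho_2$, and then feed this bound into the growth estimate $\floor{\theta_{\floor{j/2}}} \geq C_{\ref{clmThetaLower}} j^{\rho_1}$ of Claim \ref{clmThetaLower}, exactly mirroring the proof of Claim \ref{clmJ1starFin}. The first step is to control the quantity $\sup_{j'' \in \{\floor{\theta_{j'}},\dots,j'\}}\delta_{j'',1}$ that appears in the definition \eqref{eqJ3starDefn} of $J_3^\star$. Since $\theta_{j'}=(j'/3)^{\rho_1}$, there is a constant $c_0>0$ with $\floor{\theta_{j'}} \geq c_0 (j')^{\rho_1}$ for $j'$ beyond a constant, so once $j' \geq (C_{\ref{clmAjDiffOverS}}/c_0)^{1/\rho_1} S^{1/(\rho_1(\beta-1))}$ we have $\floor{\theta_{j'}} \geq C_{\ref{clmAjDiffOverS}} S^{1/(\beta-1)}$. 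Then by Claim \ref{clmAjDiffOverS}, every $j''\in\{\floor{\theta_{j'}},\dots,j'\}$ satisfies $(\frac{A_{j''}-A_{j''-1}}{S+2}-1)\vee 1 = \frac{A_{j''}-A_{j''-1}}{S+2}-1 \geq j''^{\,\beta-1}/(C_{\ref{clmAjDiffOverS}}' S) \geq \floor{\theta_{j'}}^{\,\beta-1}/(C_{\ref{clmAjDiffOverS}}' S)$, and combining with $\log A_{j''} \leq \log A_{j'} \leq 2\beta\log j'$ (Claim \ref{clmPropAj}) yields the uniform bound $\delta_{j'',1} \leq \sqrt{8\alpha\beta C_{\ref{clmAjDiffOverS}}' S \log(j')/\floor{\theta_{j'}}^{\,\beta-1}}$, hence $\sup_{j''}\delta_{j'',1} \leq \sqrt{C' S\log(j')/(j')^{\rho_1(\beta-1)}}$ for a constant $C'$.

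Next I would lower bound $\delta_{j',2}$ using Claim \ref{clmUglyDiff}: for $j' \geq C_{\ref{clmUglyDiff}} S^{1/(\beta-1)}$ we have $\log(A_{j'-1}\vee 1)\geq \beta\log(j')/2$ and the parenthetical in the definition of $\delta_{j',2}$ exceeds $\sqrt{C_{\ref{clmUglyDiff}}' K^2 S/(j')^{\rho_2}}$, so $\delta_{j',2} \geq \sqrt{\alpha\beta\log(j')/2}\,\sqrt{C_{\ref{clmUglyDiff}}' K^2 S/(j')^{\rho_2}}$. Thus the required inequality $\delta_{j',2}\geq (K+1)\sup_{j''}\delta_{j'',1}$ follows after squaring, using $(K+1)^2\leq 4K^2$, and cancelling the common factor $K^2 S\log(j')$ from both sides; it reduces to $(j')^{\rho_1(\beta-1)-\rho_2}$ exceeding a constant, which holds for all $j'$ beyond a constant $C''$ because $\rho_2 < \rho_1(\beta-1)$ by the hypotheses of Theorem \ref{thmProposed}. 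Intersecting the three requirements $j'\geq C''$, $j'\geq (C_{\ref{clmAjDiffOverS}}/c_0)^{1/\rho_1}S^{1/(\rho_1(\beta-1))}$, $j'\geq C_{\ref{clmUglyDiff}}S^{1/(\beta-1)}$, and using $S\geq 1$ and $\rho_1\in(0,1]$ (so $S^{1/(\beta-1)}\leq S^{1/(\rho_1(\beta-1))}$ and the bare constant $C''$ is at most $C''S^{1/(\rho_1(\beta-1))}$), we conclude $J_3^\star \leq C_4 S^{1/(\rho_1(\beta-1))}$.

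Finally, by Claim \ref{clmThetaLower} there are constants $C_{\ref{clmThetaLower}},C_{\ref{clmThetaLower}}'$ with $\floor{\theta_{\floor{j/2}}} \geq C_{\ref{clmThetaLower}} j^{\rho_1}$ whenever $j\geq C_{\ref{clmThetaLower}}'$. Setting $C_{\ref{clmJ3starFin}} = (C_4/C_{\ref{clmThetaLower}})^{1/\rho_1} \vee C_{\ref{clmThetaLower}}'$, any $j\geq C_{\ref{clmJ3starFin}} S^{1/(\rho_1^2(\beta-1))}$ satisfies $j\geq C_{\ref{clmThetaLower}}'$ and $j^{\rho_1}\geq (C_4/C_{\ref{clmThetaLower}})S^{1/(\rho_1(\beta-1))}$, so $\floor{\theta_{\floor{j/2}}} \geq C_{\ref{clmThetaLower}} j^{\rho_1} \geq C_4 S^{1/(\rho_1(\beta-1))} \geq J_3^\star$, as desired. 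The main obstacle is the first step: one must ensure the entire range $\{\floor{\theta_{j'}},\dots,j'\}$ lies beyond the region where $\delta_{\cdot,1}$ still has its truncated form $\sqrt{4\alpha\log A_{j''}}$, and then bound the supremum uniformly. This is precisely where the larger $S$-exponent $1/(\rho_1(\beta-1))$ (compared with $1/(\beta-1)$ in Claim \ref{clmJ1starFin}) enters, since $\floor{\theta_{j'}}$ grows only like $(j')^{\rho_1}$; the cancellation of the $K^2S\log(j')$ factor then makes the final comparison depend only on the numerical parameters.
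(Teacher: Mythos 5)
Your proposal is correct and follows essentially the same route as the paper's proof: bound $\sup_{j''\in\{\floor{\theta_{j'}},\ldots,j'\}}\delta_{j'',1}$ via Claims \ref{clmThetaLower}, \ref{clmAjDiffOverS}, and \ref{clmPropAj}, lower bound $\delta_{j',2}$ via Claim \ref{clmUglyDiff}, cancel the common $K^2 S\log(j')$ factor and invoke $\rho_2<\rho_1(\beta-1)$ to get $J_3^\star=O(S^{1/(\rho_1(\beta-1))})$, then apply Claim \ref{clmThetaLower} once more to pick up the extra factor of $\rho_1$ in the exponent. The only cosmetic difference is that you absorb the $(K+1)$ factor after squaring rather than inside the square root, which changes nothing.
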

\begin{proof}
We first upper bound $J_3^\star$. By Claim \ref{clmThetaLower}, we can find $C_{\ref{clmThetaLower}}, C_{\ref{clmThetaLower}}' > 0$ such that $\floor{\theta_j} \geq C_{\ref{clmThetaLower}} j^{\rho_1}$ when $j \geq C_{\ref{clmThetaLower}}'$. Let $j \geq C_{\ref{clmThetaLower}}' \vee ( ( C_{\ref{clmAjDiffOverS}} / C_{\ref{clmThetaLower}} ) S^{1/(\beta-1)} )^{1/\rho_1}$, where $C_{\ref{clmAjDiffOverS}}$ is from Claim \ref{clmAjDiffOverS}, and $j' \in \{ \floor{ \theta_j } , \ldots , j \}$. Then $j' \geq \floor{\theta_j} \geq C_{\ref{clmThetaLower}} j^{\rho_1}  \geq C_{\ref{clmAjDiffOverS}} S^{1/(\beta-1)}$, so we can find $C_{\ref{clmAjDiffOverS}}' > 0$ such that
\begin{equation*}
(K+1) \delta_{j',1} \leq (K+1) \sqrt{ \frac{ 4 \alpha \log (A_{j'}) C_{\ref{clmAjDiffOverS}}' S  }{ (j')^{\beta-1} } } \leq \sqrt{ \frac{ ( 2 K)^2  4 \alpha \log ( j^{2 \beta} ) C_{\ref{clmAjDiffOverS}}'  }{ ( C_{\ref{clmThetaLower}} j^{\rho_1}  )^{\beta-1} } } = \sqrt{ \frac{ C K^2 S \log j }{ j^{\rho_1(\beta-1)} } } ,
\end{equation*}
where the first inequality uses Claim \ref{clmAjDiffOverS}; the second inequality uses $K \geq 1$, $j' \geq \floor{\theta_j} \geq C_{\ref{clmThetaLower}} j^{\rho_1}$, and Claim \ref{clmPropAj}; and the equality defines $C = 32 \alpha \beta C_{\ref{clmAjDiffOverS}}' / C_{\ref{clmThetaLower}}^{\beta-1}$. On the other hand, if $j \geq C_{\ref{clmUglyDiff}} S^{1/(\beta-1)}$, where $C_{\ref{clmUglyDiff}}$ is from Claim \ref{clmUglyDiff}, we can find $C_{\ref{clmUglyDiff}}' > 0$ such that
\begin{equation*}
\delta_{j,2} > \sqrt{ \alpha \beta \log(j) C_{\ref{clmUglyDiff}}'  K^2 S / (2 j^{\rho_2}) } = \sqrt{ C' K^2 S \log(j) / j^{\rho_2} } ,
\end{equation*}
where the inequality uses Claim \ref{clmUglyDiff} and the equality defines $C' = \alpha \beta C_{\ref{clmUglyDiff}}' / 2$. Finally, by assumption $\rho_2 < \rho_1(\beta-1)$, we can find $C'' > 0$ such that, for any $j \geq C''$, we have $C' / j^{\rho_2} \geq C / j^{\rho_1(\beta-1)}$. Combined with the previous two inequalities, we obtain that for $j \geq C_{\ref{clmThetaLower}}' \vee ( (C_{\ref{clmAjDiffOverS}} / C_{\ref{clmThetaLower}}) S^{1/(\beta-1)} )^{1/\rho_1} \vee ( C_{\ref{clmUglyDiff}} S^{1/(\beta-1)} ) \vee C''$ and any $j' \in \{ \floor{\theta_j} , \ldots , j \}$, $\delta_{j,2} > (K+1) \delta_{j',1}$. Therefore, by definition of $J_3^\star$ \eqref{eqJ3starDefn}, we conclude that $J_3^\star \leq \tilde{C} S^{1/(\rho_1(\beta-1))}$, where $\tilde{C} = C_{\ref{clmThetaLower}}' \vee (C_{\ref{clmAjDiffOverS}} / C_{\ref{clmThetaLower}}) \vee C_{\ref{clmUglyDiff}} \vee C''$. Therefore, if we set $C_{\ref{clmJ3starFin}} = C_{\ref{clmThetaLower}}' \vee ( \tilde{C} / C_{\ref{clmThetaLower}} )^{1/\rho_1}$, we obtain that for any $j \geq C_{\ref{clmJ3starFin}} S^{1/(\rho_1^2(\beta-1))}$, $\floor{\theta_{\floor{j/2}}} \geq C_{\ref{clmThetaLower}} j^{\rho_1}$ (since $j \geq C_{\ref{clmThetaLower}}'$) and $C_9 j^{\rho_1} \geq \tilde{C} S^{1/(\rho_1(\beta-1))}$ (since $j \geq ( \tilde{C} / C_{\ref{clmThetaLower} } )^{1/\rho_1} S^{1/(\rho_1^2(\beta-1))} = ( \tilde{C} S^{1/(\rho_1(\beta-1))}  / C_{\ref{clmThetaLower} } )^{1/\rho_1}$), so stringing together the inequalities, we conclude $\floor{\theta_{\floor{j/2}}} \geq C_{\ref{clmThetaLower}} j^{\rho_1} \geq \tilde{C} S^{1/(\rho_1(\beta-1))} \geq J_3^\star$.
\end{proof} 

\begin{clm} \label{clmJ4star}
There exists $C_{\ref{clmJ4star}} > 0$ such that $J_4^\star \leq C_{\ref{clmJ4star}} ( S  \log ( C_{\ref{clmJ4star}} S / \Delta_2^2 ) /\Delta_2^2)^{1/(\beta-1)}$.
\end{clm}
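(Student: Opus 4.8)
The plan is to bound the threshold implicitly defined in \eqref{eqJ4starDefn} by exhibiting an explicit $j_0$ past which $\delta_{j',1} < \Delta_2$ holds for \emph{every} $j' \geq j_0$, and then to note that $J_4^\star \leq 2 j_0$: taking $j = 2 j_0$ gives $\floor{j/2} = j_0$, so the defining condition of $J_4^\star$ is met. Since $\Delta_2 \in (0,1]$ by Assumption \ref{assReward} and $S \geq 1$ (Assumption \ref{assSticky} forces a nonempty sticky set), the quantity $S/\Delta_2^2$ is at least $1$. I would treat the case where $S/\Delta_2^2$ is bounded by a fixed absolute constant separately — there $S$ is bounded and $\Delta_2$ is bounded below, so $J_4^\star$ is itself bounded by a constant depending only on $\alpha,\beta$, and the claimed bound is trivial for $C_{\ref{clmJ4star}}$ large — and henceforth assume $S/\Delta_2^2$ exceeds whatever absolute constant the argument requires.

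First I would obtain a clean upper bound on $\delta_{j',1}^2$ for large $j'$. By Claim \ref{clmAjDiffOverS}, for $j' \geq C_{\ref{clmAjDiffOverS}} S^{1/(\beta-1)}$ the maximum in the denominator of $\delta_{j',1}$ is resolved and at least $(j')^{\beta-1}/(C_{\ref{clmAjDiffOverS}}' S)$; combined with $\log A_{j'} \leq 2\beta \log j'$ from Claim \ref{clmPropAj}, this gives
\begin{equation*}
\delta_{j',1}^2 \leq \frac{8 \alpha \beta C_{\ref{clmAjDiffOverS}}' S \log j'}{(j')^{\beta-1}} = \frac{ z \Delta_2^2 \log j'}{(j')^{\beta-1}}, \qquad z := \frac{8\alpha\beta C_{\ref{clmAjDiffOverS}}' S}{\Delta_2^2}.
\end{equation*}
Hence it suffices to force $(j')^{\beta-1} > z \log j'$. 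This is precisely the contrapositive of Claim \ref{clmLogTrick} (with $x = j'$ and $y = \beta - 1$): if $j' \geq B := \bigl( \tfrac{2z}{\beta-1} \log \tfrac{2z}{\beta-1} \bigr)^{1/(\beta-1)}$, then $(j')^{\beta-1} > z \log j'$, so $\delta_{j',1}^2 < \Delta_2^2$. The key point is that Claim \ref{clmLogTrick} applies at \emph{every} such $j'$, so no monotonicity argument is needed: setting $j_0 = \lceil B \vee C_{\ref{clmAjDiffOverS}} S^{1/(\beta-1)} \rceil$ (which is $\geq 2$ in the main regime), we get $\delta_{j',1} < \Delta_2$ for all $j' \geq j_0$, and therefore $J_4^\star \leq 2 j_0$.

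It remains to translate $2 j_0$ into the stated form. Since $2z/(\beta-1)$ is a fixed constant multiple of $S/\Delta_2^2$, $B$ has the shape $\bigl( c\, \tfrac{S}{\Delta_2^2} \log (c\, \tfrac{S}{\Delta_2^2}) \bigr)^{1/(\beta-1)}$ for a constant $c = c(\alpha,\beta)$, and $C_{\ref{clmAjDiffOverS}} S^{1/(\beta-1)} \leq (C_{\ref{clmAjDiffOverS}}^{\beta-1} S/\Delta_2^2)^{1/(\beta-1)}$ using $\Delta_2 \leq 1$; so $j_0 \leq \bigl( c'\, \tfrac{S}{\Delta_2^2} \log(c'\, \tfrac{S}{\Delta_2^2}) \bigr)^{1/(\beta-1)} + 1$ for a larger constant $c'$. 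Then $J_4^\star \leq 2 j_0$ is at most $C_{\ref{clmJ4star}} \bigl( S \log(C_{\ref{clmJ4star}} S/\Delta_2^2)/\Delta_2^2 \bigr)^{1/(\beta-1)}$ once $C_{\ref{clmJ4star}}$ is chosen large enough: it must dominate $c'$ both as the outer factor and inside the logarithm (the latter only enlarges the right-hand side, since $\log$ is increasing), and must absorb the additive $+1$ (swallowed because $(S/\Delta_2^2)\log(\cdots) \geq 1$ in the relevant regime). The only mildly delicate point — and the thing I expect to spend the most care on — is this bookkeeping: keeping the \emph{same} constant $C_{\ref{clmJ4star}}$ inside and outside the logarithm, and verifying that all the ``for $S/\Delta_2^2$ large'' caveats (needed so that $\log(2z/(\beta-1)) > 0$, that $j_0 \geq C_{\ref{clmAjDiffOverS}} S^{1/(\beta-1)}$, etc.) are genuinely absorbed by the separate treatment of the bounded regime. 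Everything else is routine.
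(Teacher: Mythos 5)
Your proposal is correct and follows essentially the same route as the paper's proof: both rest on the $\delta_{j,1}^2 \lesssim S\log(j)/j^{\beta-1}$ estimate from Claims \ref{clmAjDiffOverS} and \ref{clmPropAj}, inverted via Claim \ref{clmLogTrick}; the paper merely phrases the inversion as a contradiction starting from a candidate $J_4^\dagger$, whereas you use the contrapositive directly to build $j_0$ and bound $J_4^\star \leq 2j_0$. The bookkeeping you flag (same constant inside and outside the logarithm, absorbing the small-$S/\Delta_2^2$ regime) is exactly what the paper's choice $C_{\ref{clmJ4star}} = \tilde{C}_{\ref{clmJ4star}} \vee (3\tilde{C}_{\ref{clmJ4star}}^{1/(\beta-1)}) \vee (3C_{\ref{clmAjDiffOverS}}) \vee 16$ handles.
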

\begin{proof}
Let $\tilde{C}_{\ref{clmJ4star}} = 16 C_{\ref{clmAjDiffOverS}}' \alpha \beta  / (\beta-1)$ and $C_{\ref{clmJ4star}} = \tilde{C}_{\ref{clmJ4star}} \vee ( 3 \tilde{C}_{\ref{clmJ4star}}^{1/(\beta-1)} ) \vee ( 3 C_{\ref{clmAjDiffOverS}} ) \vee 16$, where $C_{\ref{clmAjDiffOverS}}$ and $C_{\ref{clmAjDiffOverS}}'$ are the constants from Claim \ref{clmAjDiffOverS}. Also define $J_4^\dagger =  C_{\ref{clmJ4star}}  ( S \log ( C_{\ref{clmJ4star}} S / \Delta_2^2 ) / \Delta_2^2 )^{1/(\beta-1)}$. Then by definition of $J_4^\star$ \eqref{eqJ4starDefn}, it suffices to show $\delta_{j,1} < \Delta_2\ \forall\ j \geq \floor{ \floor{J_4^\dagger}/2}$. Fix such a $j$ and suppose instead that $\delta_{j,1} \geq \Delta_2$. Since $C_{\ref{clmJ4star}} \geq 16$, we know $J_4^\dagger \geq C_{\ref{clmJ4star}} ( \log C_{\ref{clmJ4star}} )^{1/(\beta-1)} \geq 16$, so
\begin{equation} \label{eqJtoJ4dag}
j \geq ( \floor{J_4^\dagger}/2 ) - 1 \geq ( (J_4^\dagger - 1 ) / 2 ) - 1 = (J_4^\dagger / 2 ) - (3/2) > J_4^\dagger / 3 .
\end{equation}
Hence, because $C_{\ref{clmJ4star}} \geq  3 C_{\ref{clmAjDiffOverS}}$, we have $j \geq J_4^\dagger / 3 \geq C_{\ref{clmAjDiffOverS}} S^{1/(\beta-1)}$, so by Claims \ref{clmAjDiffOverS} and \ref{clmPropAj}, respectively,
\begin{equation} \label{eqJ4starUglyDiff}
\delta_{j,1}^2 \leq  4 \alpha \log(A_j) C_{\ref{clmAjDiffOverS}}' S / j^{\beta-1} \leq  4 \alpha \log(j^{2\beta}) C_{\ref{clmAjDiffOverS}}' S / j^{\beta-1} =  8 C_{\ref{clmAjDiffOverS}}' \alpha \beta S \log(j) / j^{\beta-1} .
\end{equation}
Rearranging and using the assumption $\delta_{j,1} \geq \Delta_2$, this implies $j^{\beta-1} \leq 8 C_{\ref{clmAjDiffOverS}}' \alpha \beta S \log(j) / \Delta_2^2$. Hence, applying Claim \ref{clmLogTrick} with $x=j$, $y = \beta-1$, and $z = 8 C_{\ref{clmAjDiffOverS}}' \alpha \beta S / \Delta_2^2$, we obtain
\begin{equation*}
j \leq ( 16 C_{\ref{clmAjDiffOverS}}' \alpha \beta S   \log (   16 C_{\ref{clmAjDiffOverS}}' \alpha \beta S  / ( (\beta-1) \Delta_2^2 ) ) / ( (\beta-1) \Delta_2^2 ) )^{1/(\beta-1)} = ( \tilde{C}_{\ref{clmJ4star}} S \log ( \tilde{C}_{\ref{clmJ4star}} S / \Delta_2^2 ) / \Delta_2^2 )^{1/(\beta-1)} .
\end{equation*}
But since $C_{\ref{clmJ4star}} \geq \tilde{C}_{\ref{clmJ4star}} \vee ( 3 \tilde{C}_{\ref{clmJ4star}}^{1/(\beta-1)} )$, we have shown $j \leq J_4^\dagger / 3$, which contradicts \eqref{eqJtoJ4dag}.
\end{proof}

\begin{clm} \label{clmExpToPoly}
There exists $C_{\ref{clmExpToPoly}}> 0$ such that, for any $j \geq (C_{\ref{clmExpToPoly}} \bar{d} \log( C_{\ref{clmExpToPoly}} \bar{d} (n+m) ) )^{1/\rho_1}$,
\begin{equation} \label{eqExpToPoly}
(n+m)^3 \exp ( - \floor{ \theta_{ \floor{j/2}}} / (3 \bar{d} ) ) \leq j^{\rho_1^2 ( \beta ( 3 - 2 \alpha ) + 1 ) } .
\end{equation}
\end{clm}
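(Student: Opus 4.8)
The plan is to take logarithms in \eqref{eqExpToPoly}, reduce it to a lower bound on $\floor{\theta_{\floor{j/2}}}$, use Claim \ref{clmThetaLower} to replace $\floor{\theta_{\floor{j/2}}}$ by a power of $j$, and then invoke the ``polynomial beats logarithm'' estimate of Claim \ref{clmLogTrick}. First I would record that the right-hand exponent is strictly negative: writing $p \triangleq -\rho_1^2(\beta(3-2\alpha)+1)$, the constraint $\alpha > \tfrac{3}{2} + \tfrac{1}{2\beta} + \tfrac{1}{2\rho_1^2}$ in \eqref{eqParamRequire} gives $\beta(2\alpha-3) > 1$, hence $\beta(3-2\alpha)+1 < 0$ and $p > 0$. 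Therefore \eqref{eqExpToPoly} is equivalent, after taking logarithms, to
\begin{equation*}
\floor{\theta_{\floor{j/2}}} \geq 3 \bar{d} \left( 3 \log(n+m) + p \log j \right) .
\end{equation*}

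By Claim \ref{clmThetaLower} there are constants $C_{\ref{clmThetaLower}}, C_{\ref{clmThetaLower}}' > 0$ with $\floor{\theta_{\floor{j/2}}} \geq C_{\ref{clmThetaLower}} j^{\rho_1}$ for all $j \geq C_{\ref{clmThetaLower}}'$, so it suffices to establish $C_{\ref{clmThetaLower}} j^{\rho_1} \geq 3\bar{d}(3\log(n+m) + p\log j)$. I would split this into the two requirements $\tfrac{1}{2} C_{\ref{clmThetaLower}} j^{\rho_1} \geq 9\bar{d}\log(n+m)$ and $\tfrac{1}{2} C_{\ref{clmThetaLower}} j^{\rho_1} \geq 3p\bar{d}\log j$. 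The first holds as soon as $j \geq (18\bar{d}\log(n+m)/C_{\ref{clmThetaLower}})^{1/\rho_1}$. For the second, I would apply Claim \ref{clmLogTrick} with $x = j$, $y = \rho_1$, and $z = 6p\bar{d}/C_{\ref{clmThetaLower}}$: if this requirement fails then $j^{\rho_1} < z\log j$, so the claim gives $j < ((2z/\rho_1)\log(2z/\rho_1))^{1/\rho_1}$; taking the contrapositive, the second requirement holds once $j \geq (C'\bar{d}\log(C'\bar{d}))^{1/\rho_1}$ for a constant $C'$ depending only on $\alpha$, $\beta$, $\rho_1$ (absorbing $2/\rho_1$ and $6p/C_{\ref{clmThetaLower}}$ into $C'$ and using $\bar d \geq 1$ so that $x \log x$ is increasing on the relevant range).

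It then remains to package the three thresholds $C_{\ref{clmThetaLower}}'$, $(18\bar{d}\log(n+m)/C_{\ref{clmThetaLower}})^{1/\rho_1}$, and $(C'\bar{d}\log(C'\bar{d}))^{1/\rho_1}$ into the single bound of the stated form. Taking $C_{\ref{clmExpToPoly}}$ large enough — at least $\max\{18/C_{\ref{clmThetaLower}}, C'\}$, and large enough that $C_{\ref{clmExpToPoly}}\bar{d}\log(C_{\ref{clmExpToPoly}}\bar{d}(n+m)) \geq (C_{\ref{clmThetaLower}}')^{\rho_1}$ (which follows since $\bar{d}, n+m \geq 1$) — and using monotonicity of $\log$ to bound $\log(n+m)$ and $\log(C'\bar{d})$ by $\log(C_{\ref{clmExpToPoly}}\bar{d}(n+m))$, one checks that $j \geq (C_{\ref{clmExpToPoly}}\bar{d}\log(C_{\ref{clmExpToPoly}}\bar{d}(n+m)))^{1/\rho_1}$ dominates all three thresholds, completing the proof. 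I expect no substantive obstacle here; the only mildly fiddly step is this last repackaging — absorbing the separate $\log(n+m)$ and $\log\bar{d}$ contributions into the single expression $\log(C_{\ref{clmExpToPoly}}\bar{d}(n+m))$ — while everything else is a direct application of Claims \ref{clmThetaLower} and \ref{clmLogTrick} together with the elementary fact that exponential decay in $j$ eventually beats polynomial decay in $j$.
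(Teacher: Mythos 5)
Your proposal is correct and follows essentially the same route as the paper's proof: both reduce \eqref{eqExpToPoly} via logarithms to a lower bound on $\floor{\theta_{\floor{j/2}}}$, invoke Claim \ref{clmThetaLower} to get $C_{\ref{clmThetaLower}} j^{\rho_1}$, split the budget between the $9\bar{d}\log(n+m)$ term and the $\bar{d}\log j$ term, and dispatch the latter with Claim \ref{clmLogTrick} using the same $z$. The only difference is cosmetic — you argue directly while the paper argues by contradiction — so there is nothing substantive to flag.
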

\begin{proof}
Let $\tilde{C}_{\ref{clmExpToPoly}} =12 \rho_1 ( \beta ( 2 \alpha - 3 ) - 1 ) / C_{\ref{clmThetaLower}}$ and $C_{\ref{clmExpToPoly}} = C_{\ref{clmThetaLower}}' \vee 3 \vee ( 18 / C_{\ref{clmThetaLower}} ) \vee \tilde{C}_{\ref{clmExpToPoly}}$, and suppose instead that \eqref{eqExpToPoly} fails for some $j \geq (C_{\ref{clmExpToPoly}} \bar{d} \log( C_{\ref{clmExpToPoly}} \bar{d} (n+m) ) )^{1/\rho_1}$. Then we can write
\begin{equation} \label{eqExpToPolyInit}
\floor{ \theta_{ \floor{j/2}}} < 9 \bar{d} \log(n+m) + 3 \bar{d} \rho_1^2 ( \beta ( 2 \alpha - 3 ) - 1 ) \log j .
\end{equation}
Since $C_{\ref{clmExpToPoly}} \geq C_{\ref{clmThetaLower}}' \vee 3$ and $\rho_1 \in (0,1)$, we know that $j \geq ( C_{\ref{clmThetaLower}}' \log(3) )^{1/\rho_1} > C_{\ref{clmThetaLower}}'$, so $\floor{ \theta_{ \floor{j/2}}} \geq C_{\ref{clmThetaLower}} j^{\rho_1}$ by Claim \ref{clmThetaLower}. Since $C_{\ref{clmExpToPoly}} \geq (18 / C_{\ref{clmThetaLower}}) \vee 1$ and $\bar{d} \geq 1$, we also have $j \geq ( ( 18 / C_{\ref{clmThetaLower}} ) \log(n+m) )^{1/\rho_1}$, or $C_{\ref{clmThetaLower}} j^{\rho_1} / 2 \geq 9 \bar{d} \log(n+m)$. Combining these two bounds with \eqref{eqExpToPolyInit}, we conclude
\begin{equation} \label{eqExpToPolyFin}
j^{\rho_1} < ( 6 \rho_1^2 ( \beta ( 2 \alpha - 3 ) - 1 ) / C_{\ref{clmThetaLower}} )  \bar{d} \log j = \tilde{C}_{\ref{clmExpToPoly}} (\rho_1/2) \bar{d} \log j .
\end{equation}
Applying Claim \ref{clmLogTrick} with $x = j$, $y = \rho_1$, and $z = \tilde{C}_{\ref{clmExpToPoly}} (\rho_1/2) \bar{d}$, we obtain $j < ( \tilde{C}_{\ref{clmExpToPoly}} \bar{d} \log ( \tilde{C}_{\ref{clmExpToPoly}} \bar{d} ) )^{1/\rho_1}$. But since $C_{\ref{clmExpToPoly}} \geq \tilde{C}_{\ref{clmExpToPoly}}$, this contradicts the assumed lower bound on $j$.
\end{proof}

\begin{clm} \label{clmJstar}
Define $C_{\ref{clmJstar}} = C_{\ref{clmJ2starFin}} \vee C_{\ref{clmJ1starFin}} \vee C_{\ref{clmJ3starFin}}$ and
\begin{align*}
J_\star & = ( C_{\ref{clmJstar}} S^{1/(\rho_1^2(\beta-1))} ) \vee ( C_{\ref{clmJ4star}}  (S \log ( C_{\ref{clmJ4star}} S / \Delta_2^2 ) /\Delta_2^2)^{1/(\beta-1)} ) \vee  (C_{\ref{clmExpToPoly}} \bar{d} \log( C_{\ref{clmExpToPoly}}'(n+m) ) )^{1/\rho_1}  . \\
& = \Theta \left( S^{1/( \rho_1^2(\beta-1)) }  \vee ( S \log(S/\Delta_2) / \Delta_2^2 )^{1/(\beta-1)} \vee (\bar{d} \log(n+m) )^{1/\rho_1} \right) .
\end{align*}
Then for any $j \in \N$ such that $j \geq J_\star$, we have
\begin{gather} \label{eqLemJstar}
j \geq J_4^\star , \quad \floor{\theta_{\floor{j/2}}} \geq J_2^\star \vee  J_3^\star , \quad \floor{ \theta_{ \floor { \theta_{ \floor{j/2} } } } } \geq J_1^\star \vee ( 2 +  j^{\rho_1^2 } / 84 ) \geq 3 \\
(n+m)^3 \exp ( - \floor{ \theta_{ \floor{j/2}}}  / ( 3 \bar{d} ) ) \leq j^{\rho_1^2 ( \beta ( 3 - 2 \alpha ) + 1 ) } .
\end{gather}
\end{clm}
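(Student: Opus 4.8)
The plan is to treat Claim~\ref{clmJstar} as a pure collation: read off the threshold $J_\star$ as an explicit maximum of the constant-order (in $j$) thresholds produced by Claims~\ref{clmThetaLower}--\ref{clmExpToPoly}, verify the stated $\Theta(\cdot)$ asymptotics by combining the $O(\cdot)$ bounds on $J_1^\star,\dots,J_4^\star$ from those claims with the fact that the three terms in the definition of $J_\star$ are each $\Theta$ of a term in the claimed maximum, and then dispatch the seven assertions one at a time, each being a direct consequence of exactly one of the earlier claims together with $S\geq 1$, $\Delta_2\in(0,1)$, $\bar d\leq n+m$, and integrality.

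For the asymptotics: by construction $J_\star$ is the maximum of $C_{\ref{clmJstar}}S^{1/(\rho_1^2(\beta-1))}=\Theta(S^{1/(\rho_1^2(\beta-1))})$; of $C_{\ref{clmJ4star}}(S\log(C_{\ref{clmJ4star}}S/\Delta_2^2)/\Delta_2^2)^{1/(\beta-1)}$, which is $\Theta((S\log(S/\Delta_2)/\Delta_2^2)^{1/(\beta-1)})$ because $S\geq 1$ and $\Delta_2\in(0,1)$ make $\log(C_{\ref{clmJ4star}}S/\Delta_2^2)$ of constant order and $\Theta(\log(eS/\Delta_2))$; and of $(C_{\ref{clmExpToPoly}}\bar d\log(C_{\ref{clmExpToPoly}}'(n+m)))^{1/\rho_1}=\Theta((\bar d\log(n+m))^{1/\rho_1})$. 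Taking the maximum gives the stated $\Theta(\cdot)$ expression, with $\alpha,\beta,\eta,\rho_1,\rho_2$ treated as constants throughout.

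Now fix $j\geq J_\star$. The bound $j\geq J_4^\star$ holds since the second defining expression of $J_\star$ is $\geq J_4^\star$ by Claim~\ref{clmJ4star}. Since $C_{\ref{clmJstar}}=C_{\ref{clmJ2starFin}}\vee C_{\ref{clmJ1starFin}}\vee C_{\ref{clmJ3starFin}}$ and $S^{1/(\rho_1^2(\beta-1))}\geq 1$, we have $j\geq C_{\ref{clmJ2starFin}}$, $j\geq C_{\ref{clmJ3starFin}}S^{1/(\rho_1^2(\beta-1))}$, and $j\geq C_{\ref{clmJ1starFin}}S^{1/(\rho_1^2(\beta-1))}$, so Claims~\ref{clmJ2starFin}, \ref{clmJ3starFin}, \ref{clmJ1starFin} give $\floor{\theta_{\floor{j/2}}}\geq J_2^\star$, $\floor{\theta_{\floor{j/2}}}\geq J_3^\star$, and $\floor{\theta_{\floor{\theta_{\floor{j/2}}}}}\geq J_1^\star$. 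For $\floor{\theta_{\floor{\theta_{\floor{j/2}}}}}\geq 2+j^{\rho_1^2}/84$, the second part of Claim~\ref{clmThetaLower} gives $\floor{\theta_{\floor{\theta_{\floor{j/2}}}}}\geq C_{\ref{clmThetaLower}}j^{\rho_1^2}$ for $j\geq C_{\ref{clmThetaLower}}'$, and since the constant $84$ was chosen so that $C_{\ref{clmThetaLower}}-1/84>0$, the gap $(C_{\ref{clmThetaLower}}-1/84)j^{\rho_1^2}$ exceeds $2$ once $j$ passes a further constant threshold; this and $C_{\ref{clmThetaLower}}'$ are absorbed because $S^{1/(\rho_1^2(\beta-1))}\geq 1$ makes the first defining expression of $J_\star$ at least $C_{\ref{clmJstar}}$, which we may assume large enough. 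Then $\floor{\theta_{\floor{\theta_{\floor{j/2}}}}}\geq 3$ follows from the previous bound and integrality. Finally, the displayed inequality $(n+m)^3\exp(-\floor{\theta_{\floor{j/2}}}/(3\bar d))\leq j^{\rho_1^2(\beta(3-2\alpha)+1)}$ is the conclusion of Claim~\ref{clmExpToPoly} applied at $j$: using $\bar d\leq n+m$ one has $\log(C_{\ref{clmExpToPoly}}\bar d(n+m))=O(\log(n+m))=O(\log(C_{\ref{clmExpToPoly}}'(n+m)))$ for a suitable $C_{\ref{clmExpToPoly}}'$, so the threshold $(C_{\ref{clmExpToPoly}}\bar d\log(C_{\ref{clmExpToPoly}}\bar d(n+m)))^{1/\rho_1}$ of Claim~\ref{clmExpToPoly} is at most the third defining expression of $J_\star$, and $j\geq J_\star$ suffices.

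The argument is entirely mechanical; the only mildly delicate points are (i) checking that every purely-constant threshold from Claims~\ref{clmJ2starFin} and~\ref{clmThetaLower} is dominated by the factor $S^{1/(\rho_1^2(\beta-1))}\geq 1$ in the first defining term of $J_\star$, and (ii) reconciling the $\log(C_{\ref{clmExpToPoly}}\bar d(n+m))$ of Claim~\ref{clmExpToPoly} with the $\log(C_{\ref{clmExpToPoly}}'(n+m))$ written in $J_\star$ via $\bar d\leq n+m$ and enlarging $C_{\ref{clmExpToPoly}}'$. I do not expect any genuine obstacle, since this claim merely packages the preceding calculations into a single usable threshold.
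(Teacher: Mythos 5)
Your proposal is correct and matches the paper's own proof essentially verbatim: the paper likewise dispatches each of the five assertions by citing exactly one of Claims \ref{clmThetaLower}--\ref{clmExpToPoly} together with $S^{1/(\rho_1^2(\beta-1))}\geq 1$ and integrality, and it is equally informal about absorbing the constant threshold $C_{\ref{clmThetaLower}}'$ and reconciling the $\log(C_{\ref{clmExpToPoly}}\bar d(n+m))$ versus $\log(C_{\ref{clmExpToPoly}}'(n+m))$ mismatch (a wrinkle you actually handle slightly more carefully via $\bar d\leq n+m$). No gaps.
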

\begin{proof}
The first bound holds by $j \geq C_{\ref{clmJ4star}} ( (S/\Delta_2^2) \log ( C_{\ref{clmJ4star}}' S / \Delta_2^2 ) )^{1/(\beta-1)}$ and Claim \ref{clmJ4star}. The second holds since $\floor{\theta_{\floor{j/2}}} \geq J_2^\star$  by $j \geq C_{\ref{clmJ2starFin}}$ and Claim \ref{clmJ2starFin}, and since $\floor{\theta_{\floor{j/2}}} \geq J_3^\star$ by $j \geq  C_{\ref{clmJ3starFin}} S^{1/( \rho_1^2 (\beta-1) ) }$ and Claim \ref{clmJ3starFin}. The third holds since $\floor{ \theta_{ \floor { \theta_{ \floor{j/2} } } } } \geq J_1^\star$ by $j \geq  C_{\ref{clmJ1starFin}} S^{1/(\rho_1^2(\beta-1))}$ and Claim \ref{clmJ1starFin}, and since $\floor{ \theta_{ \floor { \theta_{ \floor{j/2} } } } } \geq ( 2 +  j^{\rho_1^2 } / 48 )$ for large enough $C_{\ref{clmThetaLower}}'$. The fourth holds since $\floor{ \theta_{ \floor { \theta_{ \floor{j/2} } } } } > 2$ (by the third) and $\floor{ \theta_{ \floor { \theta_{ \floor{j/2} } } } } \in \N$. The fifth holds by $j \geq (C_{\ref{clmExpToPoly}} \bar{d} \log( C_{\ref{clmExpToPoly}}'(n+m) ) )^{1/\rho_1}$ and Claim \ref{clmExpToPoly}.
\end{proof}

\subsection{Calculations for the later regret} \label{appIntCalc}

In this sub-appendix, we assume $\alpha$, $\beta$, $\eta$, $\theta_j$, $\kappa_j$, $\rho_1$, and $\rho_2$ are chosen as in Theorem \ref{thmProposed}. Recall $C_i$, $C_i'$, etc.\ denote constants associated with Claim $i$ that only depend on $\alpha$, $\beta$, $\eta$, $\rho_1$, and $\rho_2$.

\begin{clm} \label{clmSmallTtheta}
There exists $C_{\ref{clmSmallTtheta}} > 0$ such that, for any $\gamma_i \in (0,1)$,
\begin{equation} \label{eqSmallTtheta}
\floor{ \theta_{ \ceil{T^{\gamma_i/\beta}} } }^\beta \leq 4 \alpha K \log (T) / \Delta_2 \quad \Rightarrow \quad \log T \leq ( C_{\ref{clmSmallTtheta}} / \gamma_i  ) \log \left(  C_{\ref{clmSmallTtheta}} K / ( \Delta_2 \gamma_i  ) \right) .
\end{equation}
\end{clm}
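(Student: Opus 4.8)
The plan is to exploit the explicit choice $\theta_j=(j/3)^{\rho_1}$, which grows polynomially in $j$, so that the left-hand side $\floor{\theta_{\ceil{T^{\gamma_i/\beta}}}}^\beta$ grows polynomially in $T$; combined with the hypothesis that it is $O(\log T)$, a standard "polynomial $\le$ logarithm'' inversion (Claim \ref{clmLogTrick}) forces $T$, hence $\log T$, to be bounded. The only wrinkle is that the floor can kill the polynomial growth when $T$ is small, so I would split into two cases according to whether $\theta_{\ceil{T^{\gamma_i/\beta}}}$ exceeds $2$. As a first step I would record the elementary lower bound $\theta_{\ceil{T^{\gamma_i/\beta}}}=(\ceil{T^{\gamma_i/\beta}}/3)^{\rho_1}\ge 3^{-\rho_1}T^{\gamma_i\rho_1/\beta}$, using $\ceil{T^{\gamma_i/\beta}}\ge T^{\gamma_i/\beta}$ and monotonicity of $x\mapsto(x/3)^{\rho_1}$.

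\textbf{Small $T$.} If $3^{-\rho_1}T^{\gamma_i\rho_1/\beta}\le 2$, then $T^{\gamma_i\rho_1/\beta}\le 2\cdot 3^{\rho_1}<6$ (here $\rho_1<1$ by \eqref{eqParamRequire}), so $\log T<(\beta/(\gamma_i\rho_1))\log 6$. Since $K\ge 1$, $\Delta_2\le 1$, and $\gamma_i<1$, we have $C_{\ref{clmSmallTtheta}}K/(\Delta_2\gamma_i)\ge C_{\ref{clmSmallTtheta}}$, so choosing $C_{\ref{clmSmallTtheta}}\ge e$ gives $\log(C_{\ref{clmSmallTtheta}}K/(\Delta_2\gamma_i))\ge 1$, and then $C_{\ref{clmSmallTtheta}}\ge \beta\log(6)/\rho_1$ makes $(\beta/(\gamma_i\rho_1))\log 6\le (C_{\ref{clmSmallTtheta}}/\gamma_i)\log(C_{\ref{clmSmallTtheta}}K/(\Delta_2\gamma_i))$, which is the claimed bound. (Note this case does not even use the hypothesis.)

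\textbf{Large $T$.} If instead $3^{-\rho_1}T^{\gamma_i\rho_1/\beta}>2$, then $\floor{\theta_{\ceil{T^{\gamma_i/\beta}}}}\ge \theta_{\ceil{T^{\gamma_i/\beta}}}-1\ge\tfrac12\theta_{\ceil{T^{\gamma_i/\beta}}}\ge \tfrac12 3^{-\rho_1}T^{\gamma_i\rho_1/\beta}$, whence $\floor{\theta_{\ceil{T^{\gamma_i/\beta}}}}^\beta\ge 2^{-\beta}3^{-\beta\rho_1}T^{\gamma_i\rho_1}$. Feeding this into the hypothesis $\floor{\theta_{\ceil{T^{\gamma_i/\beta}}}}^\beta\le 4\alpha K\log(T)/\Delta_2$ yields $T^{\gamma_i\rho_1}\le (4\alpha 2^\beta 3^{\beta\rho_1}K/\Delta_2)\log T$. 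I would then apply Claim \ref{clmLogTrick} with $x=T$, $y=\gamma_i\rho_1$, and $z=4\alpha 2^\beta 3^{\beta\rho_1}K/\Delta_2$ to get $T<((2z/y)\log(2z/y))^{1/y}$; taking logarithms and using $\log\log w\le\log w$ for $w=2z/y>1$ gives $\log T<(2/(\gamma_i\rho_1))\log\!\big(8\alpha 2^\beta 3^{\beta\rho_1}K/(\Delta_2\gamma_i\rho_1)\big)$. Taking $C_{\ref{clmSmallTtheta}}\ge \max\{2/\rho_1,\ 8\alpha 2^\beta 3^{\beta\rho_1}/\rho_1\}$ turns this into the stated form.

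\textbf{Conclusion and obstacle.} Finally I would let $C_{\ref{clmSmallTtheta}}$ be the maximum of the finitely many thresholds from the two cases; all depend only on $\alpha,\beta,\rho_1$ (hence are admissible as a constant of the stated type). I do not expect a genuine obstacle here: the argument is a short computation. The only points requiring mild care are (i) correctly inverting the transcendental inequality $T^{\gamma_i\rho_1}\lesssim\log T$ via Claim \ref{clmLogTrick} and simplifying the nested logarithm, and (ii) the bookkeeping needed to absorb both the small-$T$ estimate and the large-$T$ estimate into a single universal constant in the form $\tfrac{C_{\ref{clmSmallTtheta}}}{\gamma_i}\log\tfrac{C_{\ref{clmSmallTtheta}}K}{\Delta_2\gamma_i}$.
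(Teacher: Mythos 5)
Your proof is correct and takes essentially the same route as the paper's: lower-bound $\floor{\theta_{\ceil{T^{\gamma_i/\beta}}}}$ by a constant multiple of $T^{\gamma_i\rho_1/\beta}$ (disposing of the small-$T$ regime, where the floor could degenerate, as a separate trivial case that does not need the hypothesis), and then invert the resulting inequality $T^{\gamma_i\rho_1}\leq (\text{const}\cdot K/\Delta_2)\log T$ via Claim \ref{clmLogTrick}. The only difference is cosmetic: you work out the explicit constants from $\theta_j=(j/3)^{\rho_1}$ and split on $\theta\geq 2$, whereas the paper invokes the generic bound $\floor{\theta_j}\geq Cj^{\rho_1}$ for $j\geq C'$ and splits on $\ceil{T^{\gamma_i/\beta}}\geq C'$.
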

\begin{proof}
Similar to Claim \ref{clmThetaLower} from Appendix \ref{appEarlyCalc}, we can find constants $C , C' > 0$ such that for any $j \geq C'$, $\floor{\theta_j} \geq C j^{\rho_1}$. If $\ceil{T^{\gamma_i/\beta}} < C'$, then $T^{\gamma_i}  \leq \ceil{T^{\gamma_i/\beta}}^\beta < ( C' )^\beta$, so $\log T \leq (\beta/\gamma_i) \log(C')$, and the right side of \eqref{eqSmallTtheta} will hold for $C_{\ref{clmSmallTtheta}} \geq \beta \vee C'$. If instead $\ceil{T^{\gamma_i/\beta}} \geq C'$, then $\floor{ \theta_{ \ceil{T^{\gamma_i/\beta}} } } \geq C \ceil{T^{\gamma_i/\beta}}^{\rho_1} \geq C T^{\gamma_i \rho_1 / \beta}$, so if the left side of \eqref{eqSmallTtheta} holds, we can write
\begin{equation*}
C^\beta T^{\gamma_i \rho_1} = ( C T^{\gamma_i \rho_1 / \beta} )^\beta \leq \floor{ \theta_{ \ceil{T^{\gamma_i/\beta}} } }^\beta \leq 4 \alpha \log(T) / \Delta_2 .
\end{equation*}
Hence, applying Claim \ref{clmLogTrick} with $x = T$, $y = \gamma_i \rho_1$, and $z = 4 \alpha / ( C^\beta \Delta_2 )$, we obtain 
\begin{equation*}
\log T \leq \log (   8 \alpha / ( \Delta_2 \gamma_i C^\beta \rho_1   ) )^{2/(\gamma_i \rho_1)} \leq ( C_{\ref{clmSmallTtheta}} / \gamma_i ) \log ( C_{\ref{clmSmallTtheta}} K / ( \Delta_2 \gamma_i ) ) ,
\end{equation*}
where the last inequality holds for any $C_{\ref{clmSmallTtheta}} \geq ( 8 \alpha / ( C^\beta \rho_1 ) ) \vee (2/\rho_1)$.
\end{proof}

\begin{clm} \label{clmSmallTKappa1}
There exists $C_{\ref{clmSmallTKappa1}} > 0$ such that, for any $\gamma_i \in (0,1)$,
\begin{equation*}
\kappa_{ \ceil{T^{\gamma_i/\beta} } } \leq 4 \alpha K \log (T) / \Delta_2 \quad \Rightarrow \quad \log T \leq  ( C_{\ref{clmSmallTKappa1}} / \gamma_i  ) \log \left(  C_{\ref{clmSmallTKappa1}} K / ( \Delta_2 \gamma_i  ) \right).
\end{equation*}
\end{clm}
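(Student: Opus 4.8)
The plan is to follow the template of the proof of Claim~\ref{clmSmallTtheta}: substitute the definition of $\kappa_j$ from Theorem~\ref{thmProposed}, reduce the hypothesis to a single polynomial-versus-logarithm inequality in $T$, and invert it using Claim~\ref{clmLogTrick} from Appendix~\ref{appBasicRes}. Recall $\kappa_j = j^{\rho_2}/(K^2 S)$, so that
\[
\kappa_{\ceil{T^{\gamma_i/\beta}}} \geq \frac{(T^{\gamma_i/\beta})^{\rho_2}}{K^2 S} = \frac{T^{\gamma_i \rho_2/\beta}}{K^2 S} .
\]
Unlike in Claim~\ref{clmSmallTtheta}, no case split on the size of $\ceil{T^{\gamma_i/\beta}}$ is needed, since here the ceiling (rather than a floor) only helps this lower bound. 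If $T=1$ the hypothesis fails -- its left side is the positive quantity $\kappa_1 = 1/(K^2 S)$ while its right side is $0$ -- so the implication is vacuous; assume $T \geq 2$ henceforth. Then the hypothesis $\kappa_{\ceil{T^{\gamma_i/\beta}}} \leq 4\alpha K \log(T)/\Delta_2$ immediately yields $T^{\gamma_i \rho_2/\beta} \leq (4\alpha K^3 S/\Delta_2)\log T$.

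Next I would apply Claim~\ref{clmLogTrick} with $x = T$, $y = \gamma_i \rho_2/\beta$, and $z = 4\alpha K^3 S/\Delta_2$ (all positive, and $x^y \leq z\log x$ with $\log x > 0$ since $T \geq 2$), which gives $T < (2z/y)^{2/y}$ and hence
\[
\log T < \frac{2}{y}\log\frac{2z}{y} = \frac{2\beta}{\gamma_i \rho_2}\log\frac{8\alpha\beta K^3 S}{\Delta_2 \gamma_i \rho_2} .
\]
To bring the argument of the logarithm into the prescribed shape, I would use $S \leq K$ (the input assumption of Algorithm~\ref{algGeneral}) to get $K^3 S \leq K^4$, and note that $a = 8\alpha\beta/(\rho_2 \Delta_2 \gamma_i) \geq 1$, because $\alpha,\beta > 1$, $\rho_2 < \rho_1(\beta-1) < \beta$, $\Delta_2 \leq 1$, and $\gamma_i < 1$; then the elementary bound $\log(aK^4) \leq 4\log(aK)$ (valid for $a \geq 1$, $K \geq 1$) gives $\log(8\alpha\beta K^3 S/(\Delta_2\gamma_i\rho_2)) \leq 4\log(8\alpha\beta K/(\Delta_2\gamma_i\rho_2))$. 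Combining,
\[
\log T < \frac{8\beta}{\rho_2\gamma_i}\log\frac{8\alpha\beta K}{\Delta_2\gamma_i\rho_2} ,
\]
so setting $C_{\ref{clmSmallTKappa1}} = 8\alpha\beta/\rho_2$ -- which is at least $8\beta/\rho_2$ and satisfies $8\alpha\beta K/(\Delta_2\gamma_i\rho_2) = C_{\ref{clmSmallTKappa1}} K/(\Delta_2\gamma_i)$ -- establishes the claim, using monotonicity of $\log$ on the inner term and $<\,\Rightarrow\,\leq$.

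This argument is essentially bookkeeping; the only mildly delicate point is reshaping the final logarithm's argument from $8\alpha\beta K^3 S/(\Delta_2\gamma_i\rho_2)$ into the required $C_{\ref{clmSmallTKappa1}} K/(\Delta_2\gamma_i)$, which is absorbed by $S \leq K$ together with the factor-$4$ loss in $\log(aK^4)\leq 4\log(aK)$ (the $4$ being pushed into the outer constant). One should also verify the (immediate) side conditions that Claim~\ref{clmLogTrick} needs, namely $x,y,z>0$ and $x^y \leq z\log x$, which hold for $T \geq 2$; the case $T=1$ is vacuous as noted above.
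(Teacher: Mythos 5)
Your proof is correct and follows essentially the same route as the paper's: lower-bound $\kappa_{\ceil{T^{\gamma_i/\beta}}}$ via its definition, reduce the hypothesis to $T^{\gamma_i\rho_2/\beta}\leq z\log T$, and invert with Claim~\ref{clmLogTrick}, arriving at the same constant $C_{\ref{clmSmallTKappa1}}=8\alpha\beta/\rho_2$. The only difference is cosmetic bookkeeping -- the paper absorbs $S\leq K$ before applying Claim~\ref{clmLogTrick} (writing $4\alpha K^4/\Delta_2\leq(4\alpha K/\Delta_2)^4$), whereas you apply the inversion first and then use $\log(aK^4)\leq 4\log(aK)$ -- and your explicit dismissal of $T=1$ is a harmless extra.
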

\begin{proof}
Recall $\kappa_j = j^{\rho_2} / ( K^2 S )$ in Theorem \ref{thmProposed}. Hence, because $S \leq K$, we know that $\kappa_{ \ceil{T^{\gamma_i/\beta} } } \geq T^{\gamma_i \rho_2 /\beta} / K^3$. Rearranging and using the assumed bound, we obtain
\begin{equation*}
T^{\gamma_i \rho_2 /\beta} \leq K^3  \cdot \kappa_{ \ceil{T^{\gamma_i/\beta} } } \leq K^3 \cdot 4 \alpha K \log (T) / \Delta_2 = ( 4 \alpha K^4 / \Delta_2 )  \log T \leq ( 4 \alpha K / \Delta_2 )^4 \log T ,
\end{equation*}
where the last inequality uses $\alpha \geq 1$ and $\Delta_2 \in (0,1)$. Applying Claim \ref{clmLogTrick} with $x = T$, $y = \gamma_i \rho_2 / \beta$, and $z = ( 4 \alpha K / \Delta_2 )^4$, and noting that $2/y \leq (2/y)^4$ (since $\gamma_i \in (0,1)$ and $\rho_2 \in (0,\beta-1)$), we obtain
\begin{equation*}
\log T \leq \log ( 8 \alpha \beta K / ( \Delta_2 \gamma_i \rho_2 ) )^{8 \beta / (\gamma_i \rho_2 )} \leq  ( C_{\ref{clmSmallTKappa1}} / \gamma_i ) \log ( C_{\ref{clmSmallTKappa1}} K / ( \Delta_2 \gamma_i ) ) ,
\end{equation*}
where the second inequality holds for $C_{\ref{clmSmallTKappa1}} \geq 8 \alpha \beta / \rho_2$.
\end{proof}

\begin{clm} \label{clmSmallTKappa2}
There exists $C_{\ref{clmSmallTKappa2}} > 0$ such that, for any $\gamma_i \in (0,1)$,
\begin{equation*}
\exists\ j \geq \ceil{ T^{\gamma_i / \beta} }\ s.t.\ \ceil{\kappa_j} \leq 1 + 4 \alpha \log (A_j) / \Delta_2^2  \quad \Rightarrow \quad \log T \leq ( C_{\ref{clmSmallTKappa2}} / \gamma_i  ) \log \left(  C_{\ref{clmSmallTKappa2}} K / ( \Delta_2 \gamma_i  ) \right).
\end{equation*}
\end{clm}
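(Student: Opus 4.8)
The plan is to follow the template of Claims~\ref{clmSmallTtheta} and~\ref{clmSmallTKappa1}: the hypothesis will be shown to force the phase $j$, hence $\ceil{T^{\gamma_i/\beta}}$, hence $T$, to be bounded above by a function of $K$ and $\Delta_2^{-1}$ alone, after which the bound transfers to $\log T$ with a $1/\gamma_i$ blowup. First I would dispose of a trivial case. Fix a large absolute constant $J^\dagger \geq 3$ depending only on $\alpha,\beta,\rho_2$. If $\ceil{T^{\gamma_i/\beta}} < J^\dagger$, then $T^{\gamma_i/\beta} < J^\dagger$, so $\log T < (\beta/\gamma_i)\log J^\dagger$, and the conclusion holds whenever $C_{\ref{clmSmallTKappa2}} \geq \beta \vee J^\dagger$. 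So assume $\ceil{T^{\gamma_i/\beta}} \geq J^\dagger$ for the rest of the argument.

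Next, take the phase $j \geq \ceil{T^{\gamma_i/\beta}} \geq J^\dagger \geq 3$ supplied by the hypothesis. I would bound $\log A_j \leq \log(j^{2\beta}) = 2\beta\log j$ using Claim~\ref{clmPropAj}, bound $\ceil{\kappa_j} \geq \kappa_j = j^{\rho_2}/(K^2 S) \geq j^{\rho_2}/K^3$ using $S \leq K$ from Line~\ref{algInput}, and use $1 \leq \log j$ (valid since $j \geq 3$) together with $\alpha,\beta \geq 1$ and $\Delta_2 \in (0,1)$ to absorb the additive $1$. This converts the hypothesis $\ceil{\kappa_j} \leq 1 + 4\alpha\log(A_j)/\Delta_2^2$ into $j^{\rho_2} \leq C' K^3 \log(j)/\Delta_2^2$ for a constant $C'$ depending only on $\alpha,\beta$. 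Then I would apply Claim~\ref{clmLogTrick} with $x = j$, $y = \rho_2$, and $z = C' K^3/\Delta_2^2$ to obtain $j \leq (2z/\rho_2)^{2/\rho_2}$, an upper bound on $j$ involving only $K$, $\Delta_2^{-1}$ (and the fixed parameters).

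Finally, since $j \geq \ceil{T^{\gamma_i/\beta}} \geq T^{\gamma_i/\beta}$, this gives $T^{\gamma_i/\beta} \leq (2z/\rho_2)^{2/\rho_2}$, hence $\log T \leq (2\beta/(\gamma_i\rho_2))\log(2z/\rho_2)$. Expanding $z$, using $K^3/\Delta_2^2 \leq (K/\Delta_2)^3$ and $1/\gamma_i \geq 1$, and folding the resulting constants (and the factor $3$ from the cube) into the prefactor, one gets $\log T \leq (C_{\ref{clmSmallTKappa2}}/\gamma_i)\log(C_{\ref{clmSmallTKappa2}}K/(\Delta_2\gamma_i))$ for $C_{\ref{clmSmallTKappa2}}$ chosen sufficiently large relative to $\alpha,\beta,\rho_2,C'$. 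The computation is entirely routine; the only point requiring care is the order of operations: because the right side of the hypothesis contains $\log A_j$, which itself depends on $j$ and hence on $T$, one cannot substitute $j = \ceil{T^{\gamma_i/\beta}}$ directly. Instead one must first use Claim~\ref{clmLogTrick} to extract an absolute (in $K,\Delta_2$) ceiling on $j$, and only then invoke $j \geq T^{\gamma_i/\beta}$ to pass to a bound on $\log T$. Handling the ceilings $\ceil{\kappa_j}$ and $\ceil{T^{\gamma_i/\beta}}$ and the stray additive $1$ is harmless given the reduction to $j \geq J^\dagger$, so I do not anticipate any genuine obstacle here.
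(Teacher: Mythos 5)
Your proposal is correct and follows essentially the same route as the paper's proof: reduce the hypothesis to $j^{\rho_2} \lesssim K^3 \log(j)/\Delta_2^2$ via $\ceil{\kappa_j} \geq j^{\rho_2}/K^3$ and $\log A_j \leq 2\beta\log j$, invert with Claim \ref{clmLogTrick} to cap $j$ by a function of $K$ and $\Delta_2^{-1}$ alone, and only then use $j \geq T^{\gamma_i/\beta}$ to bound $\log T$. The only (immaterial) difference is the corner case: the paper notes $j=1$ forces $T=1$ and otherwise uses $j\geq 2$, whereas you split on $\ceil{T^{\gamma_i/\beta}} < J^\dagger$; both handle the additive $1$ the same way.
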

\begin{proof}
Fix $j \geq \ceil{ T^{\gamma_i / \beta} }$ such that $\ceil{\kappa_j} \leq 1 + 4 \alpha \log(A_j) / \Delta_2^2$. Note that if $j = 1$, then $1 \geq \ceil{ T^{\gamma_i / \beta} } \geq T^{\gamma_i / \beta}$, so since $T \in \N$, we must have $T = 1$. This implies $\log T = 0$, so the claimed bound is immediate. Hence, we assume $j \geq 2$ moving forward. We first observe
\begin{equation*}
j^{\rho_2} \leq K^3 \ceil{ \kappa_j }  \leq K^3  \left( 1 + \frac{ 4 \alpha \log A_j }{ \Delta_2^2 } \right) \leq K^3 \left( 1 + \frac{ 8 \alpha \beta \log j }{ \Delta_2^2 } \right) \leq K^3\left(  \frac{ 16 \alpha \beta \log j }{ \Delta_2^2 } \right) = \frac{ 16 \alpha \beta K^3 \log j }{ \Delta_2^2 } .
\end{equation*}
where the inequalities use $\ceil{\kappa_j} = \ceil{ j^{\rho_2} / ( K^2 S ) } \geq j^{\rho_2} / K^3$, the assumed upper bound on $\ceil{\kappa_j}$, Claim \ref{clmPropAj}, and $8 \alpha \beta \log(j) / \Delta_2^2 \geq 1$ (since $\alpha , \beta \geq 1$, $j \geq 2$, and $\Delta_2 \in (0,1)$), respectively. Applying Claim \ref{clmLogTrick} with $x = j$, $y = \rho_2$, and $z = 16 \alpha \beta K^3 / \Delta_2^2$, and noting that $2 z / y \leq ( 32 \alpha \beta K / ( \Delta_2 \rho_2 ) )^3$ (since $\alpha \geq 1$, $\rho_2 \in (0,\beta-1)$, and $\Delta_2 \in (0,1)$), we obtain $j \leq ( 32 \alpha \beta K / ( \Delta_2 \rho_2 ) )^{6 / \rho_2} \leq (  C_{\ref{clmSmallTKappa2}} K / ( \Delta_2 \gamma_i ) )^{ 6 / \rho_2 }$ for any $C_{\ref{clmSmallTKappa2}} \geq 32 \alpha \beta / \rho_2$.  Therefore, by assumption $j \geq \ceil{ T^{\gamma_i / \beta} }$, we obtain that for any $C_{\ref{clmSmallTKappa2}} \geq 32 \alpha \beta / \rho_2$,
\begin{equation*}
T \leq \ceil{ T^{\gamma_i / \beta} }^{\beta/\gamma_i} \leq j^{\beta/\gamma_i} \leq  \left(  C_{\ref{clmSmallTKappa2}} K / ( \Delta_2 \gamma_i ) \right)^{ 6 \beta / ( \rho_2 \gamma_i ) } \leq \left(  C_{\ref{clmSmallTKappa2}} K / ( \Delta_2 \gamma_i ) \right)^{ C_{\ref{clmSmallTKappa2}} / \gamma_i } . \qedhere
\end{equation*}
\end{proof}

\begin{clm} \label{clmSmallTSum}
There exists $C_{\ref{clmSmallTSum}} > 0$ such that, for any $\gamma_i \in (0,1)$,
\begin{equation} \label{eqSmallTSum}
\frac{ \Delta_2 ( 2 \alpha - 3 ) }{ 8 \alpha K^2  } \leq \log (T) \sum_{j = \ceil{ T^{\gamma_i/\beta} } }^\infty ( \ceil{\kappa_j} - 1 )^{ 3-2\alpha }   \quad \Rightarrow \quad \log T \leq ( C_{\ref{clmSmallTSum}} / \gamma_i ) \log ( C_{\ref{clmSmallTSum}} K / ( \Delta_2 \gamma_i ) )  .
\end{equation}
\end{clm}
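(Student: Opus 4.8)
The plan is to produce a polynomial-in-$T$ upper bound on the tail sum $\sum_{j \geq \ceil{T^{\gamma_i/\beta}}} (\ceil{\kappa_j}-1)^{3-2\alpha}$ that \emph{decays} in $T$, so that the hypothesized lower bound on this sum (times $\log T$) forces $T$ to be small. Recall $\kappa_j = j^{\rho_2}/(K^2 S)$, that $\alpha > 2$ implies $3-2\alpha < 0$, and that the assumption $\rho_2 > 1/(2\alpha-3)$ in \eqref{eqParamRequire} gives $\rho_2(2\alpha-3) > 1$; these two sign facts are what make the argument go through.

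First I would split according to whether $\ceil{T^{\gamma_i/\beta}} < (2K^2 S)^{1/\rho_2}$ or not. In the first case, $T^{\gamma_i\rho_2/\beta} < 2K^2 S \leq 2K^3$, hence $\log T < (\beta/(\gamma_i\rho_2))\log(2K^3)$, which is already of the desired form once we absorb constants (using $K \geq 1$, $\Delta_2 < 1$, $\gamma_i < 1$ to rewrite $\log(2K^3) \leq 4\log(2K/(\Delta_2\gamma_i))$). In the second case, every $j \geq \ceil{T^{\gamma_i/\beta}}$ has $\kappa_j \geq 2$, so $\ceil{\kappa_j}-1 \geq \kappa_j - 1 \geq \kappa_j/2 = j^{\rho_2}/(2K^2 S)$, and therefore
\[
\sum_{j \geq \ceil{T^{\gamma_i/\beta}}} (\ceil{\kappa_j}-1)^{3-2\alpha} \leq (2K^2 S)^{2\alpha-3}\sum_{j \geq \ceil{T^{\gamma_i/\beta}}} j^{-\rho_2(2\alpha-3)} .
\]
Applying Claim \ref{clmPolySeries} (legitimate since $\rho_2(2\alpha-3) > 1$) and then $\ceil{T^{\gamma_i/\beta}} \geq T^{\gamma_i/\beta} \geq 1$ together with $1-\rho_2(2\alpha-3) < 0$, the right side is at most $C\,(K^2 S)^{2\alpha-3}\, T^{(\gamma_i/\beta)(1-\rho_2(2\alpha-3))}$ for a constant $C$ depending only on $\alpha,\rho_2$. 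Since $S \leq K$, the prefactor $K^2(K^2 S)^{2\alpha-3}$ is bounded by $K^{6\alpha-7}$, a fixed power of $K$.

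Plugging this bound into the hypothesis of \eqref{eqSmallTSum} and rearranging yields $T^y \leq z\log T$ with $y = (\gamma_i/\beta)(\rho_2(2\alpha-3)-1) > 0$ and $z$ of the form $\tilde C\, K^{p}/\Delta_2$ for constants $\tilde C \geq 1$ and $p = 6\alpha-7 \geq 1$ depending only on the algorithmic parameters. I would then invoke Claim \ref{clmLogTrick} with $x = T$ and $y,z$ as above to obtain $\log T < (2/y)\log(2z/y)$; finally $2/y = C'/\gamma_i$ and $2z/y = C'' K^{p}/(\Delta_2\gamma_i)$, and using $\log(a b^{p}) \leq p\log(ab)$ (valid since $ab \geq 1$ here) one absorbs $p$ and all additive constants into the outer logarithm, giving $\log T \leq (C_{\ref{clmSmallTSum}}/\gamma_i)\log(C_{\ref{clmSmallTSum}} K/(\Delta_2\gamma_i))$ once $C_{\ref{clmSmallTSum}}$ is chosen larger than every constant produced.

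The only non-routine ingredient is the strict inequality $\rho_2(2\alpha-3) > 1$ from \eqref{eqParamRequire}: it is needed both for convergence of the tail sum via Claim \ref{clmPolySeries} and, crucially, to guarantee the exponent $y$ is positive so that Claim \ref{clmLogTrick} applies. Everything else is bookkeeping of constants that depend only on $\alpha,\beta,\rho_2$, together with the elementary case split and the estimate $\ceil{\kappa_j}-1 \geq \kappa_j/2$ once $\kappa_j \geq 2$; so I do not expect a genuine obstacle beyond keeping the algebra organized.
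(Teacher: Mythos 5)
Your proposal is correct and follows essentially the same route as the paper's proof: a case split on whether $\kappa_{\ceil{T^{\gamma_i/\beta}}}$ is small, the bound $\ceil{\kappa_j}-1 \geq \kappa_j/2$ in the main case, Claim \ref{clmPolySeries} to evaluate the tail sum, and an inversion of $T^y \leq z\log T$ (the paper does this by hand with $\log x \leq x$ rather than citing Claim \ref{clmLogTrick}, but it is the same computation). The only step you should spell out is that passing from $(\ceil{T^{\gamma_i/\beta}}-1)^{1-\rho_2(2\alpha-3)}$ to a constant times $T^{(\gamma_i/\beta)(1-\rho_2(2\alpha-3))}$ requires $\ceil{T^{\gamma_i/\beta}} \geq 2$, which your second case does guarantee since that ceiling is an integer exceeding $1$.
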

\begin{proof}
We first eliminate the corner case where $\min \{ T^{\gamma_i/\beta} , \kappa_{  \ceil{ T^{\gamma_i/\beta} }  }  \} < 2$. In this case, one of $T^{\gamma_i/\beta} < 2$ and $\kappa_{  \ceil{ T^{\gamma_i/\beta} }  } < 2$ must hold. If the former holds, then $\log T < ( \beta / \gamma_i ) \log 2$, and if the latter holds, then $2 > \kappa_{  \ceil{ T^{\gamma_i/\beta} }  }  =  \ceil{ T^{\gamma_i/\beta} }^{\rho_2} / ( K^2 S ) \geq  T^{ \gamma_i \rho_2 / \beta } / K^3$, so $\log T \leq ( \beta / (\gamma_i \rho_2) ) \log ( 2 K^3 )$. In both cases, we can clearly find $C_{\ref{clmSmallTSum}} > 0$ satisfying the right side of \eqref{eqSmallTSum}.

Next, we assume $\kappa_{  \ceil{ T^{\gamma_i/\beta} }  } \geq 2$ and $T^{\gamma_i/\beta} \geq 2$. By monotonicity, the former implies $\kappa_j \geq 2$ for any $j \geq \ceil{ T^{\gamma_i/\beta} }$. For any such $j$, by definition and $S \leq K$, we can then write
\begin{equation*}
\ceil{ \kappa_j } - 1 \geq \kappa_j - 1 \geq \kappa_j / 2 =  j^{\rho_2} / ( 2 K^2 S ) \geq  j^{\rho_2} / ( 2 K^3 ) .
\end{equation*}
Therefore, since $3-2\alpha < 0$ by assumption in Theorem \ref{thmProposed}, we obtain
\begin{equation*}
\sum_{j = \ceil{ T^{\gamma_i/\beta} } }^\infty ( \ceil{\kappa_j}- 1 )^{ 3-2\alpha}  \leq 2^{ 2 \alpha - 3 } K^{3(2 \alpha - 3)} \sum_{j = \ceil{ T^{\gamma_i/\beta} } }^\infty  j^{ \rho_2 ( 3 - 2\alpha ) } .
\end{equation*}
For the summation at right, we use $T^{\gamma_i/\beta}\geq 2$ (which implies $\ceil{ T^{\gamma_i/\beta} } - 1 \geq T^{\gamma_i/\beta} - 1 \geq T^{\gamma_i/\beta}/2$) and $\rho_2(2\alpha-3) > 1$ by assumption in Theorem \ref{thmProposed}, along with Claim \ref{clmPolySeries}, to write
\begin{equation*}
\sum_{j = \ceil{ T^{\gamma_i/\beta} } }^\infty  j^{ \rho_2 ( 3 - 2\alpha ) } \leq \frac{ ( T^{\gamma_i/\beta}/2 )^{ 1 + \rho_2 ( 3 - 2 \alpha ) } } {  \rho_2 (2\alpha-3) - 1  } = \frac{ 2^{ \rho_2 (2\alpha-3) } T^{ \gamma_i ( 1 + \rho_2(3-2\alpha) ) / \beta } }{  2 ( \rho_2 (2\alpha-3) - 1 ) } .
\end{equation*}
Using $\rho_2(2\alpha-3) > 1$ (by assumption), we also know
\begin{equation*}
T^{\gamma_i ( 1 + \rho_2 ( 3 - 2 \alpha ) )/\beta} \log(T) = \frac{ 2 \beta T^{\gamma_i ( 1 + \rho_2 ( 3 -2 \alpha ) )/\beta} \log( T^{  \gamma_i ( \rho_2(2\alpha-3) - 1 ) / ( 2 \beta )  } ) }{  \gamma_i (  \rho_2(2\alpha-3) - 1 )   } \leq  \frac{ 2 \beta T^{\gamma_i ( 1 +  \rho_2 ( 3 - 2 \alpha ) )/  ( 2 \beta ) }  }{  \gamma_i ( \rho_2(2\alpha-3) - 1 )  } .
\end{equation*}
Combining the previous three inequalities, we then obtain
\begin{equation*}
\log ( T )  \sum_{j = \ceil{ T^{\gamma_i/\beta} } }^\infty ( \ceil{\kappa_j}- 1 )^{ 3-2\alpha) } \leq  \frac{ 2^{ ( \rho_2 + 1 ) (2\alpha-3) } \beta  }{ (  \rho_2 (2\alpha-3) - 1 )^2  } \frac{ K^{ 3 (2 \alpha-3 ) } }{ \gamma_i } T^{\gamma_i ( 1 +  \rho_2 ( 3 - 2 \alpha ) )/  ( 2 \beta ) }  .
\end{equation*}
Therefore, if the left side of \eqref{eqSmallTSum} holds, we are guaranteed that
\begin{equation*}
\frac{ \Delta_2 ( 2\alpha-3 ) }{ 8 \alpha K^2 } \leq \frac{ 2^{ ( \rho_2 + 1 ) (2\alpha-3) } \beta  }{ (  \rho_2 (2\alpha-3) - 1 )^2  } \frac{ K^{ 3 (2 \alpha-3 ) } }{ \gamma_i } T^{\gamma_i ( 1 +  \rho_2 ( 3 - 2 \alpha ) )/  ( 2 \beta ) } ,
\end{equation*}
or, after rearranging, then using $\alpha > 1$ and $\Delta_2 , \gamma_i \in (0,1)$,
\begin{equation*}
T^{\gamma_i ( \rho_2 (  2 \alpha - 3 ) - 1 )/  ( 2 \beta ) } \leq \frac{ 8 \cdot 2^{ ( \rho_2 + 1 ) (2\alpha-3) }  \alpha \beta }{ (2\alpha-3) (  \rho_2 (2\alpha-3) - 1 )^2  } \frac{  K^{ 6 \alpha - 7 } }{ \Delta_2 \gamma_i } \leq ( C_{\ref{clmSmallTSum}} )^{ 6 \alpha }\frac{  K^{ 6 \alpha - 7 } }{ \Delta_2 \gamma_i } \leq \left( \frac{ C_{\ref{clmSmallTSum}} K }{ \Delta_2 \gamma_i }  \right)^{6\alpha} ,
\end{equation*}
where the second inequality holds for large $C_{\ref{clmSmallTSum}}$ and the third uses $\alpha \geq 1$ and $\Delta_2 , \gamma_i \in (0,1)$. Taking logarithms and choosing $C_{\ref{clmSmallTSum}}$ appropriately in terms of $\rho_2$, $\alpha$, and $\beta$ yields the right side of \eqref{eqSmallTSum}.
\end{proof}

\end{document}